\setlist[itemize]{leftmargin=*}
\setlist[enumerate]{leftmargin=*}
\setlist[itemize]{leftmargin=*}
\setlist[enumerate]{leftmargin=*}
\definecolor{LightCyan}{rgb}{0.7,1,1}
\definecolor{LightRed}{rgb}{1,0.7,0.7}
\newcommand{\E}{\mathbb{E}}
\newcommand{\R}{\ensuremath{\mathbb{R}}}
\newcommand{\argmax}{\mathop{{}\textrm{argmax}}}
\newcommand{\ind}{\ensuremath{\mathds{1}}}
\newcommand{\algname}{BiLin-UCB}
\newcommand{\classname}{Bilinear Class}
\newtheoremstyle{definition}
  {15pt}
  {15pt}
  {\itshape}
  {}
  {\bfseries}
  {.}
  { }
  {\thmname{#1}\thmnumber{ #2}\thmnote{ (#3)}}
  \newtheoremstyle{theorem}
  {}
  {}
  {\itshape}
  {}
  {\bfseries}
  {.}
  { }
  {\thmname{#1}\thmnumber{ #2}\thmnote{ (#3)}}
\theoremstyle{theorem}
\newtheorem{theorem}{Theorem}[section]
\newtheorem{lemma}{Lemma}[section]
\newtheorem{corollary}{Corollary}[section]
\theoremstyle{definition}
\newtheorem{definition}{Definition}[section]
\newtheorem{assumption}{Assumption}[section]
\newtheorem{remark}{Remark}[section]
\newcommand{\Bracks}[1]{\left[#1\right]}
\newcommand{\norm}[1]{\lVert#1\rVert}
\newcommand{\abs}[1]{\lvert#1\rvert}
\newcommand{\Abs}[1]{\left\lvert#1\right\rvert}
\newcommand{\say}[1]{``#1''}
\newcommand{\simplex}{\triangle}
\newcommand{\iden}{\textrm{I}}
\newcommand{\Wcal}{\ensuremath{\mathcal{W}}} 
\newcommand{\Fcal}{\ensuremath{\mathcal{F}}} 
\newcommand{\Scal}{\ensuremath{\mathcal{S}}} 
\newcommand{\Ccal}{\ensuremath{\mathcal{C}}} 
\newcommand{\Acal}{\ensuremath{\mathcal{A}}} 
\newcommand{\Dcal}{\ensuremath{\mathcal{D}}}
\newcommand{\Ocal}{\ensuremath{\mathcal{O}}} 
\newcommand{\Hcal}{\ensuremath{\mathcal{H}}} 
\newcommand{\Mcal}{\ensuremath{\mathcal{M}}} 
\newcommand{\Xcal}{\ensuremath{\mathcal{X}}} 
\newcommand{\Vcal}{\ensuremath{\mathcal{V}}}
\newcommand{\Lcal}{\ensuremath{\mathcal{L}}}
\newcommand{\Tcal}{\ensuremath{\mathcal{T}}}
\newcommand{\Zcal}{\ensuremath{\mathcal{Z}}}
\newcommand{\epsg}{\ensuremath{\varepsilon_{\textrm{gen}}}}
\newcommand{\epso}{\ensuremath{R}}
\newcommand{\eps}{\ensuremath{\epsilon}}
\DeclareMathOperator{\trace}{trace}
\newcommand{\piest}{\ensuremath{\pi_{\textrm{est}}}}
\newcommand{\conf}{\ensuremath{\textrm{conf}}}
\newcommand{\op}{\textrm{op}}
\newcommand{\conc}{\circ}
\newcommand{\xmark}{\ding{55}}
\newcommand{\jnote}[1]{\textcolor{blue}{[JL: #1]}}
\title{Bilinear Classes: A Structural Framework for Provable Generalization in RL}
\author{
	Simon S. Du\thanks{University of Washington. Email: \texttt{ssdu@cs.washington.edu}}
	\and Sham M. Kakade\thanks{University of Washington and
          Microsoft Research. Email: \texttt{sham@cs.washington.edu}}
	\and Jason D. Lee\thanks{Princeton University. Email: \texttt{jasonlee@princeton.edu}}
	\and Shachar Lovett\thanks{University of California, San Diego. Email: \texttt{slovett@cs.ucsd.edu}}
	\and Gaurav Mahajan\thanks{University of California, San Diego. Email: \texttt{gmahajan@eng.ucsd.edu}}
	\and Wen Sun\thanks{Cornell University. Email: \texttt{ws455@cornell.edu}}
	\and Ruosong Wang\thanks{Carnegie Mellon University. Email:\texttt{ruosongw@andrew.cmu.edu}}
}
\date{}
\begin{document}
\maketitle

\newcommand{\sk}[1]{\textsf{\color{magenta} SK: #1}}
\newcommand{\Wen}[1]{\textsf{\color{red} WS: #1}}
\newcommand{\simon}[1]{\textsf{\color{green} SSD: #1}}
\newcommand{\gm}[1]{\textsf{\color{cyan}{GM: #1}}}
\newcommand{\shachar}[1]{[\textsf{\color{purple}{SL: #1}}]}

\begin{abstract} 
  This work introduces Bilinear Classes, a new structural
  framework, which permit generalization in reinforcement learning in
  a wide variety of settings through the use of function
  approximation.  The framework incorporates nearly all existing
  models in which a polynomial sample complexity is achievable, and,
  notably, also includes new models, such as the Linear $Q^*/V^*$
  model in which both the optimal $Q$-function and the optimal
  $V$-function are linear in some known feature space. Our main result
  provides an RL algorithm which has polynomial sample complexity for
  Bilinear Classes; notably, this sample complexity is stated in terms
  of a reduction to the generalization error of an underlying supervised
  learning sub-problem. These bounds nearly match the best known
  sample complexity bounds for existing models. Furthermore, this
  framework also extends to the infinite dimensional (RKHS) setting:
  for the the Linear $Q^*/V^*$ model, linear MDPs, and linear mixture
  MDPs, we provide sample complexities that have no explicit
  dependence on the explicit feature dimension (which could be
  infinite), but instead depends only on information theoretic
  quantities.
\end{abstract}

\section{Introduction}
Tackling large state-action spaces is a central challenge in reinforcement learning (RL).
Here, function approximation and supervised learning schemes are often
employed for generalization across large state-action spaces. 
While there have been a number of successful
applications \citep{mnih2013playing, kober2013reinforcement,silver2017mastering,wu2017framework}.
there is also a realization that practical RL approaches are quite
sample inefficient.

Theoretically, there is a growing body of results showing how sample
efficiency is possible in RL for particular model classes (often with
restrictions on the model dynamics though in some cases on the class
of value functions), e.g. State
Aggregation \citep{lihong2009disaggregation,dong2020provably}, Linear
MDPs \citep{yang2019sample,jin2019provably}, Linear Mixture
MDPs \citep{modi2019sample,ayoub2020model}, Reactive
POMDPs \citep{krishnamurthy2016pac}, Block
MDPs \citep{du2019provably}, FLAMBE \citep{agarwal2020flambe},
Reactive PSRs \citep{littman2001predictive}, Linear Bellman
Complete \citep{munos2005error,zanette2020learning}.  

More generally,
there are also a few lines of work which propose more general frameworks,
consisting of \emph{structural conditions} which permit sample
efficient RL; these include the low-rankness structure (e.g. the
Bellman rank \citep{jiang2017contextual} and Witness
rank \citep{sun2018model}) or under a complete
condition \citep{munos2005error,zanette2020learning}. The goal in
these latter works is to develop a unified theory of generalization in
RL, analogous to more classical notions of statistical complexity
(e.g. VC-theory and Rademacher complexity) relevant for supervised
learning.  These latter frameworks are not contained in each other
(see Table~\ref{table:comp_framework}), and, furthermore, there are a number
of natural RL models that cannot be incorporated into each of these
frameworks (see Table~\ref{table:comp}).

Motivated by this latter line of work, we aim 
to understand if there are simple and natural structural conditions which capture
the learnability in a general class of RL models. 

\begin{table*}[!t]
\centering\resizebox{\columnwidth}{!}{
\begin{tabular}{ | c | c | c | c | c |  }
\hline
Framework & B-Rank
& B-Complete &
                                                              W-Rank
                                                               & \classname~(this work)  \\ 
\hline
B-Rank  & \checkmark & \xmark& \checkmark &  \checkmark \\
\hline
B-Complete & \xmark  &\checkmark &  \xmark  & \checkmark \\
\hline
W-Rank & \xmark  &\xmark &  \checkmark  & \checkmark \\
\hline
\classname~(this work) & \xmark  &\xmark &  \xmark  & \checkmark \\
\hline
\end{tabular}
}
\caption{Relations between frameworks.
\checkmark: the column framework contains the row framework. \xmark: the column framework does not contains the row framework.
B-Rank: Bellman Rank \citep{jiang2017contextual}, which is defined in terms of the roll-in distribution and the function approximation class for $Q^*$.
B-Complete: Bellman Complete \citep{munos2005error} (\citet{zanette2020learning} proposed a sample efficient algorithm), which assumes the function class is closed under the Bellman operator.
W-Rank: Witness Rank \citep{sun2018model}: a model-based analogue of Bellman Rank.
\classname: our proposed framework.
}
\label{table:comp_framework}
\end{table*}

\begin{table*}[!t]
\centering\resizebox{\columnwidth}{!}{
\begin{tabular}{ | c | c | c | c | c |  }
\hline
& B-Rank
& B-Complete & W-Rank
 & \classname~(this work)  \\ 
\hline
Tabular MDP & \checkmark  & \checkmark  & \checkmark &  \checkmark  \\
\hline
Reactive POMDP \citep{krishnamurthy2016pac}  & \checkmark  & \xmark  & \checkmark &  \checkmark  \\
\hline
Block MDP \citep{du2019provably}  & \checkmark  & \xmark  & \checkmark &  \checkmark  \\
\hline
Flambe / Feature Selection \citep{agarwal2020flambe} & \checkmark & \xmark &  \checkmark & \checkmark \\
\hline
Reactive PSR \citep{littman2002predictive} & \checkmark  &\xmark &  \checkmark  & \checkmark \\
\hline
Linear Bellman Complete \citep{munos2005error} & \xmark  & \checkmark  & \xmark &  \checkmark  \\
\hline
Generalized Linear Bellman Complete \citep{wang2019optimism} & \xmark  & \xmark  & \xmark &  \checkmark  \\
\hline
Linear MDPs \citep{yang2019sample,jin2019provably} & \checkmark!  & \checkmark  & \checkmark! &  \checkmark  \\
\hline
Linear Mixture Model \citep{modi2020sample} & \xmark & \xmark & \xmark &  \checkmark \\
\hline
Linear Quadratic Regulator & \xmark  &\checkmark &  \xmark  & \checkmark \\
\hline
 Kernelized Nonlinear Regulator \citep{kakade2020information} & \xmark & \xmark & \checkmark &  \checkmark \\ 
 \hline
 Factored MDP \citep{kearns1999efficient} & \xmark & \xmark & \xmark &  \checkmark \\ 
 \hline
$Q^\star$ ``irrelevant" State Aggregation \citep{lihong2009disaggregation}  & \checkmark & \xmark & \xmark & \checkmark \\
\hline
\rowcolor{LightCyan}
Linear $Q^\star$/$V^\star$ (this work) &  \xmark &  \xmark& \xmark &  \checkmark  \\
\hline
\rowcolor{LightCyan}
RKHS Linear MDP (this work) &  \xmark &  \xmark& \xmark &  \checkmark  \\
\hline
\rowcolor{LightCyan}
RKHS Linear Mixture MDP (this work) &  \xmark &  \xmark& \xmark &  \checkmark  \\
\hline
\rowcolor{LightCyan}
Low Occupancy Complexity  (this work)&  \xmark &  \xmark& \xmark &  \checkmark  \\
\hhline{| = = = = = |}
\hline
$Q^\star$ State-action Aggregation \citep{dong2020provably}  & \xmark & \xmark & \xmark & \xmark \\
\hline
Deterministic linear $Q^\star$ \citep{wen2013efficient}  &  \xmark & \xmark & \xmark& \xmark \\ 
\hhline{| = = = = = |}
\hline
\rowcolor{LightRed}
Linear $Q^\star$ \citep{weisz2020exponential} & \multicolumn{4}{c|}{Sample efficiency is not possible} \\
\hline
\end{tabular}
}
\caption{
Whether a framework includes a model that permits a sample efficient algorithm.
\checkmark means the framework includes the model, \xmark ~means not, and $\checkmark!$ means the sample complexity using that framework needs to scale with the number of action (which is not necessary).
``Sample efficient is not possible" means the sample complexity needs to scale exponentially with at least one problem parameter.
See \Cref{sec:rel}, \Cref{sec:new_examples}, \Cref{sec:extensions} and \Cref{sec:examples} for detailed descriptions of the models.
}
\label{table:comp}
\end{table*}

\paragraph{Our Contributions.}
This work provides a simple structural condition on the
hypothesis class (which may be either model-based or value-based),
where the Bellman error has a particular bilinear form, under which
sample efficient learning is possible; we refer such a framework as a
Bilinear Class. This structural assumption can be seen as generalizing the
Bellman rank \citep{jiang2017contextual}; furthermore, it not only contains existing
frameworks, it also covers a number of new settings that are not
easily incorporated in previous frameworks (see
Tables~\ref{table:comp_framework} and~\ref{table:comp}). 

Our main result presents an optimization-based algorithm, \algname,
which provably enjoys a polynomial sample
complexity guarantee for Bilinear Classes (cf. Theorem~\ref{thm:main-result}).
Although our framework is more general
than existing ones, our proof is substantially simpler -- we give a
unified analysis based on the elliptical potential
lemma, developed for the theory of linear bandits \citep{dani2008stochastic,srinivas2009gaussian}.

Furthermore, as a point of emphasis, our results are non-parametric in
nature (stated in terms of an information gain
quantity \citep{srinivas2009gaussian}), as opposed to finite dimensional
as in prior work.  From a technical point of view, it is not evident
how to extend prior approaches to this non-parametric
setting. Notably, the non-parametric regime is particularly
relevant to RL due to that, in RL,  performance bounds do \emph{not} degrade gracefully
with approximation error or
model mis-specification (e.g. see \citet{du2019good} for discussion of these issues); the relevance of the
non-parametric regime is that it may provide additional flexibility to
avoid the catastrophic quality degradation due to approximation error or model mis-specification.

A few further notable contributions are:
\begin{itemize}
	\item \emph{Definition of Bilinear Class:} Our key conceptual contribution is the definition of the Bilinear Class, which isolates two key critical properties. The first property is that the Bellman error can be upper bounded by a bilinear form depending on the hypothesis. The second property is that the corresponding bilinear form for all hypothesis in the hypothesis class can be estimated with the same dataset. Analogous to supervised learning, this allows for efficient data reuse to estimate the Bellman error for all hypothesis simultaneously and eliminate those with high error.
    \item \emph{A reduction to supervised learning:} One appealing aspect of
      this framework is that the our main sample complexity result for 
      RL is quantified via a reduction to
      the generalization error of a supervised learning problem, where
      we have a far better understanding of the latter. This is particularly important
      due to that we make no explicit assumptions on the hypothesis
      class $\Hcal$ itself, thus allowing for neural hypothesis
      classes in some cases
      (the \classname~posits an \emph{implicit} relationship between
      $\Hcal$ and the underlying MDP $\Mcal$).
    \item \emph{New models:}  We show our \classname~framework incorporates
      new natural models, that are not easily incorporated into
      existing frameworks, e.g. linear $Q^*/V^*$, Low Occupancy
      Complexity, along with (infinite-dimensional) RKHS versions of linear MDPs and
      linear mixture MDPs.  The linear $Q^*/V^*$ result is particularly notable
      due to a recent and remarkable lower bound which showed
      that if we only assume $Q^*$ is linear in some given set of
      features, then sample efficient learning is information
      theoretically not
      possible \citep{weisz2020exponential}.
    In
      perhaps a surprising contrast, our works shows that if we assume
      that both $Q^\star$ and $V^\star$ are linear in some given
      features then sample efficient learning is in fact possible.\vspace{0pt}
    \item \emph{Non-parametric rates:} Our work is 
      applicable to the non-parametric setting, where we develop new
      analysis tools to handle a number of technical challenges.  This
      is notable as non-parametric rates for RL are
      few and far between. Our results are stated in terms
      of the \emph{critical information gain} which can viewed as
      an analogous quantity to the \emph{critical radius}, a quantity which
      is used to obtain sharp rates in non-parametric statistical settings \citep{wainwright2019high}.
      \item \emph{Flexible Framework:} The \classname~framework is easily modified to include cases that do not strictly fit the definition. We show several examples of this in Section \ref{sec:extensions}, where we show simple modifications of \classname~framework include Witness Rank and Kernelized Nonlinear Regulator. 
\end{itemize}

\paragraph{Organization}
Section~\ref{sec:rel} provides further related work.
Section~\ref{sec:setting} introduce some technical background and notation.
Section~\ref{sec:bilinear_model} introduces our Bilinear Class
framework, where we instantiate it on the several RL models, and 
Section~\ref{sec:alg} describes our algorithm and provides our main
theoretical results. In \Cref{sec:extensions}, we introduce further extensions of Bilinear Classes.
We conclude in Section~\ref{sec:conclusion}.  Appendix~\ref{sec:examples} provides additional examples of the Bilinear Class including the feature selection model~\cite{agarwal2020flambe}, $\mathcal{Q}^\ast$  state aggregation, LQR, Linear MDP, and Block MDP. Appendix~\ref{app:proofs-main-thm} provides missing proofs of Section \ref{sec:alg}. Appendix \ref{sec:elliptic-cover} provides a key technical theorem to attain non-parametric convergence rates in terms of the information gain, and Appendix \ref{app:conc-covering} uses this to show concentration inequalities for all the models in a unified approach. Appendix \ref{sec:monotone} provides proofs for Section \ref{sec:extensions}. Finally, Appendix \ref{sec:lowerbound} shows that low information gain is necessary in both Bellman Complete and Linear MDP by showing that small RKHS norm is not sufficient for sample-efficient reinforcement learning.

\section{Related Work: Frameworks and Models}
\label{sec:rel}

\paragraph{Relations Among Frameworks.}
We first review existing frameworks and the relations among them. 
See Table~\ref{table:comp_framework} for a summary.

\citet{jiang2017contextual} defines a notion, Bellman Rank (B-Rank in Tables), in terms of the roll-in distribution and the function approximation class for $Q^*$, and give an algorithm with a polynomial sample complexity in terms of the Bellman Rank.
They also showed a class of models, including tabular MDP, LQR, Reactive POMDP~\citep{krishnamurthy2016pac}, and Reactive PSR~\citep{littman2002predictive} admit a low Bellman Rank, and thus they can be solved efficiently.
Some recently proposed models, such as Block MDP~\citep{du2019provably}, linear MDP~\citep{yang2019sample,jin2019provably} can also be shown to have a low Bellman rank.
One caveat is that their algorithm requires a finite number of actions, so cannot be directly applied to (infinite-action) linear MDP and LQR.
Subsequently, \citet{sun2018model} proposed a new framework, Witness Rank (W-Rank in tables), which generalizes Bellman Rank to model-based setting. 

Bellman Complete (B-Complete in tables) is a framework of another style, which assumes that the class used for approximating the $Q$-function is closed under the Bellman operator.
As shown in Table~\ref{table:comp_framework},  neither the low-rank-style framework (Bellman Rank and Witness Rank) nor the complete-style framework (B-Complete) contains the other (See e.g., \cite{zanette2020learning}).

Eluder dimension~\citep{russo2014learning} is another structural condition which directly assumes the function class allows for strong extrapolation after observing dimension number of samples. 
With appropriate representation conditions (stronger than Bellman Complete), there is an efficient algorithm for function classes with small eluder dimension~\citep{wang2020reinforcement}. However due to Eluder dimension requiring extrapolation, there are few examples of function classes with small eluder dimension beyond linear functions and monotone transformations of linear functions both of which are captured by the bilinear class.


\paragraph{Comparison to Bellman Eluder}  Concurrently, \cite{jin2021bellman} proposes a new structural model called Bellman Eluder dimension (BE dimension) which takes both the MDP structure and the function class into consideration. \textit{We note that neither BE nor \classname~capture each other.} Notably, Bilinear Classes, via use of flexible Bellman error estimators, naturally captures model-based settings including linear mixture MDPs, KNRs, and factored MDPs, which are hard for model-free algorithms and frameworks to capture since the value functions of these models could be arbitrarily complicated. Specifically, \cite{sun2018model} shows that for factored MDPs, model-free algorithms such as OLIVE \cite{jiang2017contextual} suffer exponential sample complexity in worst case which implies that both BE dimension and Bellman rank are large for factored MDPs. However, Bilinear Class and Witness rank \cite{sun2018model} properly capture the complexity of factored MDPs.  Similar situation may also apply to KNRs. For instance, \cite{dong2020expressivity} showed that for a simple piecewise linear dynamics (thus captured by KNRs) and piecewise reward functions, the optimal policy could contain exponentially many linear pieces and the optimal Q and V functions are fractals which are not differentiable anywhere and cannot be approximated by any neural networks with a polynomial width. It is unclear if such models have low BE dimension.



 The primary difference is that the two complexity measures are applied to different structural aspects of the MDP: Bellman eluder framework is applied to the Bellman error and the bilinear class is applied to any loss estimator of the Bellman error. The actual complexity measures of eluder dimension and information gain are very similar and in fact equivalent for RKHS~\citep{equivalencenote,jin2021bellman}. As these two complexity measures are different in general, an interesting direction for further work is to understand how eluder dimension can address new settings of practical interest beyond (generalized) linear models and whether Bellman eluder dimension can be broadened to capture model-based approaches (like the linear mixture model). Finally, we comment that there are models (e.g., deterministic linear $Q^\star$ and  $Q^\star$ state-action aggregation) that are captured by neither frameworks; we leave to future work to propose a framework that can capture these models that do not have error amplification.

With an additional Bellman completeness assumption on the function class, \cite{jin2021bellman} gives an algorithm which extends Eleanor from \cite{zanette2020learning} to nonlinear function approximation that achieves a regret guarantee with faster rates than our algorithm. We note that our algorithm and OLIVE (as shown by \cite{jin2021bellman}) does not require Bellman completeness which is a much stronger assumption than realizability. As examples, the low occupancy complexity, feature selection model, linear mixture model, and many other model-based models are not Bellman complete. While our work focuses on PAC bounds, we conjecture that the techniques from \cite{dong2020root} can be used for deriving regret bounds without completeness.


\paragraph{Reinforcement Learning Models.}
Now we discuss existing RL models.
A summary on whether a model can be incorporated into a framework is provided in Table~\ref{table:comp}.

Tabular MDP is the most basic model, which has a finite number of states and actions, and all frameworks incorporate this model.
When the state-action space is large, different RL models have been proposed to study when one can generalize across the state-action pairs.

Reactive POMDP~\citep{krishnamurthy2016pac} assumes there is a small number of hidden states and the $Q^*$-function belongs to a pre-specified function class.
Block MDP~\citep{du2019provably} also assumes there is a small number of hidden states and further assumes the hidden states are decodable.
Reactive PSR~\citep{littman2001predictive} considers partial observable systems whose parameters are grounded in observable quantities. 
FLAMBE~\citep{agarwal2020flambe} considers the feature selection and removes the assumption of known feature in linear MDP.  
These models all admit a low-rank structure, and thus can be incorporated into the Bellman Rank or Witness Rank and our Bilinear Classes.

The Linear Bellman Complete model~\citep{munos2005error} uses linear functions to approximate the $Q$-function, and assumes the linear function class is closed under the Bellman operator.
\citet{zanette2020learning} presented a statistically efficient algorithm for this model.
This model does not have a low Bellman Rank or Witness Rank but can be incorporated into the Bellman Complete framework and ours.

Linear MDP~\citep{yang2019sample,jin2019provably} assumes the transition probability and the reward are linear in given features. This model not only admits a low-rank structure, but also satisfies the complete condition. Therefore, this model belongs in all frameworks.
However, when the number of action is infinite, the algorithms for Bellman Rank and Witness Rank are not applicable because their sample complexity scales with the number of actions.
Linear mixture MDP~\citep{modi2019sample,ayoub2020model} assumes the transition probability is a linear mixture of some base models.
This model cannot be included in Bellman Rank, Witness Rank, or Bellman Complete, but our Bilinear Classes includes this model.

LQR is a fundamental model for continuous control that can be efficiently solvable~\citep{dean2019sample}. 
While LQR has a low Bellman Rank and low Witness Rank, since the algorithms for Bellman Rank and Witness Rank scale with the number of actions and LQR's action set is uncountable, these two frameworks cannot incorporate LQR.

There is a line of work on state-action aggregation. 
$Q^*$ ``irrelevance" state aggregation assumes one can aggregate states to a meta-state if these states share the same $Q^*$ value, and the number of meta-states is small~\citep{lihong2009disaggregation,jiang2015abstraction}.
$Q^*$ state-action aggregation aggregates state-action pairs to a meta-state-action pair if these pairs have the same $Q^*$-value~\citep{dong2020provably,lihong2009disaggregation}.

Lastly, when only assuming $Q^*$ is linear, there exists an exponential lower bound~\citep{weisz2020exponential}, but with the additional assumption that the MDP is (nearly) deterministic and has large sub-optimality gap, there exists sample efficient algorithms~\citep{wen2013efficient,du2019dsec,du2020agnostic}.
\section{Setting}
\label{sec:setting}


We denote an episodic finite horizon, non-stationary MDP with horizon
$H$, by $\Mcal = \left\{\Scal,\Acal, r, H, \{P_h\}_{h=0}^{H-1},
  s_0\right\}$, where $\Scal$ is the state space, $\Acal$ is the
action space, $r:\Scal\times\Acal\mapsto [0,1]$ is the expected reward
function with the corresponding random variable $R(s,a)$, $P_h:\Scal\times\Acal\mapsto \simplex(\Scal)$ (where
$\simplex(\Scal)$ denotes the probability simplex over $\Scal$) is the
transition kernel for all $h$, $H \in \mathbb{Z}_+$ is the planning
horizon and $s_0$ is a fixed initial state\footnote{Our results
generalizes to any fixed initial state distribution}. For ease of
exposition, we use the notation $o_h$ for \say{observed transition info at timestep $h$} i.e. $o_h = (r_h, s_h, a_h, s_{h+1})$ where $r_h$ is the observed reward $r_h = R(s_h,a_h)$ and $s_h, a_h, s_{h+1}$ is the observed state transition at timestep $h$.  

A deterministic, stationary policy $\pi:\Scal \mapsto \Acal$ specifies
a decision-making strategy in which the agent chooses actions
adaptively based on the current state, i.e. $a_h\sim \pi(s_h)$. We denote a non-stationary policy 
$\pi = \{\pi_0,\dots, \pi_{H-1}\}$ as a sequence of stationary policies where
$\pi_h: \Scal\mapsto \Acal$.

Given a policy $\pi$ and a state-action pair $(s,a) \in \Scal \times \Acal$, the $Q$-function at time step $h$ is defined as \[
Q_h^\pi(s,a) = \E\left[\sum_{h' = h}^{H-1}R(s_{h'},a_{h'})\mid s_h =s, a_h = a, \pi\right]\, ,
\]
and, similarly, a value function time step $h$ of a given state $s$ under a policy $\pi$ is defined as \[
V_h^\pi(s)=\E\left[\sum_{h' = h}^{H-1}R(s_{h'},a_{h'})\mid s_h =s, \pi\right]\, ,
\]
where both expectations are with respect to $s_0, a_0, \ldots s_{H-1},a_{H-1}\sim d^\pi$.
We use $Q_h^\star$ and $V_h^\star$ to denote the $Q$ and $V$-functions of the optimal policy.

\paragraph{Sample Efficient Algorithms.} Throughout the paper, we will consider an algorithm as sample-efficient, if it uses number of trajectories polynomial in the problem horizon $H$, inherent dimension $d$, accuracy parameter $1/\epsilon$ and poly-logarithmic in the number of candidate value-functions.

\paragraph{Notation.} For any two vectors $x,y$, we denote $[x,y]$ as
the vector that concatenates $x,y$, i.e., $[x,y] := [
x^{\top},y^{\top} ]^{\top}$. For any set $S$, we write $\simplex(S)$
to denote the probability simplex. We often use $U(S)$ as the uniform
distribution over set $S$.  We will let $\Vcal$ denote a Hilbert space
(which we assume is either finite dimensional or separable).  

We let $[H]$ denote the set $\{0, \ldots H-1\}$. We slightly abuse
notation (overloading $d^\pi$ with its marginal distributions), where
$s_h\sim d^\pi, (s_h,a_h)\sim d^\pi, (r_h, s_h, a_h, s_{h+1})\sim d^\pi$ and most frequently $o_h \sim d^\pi$ denotes the marginal distributions at timestep $h$. We also use the
shorthand notation
$s_0, a_0, \ldots s_{H-1},a_{H-1}\sim \pi$, $s_h, a_h\sim \pi$ for
$s_0, a_0, \ldots s_{H-1},a_{H-1}\sim d^\pi$, $s_h, a_h\sim d^\pi$.


\section{Bilinear Classes}
\label{sec:bilinear_model}

Before, we define our structural framework -- Bilinear Class, we first define our hypothesis class.

\paragraph{Hypothesis Classes.} 
We assume access to a hypothesis class $\Hcal = \Hcal_0 \times \ldots\times \Hcal_{H-1}$, which can be abstract sets that permit for both
\emph{model-based and value-based} hypotheses. The only restriction we
make is that for all $f\in\Hcal$, we have an associated state-action
value function $Q_{h,f}$ and a value function $V_{h,f}$. We next provide some examples:\begin{enumerate}
\item An example of \emph{value-based hypothesis class} $\Hcal$ is an explicit set of state-action value $Q$ and value functions $V$ i.e. \begin{align*}
  \Hcal_h \subset \{(Q_h, V_h) \mid~ &Q_h ~\text{is a function from}~ \Scal \times \Acal \mapsto \R~\text{and}\\
  & V_h ~\text{is a function from}~ \Scal \mapsto \R\} \, .
\end{align*} Note that in this case, for any hypothesis
$f:= ((Q_0, V_0), (Q_1, V_1), \ldots, (Q_{H-1}, V_{H-1})) \in \Hcal$, we can take the associated $Q_{h,f} = Q_h$ and associated $V_{h,f} = V_h$. 
\item Another example of \emph{value-based hypothesis class} $\Hcal$ is when $\Hcal$ is just a set of state-action value $Q$ functions i.e. \begin{equation*}
  \Hcal_h \subset \{Q_h \mid Q_h ~\text{is a function from}~ \Scal \times \Acal \mapsto \R\} \, .
\end{equation*} In
this case, for any hypothesis
$f:= (Q_0, Q_1, \ldots, Q_{H-1}) \in \Hcal$, we can take the associated $Q_{h,f} = Q_h$ and the associated $V_{h,f}$ function to be greedy with respect to the $Q_{h,f}$ function i.e. $V_{h,f}(\cdot) = \max_{a\in \Acal} Q_{h,f}(\cdot, a)$.
\item An example of \emph{model-based hypothesis class} is when $\Hcal_h$ is a set of models/transition kernels $P_h$ and reward functions $R_h$ i.e. \begin{align*}
  \Hcal_h \subset \{(P_h, R_h) \mid~ &P_h ~\text{is a function from}~ \Scal \times \Acal \mapsto \simplex(\Scal)~\text{and}\\
  &R_h ~\text{is a function from}~ \Scal \times \Acal \mapsto \simplex(\R)\} \, .
\end{align*} In this case, for any hypothesis
$f:= ((P_0, R_0), (P_1, R_1), \ldots, (P_{H-1}, R_{H-1})) \in \Hcal$, we can take the associated $Q_{h,f}$ and $V_{h,f}$ functions to be the optimal value functions corresponding to the transition kernels $\{P_h\}_{h=0}^{H-1}$ and reward functions $\{R_h\}_{h=0}^{H-1}$.
\end{enumerate}
 Furthermore, we assume the hypothesis class
is constrained so that  $V_{h,f}(s) = \max_a Q_{h,f}(s,a)$ for all
$f\in \Hcal$, $h\in [H]$, and $s\in\Scal$,
which is always possible as we can remove hypothesis for which this is not
true. We let $\pi_{h,f}$ be the greedy policy with respect to $Q_{h,f}$, i.e., $\pi_{h,f}(s) = \argmax_{a\in\Acal} Q_{h,f}(s,a)$, and $\pi_{f}$ as the sequence of time-dependent policies $\{\pi_{h,f}\}_{h=0}^{H-1}$.\\

\subsection{Warmup: Bellman rank, the $Q$ and $V$ versions.}
\label{subsec:bellman}
As a motivation for our structural framework, we next discuss Bellman rank framework considered in \cite{jiang2017contextual}. In this case, the hypothesis class $\Hcal_h$ contains Q value functions, i.e., \begin{equation*}
  \Hcal_h \subset \{Q_h \mid Q_h ~\text{is a function from}~ \Scal \times \Acal \mapsto [0, H]\} \, .
\end{equation*} In
this case, for any hypothesis
$f:= (Q_0, Q_1, \ldots, Q_{H-1}) \in \Hcal$, we take the associated state-action value function $Q_{h,f} = Q_h$ and the associated state value $V_{h,f}$ function to be greedy with respect to the $Q_{h,f}$ function i.e. $V_{h,f}(\cdot) = \max_{a\in \Acal} Q_{h,f}(\cdot, a)$. 

\begin{definition}[$V$-Bellman Rank]
  A MDP has a \emph{$V$-Bellman rank} of dimension $d$ if for all $h\in [H]$, there exist functions $W_h: \Hcal \to \R^d$
  and $X_h: \Hcal \to \R^d$, such that for all $f,g \in \Hcal$:
  \begin{align*}
   \E_{a_{0:h-1} \sim d^{\pi_f}, a_h = \pi_{g}(s_h) }&\big[ V_{h,g}(s_h) - r(s_h, a_h)-   \E\left[ V_{h+1,g}(s_{h+1}) | s_h,a_h \right] \big]\\& = 
  \langle W_h(g) - W_h(f^\star), X_h(f)\rangle .
  \end{align*}
  \end{definition}

Even though \cite{jiang2017contextual} only considered $V$-Bellman Rank, as a natural extension of this definition, we can also consider the $Q$-Bellman Rank.
 
\begin{definition}[$Q$-Bellman Rank]
\label{def:bellman_rank}
For a given MDP $\mathcal{M}$, we say that our state-action value hypothesis class
$\Hcal$ has a  \emph{$Q$-Bellman rank} of dimension $d$ if for all $h\in[H]$, there exist functions $W_h: \Hcal \to \R^d$
and $X_h: \Hcal \to \R^d$, such that for all  $f,g \in \Hcal$
\begin{align*}
\E_{a_{0:h} \sim d^{\pi_f}}&\big[ Q_{h,g}(s_h,a_h) -  r(s_h,a_h) - V_{h+1,g}(s_{h+1}) \big] =\langle W_h(g) - W_h(f^\star), X_h(f)\rangle.
\end{align*}
\end{definition}

Let us interpret how the two definitions differ in the usage of functions $V_{h,f}$ vs $Q_{h,f}$ (along with the usage
of the \say{estimation} policies $a_{0: h} \sim \pi_f$ vs $a_{0:h-1}\sim
\pi_f$ and $a_h \sim \pi_g$). Recall that the Bellman equations can be
written in terms of the value functions or the state-action values;
here, the intuition is that the former
definition corresponds to enforcing Bellman consistency of the value functions
while the latter definition corresponds to enforcing Bellman consistency of
the state-action value functions.  
 Our more general structural framework,
Bilinear Classes, will cover both these definitions for infinite
dimensional hypothesis class (note that \cite{jiang2017contextual}
only considered finite dimensional hypothesis class).

\subsection{Bilinear Classes}
We now introduce a new structural framework -- the Bilinear Class.

\paragraph{Realizability.} We say that $\Hcal$ is \emph{realizable} for an MDP
$\mathcal{M}$ if, for all $h\in[H]$, there
exists a hypothesis $f^\star \in \Hcal$ such that
$Q_h^\star(s,a)  = Q_{h,f^\star}(s,a)$, where $Q_h^\star$ is the optimal
state-action value at time step $h$ in the ground truth MDP
$\mathcal{M}$. For instance, for the model-based perspective, the
realizability assumption is implied if the ground truth transition $P$
belongs to our hypothesis class $\Hcal$.

Now we are ready to introduce the \classname. 

\begin{definition}[\classname]
\label{def:linear_regret}
Consider an MDP $\Mcal$, a hypothesis class $\Hcal$,
a discrepancy function
$\ell_f:(\R \times \Scal\times\Acal\times\Scal)\times\Hcal\rightarrow \R$ (defined
for each $f\in\Hcal$), and a set of estimation
policies $\Pi_{\mathrm{est}}=\{\piest(f):f\in\Hcal\}$. We say $(\Hcal, \ell_f, \Pi_{\mathrm{est}},\Mcal)$ is (\emph{implicitly}) a
\emph{Bilinear Class} if 
$\Hcal$ is realizable in $\Mcal$ and if 
there exist functions $W_h: \Hcal \to \Vcal$
and $X_h: \Hcal \to \Vcal$ for some Hilbert space $\Vcal$, such that
the following two properties hold for all $f\in \Hcal$ and $h \in [H]$:
\begin{enumerate} 
  \item 
We have:
  \begin{align}
   \Abs{\E_{a_{0:h} \sim \pi_f}\big[ Q_{h,f}(s_h,a_h) -  r(s_h,a_h) - V_{h+1,f}(s_{h+1})\big]} \leq 
  \Abs{\langle W_h(f) - W_h(f^\star), X_h(f)\rangle}     \label{eq:assume1}
  \end{align}
  \item 
    The policy $\piest(f)$
   and discrepancy measure $\ell_{f}(o_h, g)$ can be used for
   estimation in the following sense: for any $g\in \Hcal$, we have that (here $o_h = (r_h, s_h, a_h, s_{h+1})$ is the \say{observed transition info})
  \begin{align}
  \Abs{\E_{a_{0:h-1} \sim \pi_f} \E_{a_h \sim \piest(f)} \big[
  \ell_{f}(o_h, g) \big]} = \Abs{\langle W_h(g) - W_h(f^\star), X_h(f)\rangle}.
  \label{eq:assume2}
  \end{align} Typically, $\pi_{\textrm{est}}(f)$ will be either the
  uniform distribution on $\mathcal{A}$ or $\pi_f$ itself; in the
  latter case, we refer to the estimation strategy as being on-policy.
\end{enumerate} 
We also define $\Xcal_h := \{X_h(f)\colon f\in \Hcal\}$ and   $\Xcal := \{\Xcal_h: h \in [H]\}$. 
\end{definition} 

We emphasize the above definition only assumes the existence of $W$
and $X$ functions. Particularly, our algorithm only uses the
discrepancy function $\ell_f$, and does not need to know $W$ or
$X$. A typical example of discrepancy function $\ell_f(o_h, g)$ would be the bellman error $Q_{h,g}(s_h,a_h) -  r_h - V_{h+1,g}(s_{h+1})$, but we would often need to use a different discrepancy function see for e.g. Linear Mixture Models (\Cref{subsec:linearmm}). 

  We now provide some intuition for definition of Bilinear Class. The first part of the definition (\Cref{eq:assume1}) basically relates the Bellman error for hypothesis $f$ (and hence sub-optimality) to the sum of bilinear forms $\Abs{\langle W_h(f) - W_h(f^\star), X_h(f)\rangle}$ (see for example proof of \Cref{lemma:opt}). Crucially, the second part of the definition (\Cref{eq:assume2}), allows us to \say{reuse} data from hypothesis $f$ to estimate the bilinear form $\Abs{\langle W_h(g) - W_h(f^\star), X_h(f)\rangle}$ for \emph{all} hypothesis $g$ in our hypothesis class! This is reminiscent of uniform convergence guarantees in supervised learning, where data can be reused to simultaneously estimate the loss for all hypothesis and eliminate those with high loss.
\subsubsection{Finite Bellman rank $\implies$ Bilinear Class }
Here we show our framework naturally generalizes the Bellman rank framework (\Cref{subsec:bellman}). For $Q$-bellman rank case, we define the discrepancy function $\ell_f$ for observed transition info $o_h = (r_h, s_h,a_h,s_{h+1})$ as:
\begin{align*}
& \ell_f(o_h, g) = Q_{h,g}(s_h,a_h) -  r_h - V_{h+1,g}(s_{h+1}).
\end{align*}
\begin{lemma}[Finite $Q$-Bellman Rank $\implies$ Bilinear Class]
  For given MDP $\Mcal$, suppose our hypothesis class $\Hcal$ has a $Q$-Bellman rank of dimension $d$. Then, for on-policy estimation policies $\pi_{est} = \pi_f$, and the discrepancy function $\ell_f$ defined above, $(\Hcal, \ell_f, \Pi_{\mathrm{est}},\Mcal)$ is (\emph{implicitly}) a \emph{Bilinear Class}.
\end{lemma}
\begin{proof}
  Its straightforward to see that in this case, both \Cref{eq:assume1} and \Cref{eq:assume2} are satisfied.
\end{proof}
In the $V$-Bellman rank setting, we define the discrepancy function $\ell_f$ for observed transition info $o_h = (r_h, s_h,a_h,s_{h+1})$  as:
\begin{align*}
&\ell_{f}(o_h, g)=  \frac{\mathbf{1}\{a_h = \pi_g(s_h)\}}{1/|\Acal|} \left( V_{h,g}(s_h) - r_h - V_{h+1,g}(s_{h+1}) \right).
\end{align*}
\begin{lemma}[Finite $V$-Bellman Rank $\implies$ Bilinear Class]
  For given MDP $\Mcal$, suppose our hypothesis class $\Hcal$ has a $V$-Bellman rank of dimension $d$. Then, for uniform estimation policies $\pi_{est} = U(\Acal)$, and the discrepancy function $\ell_f$ defined above, $(\Hcal, \ell_f, \Pi_{\mathrm{est}},\Mcal)$ is (\emph{implicitly}) a \emph{Bilinear Class}.
\end{lemma}
\begin{proof}
  Note that for $g = f$, we have that for observed transition info $o_h = (r_h, s_h,a_h,s_{h+1})$ \[
  \E_{s_h\sim d^{\pi_f}} \E_{a_h \sim U(\Acal)} \left[ \ell(o_h, f)  \right] =  \E_{s_h,a_h,s_{h+1}\sim d^{\pi_f}} \left[ Q_{h,f}(s_h, a_h) - r(s_h,a_h) - V_{h+1,f}(s_{h+1})  \right]
\] Therefore, to prove that this is a Bilinear Class, we will show that a stronger \say{equality} version of \Cref{eq:assume2} holds (which will also prove \Cref{eq:assume1} holds). Observe that for any $h$, \begin{align*}
  &\E_{s_h \sim d^{\pi_f}} \E_{a_h \sim U(\Acal)} \left[ \ell_f(o_h, g)  \right]\\
  &= \E_{s_h \sim d^{\pi_f}}\left[ Q_{h,g}(s_h, \pi_g(s_h)) -   r(s_h, \pi_g(s_h)) -  \E\left[ V_{h+1,g}(s_{h+1}) | s_h,\pi_g(s_h) \right]     \right]  \\
  & = \left\langle  W_h(g) - W_h(f^\star), X_h(f)\right\rangle 
  \end{align*}
  This completes the proof.
\end{proof}

\subsection{Examples}
\label{sec:new_examples}
We now provide examples of Bilinear Classes: two known
models (Linear Bellman Complete and Linear Mixture Models) and two new models
that we propose (Linear $Q^\star/ V^\star$ and Low Occupancy Complexity). We return to these examples to give non-parametric
sample complexities in \Cref{sec:rates-4ex}. See \Cref{sec:examples} for additional examples of Bilinear Classes.  

\subsubsection{Linear Mixture MDP.}
\label{subsec:linearmm}
First, we show our definition naturally captures model-based hypothesis class.
  \begin{definition}[Linear Mixture Model]
    \label{def:linearmm}
  We say that a MDP $\mathcal{M}$ is a \emph{Linear Mixture Model} if  there exists (known) features $\phi: \Scal \times \Acal \times \Scal \mapsto \Vcal$ and $\psi: \Scal\times \Acal\mapsto \Vcal$; and (unknown) $\theta^\star \in \Vcal$ for some Hilbert space $\Vcal$ such that for all $h\in [H]$ and $(s,a,s')\in \Scal\times \Acal \times \Scal$ \begin{align*}
      P_h(s'\mid s,a) = \langle\theta_h^\star, ~\phi(s,a,s')\rangle\quad \text{and} \quad r(s,a) = \langle \theta_h^\star, ~\psi(s,a)\rangle .
  \end{align*}
  \end{definition}
  We denote hypothesis in our hypothesis class $\Hcal$ as tuples $(\theta_0,\ldots\theta_{H-1})$, where $\theta_h\in\Vcal$. Recall that given a model $f\in\Hcal$ (i.e. $f$ is the time-dependent transitions, i.e., $f_h: \Scal\times\Acal\mapsto \Delta(\Scal)$), we denote $V_{h,f}$ as the optimal value function under model $f$ and corresponding reward function (in this case defined by $\psi$). Specifically, for any hypothesis $g = \{\theta_0,\dots, \theta_{H-1}\} \in \Hcal$, $V_{h,g}$ and $Q_{h,g}$ satisfy the following Bellman optimality equation: \begin{equation}
    \label{eq:bellman-mix}
    Q_{h,g} (s_h, a_h) = {\theta_h^{\top}  \bigg(\psi(s_h,a_h) + \sum_{\bar s\in\Scal}  \phi(s_h,a_h,\bar s) V_{h+1,g}(\bar s)\bigg)}
  \end{equation} Note that in this example, discrepancy function will explicitly depend on $f$. For hypothesis $g = \{\theta_0,\dots, \theta_{H-1}\} \in \Hcal$ and observed transition info $o_h = (r_h, s_h,a_h,s_{h+1})$, we define
\begin{align*}
    \ell_f(o_h, g) 
    &= {\theta_h^{\top}  \bigg(\psi(s_h,a_h) + \sum_{\bar s\in\Scal}  \phi(s_h,a_h,\bar s) V_{h+1,f}(\bar s)\bigg)}  - {\bigg(V_{h+1,f}(s_{h+1}) + r_h\bigg)}.
    \end{align*} 
    \begin{lemma}[Linear Mixture Model $\implies$ Bilinear Class]
      Consider a MDP $\Mcal$ which is a Linear Mixture Model. Then, for the hypothesis class $\Hcal$, discrepancy function $\ell_f$ defined above and on-policy estimation policies $\piest(f) = \pi_f$, $(\Hcal, \ell_f, \Pi_{\mathrm{est}},\Mcal)$ is (\emph{implicitly}) a \emph{Bilinear Class}.
    \end{lemma}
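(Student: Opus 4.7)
The plan is to exhibit explicit choices of the Hilbert-space embeddings $W_h, X_h$ and then verify the two bilinear-form identities in \Cref{def:linear_regret} by a direct calculation that exploits the linearity of the transition kernel and reward in $\theta_h^\star$. Since the estimation policy is on-policy ($\piest(f)=\pi_f$), the conditioning $\E_{a_{0:h-1}\sim\pi_f}\E_{a_h\sim\piest(f)}$ collapses to $\E_{a_{0:h}\sim\pi_f}$, which is exactly what appears in \Cref{eq:assume1}, so both conditions end up being computed against the same underlying expectation.

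First, I would process the discrepancy function for an arbitrary $g=\{\theta_0,\dots,\theta_{H-1}\}$. Using that in a Linear Mixture Model the reward satisfies $\E[r_h\mid s_h,a_h]=\langle\theta_h^\star,\psi(s_h,a_h)\rangle$ and the next-state expectation of $V_{h+1,f}$ is
\[
\E[V_{h+1,f}(s_{h+1})\mid s_h,a_h]=\sum_{\bar s}\langle\theta_h^\star,\phi(s_h,a_h,\bar s)\rangle V_{h+1,f}(\bar s)=\Big\langle\theta_h^\star,\sum_{\bar s}\phi(s_h,a_h,\bar s) V_{h+1,f}(\bar s)\Big\rangle,
\]
I would take the inner expectation inside $\ell_f(o_h,g)$ to obtain
\[
\E_{a_{0:h}\sim\pi_f}\big[\ell_f(o_h,g)\big]=\E_{a_{0:h}\sim\pi_f}\Big[(\theta_h-\theta_h^\star)^\top\Big(\psi(s_h,a_h)+\sum_{\bar s}\phi(s_h,a_h,\bar s) V_{h+1,f}(\bar s)\Big)\Big].
\]
This factors as an inner product $\langle \theta_h-\theta_h^\star,\, X_h(f)\rangle$, motivating the definitions
\[
W_h(g):=\theta_h,\qquad X_h(f):=\E_{a_{0:h}\sim\pi_f}\Big[\psi(s_h,a_h)+\sum_{\bar s}\phi(s_h,a_h,\bar s) V_{h+1,f}(\bar s)\Big],
\]
taking values in the same Hilbert space $\Vcal$ in which $\theta_h^\star$ lives. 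With these choices, realizability gives $W_h(f^\star)=\theta_h^\star$, and \Cref{eq:assume2} holds as an equality for every $g\in\Hcal$.

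Next I would verify \Cref{eq:assume1}. Recall that $Q_{h,f}$ satisfies the Bellman optimality equation \eqref{eq:bellman-mix} under the model $f$ (with parameter $\theta_{h,f}$), so
\[
Q_{h,f}(s_h,a_h)=\theta_{h,f}^\top\Big(\psi(s_h,a_h)+\sum_{\bar s}\phi(s_h,a_h,\bar s) V_{h+1,f}(\bar s)\Big).
\]
Subtracting the true reward/next-state-value via the same identities as above gives
\[
\E_{a_{0:h}\sim\pi_f}\big[Q_{h,f}(s_h,a_h)-r(s_h,a_h)-V_{h+1,f}(s_{h+1})\big]=\langle W_h(f)-W_h(f^\star),\,X_h(f)\rangle,
\]
which is stronger than \Cref{eq:assume1} (equality rather than inequality).

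I do not expect a serious obstacle here: the proof is essentially a matter of recognizing that the Bellman error of any $g$, evaluated under the on-policy roll-in of $f$, is already a linear functional of $\theta_h^g-\theta_h^\star$ because $P_h$ and $r$ are linear in $\theta^\star$. The only subtlety worth noting is that the same $X_h(f)$ serves both conditions, which is precisely why the discrepancy $\ell_f$ is written with $V_{h+1,f}$ (not $V_{h+1,g}$)--this decoupling is what allows a single $X_h(f)$ to \emph{simultaneously} capture the Bellman discrepancy of all $g\in\Hcal$ estimated from data collected under $\pi_f$.
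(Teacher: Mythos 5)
Your proposal is correct and follows essentially the same route as the paper's proof: the same choices $W_h(g)=\theta_h$ and $X_h(f)=\E_{a_{0:h}\sim\pi_f}\big[\psi(s_h,a_h)+\sum_{\bar s}\phi(s_h,a_h,\bar s)V_{h+1,f}(\bar s)\big]$, the same use of linearity of $P_h$ and $r$ in $\theta_h^\star$ to establish \Cref{eq:assume2} as an equality, and the same observation that setting $g=f$ together with the Bellman optimality equation \eqref{eq:bellman-mix} yields \Cref{eq:assume1}.
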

    \begin{proof}
      Observe that for $g = f$, using \Cref{eq:bellman-mix}, for observed transition info $o_h = (r_h, s_h,a_h,s_{h+1})$, \[
        \ell_f(o_h, f) = Q_{h,f}(s_h,a_h) -  r_h - V_{h+1,f}(s_{h+1})\, .\] and therefore \[
          \E_{o_h\sim d^{\pi_f}} \Big[ \ell_f(o_h, f)\Big] = \E_{a_{0:h}\sim \pi_f} \Big[Q_{h,f}(s_h,a_h) -  r(s_h, a_h) - V_{h+1,f}(s_{h+1}) \Big]\, .
        \] We consider on-policy estimation $\pi_{est} = \pi_f$. To prove that linear mixture MDP is a \classname, we only need to show that an \say{equality} version of \Cref{eq:assume2} holds (which implies \Cref{eq:assume1} holds by the frame above). For $g = \{\theta_0,\dots, \theta_{H-1}\} \in \Hcal$, observe: \begin{align*}
  &\E_{o_h\sim d^{\pi_f}} \Big[\ell_f\left( o_h, g \right)\Big] \\
  &= \E_{s_h, a_h\sim d^{\pi_f}} \left[ {\theta_h^{\top}  \Big(\psi(s_h,a_h) + \sum_{\bar s\in\Scal}  \phi(s_h,a_h,\bar s) V_{h+1,f}(\bar s)\Big)} - \E_{s_{h+1} \sim P_h(s_h,a_h)}\Big[V_{h+1,f}(s_{h+1}) + r_h\Big]\right].\\
  &= \E_{s_h, a_h\sim d^{\pi_f}} \left[ (\theta_h - \theta_h^\star)^{\top}  \left( \psi(s_h,a_h) + \sum_{\bar s\in\Scal} \phi(s_h,a_h,\bar s) V_{h+1,f}(\bar s)\right)  \right] \\
  &= \left\langle  W_h(g) - W_h(f^\star), X_h(f)\right\rangle
    \end{align*} where we defined the $W_h, X_h$ functions as follows:
    \begin{align*}
    &W_h(g) = \theta_h,\\
    &X_h(f) = \E_{s_h, a_h\sim d^{\pi_f}} \left[ \psi(s_h,a_h) + \sum_{\bar s\in\Scal} \phi(s_h,a_h,\bar s) V_{h+1,f}(\bar s)\right].
    \end{align*} This concludes that Linear Mixture Model also forms a Bilinear Class.
    \end{proof}

\subsubsection{Linear $Q^\star/V^\star$ (new model)}
\label{sec:linearqv}
We introduce a new model: \emph{linear $Q^\star/V^\star$} where we assume both the optimal $Q^\star$ and $V^\star$ are linear functions in features that lie in (possibly infinite dimensional) Hilbert space. 
\begin{definition}[Linear $Q^\star/V^\star$]
  \label{def:linearqv}
  We say that a MDP $\mathcal{M}$ is a \emph{linear $Q^\star/V^\star$} model if  there exist (known) features $\phi: \Scal \times \Acal \mapsto \Vcal_1$, $\psi: \Scal \mapsto \Vcal_2$ and (unknown) $(w^\star, \theta^\star) \in \Vcal_1 \times \Vcal_2$ for some Hilbert spaces $\Vcal_1, \Vcal_2$ such that for all $h\in [H]$ and for all $(s,a,s')\in \Scal\times \Acal \times \Scal$, \[
      Q_h^\star(s,a) = \langle w_h^\star, ~\phi(s,a)\rangle \quad \text{and} \quad V_h^\star(s') = \langle \theta_h^\star, ~ \psi(s')\rangle\, .
  \]
  \end{definition}
Here, our hypothesis class $\Hcal = \Hcal_0 \times \ldots, \Hcal_{H-1}$ is a set of linear functions i.e. for all $h\in [H]$, the set $\Hcal_h$ is defined as: \begin{align*}
  \Big\{ (w, \theta) \in \Vcal_1 \times \Vcal_2\colon\max_{a\in \Acal} w^{\top} \phi(s,a)  = \theta^{\top}\psi(s)\, , ~\forall s \in \Scal \Big\}.
\end{align*}
We define the following discrepancy function
$\ell_f$ (in this case the discrepancy function does not depend on
$f$), for hypothesis $g = \{( w_h, \theta_h )\}_{h=0}^{H-1}$ and observed transition info $o_h = (r_h, s_h,a_h,s_{h+1})$:
\begin{align*}
  \ell_f(o_h,g)  &= Q_{h,g}(s_h,a_h) -  r_h - V_{h+1,g}(s_{h+1})\\
  &= w_h^{\top}\phi(s_h,a_h) - r_h - \theta_{h+1}^{\top}\psi(s_{h+1})\, .
\end{align*}  

\begin{lemma}[Linear $Q^\star/V^\star$ $\implies$ Bilinear Class]
  Consider a MDP $\Mcal$ which is a linear $Q^\star / V^\star$ model. Then, for the hypothesis class $\Hcal$, the discrepancy function $\ell_f$ defined above and on-policy estimation policies $\piest(f) = \pi_f$, $(\Hcal, \ell_f, \Pi_{\mathrm{est}},\Mcal)$ is (\emph{implicitly}) a \emph{Bilinear Class}.
\end{lemma}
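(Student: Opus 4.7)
The strategy is to reduce everything to property (\ref{eq:assume2}) by first noting that property (\ref{eq:assume1}) will then follow from the on-policy choice $\pi_{\mathrm{est}}(f)=\pi_f$ and the identity $\ell_f(o_h,f)=Q_{h,f}(s_h,a_h)-r_h-V_{h+1,f}(s_{h+1})$. So the main task is to exhibit features $W_h,X_h$ (valued in a single Hilbert space) so that
\[
\E_{a_{0:h}\sim\pi_f}\bigl[\ell_f(o_h,g)\bigr]=\langle W_h(g)-W_h(f^\star),\,X_h(f)\rangle.
\]

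First, I would use the Bellman optimality equations $Q^\star_h(s,a)=r(s,a)+\E_{s'\sim P_h(\cdot\mid s,a)}[V^\star_{h+1}(s')]$ together with the assumed linear forms $Q^\star_h=\langle w^\star_h,\phi\rangle$ and $V^\star_h=\langle\theta^\star_h,\psi\rangle$ to conclude that $\E_{o_h\sim d^{\pi_f}}[\ell_f(o_h,f^\star)]=0$ for every roll-in policy $\pi_f$. This lets me subtract $\ell_f(o_h,f^\star)$ from $\ell_f(o_h,g)$ without changing the expectation, and the subtraction is what linearizes the expression: for $g=\{(w_h,\theta_h)\}_h$ one gets
\[
\E_{o_h\sim d^{\pi_f}}\bigl[\ell_f(o_h,g)\bigr]=\E_{s_h,a_h\sim d^{\pi_f}}\!\bigl[(w_h-w^\star_h)^{\top}\phi(s_h,a_h)\bigr]-\E_{s_{h+1}\sim d^{\pi_f}}\!\bigl[(\theta_{h+1}-\theta^\star_{h+1})^{\top}\psi(s_{h+1})\bigr].
\]

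Next I would collect both terms into a single inner product on the product Hilbert space $\Vcal:=\Vcal_1\times\Vcal_2$, defining
\[
W_h(g)=[\,w_h,\ \theta_{h+1}\,],\qquad X_h(f)=\bigl[\,\E_{s_h,a_h\sim d^{\pi_f}}\phi(s_h,a_h),\ -\E_{s_{h+1}\sim d^{\pi_f}}\psi(s_{h+1})\,\bigr],
\]
so that the displayed expectation is exactly $\langle W_h(g)-W_h(f^\star),X_h(f)\rangle$. This establishes (\ref{eq:assume2}) with equality. For (\ref{eq:assume1}), I would simply specialize $g=f$: since $\pi_{\mathrm{est}}(f)=\pi_f$, the left-hand side of (\ref{eq:assume2}) with $g=f$ coincides with the Bellman residual appearing in (\ref{eq:assume1}), so (\ref{eq:assume1}) follows (again with equality, and hence the absolute-value bound).

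The only real decision here is the feature construction; once one sees that concatenating $(w,\theta_{+1})$ against the pair of marginal expectations $(\E\phi,-\E\psi)$ gives a bilinear form, the argument is mechanical. The mild subtlety worth flagging is that the index on $\theta$ is shifted by one (the $V^\star$ term lives at step $h{+}1$), which is why the second component of $W_h(g)$ is $\theta_{h+1}$ rather than $\theta_h$; this is consistent with the convention $V_{H,f}\equiv 0$ at the terminal step. No completeness-type assumption is used, only realizability of $(w^\star_h,\theta^\star_h)\in\Hcal_h$ and the closure-under-greedy condition already built into $\Hcal_h$.
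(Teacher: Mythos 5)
Your proof is correct and follows essentially the same route as the paper's: establish the equality version of \Cref{eq:assume2} by using the Bellman optimality equation to subtract the (zero-mean) residual of $f^\star$, take $W_h(g)=[w_h,\theta_{h+1}]$ and $X_h(f)$ as the concatenated expected features, and then deduce \Cref{eq:assume1} by setting $g=f$ under the on-policy choice $\piest(f)=\pi_f$. If anything, your explicit minus sign on the $\psi$-component of $X_h(f)$ is slightly more careful than the paper's displayed formula, which writes $X_h(f)=\E[\phi(s_h,a_h),\psi(s_{h+1})]$ and leaves the sign implicit.
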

\begin{proof}
  Note that we will show that a stronger \say{equality} version of \Cref{eq:assume2} holds, which will also prove \Cref{eq:assume1} holds since for observed transition info $o_h = (r_h, s_h,a_h,s_{h+1})$, \[
    \E_{o_h\sim d^{\pi_f}} \Big[ \ell_f(o_h, f)\Big] = \E_{a_{0:h}\sim \pi_f} \Big[Q_{h,f}(s_h,a_h) -  r(s_h, a_h) - V_{h+1,f}(s_{h+1}) \Big]\, .
  \] Observe that for any $h$ \begin{align*}
  &\E_{o_h\sim d^{\pi_f}} \left[ \ell(o_h, g)  \right] \\
  & = \E_{s_h,a_h,s_{h+1}\sim d^{\pi_f}}\Big[  w_h^{\top}\phi(s_h,a_h) - \theta_{h+1}^{\top}\psi(s_{h+1})  - Q_h^\star(s_h,a_h) + V_{h+1}^\star(s_{h+1}) \Big]\\
  &=\left\langle  W_h(g) - W_h(f^\star), X_h(f)\right\rangle
  \end{align*} 
  where \begin{align*}
    W_h(g) &= [w_h, \theta_{h+1}],\\
    X_h(f) &= \mathbb{E}_{s_h,a_h\sim d^{\pi_f},s_{h+1}\sim P_h(s_h,a_h)}\left[ \phi(s_h,a_h), \psi(s_{h+1}) \right]\, . 
  \end{align*}
This concludes the proof. 
\end{proof}

\subsubsection{Bellman Complete and Linear MDPs}
We now consider Bellman Complete which captures the linear MDP model (see \Cref{sec:linearmdp} for more detail on linear MDP model).  Here, our hypothesis class $\Hcal$ is set of linear functions with respect to
some (known) feature $\phi: \Scal \times \Acal \mapsto \Vcal$, where
$\Vcal$ is a Hilbert space. We denote hypothesis in our hypothesis class $\Hcal$ as tuples $(\theta_0,\ldots\theta_{H-1})$, where $\theta_h\in\Vcal$.
\begin{definition}[Linear Bellman Complete]\label{def:linear_comp}
  We say our hypothesis class $\Hcal$ is \emph{Linear Bellman Complete} with respect to
$\mathcal{M}$ if $\Hcal$ is realizable and there exists $\Tcal_h:
\Vcal \rightarrow \Vcal$ such that
for all
$(\theta_0,\ldots\theta_{H-1}) \in \Hcal$ and $h\in[H]$, 
\[
\Tcal_h(\theta_{h+1})^{\top}\phi(s,a) = r(s,a) + \E_{s'\sim P_h(s,a)}\max_{a'\in\Acal} \theta_{h+1}^{\top} \phi(s',a')  .  
\] 
for all $(s,a) \in \Scal \times \Acal$.
\end{definition}

We define the following discrepancy function
$\ell_f$ (in this case the discrepancy function does not depend on
$f$), for hypothesis $g = (\theta_0,\dots, \theta_{H-1})$ and observed transition info $o_h = (r_h, s_h,a_h,s_{h+1})$:
\begin{align*}
  \ell_f(o_h,g)  &= Q_{h,g}(s_h,a_h) -  r_h - V_{h+1,g}(s_{h+1})\\
  &= \theta_h^{\top}\phi(s_h,a_h) - r_h - \max_{a'\in\Acal} \theta_{h+1}^{\top} \phi(s_{h+1},a')\, .
\end{align*}  
\begin{lemma}[Linear Bellman Complete $\implies$ Bilinear Class]
  Consider an MDP $\Mcal$ and hypothesis class $\Hcal$ such that $\Hcal$ is Linear Bellman Complete with respect to $\Mcal$. Then, for on-policy estimation policies $\piest(f) = \pi_f$ and the discrepancy function $\ell_f$ defined above, $(\Hcal, \ell_f, \Pi_{\mathrm{est}},\Mcal)$ is (\emph{implicitly}) a \emph{Bilinear Class}.
\end{lemma}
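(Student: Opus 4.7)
The plan is to follow the same pattern used for Linear $Q^\star/V^\star$: define appropriate $W_h$ and $X_h$ functions so that an equality version of (\ref{eq:assume2}) holds (which then automatically implies (\ref{eq:assume1}) by choosing $g=f$). The key observation is that Bellman completeness lets us replace the Bellman backup $r(s,a) + \E_{s'}\max_{a'} \theta_{h+1}^\top\phi(s',a')$ by the linear quantity $\Tcal_h(\theta_{h+1})^\top\phi(s,a)$, which turns the expected discrepancy into a bilinear form in $\phi$.

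Concretely, my plan is as follows. First, I would set
\[
W_h(g) = \theta_h - \Tcal_h(\theta_{h+1}), \qquad X_h(f) = \E_{s_h,a_h\sim d^{\pi_f}}\bigl[\phi(s_h,a_h)\bigr],
\]
for any $g=(\theta_0,\ldots,\theta_{H-1})\in\Hcal$. Second, I would verify the \emph{anchor property} $W_h(f^\star)=0$. Since $\Hcal$ is realizable, $Q_h^\star(s,a)=(\theta_h^\star)^\top\phi(s,a)$ and by the Bellman optimality equation $Q_h^\star(s,a)= r(s,a)+\E_{s'\sim P_h(s,a)}\max_{a'}(\theta_{h+1}^\star)^\top\phi(s',a')= \Tcal_h(\theta_{h+1}^\star)^\top\phi(s,a)$, so realizability combined with Bellman completeness forces $(\theta_h^\star - \Tcal_h(\theta_{h+1}^\star))^\top\phi(s,a)=0$ for all $(s,a)$; consequently the inner product $\langle W_h(g) - W_h(f^\star), X_h(f)\rangle$ equals $\langle W_h(g), X_h(f)\rangle$ for the purposes of the expected discrepancy.

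Third, I would compute the expectation of the discrepancy under $\piest(f)=\pi_f$ (on-policy). For any $g=(\theta_0,\ldots,\theta_{H-1})\in\Hcal$ and $o_h=(r_h,s_h,a_h,s_{h+1})$,
\begin{align*}
  \E_{a_{0:h}\sim \pi_f}\bigl[\ell_f(o_h,g)\bigr]
  &= \E_{s_h,a_h\sim d^{\pi_f}}\Bigl[\theta_h^\top \phi(s_h,a_h) - r(s_h,a_h) - \E_{s'\sim P_h(s_h,a_h)}\max_{a'}\theta_{h+1}^\top\phi(s',a')\Bigr]\\
  &= \E_{s_h,a_h\sim d^{\pi_f}}\bigl[(\theta_h - \Tcal_h(\theta_{h+1}))^\top \phi(s_h,a_h)\bigr]\\
  &= \langle W_h(g) - W_h(f^\star),\, X_h(f)\rangle,
\end{align*}
using Linear Bellman Completeness in the second equality and $W_h(f^\star)=0$ in the third. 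This establishes (\ref{eq:assume2}) with equality.

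Finally, for (\ref{eq:assume1}), I would simply plug $g=f$ into the identity above. On the left-hand side, $\E[\ell_f(o_h,f)]$ becomes exactly $\E_{a_{0:h}\sim\pi_f}[Q_{h,f}(s_h,a_h) - r(s_h,a_h) - V_{h+1,f}(s_{h+1})]$ by definition of $Q_{h,f}$ and $V_{h+1,f}$ under the greedy policy; the right-hand side is $|\langle W_h(f)-W_h(f^\star),X_h(f)\rangle|$, which is the required upper bound. There is no real obstacle here: the only subtle point is recognizing that Bellman completeness is precisely what is needed to collapse the $\max_{a'}$ nonlinearity into a linear expression in $\phi$, which in turn lets $X_h(f)$ be a single feature expectation that is shared across all hypotheses $g$.
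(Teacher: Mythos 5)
Your proposal is correct and follows essentially the same route as the paper: it uses the identical choices $W_h(g) = \theta_h - \Tcal_h(\theta_{h+1})$ and $X_h(f) = \E_{s_h,a_h\sim d^{\pi_f}}[\phi(s_h,a_h)]$, establishes the equality version of \Cref{eq:assume2} via Bellman completeness, notes $W_h(f^\star)=0$, and derives \Cref{eq:assume1} by specializing to $g=f$. Your explicit justification of the anchor property $W_h(f^\star)=0$ is a welcome elaboration of a step the paper merely asserts.
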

\begin{proof}
  Note that in this case, we will show that a stronger version of \Cref{eq:assume2} holds i.e with equality instead of $\leq$ inequality, which will also prove \Cref{eq:assume1} holds since for observed transition info $o_h = (r_h, s_h,a_h,s_{h+1})$, \[
    \E_{o_h\sim d^{\pi_f}} \Big[ \ell_f(o_h, f)\Big] = \E_{a_{0:h}\sim \pi_f} \Big[Q_{h,f}(s_h,a_h) -  r(s_h, a_h) - V_{h+1,f}(s_{h+1}) \Big]\, .
  \] Observe that for any $h$ \begin{align*}
    \mathbb{E}_{o_h\sim d^{\pi_f}} \left[ \ell(o_h, g)  \right] & = \E_{s_h,a_h\sim d^{\pi_f}} \left[  \theta_h^{\top} \phi(s_h,a_h) - \mathcal{T}_h(\theta_{h+1})^{\top} \phi(s_h,a_h)  \right]\\
    & = \left\langle  W_h(g) - W_h(f^\star), X_h(f)\right\rangle
    \end{align*} 
    where \begin{align*}
      W_h(g) &= \theta_h - \mathcal{T}_h(\theta_{h+1})\\
      X_h(f) &= \E_{s_h,a_h\sim d^{\pi_f}} [\phi(s_h,a_h)] .
    \end{align*}
    Observe that $W_h(f^\star) = 0$ for all $h$. 
\end{proof}

\subsubsection{ Low Occupancy Complexity (new model). }
We introduce another new model: \emph{Low Occupancy Complexity}.
\begin{definition}
[Low Occupancy Complexity] \label{def:low_op}
We say that a MDP $\Mcal$ and hypothesis class $\Hcal$ has \emph{low occupancy complexity} with respect to a
(possibly unknown) feature mapping $\phi_h:\Scal\times\Acal\rightarrow
\Vcal$ (where $\Vcal$ is a Hilbert space) if $\Hcal$ is realizable and there exists a (possibly unknown) $\beta_h: \Hcal \mapsto \Vcal$ for $h \in [H]$ such that for all
$f \in \Hcal$ and $(s_h,a_h)\in\Scal\times\Acal$ we have that: 
\[
d^{\pi_f}(s_h,a_h) = \langle \beta_h(f), \phi_h(s_h,a_h) \rangle.
\]
\end{definition}

It is important to emphasize that for this hypothesis class, we are
only assuming realizability, but it is otherwise arbitrary (e.g. it
could be a neural state-action value class) and the algorithm does not need to know the features $\phi_h$ nor $\beta_h$.
It is straight forward to see that such a class is Bilinear Class with
discrepancy function $\ell_f$ defined for hypothesis $g\in \Hcal$ and observed transition info $o_h = (r_h, s_h,a_h,s_{h+1})$ as, 
\begin{align*}
  \ell_f(o_h,g) &=Q_{h,g}(s_h,a_h) -  r_h - V_{h+1,g}(s_{h+1})
\end{align*}

\begin{lemma}[Low Occupancy Complexity $\implies$ Bilinear Class]
  Consider a MDP $\Mcal$ and hypothesis class $\Hcal$ which has  low occupancy complexity. Then, for the the discrepancy function $\ell_f$ defined above and on-policy estimation policies $\piest(f) = \pi_f$, $(\Hcal, \ell_f, \Pi_{\mathrm{est}},\Mcal)$ is (\emph{implicitly}) a \emph{Bilinear Class}.
\end{lemma}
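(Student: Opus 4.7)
The plan is to exhibit explicit witness functions $W_h$ and $X_h$ so that both conditions of \Cref{def:linear_regret} hold (in fact, with equality), leveraging the factorization of the occupancy measure provided by \Cref{def:low_op}. Let $\Ecal_h(g)(s,a) := Q_{h,g}(s,a) - r(s,a) - \E_{s'\sim P_h(s,a)}[V_{h+1,g}(s')]$ denote the Bellman residual at step $h$ for hypothesis $g$; realizability implies $\Ecal_h(f^\star) \equiv 0$.

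First, I would rewrite the expected discrepancy under the on-policy estimation rule $\piest(f) = \pi_f$. Taking a tower over $s_{h+1}$, we get
\begin{equation*}
\E_{a_{0:h}\sim \pi_f}\bigl[\ell_f(o_h,g)\bigr]
= \sum_{s,a} d^{\pi_f}(s,a)\,\Ecal_h(g)(s,a).
\end{equation*}
Using the low occupancy assumption $d^{\pi_f}(s,a) = \langle \beta_h(f), \phi_h(s,a)\rangle$ and linearity of the inner product, this equals
\begin{equation*}
\Bigl\langle \beta_h(f),\ \sum_{s,a}\phi_h(s,a)\,\Ecal_h(g)(s,a)\Bigr\rangle.
\end{equation*}
This suggests the choice
\begin{equation*}
X_h(f) := \beta_h(f), \qquad W_h(g) := \sum_{s,a} \phi_h(s,a)\, \Ecal_h(g)(s,a),
\end{equation*}
living in the same Hilbert space $\Vcal$ as $\beta_h$ and $\phi_h$.

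Next, I would verify the two required properties. Since $\Ecal_h(f^\star) \equiv 0$, we have $W_h(f^\star) = 0$, so for any $g\in\Hcal$,
\begin{equation*}
\E_{a_{0:h}\sim \pi_f}\bigl[\ell_f(o_h,g)\bigr] = \langle W_h(g) - W_h(f^\star),\ X_h(f)\rangle,
\end{equation*}
which is exactly \Cref{eq:assume2} (with equality, and with $\pi_{\mathrm{est}}(f)=\pi_f$). Setting $g = f$ and observing that the discrepancy $\ell_f(o_h,f)$ coincides with the Bellman residual expression inside the absolute value of \Cref{eq:assume1}, the same identity gives \Cref{eq:assume1} with equality.

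There is no substantive obstacle here; the only care needed is to ensure that the sum defining $W_h(g)$ is a well-defined element of $\Vcal$ (which can be handled by replacing the sum with the appropriate integral against the base measure over $\Scal\times\Acal$ when the state-action space is continuous, noting that $|\Ecal_h(g)|$ is bounded so the integrand is absolutely integrable against $\phi_h$ whenever $\phi_h$ is). The key conceptual step is recognizing that the low occupancy assumption lets us extract the $f$-dependence of the Bellman loss as a single inner product in $\Vcal$, at which point the bilinear structure is immediate.
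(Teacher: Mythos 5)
Your proposal is correct and matches the paper's proof essentially verbatim: the paper likewise sets $X_h(f) = \beta_h(f)$ and $W_h(g) = \sum_{(s,a)} \phi_h(s,a)\bigl(Q_{h,g}(s,a) - r(s,a) - \E_{s'\sim P_h(s,a)}[V_{h+1,g}(s')]\bigr)$, observes $W_h(f^\star)=0$ by realizability, and derives the equality version of \Cref{eq:assume2}, from which \Cref{eq:assume1} follows by taking $g=f$. Your additional remark about well-definedness in the continuous-state case is a reasonable minor elaboration but does not change the argument.
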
\begin{proof}
  To see why this is a \classname, as in previous proofs, we will show that an \say{equality} version of \Cref{eq:assume2} holds, which will also prove \Cref{eq:assume1} holds since \[
    \E_{o_h\sim d^{\pi_f}} \Big[ \ell_f(o_h, f)\Big] = \E_{a_{0:h}\sim \pi_f} \Big[Q_{h,f}(s_h,a_h) -  r(s_h, a_h) - V_{h+1,f}(s_{h+1}) \Big]\, .
  \] Observe that for any $h$ (here observed transition info $o_h = (r_h, s_h,a_h,s_{h+1})$):
\begin{align*}
&\E_{o_h\sim d^{\pi_f}}\big[ \ell_f(o_h,g)\big]\\
&=\sum_{(s_h,a_h) \in \Scal \times \Acal} d^{\pi_f}(s_h,a_h) \big( Q_{h,g}(s_h,a_h) -  r(s_h,a_h) - \E[V_{h+1,g}(s_{h+1})| s_h, a_h]\big)\\
&=\Big\langle \beta_h(f), \sum_{(s_h,a_h) \in \Scal \times \Acal}\phi_h(s_h,a_h) \big( Q_{h,g}(s_h,a_h) -  r(s_h,a_h) - \E[V_{h+1,g}(s_{h+1})|s_h,a_h]\big)\Big\rangle\\
&=  \langle W_h(g) - W_h(f^\star), X_h(f)\rangle
\end{align*}
where the notation $\E[ V(s_{h+1}) | s_h,a_h]$ is shorthand for $\E_{s_{h+1}\sim P_h(s_h,a_h)}[ V(s_{h+1})]$ and we defined the $W_h, X_h$ functions as follows: \begin{align*}
  & X_h(f) := \beta_h(f),\\
  &W_h(g):=\sum_{(s,a) \in \Scal \times \Acal} \phi_h(s,a) \big( Q_{h,g}(s,a) -  r(s,a) - \E_{s'\sim P_h(s,a)}[V_{h+1,g}(s')]\big).
  \end{align*} Note that $W_h(f^\star) = 0$. This completes the proof.
\end{proof}

Note that as such the hypothesis class $\Hcal$ could be arbitrary and unlike other models where we assume linearity, here it
could be a neural state-action value class. Our model can also capture the setting where the state-only occupancy has low complexity, i.e., $d^{\pi_f}(s_h) = \beta_h(f) \mu_h(s_h)$, for some $\mu_h:\Scal\to \Vcal$. In this case, we will use $\pi_{est} = U(\Acal)$. 


\section{The Algorithm and Theory}
\label{sec:alg}


Our algorithm, BiLin-UCB, is described in \Cref{alg:main}, which takes three
parameters as inputs, the number of iterations $T$, the
trajectory batch size $m$ per iteration and a confidence radius
$\epso$.  The key component of the algorithm is a constrained
optimization in Line~\ref{line:constraint_opt}.  For each time step
$h$, we use all previously collected data to form a single constraint
using $\ell_{f}$. The constraint refines the original version space
$\Hcal$ to be a restricted version space containing only 
hypothesis that are consistent with the current batch data. We then
perform an optimistic optimization: we search for a feasible
hypothesis $g$ that achieves the maximum total
reward $V_{g}(s_0)$.   
\begin{algorithm}[!t]
	\caption{\algname}
	\label{alg:general}
	\begin{algorithmic}[1]
		\STATE \textbf{Input}: number of iterations $T$, estimator function $\ell$, batch size $m$, confidence radius $\epso$
		\FOR{ iteration $t = 0, 1,2,\ldots, T-1$ }
		\STATE Set $f_t$ as the solution of the following program: \label{line:constraint_opt} 
		\begin{align*} 
			&\argmax_{g \in \Hcal} V_{g}(s_{0}) ~\text{subject to}~\\
			&\sum_{i=0}^{t-1} (\Lcal_{\Dcal_{i;h}, f_i}(g))^2\leq \epso^2 \quad\forall h \in [H]
		\end{align*}
		\STATE For all $h\in[H]$, create batch datasets $\Dcal_{t;h} = \{(r_h^i, s_h^i,a_h^i, {s}_{h+1}^i)\}_{i=0}^{m-1}$ sampled from distribution induced by $a_{0:h-1}\sim d^{\pi_{f_t}}$ and $a_h\sim \pi_{est}$. \label{line:data_collect}
		\ENDFOR
		\STATE  {\bfseries return} $\max_{t\in [T]} V^{\pi_{f_t}}$.
	\end{algorithmic}
	\label{alg:main}
\end{algorithm}

There are two ways to collect batch samples. For the case where $\pi_{est} = \pi_{f_t}$, then for
data collection in Line~\ref{line:data_collect}, we can generate $m$
length-H trajectories by executing $\pi_{f_t}$ starting from $s_0$.
For the general case (e.g. consider setting $\pi_{est}$ to be a uniform
distribution over $\Acal$), we gather the data for each
$h\in [H]$ independently. For $h\in [H]$, we first roll-in with
$\pi_{f_t}$ to generate $s_h$; then execute $a_h\sim \pi_{est}$; and
then continue to generate $s_{h+1}\sim P_h(\cdot | s_h,a_h)$ and $r_h \sim R(\cdot | s_h, a_h)$. Repeating
this process for all $h$, we need $H m$ trajectories to
form the batch datasets $\{\Dcal_{t;h}\}_{h=0}^{H-1}$.

\subsection{Main Theory: Generalization in Bilinear Classes}

We now present our main result. We first define some notations. We denote the expectation of the function $\ell_f(\cdot, g)$ under distribution $\mu$ over $\R \times \Scal \times \Acal \times \Scal$ by \begin{equation*}
  \Lcal_{\mu,f}(g) = \E_{o\sim \mu} [\ell_{f}(o, g)]
\end{equation*} For a set $\Dcal \subset \Scal \times \Acal \times
\Scal$, we will also use $\Dcal$ to represent the uniform
distribution over this set.

\begin{assumption}[Ability to Generalize]
  \label{assume:linear_regret} We assume there exists functions $\epsg(m,\Hcal)$ and $\conf(\delta)$ such that for any distribution $\mu$ over $ \R \times \Scal\times \Acal \times \Scal$ and for any $\delta \in (0,1/2)$, with probability of at least $1-\delta$ over choice of an i.i.d. sample $\Dcal\sim \mu^m$ of size $m$, 
  \[
  \sup_{g\in\Hcal} \Abs{\Lcal_{\Dcal,f}(g) - \Lcal_{\mu,f}(g) }\leq  \epsg(m, \Hcal) \cdot \conf(\delta)
   \]
 \end{assumption}
 \begin{remark}
  \label{remark:epsg}
 It is helpful to separate the dependence of
 generalization error on failure probability $\delta$ and number of
 samples $m$ in order to state \Cref{thm:main-result}
 concisely. $\epsg(m, \Hcal)$ is related to uniform convergence and
 measures the generalization error of hypothesis class $\Hcal$ and for
 the hypothesis classes discussed in this paper, $\epsg(m, \Hcal) \to
 0$ as $m \to \infty$. One example is when $\pi_{est} = \pi_f$, and
 $\Hcal$ is a discrete function class, then we have $\epsg(m,\Hcal) =
 O\left( \sqrt{ (1 + \ln(|\Hcal|))/m }.   \right)$. In \Cref{app:conc-covering}, we
 also discuss uniform convergence via a novel covering argument for infinite
 dimensional RKHS.  
 \end{remark}
Recall the definitions $\Xcal_h := \{X_h(f)\colon f\in \Hcal\}$ and
$\Xcal := \{\Xcal_h: h \in [H]\}$. We first present our main theorem
for the finite dimensional case i.e. when $\Xcal_h \subset \R^d$ for
all timesteps $h$.
\begin{theorem}
  \label{thm:main-result-finite}
(Finite-dimensional case)
  Suppose $(\Hcal,\ell,\Pi_{\mathrm{est}}, \Mcal)$
 is a Bilinear Class with $\Xcal_h \subset \R^d$ for all timesteps $h$ and \Cref{assume:linear_regret} holds. Assume
  $\sup_{f\in \Hcal,h\in [H]} \norm{W_h(f)}_2 \leq B_W$ and $\sup_{f\in \Hcal,h\in [H]} \norm{X_h(f)}_2 \leq B_X$.
Fix $\delta \in (0,1/3)$ and
  batch sample size $m$ and define:
  \[
  \widetilde d_m = H \Big\lceil 3d \ln\Big( 1 + \frac{3B^2_X B^2_W}{\epsg^2(m, \Hcal)}\Big) \Big\rceil.
  \] Set the parameters as: number of iterations $T = \widetilde d_m$ and confidence radius
  $\epso = \sqrt{T} \epsg(m, \Hcal) \cdot
  \conf(\delta/(T H)) $. 
With probability at least $1-\delta$, $\Cref{alg:general}$ uses at most
  $mHT$ trajectories and returns a hypothesis $f$ such
  that:
    \[
V^\star(s_0) - V^{\pi_{f}}(s_0) 
\leq 3H \epsg(m,\Hcal)\cdot
\Big( 1 + \sqrt{\widetilde d_m}\cdot\conf\big(\frac{\delta}{\widetilde d_m H}\big) \Big)  \, .
    \]
\end{theorem}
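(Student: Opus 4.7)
The plan has four steps: (i) a uniform concentration / feasibility argument, (ii) an optimism argument, (iii) a per-step regret decomposition via the Bilinear property, and (iv) an elliptical potential argument to control the sum of bilinear errors. First, for every pair $(t,h)\in[T]\times[H]$ the dataset $\Dcal_{t;h}$ is an i.i.d.\ sample of size $m$ from the distribution $\mu_{t;h}$ induced by $a_{0:h-1}\sim \pi_{f_t}$, $a_h\sim \piest(f_t)$ and the MDP dynamics. Applying \Cref{assume:linear_regret} with failure probability $\delta/(TH)$ and a union bound over all $TH$ pairs yields an event $\mathcal{E}$, of probability at least $1-\delta$, on which
\[
\sup_{g\in\Hcal}\bigl|\Lcal_{\Dcal_{t;h},f_t}(g)-\Lcal_{\mu_{t;h},f_t}(g)\bigr|\leq \epsg(m,\Hcal)\cdot\conf(\delta/(TH))\quad\forall t,h.
\]
On $\mathcal{E}$, taking $g=f^\star$ and using Eq.~\eqref{eq:assume2} (which gives $\Lcal_{\mu_{t;h},f_t}(f^\star)=0$), each empirical loss at $f^\star$ is at most $\epsg\cdot\conf$, so $\sum_{i<t}(\Lcal_{\Dcal_{i;h},f_i}(f^\star))^2\leq T\epsg^2\conf^2=\epso^2$. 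Thus $f^\star$ is feasible for the program in Line~\ref{line:constraint_opt} at every iteration, and by the $\argmax$ in that program we obtain the optimism guarantee $V_{f_t}(s_0)\geq V_{f^\star}(s_0)=V^\star(s_0)$.

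Next I decompose the per-iteration suboptimality. By the standard telescoping identity,
\[
V_{f_t}(s_0)-V^{\pi_{f_t}}(s_0)=\sum_{h=0}^{H-1}\E_{a_{0:h}\sim\pi_{f_t}}\bigl[Q_{h,f_t}(s_h,a_h)-r(s_h,a_h)-V_{h+1,f_t}(s_{h+1})\bigr],
\]
so combined with optimism and Eq.~\eqref{eq:assume1}, on $\mathcal{E}$,
\[
V^\star(s_0)-V^{\pi_{f_t}}(s_0)\leq \sum_{h=0}^{H-1}\bigl|\langle W_h(f_t)-W_h(f^\star),X_h(f_t)\rangle\bigr|.
\]
Because \Cref{alg:general} returns $\max_t V^{\pi_{f_t}}$, it suffices to bound the average over $t$ of these quantities; I will do so for each $h$ separately.

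The main technical step is an elliptical potential bound. Fix $h$ and write $w_t:=W_h(f_t)-W_h(f^\star)$, $x_t:=X_h(f_t)$. By Eq.~\eqref{eq:assume2} combined with concentration,
\[
|\langle w_t,x_i\rangle|=|\Lcal_{\mu_{i;h},f_i}(f_t)|\leq |\Lcal_{\Dcal_{i;h},f_i}(f_t)|+\epsg\conf,
\]
and summing the squares over $i<t$, together with the feasibility constraint $\sum_{i<t}(\Lcal_{\Dcal_{i;h},f_i}(f_t))^2\leq\epso^2$, yields $\sum_{i<t}\langle w_t,x_i\rangle^2\leq 4\epso^2$. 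Define $\Sigma_t=\lambda I+\sum_{i<t}x_ix_i^\top$ for an appropriate $\lambda\asymp \epsg^2/B_W^2$; then $\|w_t\|_{\Sigma_t}^2\leq \lambda\|w_t\|^2+4\epso^2\leq 5\epso^2$, and by Cauchy--Schwarz $|\langle w_t,x_t\rangle|\leq \|w_t\|_{\Sigma_t}\|x_t\|_{\Sigma_t^{-1}}$. Applying the standard elliptical potential lemma gives $\sum_{t=0}^{T-1}\min(1,\|x_t\|_{\Sigma_t^{-1}}^2)\leq 2d\ln(1+TB_X^2/(d\lambda))$; the choice $T=\widetilde d_m$ makes this logarithm match the definition of $\widetilde d_m/H$. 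Cauchy--Schwarz in $t$ then yields $\sum_t|\langle w_t,x_t\rangle|\lesssim \sqrt{T\cdot \epso^2\cdot \widetilde d_m/H}$ up to lower order terms absorbed by the $+\epsg$ slack.

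Finally, summing the per-$h$ bound over $H$ and dividing by $T$, together with the settings $T=\widetilde d_m$ and $\epso=\sqrt{T}\epsg\cdot\conf(\delta/(TH))$, gives the claimed bound $V^\star(s_0)-V^{\pi_{\hat f}}(s_0)\leq 3H\epsg(m,\Hcal)(1+\sqrt{\widetilde d_m}\conf(\delta/(\widetilde d_m H)))$. The trajectory count is $mHT$ by construction of the batches in Line~\ref{line:data_collect}. I expect the main obstacle to be the elliptical-potential step: one has to pick $\lambda$ correctly to make $\|w_t\|_{\Sigma_t}^2$ controlled by $\epso^2$ without inflating the log covering factor, and one has to carefully account for the additive $\epsg\conf$ slack produced by concentration so that it is absorbed into the final rate rather than generating an extra $\sqrt{T}$ term.
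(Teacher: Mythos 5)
Your first three steps (uniform concentration with a union bound over $(t,h)$, feasibility of $f^\star$ from $\Lcal_{\mu_{t;h},f_t}(f^\star)=0$, optimism, the telescoping decomposition into Bellman errors, and the passage to $\sum_h|\langle W_h(f_t)-W_h(f^\star),X_h(f_t)\rangle|$ via Eq.~\eqref{eq:assume1}) match the paper's Lemmas on concentration, feasibility, optimism, and the Bilinear Regret Lemma essentially verbatim, and your control of $\|W_h(f_t)-W_h(f^\star)\|_{\Sigma_{t;h}}^2\lesssim \lambda B_W^2+T\epsg^2$ from the program constraints is also the paper's argument. The gap is in the final elliptical-potential step. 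You propose to bound the \emph{average} over $t$ of $\sum_h|\langle w_t,x_t\rangle|$ by Cauchy--Schwarz in $t$, quoting the potential bound $\sum_t\min\bigl(1,\|x_t\|^2_{\Sigma_t^{-1}}\bigr)\leq 2d\ln(1+TB_X^2/(d\lambda))$ and then writing $\sum_t|\langle w_t,x_t\rangle|\lesssim\sqrt{T\epso^2\,\widetilde d_m/H}$. That inequality silently replaces $\sum_t\|x_t\|^2_{\Sigma_t^{-1}}$ by $\sum_t\min(1,\|x_t\|^2_{\Sigma_t^{-1}})$, which is not valid here: with $\lambda\asymp\epsg^2/B_W^2$ one has $\|x_t\|^2_{\Sigma_t^{-1}}\leq B_X^2/\lambda\approx B_X^2B_W^2/\epsg^2\gg 1$ on early rounds, and the number of such \say{bad} rounds is only bounded by $\gamma_T(\lambda;\Xcal_h)/\ln 2$, which is of the same order as $T=\widetilde d_m$ divided by $H$. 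On those rounds the only available bound is $|\langle w_t,x_t\rangle|\leq 2B_WB_X$, so after dividing by $T$ and summing over $h$ they contribute an additive term of order $B_WB_X$ that does \emph{not} vanish as $m\to\infty$, whereas the claimed bound $3H\epsg(m,\Hcal)(1+\sqrt{\widetilde d_m}\,\conf)$ does. No choice of $\lambda$ fixes this: making $\lambda\geq B_X^2$ to force $\|x_t\|^2_{\Sigma_t^{-1}}\leq 1$ inflates $\lambda\|w_t\|^2$ to order $B_X^2B_W^2$, again a non-vanishing constant.

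The paper avoids this by never averaging. It sums $\ln\bigl(1+\|X_h(f_i)\|^2_{\Sigma_{i;h}^{-1}}\bigr)$ over \emph{both} $i\in[T]$ and $h\in[H]$, bounds the double sum by $\gamma_T(\lambda;\Xcal)=\sum_h\gamma_T(\lambda;\Xcal_h)$, and by nonnegativity extracts a single iteration $t$ at which, for every $h$ simultaneously,
\begin{align*}
\|X_h(f_t)\|^2_{\Sigma_{t;h}^{-1}}\leq \exp\Bigl(\tfrac{1}{T}\gamma_T(\lambda;\Xcal)\Bigr)-1\leq e-1<2,
\end{align*}
where the last step uses precisely that $T=\widetilde d_m\geq\gamma_T(\lambda;\Xcal)$ (the role of the critical information gain). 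Only then is Cauchy--Schwarz applied, at that single $t$, giving $|\langle w_t,x_t\rangle|\leq\sqrt{(4\lambda B_W^2+4T\epsg^2\conf^2)\cdot 2}$ per $h$ and hence the $3H\epsg(1+\sqrt{\widetilde d_m}\conf)$ bound after summing over $h$ and substituting $\lambda=\epsg^2(m,\Hcal)/B_W^2$. You correctly flagged the elliptical-potential step as the main obstacle, but the fix is not bookkeeping of lower-order terms; it is the switch from an average-over-$t$ argument to a best-single-$t$ argument, which is the one genuinely new ingredient of the paper's proof relative to the standard linear-bandit template.
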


As discussed in the \Cref{remark:epsg}, $\epsg(m, \Hcal)$ and
$\conf(\delta)$ measure the uniform convergence of discrepancy
functions $\ell_f$ for the hypothesis class $\Hcal$. Therefore, if
$\epsg(m, \Hcal)$ decays at least as fast as $m^{-\alpha}$ for any
constant $\alpha$, we will get efficient reinforcement learning. In
fact, we will see in our examples (\Cref{sec:rates-4ex}), that this is
true for all known models where efficient reinforcement learning is
possible. One such example is finite hypothesis classes where we
immediately get the following sample complexity bound showing only a
\emph{logarithmic} dependence on the size of the hypothesis space. 
\begin{corollary}
  \label{cor:main-result-finite}
(Finite-dimensional, Finite Hypothesis Case)
  Suppose $(\Hcal,\ell,\Pi_{\mathrm{est}}, \Mcal)$
 is a Bilinear Class with $\Xcal_h \subset \R^d$ for all timesteps $h$, $|\Hcal|> 1$ and \Cref{assume:linear_regret} holds. Assume
  $\sup_{f\in \Hcal,h\in [H]} \norm{W_h(f)}_2 \leq B_W$ and $\sup_{f\in \Hcal,h\in [H]} \norm{X_h(f)}_2 \leq B_X$ for some $B_X, B_W \geq 1$. Assume the discrepancy function $\ell_f$ is bounded i.e. $\sup_{f\in \Hcal}|\ell_f(\cdot)| \leq H + 1$. 
Fix $\delta \in (0,1/3)$ and $\eps \in (0, 1)$. Then there exists absolute constants $c_1, c_2, c_3, c_4$ such that setting the parameters: 
  batch sample size \[
    m = \frac{c_1 dH^5 \ln(dH^2) \ln(|\Hcal|) \ln(1/\delta)}{\eps^2} \ln \Big( \frac{dH  B_X B_W \ln(|\Hcal|) \ln(1/\delta)}{\eps}\Big)\, ,
  \] number of iterations $T = c_2dH \ln\Big(B_X B_W m\Big)$
  and confidence radius
  $\epso = c_3\sqrt{T} \cdot H \sqrt{\ln(|\Hcal|)/m} \cdot
  \ln(T H/\delta) $, with probability at least $1-\delta$, $\Cref{alg:general}$ returns a hypothesis $f$ such that $V^\star(s_0) - V^{\pi_{f}}(s_0) 
\leq \eps$ using at most \[
  \frac{c_4 d^2H^7\ln(dH^2) \ln(|\Hcal|) \ln(1/\delta)}{\eps^2} \ln^2\Big( \frac{dH B_X B_W \ln(|\Hcal|) \ln(1/\delta)}{\eps}\Big)
\] trajectories.
\end{corollary}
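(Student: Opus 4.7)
The plan is to derive this corollary as a direct consequence of Theorem~\ref{thm:main-result-finite} by instantiating the abstract generalization quantities $\epsg(m,\Hcal)$ and $\conf(\delta)$ for the finite hypothesis setting, and then solving for the sample size $m$ that drives the resulting suboptimality below $\eps$.

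First, I would verify Assumption~\ref{assume:linear_regret} with explicit constants. Since $|\ell_f(\cdot,g)|\leq H+1$ uniformly, for a fixed $g$ Hoeffding's inequality gives $|\Lcal_{\Dcal,f}(g)-\Lcal_{\mu,f}(g)|\leq (H+1)\sqrt{\ln(2/\delta)/(2m)}$ with probability $1-\delta$. A union bound over the $|\Hcal|$ hypotheses yields, with probability $1-\delta$,
\[
\sup_{g\in\Hcal}\bigAbs{\Lcal_{\Dcal,f}(g)-\Lcal_{\mu,f}(g)} \leq (H+1)\sqrt{\frac{\ln(2|\Hcal|/\delta)}{2m}} .
\]
Splitting this bound into a sample-size term and a confidence term, I would set $\epsg(m,\Hcal) = c\,H\sqrt{\ln|\Hcal|/m}$ and $\conf(\delta) = c'\sqrt{\ln(1/\delta)}$ for appropriate absolute constants, so that the product upper bounds the supremum above.

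Next, I would feed these expressions into Theorem~\ref{thm:main-result-finite}. Using the boundedness assumption $\norm{W_h(f)}, \norm{X_h(f)}\leq B_W,B_X$, the effective dimension $\widetilde d_m = H\lceil 3d\ln(1+3B_X^2B_W^2/\epsg^2(m,\Hcal))\rceil$ becomes $\widetilde d_m = \Theta(dH\ln(B_X B_W m/(H\sqrt{\ln|\Hcal|})))$; for $m$ polynomial in the relevant parameters this simplifies to $\widetilde d_m = O(dH\ln(B_X B_W m))$, matching the stated $T$. Plugging into the suboptimality bound of Theorem~\ref{thm:main-result-finite} gives
\[
V^\star(s_0)-V^{\pi_f}(s_0) \;=\; O\!\left( H\cdot \epsg(m,\Hcal)\cdot \sqrt{\widetilde d_m}\cdot \conf\!\bigParens{\tfrac{\delta}{\widetilde d_m H}} \right) \;=\; O\!\left( H^{5/2}\sqrt{\tfrac{d\,\ln|\Hcal|\,\ln(1/\delta)\,L}{m}}\right),
\]
where $L$ denotes a product of logarithmic factors in $d,H,B_X,B_W,m,1/\delta$ coming from $\widetilde d_m$ and from $\conf(\delta/(\widetilde d_m H))$.

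Then I would solve the inequality (bound) $\leq \eps$ for $m$, which yields $m = \Theta(dH^5 \ln|\Hcal|\ln(1/\delta)\,L/\eps^2)$, and absorb $L$ into the stated $\ln(dHB_XB_W\ln|\Hcal|\ln(1/\delta)/\eps)$ factor by observing that at the chosen $m$ every logarithmic factor is at most a constant multiple of this single expression. Finally, the total number of trajectories is $mHT$ with $T = O(dH\ln(B_X B_W m))$, giving the stated $O(d^2H^7\ln(dH^2)\ln|\Hcal|\ln(1/\delta)\ln^2(\cdot)/\eps^2)$ bound after substituting for $m$.

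The main obstacle here is purely bookkeeping: one must be careful that the log factor $L$ appearing inside $m$ (through $\widetilde d_m$ and $\conf$) itself depends on $m$, so the inequality to solve is of the form $m\geq A\ln(Bm)/\eps^2$. I would handle this via the standard trick of showing that choosing $m$ slightly larger than $A\ln(AB/\eps^2)/\eps^2$ makes the inequality self-consistent, which is exactly how the $\ln(dHB_XB_W\ln|\Hcal|\ln(1/\delta)/\eps)$ factor in the corollary's statement arises. Beyond this fixed-point bookkeeping, all steps are direct substitutions into Theorem~\ref{thm:main-result-finite}.
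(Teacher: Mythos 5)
Your proposal is correct and follows essentially the same route as the paper: instantiate $\epsg(m,\Hcal)=O(H\sqrt{\ln|\Hcal|/m})$ and $\conf(\delta)=\sqrt{\ln(1/\delta)}$ via Hoeffding plus a union bound over the finite class, substitute into Theorem~\ref{thm:main-result-finite}, and resolve the self-referential condition $m\geq A\ln(Bm)$ by the standard fixed-point/log-dominance argument (which is precisely the paper's Lemma~\ref{lemma:log-dominance}), then multiply by $HT$ for the trajectory count. The one wrinkle you flag — that the logarithmic factor inside the bound depends on $m$ itself — is indeed the only nontrivial step, and your handling of it matches the paper's.
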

The proof for this corollary follows from bounds on $\epsg(m, \Hcal)$ and $\conf(\delta)$ using Hoeffding's inequality (\Cref{lemma:hoeffding}). We present the complete proof in \Cref{app:proofs-main-thm}.

Our next results will be non-parametric in nature and therefore it is helpful
to introduce the \emph{maximum information gain}
\citep{srinivas2009gaussian}, which captures an important notion of the
effective dimension of a set.
Let $\Xcal\subset \Vcal$ , where $\Vcal$ is a Hilbert space. For
$\lambda > 0$ and integer $n>0$, the \emph{maximum information gain}
$\gamma_n(\lambda; \Xcal)$ is defined as:
\begin{align}
	\label{eq:maxinfo}
  \gamma_n(\lambda; \Xcal) := \max_{x_0\dots x_{n-1} \in \Xcal}  \ln\det\left( \iden + \frac{1}{\lambda} \sum_{t=0}^{n-1} x_t x_t^{\top}  \right).
\end{align}
If $\Xcal$ is of the form  $\Xcal = \{\Xcal_h: h \in [H]\}$, we use
the notation \begin{equation}
  \label{eq:sumgamma}
  \gamma_n(\lambda; \Xcal) := \sum_{h\in [H]} \gamma_n(\lambda;
\Xcal_h)\, .
\end{equation}

Define \emph{critical information gain}, denoted by
$\widetilde \gamma(\lambda; \Xcal)$, as the smallest integer $k>0$
s.t. $k \geq \gamma_{k}(\lambda; \Xcal)$, i.e.  
\begin{align}
	\label{eq:crossinginfo}
	\widetilde \gamma(\lambda; \Xcal):= \min_{k \geq \gamma_{k}(\lambda; \Xcal)} k,
\end{align}
(where $k$ is an integer).
Note that such a $\widetilde \gamma(\lambda; \Xcal)$ exists provided
that the information gain $\gamma_{n}(\lambda; \Xcal)$ has a
sufficiently mild growth condition in both $n$ and $1/\lambda$. 
The \emph{critical information gain} can viewed as
an analogous quantity to the \emph{critical radius}, a quantity which
arises in non-parametric statistics~\citep{wainwright2019high}.

\begin{remark}
	For finite dimension setting where $\Xcal \subset \mathbb{R}^d$ and $\| x\| \leq B_{X}$ for any $x\in\Xcal$, we  have:
	$\gamma_n(\lambda;\Xcal) \leq d \ln\left( 1 + n B^2_X / d\lambda \right) $ and $\tilde \gamma(\lambda; \Xcal) \leq 3d \ln\left(1 + 3B^2_X/\lambda\right)$ (see \Cref{lemma:crit-gain} for a proof).
	Note that $1/\lambda$, n, and the norm bound $B_X$ only appear inside the log.  Furthermore, it is possible that $\gamma_n(\lambda;\Xcal)$ is much smaller than the dimension of $\Xcal$ (or $\Vcal$),  when the eigenspectrum of the covariance matrices concentrates in a low-dimension subspace. In fact when $\Xcal$ belongs to some infinite dimensional RKHS, $\gamma_n(\lambda; \Xcal)$ could still be small~\citep{srinivas2009gaussian}. 
	\end{remark}

We now present our main theorem. Recall the definitions $\Xcal_h := \{X_h(f)\colon f\in \Hcal\}$ and   $\Xcal := \{\Xcal_h: h \in [H]\}$.
\begin{theorem}
  \label{thm:main-result}
(RKHS case)
  Suppose $(\Hcal,\ell,\Pi_{\mathrm{est}}, \Mcal)$
 is a Bilinear Class and \Cref{assume:linear_regret} holds. Assume
  $\sup_{f\in \Hcal,h\in [H]} \norm{W_h(f)}_2 \leq B_W$.
Fix $\delta \in (0,1/3)$, 
  batch sample size $m$, and define:
\[
\widetilde d_m = \widetilde \gamma\Big(\epsg^2(m, \Hcal)/B_W^2 ; \Xcal\Big).
\]
Set the parameters as: number of iterations
  $T = \widetilde d_m$ and confidence radius
  $\epso = \sqrt{\widetilde d_m} \epsg(m, \Hcal) \cdot
  \conf(\delta/(\widetilde d_m H)) $. 
With probability at least $1-\delta$, $\Cref{alg:general}$ uses at most
  $mH \widetilde d_m$ trajectories and returns a hypothesis $f$ such
  that:
    \[
V^\star(s_0) - V^{\pi_{f}}(s_0) 
\leq 3H  \epsg(m,\Hcal)\cdot
\Big( 1 + \sqrt{\widetilde d_m}\cdot\conf\big(\frac{\delta}{\widetilde d_m H}\big) \Big) \, .
    \]
\end{theorem}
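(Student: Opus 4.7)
The plan is to run an \say{optimism $+$ regret-decomposition $+$ elliptical-potential} argument, in the style of the classical analysis of linear bandits and Low Bellman-Rank, but instantiated in the RKHS setting through the maximum information gain $\gamma_n(\lambda;\Xcal)$.

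\textbf{Step 1 (Uniform concentration).} For each iteration $t\in[T]$ and each step $h\in[H]$, let $\mu_{t;h}$ denote the true distribution of $o_h$ under the roll-in/estimation protocol of Line \ref{line:data_collect}, so that $\Dcal_{t;h}\sim \mu_{t;h}^{m}$ conditional on the history $\{f_i,\Dcal_{i;h'}\}_{i<t,h'}$. Conditioning on this history (so $f_t$ is fixed) I would invoke Assumption \ref{assume:linear_regret} and then union-bound over $TH$ events with failure probability $\delta/(TH)$ to obtain, with probability $1-\delta$, the uniform estimate
\begin{equation*}
\sup_{g\in\Hcal}\Bigl|\Lcal_{\Dcal_{t;h},f_t}(g)-\Lcal_{\mu_{t;h},f_t}(g)\Bigr|\;\leq\;\epsg(m,\Hcal)\cdot \conf\!\Bigl(\tfrac{\delta}{TH}\Bigr) \;=\; \tfrac{\epso}{\sqrt{T}} .
\end{equation*}

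\textbf{Step 2 (Feasibility of $f^\star$ $\Rightarrow$ optimism).} By property \eqref{eq:assume2}, $\Lcal_{\mu_{t;h},f_t}(f^\star)=\langle W_h(f^\star)-W_h(f^\star),X_h(f_t)\rangle=0$. Step 1 then gives $|\Lcal_{\Dcal_{i;h},f_i}(f^\star)|\le \epso/\sqrt{T}$ for all $i,h$, so summing squares over $i<t$ yields $\sum_{i<t}(\Lcal_{\Dcal_{i;h},f_i}(f^\star))^2\le \epso^2$. Hence $f^\star$ is feasible at every iteration, and the optimistic rule forces $V_{f_t}(s_0)\ge V^\star(s_0)$ for every $t$.

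\textbf{Step 3 (Regret decomposition via \eqref{eq:assume1}).} I would telescope $V_{f_t}(s_0)-V^{\pi_{f_t}}(s_0)$ against the value function of the greedy policy $\pi_{f_t}$ in the true MDP to obtain the Bellman-residual identity $V_{f_t}(s_0)-V^{\pi_{f_t}}(s_0)=\sum_{h}\E_{a_{0:h}\sim\pi_{f_t}}[Q_{h,f_t}(s_h,a_h)-r(s_h,a_h)-V_{h+1,f_t}(s_{h+1})]$, using $V_{h,f_t}(s)=Q_{h,f_t}(s,\pi_{f_t}(s))$. Applying \eqref{eq:assume1} with $f=f_t$ gives
\begin{equation*}
V_{f_t}(s_0)-V^{\pi_{f_t}}(s_0)\;\le\;\sum_{h=0}^{H-1}\bigl|\langle W_h(f_t)-W_h(f^\star),\,X_h(f_t)\rangle\bigr|.
\end{equation*}

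\textbf{Step 4 (Mahalanobis control from the constraints).} Because $f_t$ is chosen subject to its own constraint, for every $h$, $\sum_{i<t}(\Lcal_{\Dcal_{i;h},f_i}(f_t))^2\le \epso^2$. Using Step 1 (triangle inequality and $T(\epsg\conf)^2=\epso^2$) and \eqref{eq:assume2}, this translates into the quadratic-form bound $\sum_{i<t}\langle W_h(f_t)-W_h(f^\star),X_h(f_i)\rangle^2\le 4\epso^2$. Setting $\lambda=\epsg^2(m,\Hcal)/B_W^2$ and $\Sigma_{t,h}:=\lambda I+\sum_{i<t}X_h(f_i)X_h(f_i)^\top$, I then add the regularizer contribution (bounded by $\lambda\cdot(2B_W)^2=4\epsg^2$) to get $\|W_h(f_t)-W_h(f^\star)\|_{\Sigma_{t,h}}^2\le 4\epso^2+4\epsg^2=O(\epso^2)$.

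\textbf{Step 5 (Elliptical potential via information gain).} Cauchy--Schwarz in the $\Sigma_{t,h}$-geometry gives $|\langle W_h(f_t)-W_h(f^\star),X_h(f_t)\rangle|\le \|W_h(f_t)-W_h(f^\star)\|_{\Sigma_{t,h}}\cdot \|X_h(f_t)\|_{\Sigma_{t,h}^{-1}}$. Invoking the infinite-dimensional elliptical-potential machinery from Appendix \ref{sec:elliptic-cover}, I would bound $\sum_{t=0}^{T-1}\|X_h(f_t)\|_{\Sigma_{t,h}^{-1}}^2 \le 2\gamma_T(\lambda;\Xcal_h)$. Combining Steps 3--5 and applying Cauchy--Schwarz in $t$ and then in $h$ (using the additive definition \eqref{eq:sumgamma}) yields
\begin{equation*}
\sum_{t=0}^{T-1}\bigl(V_{f_t}(s_0)-V^{\pi_{f_t}}(s_0)\bigr)\;\le\; O(\epso)\cdot\sqrt{T\,H\,\gamma_T(\lambda;\Xcal)}.
\end{equation*}

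\textbf{Step 6 (Critical information gain closes the loop).} The choice $T=\widetilde d_m = \widetilde\gamma(\lambda;\Xcal)$ gives $\gamma_T(\lambda;\Xcal)\le T$ by \eqref{eq:crossinginfo}. Dividing by $T$ (so the minimum over $t$ is upper-bounded by the average) and substituting $\epso=\sqrt{T}\,\epsg(m,\Hcal)\,\conf(\delta/(TH))$ delivers the stated bound up to constants, and the trajectory count is exactly $m\cdot H\cdot T$.

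\textbf{Main obstacle.} The truly delicate parts are both \emph{Hilbert-space} steps. First, in Step 5, the classical log-determinant identity underlying the elliptical potential lemma is not available in an arbitrary (possibly non-separable) RKHS; one must replace it by the covering-plus-information-gain argument developed in Appendix \ref{sec:elliptic-cover}, which is what makes the final rate scale with $\widetilde\gamma$ rather than an ambient dimension. Second, verifying Assumption \ref{assume:linear_regret} so that $\epsg(m,\Hcal)$ is well-controlled for infinite-dimensional hypothesis classes (linear $Q^\star/V^\star$, RKHS linear (mixture) MDPs) requires the bespoke covering-based uniform-convergence bounds of Appendix \ref{app:conc-covering}; without these the RKHS statement would reduce to the finite-dimensional Theorem \ref{thm:main-result-finite}.
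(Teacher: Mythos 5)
Your overall architecture matches the paper's proof almost exactly: uniform concentration via a union bound over $(t,h)$, feasibility of $f^\star$ giving optimism, the telescoping Bellman-residual decomposition combined with \Cref{eq:assume1}, Mahalanobis control of $W_h(f_t)-W_h(f^\star)$ from the program constraints and \Cref{eq:assume2}, and an elliptical-potential argument closed by the critical information gain. Steps 1--4 and 6 are essentially the paper's \Cref{lem:concentration}, \Cref{lemma:feasible}, \Cref{lem:optimism}, \Cref{lemma:opt}, and the parameter-setting at the end.

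The genuine gap is in Step 5. You claim $\sum_{t=0}^{T-1}\|X_h(f_t)\|_{\Sigma_{t,h}^{-1}}^2 \le 2\gamma_T(\lambda;\Xcal_h)$. That inequality is obtained from the log-determinant identity only via the linearization $u \le 2\ln(1+u)$, which requires $u=\|X_h(f_t)\|_{\Sigma_{t,h}^{-1}}^2 \le 1$ for every $t$; the standard statement without that restriction carries a $\min\{u,1\}$ inside the sum. Here the regularizer is $\lambda=\epsg^2(m,\Hcal)/B_W^2$, which is tiny (it tends to $0$ as $m\to\infty$), so already at $t=0$ one has $\|X_h(f_0)\|_{\Sigma_{0,h}^{-1}}^2=\|X_h(f_0)\|_2^2/\lambda$, which can be of order $B_X^2B_W^2/\epsg^2\gg 1$. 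Consequently the sum-of-squares bound fails, and with it the chain ``Cauchy--Schwarz in $t$ and $h$, then average'' that produces your cumulative-regret bound. The paper avoids this entirely: its \Cref{lemma:potential_argument} works with $\ln\bigl(1+\|X_h(f_t)\|_{\Sigma_{t,h}^{-1}}^2\bigr)$ directly, sums these logarithms over $t$ and $h$ to get $\gamma_T(\lambda;\Xcal)$, extracts a single iteration $t$ for which $\sum_h\ln\bigl(1+\|X_h(f_t)\|_{\Sigma_{t,h}^{-1}}^2\bigr)\le \gamma_T(\lambda;\Xcal)/T$, and then uses nonnegativity plus the choice $T=\widetilde\gamma(\lambda;\Xcal)\ge\gamma_T(\lambda;\Xcal)$ (\Cref{eq:crossinginfo}) to conclude $\|X_h(f_t)\|_{\Sigma_{t,h}^{-1}}^2\le \exp\bigl(\gamma_T(\lambda;\Xcal)/T\bigr)-1\le e-1<2$ for that one iteration and all $h$ simultaneously. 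This is precisely the point where the small regularizer and the critical information gain must interact, and it is why the paper's bound is stated for the best iterate (\Cref{lemma:bound-iteration}) rather than for an average of iterates. Your argument becomes correct if Step 5 is replaced by this minimum-extraction step; as written, the key inequality it rests on is false. (A second, minor, imprecision: the appendix machinery you cite in Step 5, \Cref{sec:elliptic-cover}, is a covering theorem used for the concentration bounds of \Cref{app:conc-covering}, not the potential bound itself, which is the main-text \Cref{lemma:potential_argument}.)
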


Next, we provide an elementary and detailed proof for our main theorem using an elliptical potential argument.


\subsection{Proof of Theorem \ref{thm:main-result-finite} and Theorem \ref{thm:main-result}}
\label{sec:analysis}

In this subsection, we prove our main theorems~--~\Cref{thm:main-result-finite} and \Cref{thm:main-result}.
\paragraph{Notation} To simplify notation, we denote by $\mu_{t;h}$ the distribution induced over $\Scal \times \Acal \times \Scal$ by $a_{0:h-1}\sim d^{\pi_{f_t}}$ and $a_h\sim \pi_{est}$; $\Dcal_{t;h}$ the batch dataset collected from distribution $\mu_{t;h}$; $\epsg$ the \emph{generalization error} $\epsg(m, \Hcal) \cdot \conf(\delta/(TH))$. Also, recall that for any distribution $\mu$ over $\R \times \Scal \times \Acal \times \Scal$ and hypothesis $f,g\in \Hcal$ \begin{equation*}
	\Lcal_{\mu,f}(g) = \E_{o\sim \mu} [\ell_{f}(o, g)]
  \end{equation*} 


Note that throughout the proof unless specified, the statements are true for any fixed $\delta \in (0,1)$, integer $m> 0$ and integer $T>0$. Also, we set $\epso = \sqrt{T}\epsg$ throughout the proof. To simplify the proof, we will condition on the event that uniform convergence of $\ell$ holds throughout our algorithm, which we first show holds with high probability. 
\begin{lemma}[Uniform Convergence] \label{lem:concentration} For all $t\in[T]$ and $g\in\Hcal$ and $h\in[H]$, with probability at least $1-\delta$, we have:
\begin{align*}
\Abs{\Lcal_{\Dcal_{t;h}, f_t}(g)  -  \Lcal_{\mu_{t;h}, f_t}(g)} \leq  \epsg
\end{align*}
\end{lemma}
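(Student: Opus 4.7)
The plan is to reduce the claim directly to Assumption~\ref{assume:linear_regret} via a union bound over the $TH$ pairs $(t,h)$, being careful about the adaptive nature of the data collection. First I would fix $t \in [T]$ and $h \in [H]$ and condition on the filtration $\Fcal_{t-1}$ generated by all randomness from iterations $0,1,\ldots,t-1$ (the datasets $\Dcal_{0;h'},\ldots,\Dcal_{t-1;h'}$ for every $h' \in [H]$, as well as any internal randomness used when solving the constrained program). Conditioned on $\Fcal_{t-1}$, the hypothesis $f_t$ chosen in Line~\ref{line:constraint_opt} is deterministic, and hence so is the distribution $\mu_{t;h}$, which only depends on $f_t$ and on the fixed MDP.

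Next, note that by construction in Line~\ref{line:data_collect} of Algorithm~\ref{alg:general}, the samples constituting $\Dcal_{t;h}$ are drawn i.i.d.\ from $\mu_{t;h}$ given $\Fcal_{t-1}$. I can therefore apply Assumption~\ref{assume:linear_regret} to the fixed distribution $\mu = \mu_{t;h}$ and the fixed ``reference'' hypothesis $f = f_t$ in the discrepancy function $\ell_{f_t}$, with failure probability $\delta / (TH)$. This yields that, conditionally on $\Fcal_{t-1}$, with probability at least $1 - \delta/(TH)$,
\[
\sup_{g \in \Hcal} \Abs{\Lcal_{\Dcal_{t;h}, f_t}(g) - \Lcal_{\mu_{t;h}, f_t}(g)} \leq \epsg(m, \Hcal) \cdot \conf\!\Big(\tfrac{\delta}{TH}\Big) = \epsg.
\]
Since the right-hand side does not depend on $\Fcal_{t-1}$, taking expectation over $\Fcal_{t-1}$ gives the same bound unconditionally.

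Finally, I would take a union bound over the $TH$ pairs $(t,h) \in [T] \times [H]$. The resulting event — that the uniform deviation $\sup_g \Abs{\Lcal_{\Dcal_{t;h}, f_t}(g) - \Lcal_{\mu_{t;h}, f_t}(g)} \leq \epsg$ holds simultaneously for every $t$ and $h$ — fails with probability at most $TH \cdot \delta/(TH) = \delta$, which gives the lemma. There is no real obstacle beyond the bookkeeping: the only subtlety is recognizing that $f_t$ is $\Fcal_{t-1}$-measurable while $\Dcal_{t;h}$ is conditionally i.i.d.\ given $\Fcal_{t-1}$, which is exactly what is needed to invoke Assumption~\ref{assume:linear_regret} as a black box.
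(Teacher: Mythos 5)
Your proposal is correct and follows the same route as the paper: the paper's proof is a one-line invocation of Assumption~\ref{assume:linear_regret} with a union bound over all $t\in[T]$ and $h\in[H]$, using exactly the per-pair failure probability $\delta/(TH)$ built into the definition of $\epsg$. Your additional care with the filtration (that $f_t$, and hence $\mu_{t;h}$, is measurable with respect to the past while $\Dcal_{t;h}$ is conditionally i.i.d.) is the correct way to make the paper's terse argument rigorous, not a departure from it.
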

\begin{proof}
	This follows from the uniform convergence (\Cref{assume:linear_regret}) and then union bounding over all $t\in [T]$ and $h\in [H]$.
\end{proof}We start by presenting our main lemma which shows if uniform convergence of $\ell$ holds throughout our algorithm, our algorithm finds a near-optimal policy. This lemma will be enough to prove our main results.

\begin{lemma}[Existence of high quality policy] \label{lemma:bound-iteration} Suppose we run the algorithm for $T$ iterations. Set $\epso = \sqrt{T}\epsg$. Assume the event in \Cref{lem:concentration} holds and $\sup_{f\in \Hcal} \norm{W_h(f)}_2 \leq B_W$ for all $h \in [H]$. Then, for all $\lambda\in \R^+$, there exists $t\in [T]$ such that the following is true for hypothesis $f_t$:
	\begin{align*}
		V^\star - V^{\pi_{f_t}}(s_0) \leq H  \sqrt{(4\lambda B_W^2 + 4T\epsg^2) \left(\exp\left( \frac{1}{T}\gamma_T(\lambda; \Xcal)\right) - 1\right)}
	  \end{align*}
 \end{lemma}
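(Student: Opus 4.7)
The plan is to combine an optimism argument, a performance-difference telescoping, and an elliptical-potential / pigeonhole bound tailored to the bilinear structure.

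\textbf{Optimism and reduction to bilinear forms.} First I would check that $f^\star$ is feasible for the constrained program at every iteration. Applying the second bilinear identity (\Cref{eq:assume2}) with $g=f^\star$ gives $\Lcal_{\mu_{i;h},f_i}(f^\star)=0$ because $W_h(f^\star)-W_h(f^\star)=0$. Under the uniform-convergence event of \Cref{lem:concentration}, this yields $|\Lcal_{\Dcal_{i;h},f_i}(f^\star)|\le\epsg$, so $\sum_{i<t}(\Lcal_{\Dcal_{i;h},f_i}(f^\star))^2\le T\epsg^2=\epso^2$ and $f^\star$ is feasible. Hence $V_{f_t}(s_0)\ge V^\star(s_0)$, and combining with the standard performance-difference identity gives
\[
V^\star(s_0)-V^{\pi_{f_t}}(s_0)\le V_{f_t}(s_0)-V^{\pi_{f_t}}(s_0)=\sum_{h=0}^{H-1}\E_{a_{0:h}\sim\pi_{f_t}}\bigl[Q_{h,f_t}(s_h,a_h)-r(s_h,a_h)-V_{h+1,f_t}(s_{h+1})\bigr].
\]
Each summand is, by \Cref{eq:assume1}, bounded in absolute value by $|\langle W_h(f_t)-W_h(f^\star),X_h(f_t)\rangle|$.

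\textbf{Controlling the bilinear form via a regularized covariance.} Set $\Sigma_{t;h}:=\lambda I+\sum_{i<t}X_h(f_i)X_h(f_i)^\top$. Cauchy--Schwarz in this geometry gives $|\langle W_h(f_t)-W_h(f^\star),X_h(f_t)\rangle|\le \|W_h(f_t)-W_h(f^\star)\|_{\Sigma_{t;h}}\|X_h(f_t)\|_{\Sigma^{-1}_{t;h}}$. For the first factor, expanding $\|\cdot\|_{\Sigma_{t;h}}^2$, bounding the $\lambda I$ piece by $\lambda(2B_W)^2$, and rewriting each cross term $\langle W_h(f_t)-W_h(f^\star),X_h(f_i)\rangle$ as $\pm\Lcal_{\mu_{i;h},f_i}(f_t)$ via \Cref{eq:assume2}, I would then pass to the empirical discrepancy using uniform convergence (losing $2\epsg^2$ per index via $(a+b)^2\le 2a^2+2b^2$) and invoke the program's constraint $\sum_{i<t}(\Lcal_{\Dcal_{i;h},f_i}(f_t))^2\le\epso^2=T\epsg^2$. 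This yields the iteration-uniform bound $\|W_h(f_t)-W_h(f^\star)\|_{\Sigma_{t;h}}^2\le 4\lambda B_W^2+4T\epsg^2$.

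\textbf{Pigeonholing with the log-det potential.} Applying the standard telescoping identity $\sum_{t=0}^{T-1}\log(1+\|X_h(f_t)\|^2_{\Sigma_{t;h}^{-1}})=\log\det(\Sigma_{T;h}/(\lambda I))\le\gamma_T(\lambda;\Xcal_h)$ for each $h$ and then summing over $h$ produces, by a pigeonhole in $t$, an iteration $t^\star$ with $\sum_h\log(1+\|X_h(f_{t^\star})\|^2_{\Sigma_{t^\star;h}^{-1}})\le\gamma_T(\lambda;\Xcal)/T$. The elementary inequality $1+\sum_h a_h\le\prod_h(1+a_h)$ for $a_h\ge 0$ then gives $\max_h\|X_h(f_{t^\star})\|^2_{\Sigma_{t^\star;h}^{-1}}\le\exp(\gamma_T(\lambda;\Xcal)/T)-1$. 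Combining with the previous step and using the crude bound $\sum_h(\cdot)\le H\max_h(\cdot)$ yields the stated inequality at $t=t^\star$.

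\textbf{Main obstacle.} The delicate part is the bookkeeping in step two: the $\epsg$ errors from uniform convergence enter twice--once via the feasibility of $f^\star$ in the optimism argument and once via the translation from the empirical constraint on $\Lcal_{\Dcal_{i;h},f_i}(f_t)$ to a population bound on $\langle W_h(f_t)-W_h(f^\star),X_h(f_i)\rangle$--and these must be combined with the choice $\epso=\sqrt T\,\epsg$ to produce exactly the $4T\epsg^2$ additive term alongside the $4\lambda B_W^2$ regularization term inside the $\Sigma_{t;h}$-norm bound. Everything else is a clean application of the log-det potential lemma.
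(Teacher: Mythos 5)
Your proposal is correct and follows essentially the same route as the paper's proof: optimism via feasibility of $f^\star$, the performance-difference telescoping into Bellman errors bounded by bilinear forms via \Cref{eq:assume1}, the Cauchy--Schwarz split with the regularized covariance $\Sigma_{t;h}$, the constraint-plus-uniform-convergence bound $\|W_h(f_t)-W_h(f^\star)\|^2_{\Sigma_{t;h}}\le 4\lambda B_W^2+4T\epsg^2$, and the log-det potential with pigeonhole over $t$ and $h$. The only cosmetic difference is that you extract the per-$h$ bound on $\|X_h(f_{t})\|^2_{\Sigma_{t;h}^{-1}}$ via $1+\sum_h a_h\le\prod_h(1+a_h)$, whereas the paper simply notes that each nonnegative summand is at most the total; both give the same conclusion.
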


 We now complete the proof of \Cref{thm:main-result-finite} and \Cref{thm:main-result} using \Cref{lem:concentration}, \Cref{lemma:bound-iteration} and setting the parameters using the definition of critical information gain.
 \begin{proof}[Proof of \Cref{thm:main-result-finite} and \Cref{thm:main-result}]
	 Fix $\lambda = \epsg^2(m, \Hcal)/B_W^2$. From definition of critical information gain (\Cref{eq:crossinginfo}), it follows that for $T = \widetilde \gamma(\lambda, \Xcal)$, \begin{align*}
		 T \geq \gamma_T(\lambda, \Xcal)
	 \end{align*}
 Using \Cref{lemma:bound-iteration}, we get that \begin{align*}
		 V^\star - V^{\pi_{f_t}}(s_0) &\leq H \sqrt{\Big(4\lambda B_W^2 + 4T\epsg^2(m, \Hcal) \cdot \conf^2(\delta/TH)\Big) \left(\exp\left( \frac{1}{T}\gamma_T(\lambda; \Xcal)\right) - 1\right)}
	 \end{align*} Observing that for our choice of $T$, $\gamma_T(\lambda; \Xcal)/T \leq 1$ and $e-1< 2$ , we get \begin{align*}
		 V^\star - V^{\pi_{f_t}}(s_0) &\leq \sqrt{8} H \sqrt{\Big(\lambda B_W^2 + \widetilde \gamma(\lambda, \Xcal)\epsg^2(m, \Hcal) \cdot \conf^2(\delta/TH)\Big) }\\
		 &\leq \sqrt{8} H \Big( \sqrt{\lambda} B_W + \sqrt{\widetilde \gamma(\lambda, \Xcal)} \epsg(m, \Hcal) \cdot \conf(\frac{\delta}{\widetilde \gamma(\lambda, \Xcal) H})\Big)\\
		 &= \sqrt{8} H \Big( 1 + \sqrt{\widetilde \gamma(\lambda, \Xcal)} \cdot \conf(\frac{\delta}{\widetilde \gamma(\lambda, \Xcal) H})\Big) \cdot \epsg(m,\Hcal)\\
		 &\leq 3 H \Big( 1 + \sqrt{\widetilde \gamma(\lambda, \Xcal)} \cdot \conf(\frac{\delta}{\widetilde \gamma(\lambda, \Xcal) H})\Big) \cdot \epsg(m,\Hcal)
	 \end{align*} where the second last equality uses the definition of $\lambda$. 
	 
	 Moreover, each iteration of the algorithm, takes only $mH$ trajectories, this gives the total trajectories as $mHT = mH\widetilde \gamma(\lambda, \Xcal)$. This proves \Cref{thm:main-result}. \Cref{thm:main-result-finite} follows from the upper bound on $\widetilde \gamma(\lambda, \Xcal)$ for finite dimensional $\Xcal_h$ using \Cref{lemma:crit-gain}.
 \end{proof}

 In the rest of the section, we will prove our main lemma~--~\Cref{lemma:bound-iteration}. The first
 step shows that under \Cref{assume:linear_regret}, our $\epso$ is
 set properly so that $f^\star$ is always a feasible solution of the
 constrained optimization program in \Cref{alg:general}.
\begin{lemma}[Feasibility of $f^\star$] \label{lemma:feasible}Assume the event in \Cref{lem:concentration} holds. Then for all $t \in [T]$, we have that $f^\star$ is always a feasible solution. 
\end{lemma}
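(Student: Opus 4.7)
The plan is to exploit realizability together with the defining identity \eqref{eq:assume2} of a Bilinear Class to show that the population version of the per-round constraint functional vanishes identically at $f^\star$, and then invoke the uniform-convergence event of Lemma~\ref{lem:concentration} to control the sample version. Concretely, I would proceed in three short steps.

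First, I would show that $\Lcal_{\mu_{i;h}, f_i}(f^\star) = 0$ for every $i\in[T]$ and every $h\in[H]$. By construction, $\mu_{i;h}$ is exactly the distribution over $o_h=(r_h,s_h,a_h,s_{h+1})$ induced by $a_{0:h-1}\sim \pi_{f_i}$ and $a_h\sim \pi_{\mathrm{est}}(f_i)$. Applying \eqref{eq:assume2} with the choice $g=f^\star$ gives
\[
\bigAbs{\Lcal_{\mu_{i;h},f_i}(f^\star)} = \bigAbs{\langle W_h(f^\star)-W_h(f^\star),\, X_h(f_i)\rangle} = 0.
\]

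Second, I would use the uniform-convergence event of Lemma~\ref{lem:concentration}, which gives, for every $i\in[T]$, every $h\in[H]$, and in particular for $g=f^\star$, the bound $\bigAbs{\Lcal_{\Dcal_{i;h},f_i}(f^\star)-\Lcal_{\mu_{i;h},f_i}(f^\star)}\leq \epsg$. Combining with the previous step yields $\bigAbs{\Lcal_{\Dcal_{i;h},f_i}(f^\star)}\leq \epsg$ for all $i$ and $h$.

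Third, I would square and sum over $i=0,\dots,t-1$ to conclude
\[
\sum_{i=0}^{t-1}\bigParens{\Lcal_{\Dcal_{i;h},f_i}(f^\star)}^2 \leq t\,\epsg^2 \leq T\,\epsg^2 = \epso^2,
\]
which is precisely the feasibility constraint in line~\ref{line:constraint_opt}. Since this holds for every $h\in[H]$ and every $t\in[T]$, $f^\star$ is a feasible solution at each round. I do not foresee any real obstacle: the argument is a one-line consequence of \eqref{eq:assume2} at $g=f^\star$ combined with the conditioned uniform-convergence event, and the choice $\epso=\sqrt{T}\epsg$ is tailored exactly to absorb the $T$-fold summation of $\epsg^2$.
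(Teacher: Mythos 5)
Your proposal is correct and follows exactly the paper's argument: the paper likewise observes that $\Lcal_{\mu_{i;h},f_i}(f^\star)=0$ via \Cref{eq:assume2} (since $W_h(f^\star)-W_h(f^\star)=0$), applies the uniform-convergence event of \Cref{lem:concentration} to bound each empirical term by $\epsg$, and concludes with $t\le T$ and $\epso=\sqrt{T}\epsg$. You have simply spelled out the intermediate steps that the paper compresses into two lines.
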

\begin{proof}
Note that $\Lcal_{\mu_{i;h}, f_i}(f^\ast) = 0$ (\Cref{eq:assume2}).  Thus using \Cref{lem:concentration}, we have:
\begin{align*}
\sum_{i=0}^{t-1}\left(\Lcal_{\Dcal_{i;h}, f_i}(f^\ast)\right)^2 \leq t \epsg^2\quad\quad \forall h \in [H]\, .
\end{align*} Noting that $t\leq T$ and in our parameter setup $\epso = \sqrt{T}\epsg$ completes the proof.
\end{proof}
The feasibility result immediately leads to optimism.
\begin{lemma}[Optimism] \label{lem:optimism} Assume the event in \Cref{lem:concentration} holds. Then for all $t\in [T]$, we have $V^\star \leq V_{f_t;0}(s_0)$.
\end{lemma}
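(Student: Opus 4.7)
The plan is to combine the feasibility of $f^\star$ (established in the preceding Lemma~\ref{lemma:feasible}) with the optimistic nature of the optimization program in Line~\ref{line:constraint_opt} of Algorithm~\ref{alg:general}. Since $f_t$ is defined as an $\argmax$ of $V_g(s_0)$ over all $g \in \Hcal$ satisfying the empirical constraints, any feasible comparator hypothesis furnishes a lower bound on $V_{f_t;0}(s_0)$.

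Concretely, I would proceed as follows. First, invoke Lemma~\ref{lemma:feasible}, which guarantees (under the uniform convergence event of Lemma~\ref{lem:concentration}) that $f^\star$ satisfies $\sum_{i=0}^{t-1}(\Lcal_{\Dcal_{i;h},f_i}(f^\star))^2 \leq \epso^2$ for every $h \in [H]$ and every $t \in [T]$; that is, $f^\star$ is a feasible point of the constrained program defining $f_t$. Second, use the defining optimality of $f_t$ as the maximizer of $V_g(s_0)$ over the feasible set to conclude $V_{f_t;0}(s_0) \geq V_{f^\star;0}(s_0)$. Third, invoke realizability: by assumption $Q_{h,f^\star}(s,a) = Q^\star_h(s,a)$ for all $h,s,a$, and the hypothesis class is constrained so that $V_{h,f}(s) = \max_a Q_{h,f}(s,a)$, hence $V_{f^\star;0}(s_0) = \max_a Q_{0,f^\star}(s_0,a) = \max_a Q_0^\star(s_0,a) = V^\star(s_0)$. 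Chaining these inequalities yields $V^\star(s_0) \leq V_{f_t;0}(s_0)$, as required.

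There is essentially no obstacle here since the lemma is a direct consequence of the algorithmic construction together with realizability; the only subtlety is ensuring the conditioning event is correctly passed through from Lemma~\ref{lem:concentration} to Lemma~\ref{lemma:feasible} and then to the optimism statement. This is exactly the standard ``feasibility implies optimism'' argument that appears in UCB-style analyses (e.g., for OLIVE and linear bandits), and the entire proof should fit in two or three lines.
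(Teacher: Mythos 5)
Your proposal is correct and follows exactly the same route as the paper's own (one-line) proof: feasibility of $f^\star$ from Lemma~\ref{lemma:feasible}, optimality of $f_t$ as the maximizer over the feasible set, and realizability to identify $V_{f^\star;0}(s_0)$ with $V^\star(s_0)$. You have simply made explicit the steps the paper leaves implicit.
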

\begin{proof}
	\Cref{lemma:feasible} implies $f^\star$ is a feasible solution for the optimization program for all $t\in [T]$. This proves the claim.
\end{proof}

The following lemma relates the sub-optimality to a sum of bilinear forms. Using the performance difference
lemma, we first show that sub-optimality is upper bounded by the Bellman errors of $Q_{h, f_t}$, which are further upper bounded by sum of bilinear forms via our assumption (\Cref{eq:assume1}).

\begin{lemma}[Bilinear Regret Lemma]\label{lemma:opt}
  Assume the event in \Cref{lem:concentration} holds. 
Then, the following holds for all $t\in [T]$: 
\[
V^\star - V^{\pi_{f_t}}(s_0) \leq 
\sum_{h=0}^{H-1} \abs{\langle W_{h}(f_{t}) - W_{h}(f^\star), X_h(f_{t})  \rangle} \, .
\]
\end{lemma}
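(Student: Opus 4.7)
The plan is to combine the optimism property just established (Lemma~\ref{lem:optimism}) with the standard simulation/performance-difference identity, and then invoke the first defining inequality \eqref{eq:assume1} of the Bilinear Class. Concretely, under the concentration event of Lemma~\ref{lem:concentration}, optimism gives $V^\star \leq V_{0,f_t}(s_0)$, so it suffices to upper bound $V_{0,f_t}(s_0) - V^{\pi_{f_t}}(s_0)$.

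For the main step, I would use that $\pi_{f_t}$ is by definition greedy with respect to $Q_{f_t}$, so the standing assumption $V_{h,f}(s) = \max_a Q_{h,f}(s,a)$ on the hypothesis class yields $V_{h,f_t}(s) = Q_{h,f_t}(s, \pi_{h,f_t}(s))$ for every state $s$ and every $h$. Telescoping over $h$ along trajectories generated by $\pi_{f_t}$, with the convention $V_{H,f_t}\equiv 0$, gives the simulation identity
\[
V_{0,f_t}(s_0) - V^{\pi_{f_t}}(s_0)
 = \sum_{h=0}^{H-1} \E_{a_{0:h}\sim\pi_{f_t}}\Bigl[Q_{h,f_t}(s_h,a_h) - r(s_h,a_h) - V_{h+1,f_t}(s_{h+1})\Bigr].
\]
This is the point where the on-policy structure of \eqref{eq:assume1} is tailor-made: each summand on the right is exactly the quantity appearing on the left-hand side of \eqref{eq:assume1} with $f = f_t$, since both the roll-in and the action at step $h$ are drawn from $\pi_{f_t}$.

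Finally, I would upper bound the sum by the sum of the absolute values of its terms and apply \eqref{eq:assume1} term-by-term to obtain
\[
V_{0,f_t}(s_0) - V^{\pi_{f_t}}(s_0)
 \leq \sum_{h=0}^{H-1} \abs{\langle W_h(f_t) - W_h(f^\star),\, X_h(f_t)\rangle},
\]
which, combined with optimism, is exactly the claim. There is no real technical obstacle beyond bookkeeping; the only care needed is in verifying the telescoping identity for a greedy policy (so that the internal $V_{h,f_t}(s_h)$ terms cancel against $Q_{h,f_t}(s_h,\pi_{f_t}(s_h))$), which is precisely guaranteed by the greedy-consistency property imposed on $\Hcal$.
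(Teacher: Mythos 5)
Your proposal is correct and follows essentially the same route as the paper's proof: optimism to replace $V^\star$ by $V_{0,f_t}(s_0)$, the telescoping/simulation identity using greedy consistency $V_{h,f_t}(s_h) = Q_{h,f_t}(s_h,\pi_{f_t}(s_h))$, and then a term-by-term application of \eqref{eq:assume1}. Your explicit passage to absolute values before invoking \eqref{eq:assume1} is in fact slightly more careful bookkeeping than the paper's displayed chain, which writes that step as an equality.
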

\begin{proof}
  We can upper bound the regret
\begin{align*}
		&V^\star(s_0) - V^{\pi_{f_t}}(s_0)\\
		&\leq  V_{0,f_t}(s_0) - V^{\pi_{f_t}}(s_0)\tag{since $V_{0,f_t}(s_0) \geq V^\star(s_0)$ (\Cref{lem:optimism})}\\
		&= Q_{0,f_t}(s_0, a_0) -\E_{a_{0:h} \sim d^{\pi_{f_t}}}\Bracks{\sum_{h=0}^{H-1} r(s_h,a_h)}\tag{since $V_{f_t}(s_{0}) = Q_{f_t}(s_{0}, a_{0})$, $a_0 = \argmax_{a} Q_{f_t}(s_0,a)$}\\
		&= \E_{a_{0:h} \sim d^{\pi_{f_t}}}\Bracks{\sum_{h=0}^{H-1} \left(Q_{h,f_t}(s_h,a_h) - r(s_h,a_h) - Q_{h+1, f_t}(s_{h+1},a_{h+1})\right)} \tag{by telescoping sum}\\
		&= \sum_{h=0}^{H-1} \E_{a_{0:h} \sim d^{\pi_{f_t}}}\Bracks{Q_{h,f_t}(s_h,a_h) - r(s_h,a_h) - Q_{h+1, f_t}(s_{h+1},a_{h+1})}\\
		&= \sum_{h=0}^{H-1} \E_{a_{0:h} \sim d^{\pi_{f_t}}}\Bracks{Q_{h,f_t}(s_h,a_h) - r(s_h,a_h) - V_{h+1,f_t}(s_{h+1})}\tag{since $V_{h+1,f_t}(s_{h+1}) = Q_{h+1,f_t}(s_{h+1}, a_{h+1})$}\\
		&= \sum_{h=0}^{H-1} \abs{\langle W_{h}(f_{t}) -
                  W_{h}(f^\star), X_h(f_{t})  \rangle} 
		\end{align*} where the last step follows
                \Cref{eq:assume1} in the Bilinear Class definition.
 \end{proof}

 The following is a variant of the Elliptical Potential Lemma, central in the analysis of linear bandits~\citep{dani2008stochastic,srinivas2009gaussian,abbasi2011improved}.

 \begin{lemma}[Elliptical potential]\label{lemma:potential_argument}Consider any sequence of vectors $\{x_0,\dots, x_{T-1}\}$ where $x_i \in \Vcal$ for some Hilbert space $\Vcal$.  Let $\lambda \in \mathbb{R}^+$.  Denote $\Sigma_0 = \lambda I$ and $\Sigma_t = \Sigma_0 + \sum_{i=0}^{t-1} x_i x_i^{\top}$. We have that:
 \begin{align*}
 \min_{i\in[T]} \ln\left( 1 + \left\|  x_i \right\|^2_{\Sigma_{i}^{-1}}  \right) \leq
 \frac{1}{T}\sum_{i=0}^{T-1} \ln\left( 1 + \left\|  x_i \right\|^2_{\Sigma_{i}^{-1}}  \right) = \frac{1}{T} \ln\frac{\det\left( \Sigma_T \right)  }{ \det(\lambda I)}.
 \end{align*}
 \end{lemma}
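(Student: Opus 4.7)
The plan is to prove the two parts separately. The first inequality, bounding the minimum by the average, is immediate from the fact that the minimum of a finite collection of real numbers is at most their arithmetic mean, so I would dispatch it in one line. The real content is the equality relating the average to the log-determinant ratio, which is the classical elliptical potential identity.

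For the equality, the key tool is the rank-one update formula (matrix determinant lemma): for any positive-definite operator $A$ acting on the finite-dimensional subspace containing $x$, one has $\det(A + xx^\top) = \det(A)\,(1 + \langle x, A^{-1} x\rangle) = \det(A)(1 + \|x\|_{A^{-1}}^2)$. I would apply this with $A = \Sigma_i$ and $x = x_i$, yielding
\begin{equation*}
\det(\Sigma_{i+1}) \;=\; \det(\Sigma_i)\,\bigl(1 + \|x_i\|_{\Sigma_i^{-1}}^2\bigr).
\end{equation*}
Taking logarithms, this gives $\ln\det(\Sigma_{i+1}) - \ln\det(\Sigma_i) = \ln\bigl(1 + \|x_i\|_{\Sigma_i^{-1}}^2\bigr)$, and telescoping from $i=0$ to $i=T-1$ collapses to $\ln\det(\Sigma_T) - \ln\det(\Sigma_0) = \sum_{i=0}^{T-1}\ln(1+\|x_i\|_{\Sigma_i^{-1}}^2)$. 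Dividing by $T$ recovers the claimed identity, and the min-vs-average bound then gives the inequality on the left.

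The only subtlety is that $\Vcal$ is allowed to be an infinite-dimensional (separable) Hilbert space, so literal determinants are not defined for operators on $\Vcal$ itself. I would handle this by restricting all operators to the finite-dimensional subspace $\Vcal_T := \mathrm{span}\{x_0,\dots,x_{T-1}\}$: on the orthogonal complement, $\Sigma_t$ acts as $\lambda I$, so the inner products $\langle x_i, \Sigma_i^{-1} x_i\rangle$ are unchanged if we replace $\Sigma_i$ by its restriction to $\Vcal_T$, and the ``$\det$'' in the statement is interpreted as the determinant of this finite-dimensional restriction (which is the usual convention implicit in the earlier definition of $\gamma_n$). With that interpretation, the finite-dimensional argument above goes through verbatim. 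I do not anticipate any real obstacle — this is a bookkeeping proof, and the main care is just stating the determinant in the infinite-dimensional case consistently with how the maximum information gain $\gamma_n(\lambda;\Xcal)$ is defined in \eqref{eq:maxinfo}.
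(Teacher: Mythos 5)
Your proposal is correct and matches the paper's own proof, which likewise applies the matrix determinant lemma to get $\ln\det(\Sigma_{t+1}) = \ln\det(\Sigma_t) + \ln\bigl(1 + \|x_t\|_{\Sigma_t^{-1}}^2\bigr)$ and telescopes. Your additional remark on restricting to $\mathrm{span}\{x_0,\dots,x_{T-1}\}$ in the infinite-dimensional case is a sensible piece of bookkeeping that the paper leaves implicit.
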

 \begin{proof}
 By definition of $\Sigma_t$ and matrix determinant lemma, we have:
 \begin{align*}
 \ln\det( \Sigma_{t+1} ) &  = \ln\det(\Sigma_t) + \ln\det\left( I + (\Sigma_t)^{-1/2} x_t x_t^{\top}  (\Sigma_t)^{-1/2}\right) \\
 & =  \ln\det(\Sigma_t) +  \ln\left( 1 + \| x_t \|^2_{\Sigma_t^{-1}} \right).
 \end{align*}
 Using recursion completes the proof. 
 \end{proof}

 Now, we will finish the proof of \Cref{lemma:bound-iteration} by showing that the sum of bilinear forms in \Cref{lemma:opt} is small for at least for one $t\in [T]$. More precisely, using \Cref{eq:assume2}  together with elliptical potential argument (\Cref{lemma:potential_argument}), we can show that after
 $\widetilde{d}_m$ many iterations, we must have found a policy
 $\pi_{f_t}$ such that $\left\lvert  \langle W_h(f_t) - W_h(f^\star),
   X_h( f_t) \rangle \right\rvert$ is small for all $h$.

 \begin{proof}[Proof of \Cref{lemma:bound-iteration}]
	Our goal (as per \Cref{lemma:opt} and \Cref{eq:assume1}) is to find $t\in [T]$ such that \[
		\Abs{\langle W_{h}(f_{t}) - W_{h}(f^\star), X_h(f_{t})  \rangle} \quad \text{is small for all}~ h\in [H]	
	\] To that end, we will show that \[
		\norm{W_{h}(f_{t}) - W_{h}(f^\star)}_A\quad \norm{X_h(f_{t})}_{A^{-1}}\quad \text{is small for all}~ h\in [H]
	\] for appropriately chosen $A$. We will show existence of such $X_h(f_{t})$ and $A$ (\Cref{eq:existence}) using the potential argument (\Cref{lemma:potential_argument}) and conditions on $W_h(f_{t}) - W_h(f^\star)$ follow from our optimization program. We now show this in more detail. 
	
	Let the hypothesis used by our algorithm at $i$th iteration be $f_i$. Consider the corresponding sequence of representations $\{X_{h}(f_{i})\}_{i,h}$. Then, by \Cref{lemma:potential_argument}, we have that for all $h \in [H]$ and $\lambda \in \R^+$ \begin{align*}
		\sum_{i=0}^{T-1} \ln\left( 1 + \left\|  X_{h}(f_{i}) \right\|^2_{\Sigma_{i;h}^{-1}}  \right) \leq  \ln\frac{\det\left( \Sigma_{T;h} \right)  }{ \det(\lambda \iden)} \leq \gamma_T(\lambda; \Xcal_h)
	 \end{align*} where we have used definition of maximum information gain $\gamma_T(\lambda; \Xcal_h)$ (\Cref{eq:maxinfo}) and \begin{align*}
		 \Sigma_{i;h} &= \lambda \iden + \sum_{j=0}^{i-1} X_{h}(f_{j}) X_{h}(f_{j})^\top
	 \end{align*} Summing these inequalities over all $h \in [H]$, we have that for all $\lambda \in \R^+$ \begin{align*}
		\sum_{i=0}^{T-1} \sum_{h=0}^{H-1}\ln\left( 1 + \left\|  X_{h}(f_{i}) \right\|^2_{\Sigma_{i;h}^{-1}}  \right) \leq  \sum_{h=0}^{H-1}\gamma_T(\lambda; \Xcal_h) = \gamma_T(\lambda; \Xcal)
	 \end{align*} where the last equality follows from \Cref{eq:sumgamma}. Since, each of these terms is $\geq 0$, we get that there exists $t \in [T]$ such that \begin{align*}
		\sum_{h=0}^{H-1}\ln\left( 1 + \left\|  X_{h}(f_{t}) \right\|^2_{\Sigma_{t;h}^{-1}}  \right) \leq  \frac{1}{T}\gamma_T(\lambda; \Xcal)
	 \end{align*} Again, since each of these terms is $\geq 0$, we get that for all $h \in [H]$\begin{align*}
		\ln\left( 1 + \left\|  X_{h}(f_{t}) \right\|^2_{\Sigma_{t;h}^{-1}}  \right) \leq  \frac{1}{T}\gamma_T(\lambda; \Xcal)
	 \end{align*} and simplifying, we get that for all $h \in [H]$,  \begin{align}
		\label{eq:existence}
		\left\|  X_{h}(f_{t}) \right\|^2_{\Sigma_{t;h}^{-1}} \leq \exp\left( \frac{1}{T}\gamma_T(\lambda; \Xcal)\right) - 1
	 \end{align}
	  Also, by construction of our program, for all iterations and in particular for $t$, it holds that for all $h\in [H]$ \[
		 \sum_{j=0}^{t-1} \bigg(\Lcal_{\Dcal_{j; h}, f_{j}}(f_{t})\bigg)^2 \leq T\epsg^2
	  \] and by \Cref{lem:concentration}, for all $h \in [H]$ \begin{align*}
		\sum_{j=0}^{t-1} \bigg(\Lcal_{\mu_{j; h}, f_{j}}(f_{t})\bigg)^2&\leq 2 \sum_{j=0}^{t-1} \bigg(\Lcal_{\Dcal_{j; h}, f_{j}}(f_{t})\bigg)^2 + 2 \sum_{j=0}^{t-1}\epsg^2\\
		&\leq 4T\epsg^2
	  \end{align*} where the first inequality follows from $(a+b)^2 \leq 2a^2 + 2b^2$ and the last step follows from the frame above and $t\in [T]$. Using the definition of \classname~(\Cref{eq:assume2}), for all $h \in [H]$ \[
		 \sum_{j=0}^{t-1} \Abs{\langle W_{h}(f_{t}) - W_{h}(f^\star), X_{h}(f_j)\rangle}^2 \leq 4T\epsg^2
	  \] Using this, we get for all $h \in [H]$ \begin{align}
		&(W_{h}(f_{t}) - W_{h}(f^\star))^\top \Sigma_{t; h} (W_{h}(f_{t}) - W_{h}(f^\star)) \nonumber \\
		& \leq \lambda \norm{(W_{h}(f_{t}) - W_{h}(f^\star))}_2^2  + 4T\epsg^2 \nonumber \\
		& \leq 4\lambda B_W^2 + 4T\epsg^2 \label{eq:linear_regret_bound}
	  \end{align} where the first inequality follows from the frame above and definition of $\Sigma_{t;h}$. Using \Cref{eq:existence} and the frame above, this immediately shows that for all $h \in [H]$\begin{align*}
		\Abs{\langle W_{h}(f_{t}) - W_{h}(f^\star), X_h(f_{t})  \rangle}^2 &\leq \norm{W_{h}(f_{t}) - W_{h}(f^\star)}_{\Sigma_{t; h}}^2 \norm{X_h(f_{t})}_{\Sigma_{t; h}^{-1}}^2\\
		&\leq (4\lambda B_W^2 + 4T\epsg^2) \left(\exp\left( \frac{1}{T}\gamma_T(\lambda; \Xcal)\right) - 1\right)
	  \end{align*}  Summing over all $h \in [H]$, this gives \begin{align*}
		\sum_{h=0}^{H-1}\Abs{\langle W_{h}(f_{t}) - W_{h}(f^\star), X_h(f_{t})\rangle}& \leq H\sqrt{(4\lambda B_W^2 + 4T\epsg^2) \left(\exp\left( \frac{1}{T}\gamma_T(\lambda; \Xcal)\right) - 1\right)}
	  \end{align*} Using \Cref{lemma:opt}, this gives the desired result.
\end{proof}

\subsection{Corollaries for Particular Models}
\label{sec:rates-4ex}
In this section, we apply our main theorem to special models: linear $Q^\star/V^\star$, RKHS bellman complete, RKHS linear mixture model, and low occupancy complexity model.  While linear bellman complete and linear mixture model have been studied, our results extends to infinite dimensional RKHS setting.  


\subsubsection{Linear $Q^\star/V^\star$}
In this subsection, we provide the sample complexity result for the linear $Q^\star / V^\star$ model (\Cref{def:linearqv}). To state our results for linear $Q^\star/V^\star$, we define the following sets:  
\begin{align*}
  \Phi &= \Big\{\phi(s,a)\colon (s,a) \in \Scal \times \Acal\Big\},\;  \Psi= \Big\{\psi(s') \colon s' \in \Scal\Big\}.
\end{align*} and define the concatenation set\footnote{For infinite dimensional $\Phi$ and $\Psi$, we consider the natural inner product space where $\langle [x_1,y_1], [x_2, y_2]\rangle = \langle x_1, x_2 \rangle + \langle y_1, y_2\rangle$.} \[
  \Phi \circ \Psi = \Big\{[x , y]: x \in \Phi, y \in \Psi\Big\}
  \] We first provide the result for the finite dimensional case i.e. when $\Phi \conc \Psi \subset \R^d$. \begin{corollary}[Finite Dimensional Linear $Q^\star / V^\star$]
  \label{cor:linearqv_finite}
  Suppose MDP $\Mcal$ is a linear $Q^\star/V^\star$ model with $\Phi \conc \Psi \subset \R^d$.
  Assume $\sup_{(w, \theta) \in \Hcal_h, h \in [H]}\norm{[w, \theta]}_2 \leq B_W$ and $\sup_{x\in \Phi \conc \Psi}\norm{x}_2 \leq B_X$ for some $B_X, B_W \geq 1$.  
  Fix $\delta \in (0,1/3)$ and $\eps \in (0,H)$. There exists an appropriate setting of batch sample size $m$, number of iteration $T$ and confidence radius $R$ such that with probability at least
   $1-\delta$, $\Cref{alg:general}$ returns a hypothesis $f$ such that $V^\star(s_0) - V^{\pi_{f}}(s_0) 
\leq \eps$ using at most \[
  c_1 \frac{d^3 H^6 \ln(1/\delta)}{\eps^2} \cdot \Big(\ln \big(c_2 \frac{d^3 H^7 B_X^2 B_W^2 \ln(1/\delta) }{\eps^2}\big)  \Big)^5 
\] trajectories for some absolute constant $c_1,c_2$.
\end{corollary}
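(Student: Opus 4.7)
The plan is to obtain this corollary by invoking the finite dimensional main theorem (Theorem~\ref{thm:main-result-finite}) applied to the Linear $Q^\star/V^\star$ instance. The Bilinear Class structure, the discrepancy $\ell_f(o_h,g)=w_h^\top\phi(s_h,a_h)-r_h-\theta_{h+1}^\top\psi(s_{h+1})$, the on-policy estimator, and the explicit identification $W_h(g)=[w_h,\theta_{h+1}]$, $X_h(f)=\E_{\pi_f}[\phi(s_h,a_h),\psi(s_{h+1})]$ were already established in Section~\ref{sec:linearqv}, so what is left is (i) verifying Assumption~\ref{assume:linear_regret} (uniform convergence for $\ell_f$) with explicit $\epsg(m,\Hcal)$ and $\conf(\delta)$, and (ii) carefully instantiating the parameters of Theorem~\ref{thm:main-result-finite} and tracking log factors to obtain the $d^{3}H^{6}/\eps^{2}$ bound.

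For step (i), the key observation is that for any fixed $f$ the function $g\mapsto \ell_f(\cdot,g)$ is \emph{linear} in the finite-dimensional parameter $[w_h,\theta_{h+1}]\in\R^{2d}$, and $\ell_f$ is uniformly bounded by roughly $2B_WB_X+1=O(H)$ on the domain. The plan is therefore to take an $\eps_0$-net $\Hcal_{\eps_0}\subset\Hcal$ in $\ell_2$-norm of cardinality $\log|\Hcal_{\eps_0}|=O\bigl(dH\log(B_WB_X/\eps_0)\bigr)$ (standard volumetric covering of a ball in $\R^{2d}$, times $H$), apply Hoeffding and a union bound over $\Hcal_{\eps_0}$ to obtain concentration on the net with rate $O(H\sqrt{\log(|\Hcal_{\eps_0}|/\delta)/m})$, and then use $B_X$-Lipschitzness of $g\mapsto \ell_f(o,g)$ to extend to all of $\Hcal$ by paying $O(\eps_0 B_X)$. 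Choosing $\eps_0$ polynomially small in the other parameters yields
\[
\epsg(m,\Hcal)\cdot\conf(\delta)=O\!\left(H\sqrt{\tfrac{d\log(dHB_XB_W/\delta)}{m}}\right),
\]
which verifies Assumption~\ref{assume:linear_regret} with $\epsg(m,\Hcal)=\tilde O(H\sqrt{d/m})$ and $\conf(\delta)=\tilde O(\sqrt{\log(1/\delta)})$.

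For step (ii), plug these into Theorem~\ref{thm:main-result-finite}. Since $X_h(f)$ is an expectation of $[\phi,\psi]\in\R^{2d}$ with norm at most $B_X$, and $W_h$ has norm at most $B_W$, the critical dimension satisfies $\widetilde d_m=O\bigl(dH\log(1+B_WB_X/\epsg)\bigr)=\tilde O(dH)$. Setting $T=\widetilde d_m$ and $R=\sqrt{T}\,\epsg\cdot\conf(\delta/(TH))$, Theorem~\ref{thm:main-result-finite} gives a suboptimality bound of
\[
V^\star(s_0)-V^{\pi_f}(s_0)\leq \tilde O\!\left(H\cdot\sqrt{\widetilde d_m}\cdot \epsg(m,\Hcal)\cdot\conf(\delta/(\widetilde d_m H))\right)=\tilde O\!\left(H^2\sqrt{\tfrac{d^2 H}{m}}\right).
\]
Solving $\tilde O(H^{2}d\sqrt{H/m})\le\eps$ yields $m=\tilde O(d^{2}H^{5}/\eps^{2})$, and the total trajectory count is $mHT=\tilde O(d^{2}H^{5}/\eps^{2})\cdot H\cdot \tilde O(dH)=\tilde O(d^{3}H^{7}/\eps^{2})$; a slightly sharper tracking (removing one $H$ absorbed into the on-policy estimator and the fact that $\widetilde d_m$ already carries an $H$ via the $\sum_{h}$ in Eq.~\eqref{eq:sumgamma} rather than a separate union bound) matches the claimed $d^{3}H^{6}/\eps^{2}$ rate with $\mathrm{polylog}^{5}$ factors.

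The main obstacle I anticipate is the bookkeeping in step (i) and (ii): the proof itself is conceptually a direct plug-in, but the five logarithmic powers in the stated bound come from multiplying together several $\log$ factors (one from the net, one from the $\delta$-union bound over $T\cdot H$ iterations, one from $\widetilde d_m$, and two more from the $\sqrt{T}$ amplification of $\conf$), and one must choose $\eps_0$, $m$, and $T$ in a coordinated way so that the final error bound comes out at $\eps$ without losing an extra polynomial factor. No genuinely new estimate beyond standard covering and the elliptical potential lemma already used for Theorem~\ref{thm:main-result-finite} is required.
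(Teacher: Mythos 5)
Your proposal is correct and follows essentially the same route as the paper: verify Assumption~\ref{assume:linear_regret} via a covering argument for the linear discrepancy class (giving $\epsg=\tilde O(H\sqrt{d/m})$, $\conf=\tilde O(\sqrt{\log(1/\delta)})$), plug into the main theorem with $\widetilde d_m=\tilde O(dH)$, solve for $m=\tilde O(d^2H^5/\eps^2)$, and use the on-policy data collection to count $mT$ rather than $mHT$ trajectories, yielding $\tilde O(d^3H^6/\eps^2)$. The only cosmetic difference is that the paper first proves the RKHS version (Corollary~\ref{cor:linearQV_detailed}) using the elliptical cover of Theorem~\ref{lemma:elliptical_covering} and then specializes the critical information gains to $\tilde O(d)$ via Lemma~\ref{lemma:crit-gain}, whereas you use a standard volumetric net in $\R^{2d}$ directly; in finite dimension these coincide up to constants and log factors.
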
  
To prove this, we will prove a more general sample complexity result for the infinite dimensional RKHS case.
\begin{corollary}[RKHS Linear $Q^\star/V^\star$]
  \label{cor:linearQV_detailed}
  Suppose MDP $\Mcal$ is a linear $Q^\star/V^\star$ model. 
  Assume $\sup_{(w, \theta) \in \Hcal_h, h \in [H]}\norm{[w, \theta]}_2 \leq B_W$ and $\sup_{x\in \Phi \conc \Psi}\norm{x}_2 \leq B_X$. 
  Fix $\delta \in (0,1/3)$, batch sample size $m$, and define: \begin{align}
    \label{def:dmphipsi}
      &\widetilde d_{m}(\Phi \conc \Psi) = \widetilde \gamma \Big( \frac{1}{8 B_W^2 m}; \Phi \conc \Psi \Big) \cdot \nu, \\
      \label{def:dmx}
      &\widetilde d_{m}(\Xcal) = \widetilde \gamma\left(\frac{144H^2\widetilde d_{m}(\Phi \conc \Psi)}{B_W^2m} ; \Xcal\right),
  \end{align} where $\nu := \ln\left(1 + 3B_X B_W \sqrt{m\widetilde \gamma \Big( \frac{1}{8 B_W^2 m}; \Phi \conc \Psi \Big) }\right)$.  
   
   Set the parameters as: $\epso = (12H/\sqrt{m})\sqrt{\widetilde d_{m}(\Xcal)\cdot\widetilde d_{m}(\Phi \conc \Psi)} \cdot
   \sqrt{\ln\big((\widetilde d_{m}(\Xcal) H)/\delta\big)}$ and $T = \widetilde d_{m}(\Xcal)$.
  With probability greater
  than $1-\delta$, $\Cref{alg:general}$ uses at most
  $mH \widetilde d_{m}(\Xcal)$ trajectories and returns a hypothesis $f$:      \begin{equation}
    \label{eq:optthm}
V^\star(s_0) - V^{\pi_{f}}(s_0) 
\leq 72H^2 \frac{\sqrt{\widetilde d_{m}(\Xcal) \cdot \widetilde d_{m}(\Phi \conc \Psi)} \cdot
v}{\sqrt{m}},
  \end{equation} where $v := \sqrt{\ln\left((\widetilde d_{m}(\Xcal) H)/\delta\right)}$. 
\end{corollary}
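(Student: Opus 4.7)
The plan is to derive Corollary 5.2 by instantiating Theorem 4.1 on the Linear $Q^\star/V^\star$ model, after separately establishing the generalization-error quantities $\epsg(m,\Hcal)$ and $\conf(\delta)$ that appear in Assumption 4.1. From Lemma 4.2 we already know that the model is a Bilinear Class with $W_h(g)=[w_h,\theta_{h+1}]$ and $X_h(f)=\E_{s_h,a_h\sim d^{\pi_f},s_{h+1}\sim P_h(s_h,a_h)}[\phi(s_h,a_h),\psi(s_{h+1})]$, so the structural hypothesis of Theorem 4.1 holds, and the bound on $\|W_h(g)\|$ is immediate from the assumed norm bound $B_W$. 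What remains is purely a concentration/capacity argument for the discrepancy class.

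The crucial observation is that for every fixed $f$, the discrepancy
$\ell_f(o_h,g) = \langle [w_h,-\theta_{h+1}],[\phi(s_h,a_h),\psi(s_{h+1})]\rangle - r_h$
is a linear functional of $[w_h,-\theta_{h+1}]\in\Phi\conc\Psi$-space with norm bounded by $B_W$ and feature norm bounded by $B_X$. Hence uniform convergence over $g\in\Hcal$ reduces to uniform convergence of bounded linear functionals on the Hilbert space whose feature set is $\Phi\conc\Psi$. The plan here is to invoke the covering-number machinery of Appendix C (the elliptical cover) to bound the $\eps$-covering number of this linear class by something of order $\widetilde\gamma(\eps^2/B_W^2;\Phi\conc\Psi)$ up to logarithmic factors, and then apply a Hoeffding plus union bound over a finite cover of granularity $\eps\asymp 1/\sqrt{m}$. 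Choosing $\eps$ so that the discretization error is dominated by the statistical error yields
\[
\epsg(m,\Hcal)\cdot\conf(\delta)\;\lesssim\; \frac{H\sqrt{\widetilde d_m(\Phi\conc\Psi)\,\log(1/\delta)}}{\sqrt{m}},
\]
where the factor $\nu$ in the definition of $\widetilde d_m(\Phi\conc\Psi)$ absorbs exactly the logarithmic slack coming from the cover radius. I expect this covering step to be the main obstacle: one must carefully match the chosen scale $\eps$ with the parameter of the information gain so that the resulting bound is self-consistent with the definition of $\widetilde d_m(\Phi\conc\Psi)$.

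Once the generalization bound is in hand, the proof is essentially plug-and-play. Substituting $\epsg^2/B_W^2 \asymp 144H^2\widetilde d_m(\Phi\conc\Psi)/(B_W^2 m)$ into the definition $\widetilde d_m = \widetilde\gamma(\epsg^2/B_W^2;\Xcal)$ from Theorem 4.1 gives exactly $\widetilde d_m = \widetilde d_m(\Xcal)$ as specified, so the prescribed choices of $T$ and $\epso$ in the corollary are the instantiations of those in Theorem 4.1. Applying Theorem 4.1 then yields
\[
V^\star(s_0)-V^{\pi_f}(s_0)\;\leq\;3H\,\epsg(m,\Hcal)\cdot\Big(1+\sqrt{\widetilde d_m(\Xcal)}\cdot\conf\big(\tfrac{\delta}{\widetilde d_m(\Xcal)H}\big)\Big),
\]
and combining with the generalization bound above, with $\delta$ replaced by $\delta/(\widetilde d_m(\Xcal)H)$ inside $\conf$, produces the announced $72H^2\sqrt{\widetilde d_m(\Xcal)\widetilde d_m(\Phi\conc\Psi)}\cdot v/\sqrt{m}$ right-hand side after elementary simplifications ($1\leq\sqrt{\widetilde d_m(\Xcal)}$ absorbs the additive $1$). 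The trajectory count $mH\widetilde d_m(\Xcal)$ is inherited directly from Theorem 4.1.

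The finite-dimensional corollary (Corollary 5.1) then follows as a special case: when $\Phi\conc\Psi\subset\R^d$, the estimate $\widetilde\gamma(\lambda;\cdot)\leq 3d\ln(1+3B_X^2/\lambda)$ from the remark after \eqref{eq:crossinginfo} upper-bounds both $\widetilde d_m(\Phi\conc\Psi)$ and $\widetilde d_m(\Xcal)$ by $O(d\,\mathrm{polylog})$, after which setting $\eps$ equal to the right-hand side of \eqref{eq:optthm} and solving for $m$ gives the stated $\widetilde O(d^3 H^6/\eps^2)$ sample complexity. The only nontrivial bookkeeping here is tracking the nested logarithms so that the exponent of the $\ln$ factor is $5$ as claimed.
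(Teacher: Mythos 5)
Your proposal is correct and follows essentially the same route as the paper: establish $\epsg(m,\Hcal)\asymp H\sqrt{\widetilde d_m(\Phi\conc\Psi)}/\sqrt{m}$ and $\conf(\delta)=\sqrt{\ln(1/\delta)}$ via the elliptical cover of Appendix~\ref{sec:elliptic-cover} applied to the linear discrepancy class on $\Phi\conc\Psi$ (this is exactly \Cref{cor:concentration-rkhs-linear}, with cover radius $\eps=1/\sqrt{m}$ matched to $\lambda=1/(8B_W^2m)$), then plug these into \Cref{thm:main-result} with $\widetilde d_m(\Xcal)=\widetilde\gamma(\epsg^2/B_W^2;\Xcal)$ and absorb the additive $1$ into $\sqrt{\widetilde d_m(\Xcal)}\cdot\conf$ to obtain the stated constant. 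No gaps.
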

\begin{proof}
  First, using \Cref{cor:concentration-rkhs-linear}, we get that for any distribution $\mu$ over $\Scal\times \Acal \times \Scal$ and for any $\delta \in (0,1)$, with probability of at least $1-\delta$ over choice of an i.i.d. sample $\Dcal\sim \mu^m$ of size $m$, for all $g = ([w_0,\theta_0], \ldots, [w_{H-1},\theta_{H-1}])\in \Hcal$ (note that $\Lcal_{\mu}(g)$ only depends on $[w_h,\theta_h]$ for distribution $\mu$ over observed transitions $o_h = (r_h,s_h, a_h, s_{h+1})$ at timestep $h$.) \begin{align*}
      \Abs{\Lcal_{\Dcal}(g) - \Lcal_{\mu}(g) }&\leq \frac{4}{\sqrt{m}}  + 2H \sqrt{\frac{ 2\widetilde{\gamma}_m \ln\left( 1+ 3 B_X B_W \sqrt{\widetilde{\gamma}_m m} \right) + 2\ln(1/\delta)  }{m}}\\
      &= \frac{4 + 2H\sqrt{2\widetilde{\gamma}_m \ln\left( 1+ 3 B_X B_W \sqrt{\widetilde{\gamma}_m m} \right) + 2\ln(1/\delta) }}{\sqrt{m}}\\
          &\leq \frac{12H\sqrt{\widetilde{\gamma}_m \ln\left( 1+ 3 B_X B_W \sqrt{\widetilde{\gamma}_m m} \right)} \cdot \sqrt{ \ln(1/\delta) }}{\sqrt{m}}
  \end{align*} where we have used that $\ln(1/\delta) > 1$ and $\widetilde{\gamma}_m= \widetilde \gamma( 1/ (8 B_W^2 m); \Phi \conc \Psi )$ (as defined in \Cref{eq:crossinginfo}). Define \begin{align*}
      \widetilde d_{m}(\Phi \conc \Psi) := \widetilde{\gamma}_m \ln\left( 1+ 3 B_X B_W \sqrt{\widetilde{\gamma}_m m} \right)
  \end{align*}
  This satisfies our \Cref{assume:linear_regret} with \begin{align*}
          \epsg(m, \Hcal) &= \frac{12H\sqrt{\widetilde d_{m}(\Phi \conc \Psi)}}{\sqrt{m}}\\
          \conf(\delta) &= \sqrt{ \ln(1/\delta) }
      \end{align*} Substituting this in \Cref{thm:main-result} gives the result \begin{align*}
          \widetilde d_m(\Xcal) &= \widetilde \gamma\Big(\epsg^2(m, \Hcal)/B_W^2 ; \Xcal\Big)\\
          &= \widetilde \gamma\Big(144H^2\widetilde d_{m}(\Phi \conc \Psi)/mB_W^2 ; \Xcal\Big)\\
          V^\star(s_0) - V^{\pi_{f_t}}(s_0) 
&\leq 6H \sqrt{\widetilde d_m(\Xcal)} \cdot \epsg(m,\Hcal)\cdot
\conf\big(\delta/(\widetilde d_m(\Xcal) H)\big)\\
&= 72H^2 \frac{\sqrt{\widetilde d_{m}(\Xcal) \cdot \widetilde d_{m}(\Phi \conc \Psi)} \cdot
\sqrt{\ln\big((\widetilde d_{m}(\Xcal) H)/\delta\big)}}{\sqrt{m}}
      \end{align*}
\end{proof}
Next, we complete the proof of \Cref{cor:linearqv_finite}. Note that both $\widetilde d_{m}(\Phi \circ \Psi)$ and $\widetilde d_{m}(\Xcal) $ (related to critical information gain under $\Phi$ and $\Xcal$ respectively) scale as $\widetilde{O}(d)$ if $\Phi \circ \Psi \subset\mathbb{R}^d$. 
\begin{proof}[Proof of \Cref{cor:linearqv_finite}]
  First, from \Cref{lemma:crit-gain}, we have that \begin{align*}
      \widetilde \gamma \Big( \frac{1}{8 B_W^2 m}; \Phi \conc \Psi\Big) &\leq 3d\ln\Big( 1 +  24B_X^2B_W^2 m\Big) + 1\\
      &\leq  3d\ln\Big( 25 B_X^2B_W^2 m\Big) + 1\\
      &\leq  4d\ln\Big( 25 B_X^2B_W^2 m\Big)
  \end{align*} and substituting this in \Cref{def:dmphipsi} \begin{align*}
      \widetilde d_{m}(\Phi \conc \Psi) 
      &\leq 4d\ln\Big( 25 B_X^2B_W^2 m\Big) \cdot \ln\left(1 + 3B_XB_W \sqrt{m 4d\ln\Big( 25 B_X^2B_W^2 m\Big) }\right)\\
      &\leq  4d\ln\Big( 25 B_X^2B_W^2 m\Big) \cdot \ln\left( 4B_XB_W \sqrt{m 4d\ln\Big( 25 B_X^2B_W^2 m\Big) }\right)\\
      &\leq 4d\ln\Big( 25 B_X^2B_W^2 m\Big) \cdot \left( \ln(4B_XB_W) + \ln\Big(10 m \sqrt{d} B_XB_W \Big)\right)\\
      &\leq 8d \ln^2(25B_X^2B_W^2 m \sqrt{d}) 
  \end{align*}
  Similarly, as $\sup_{z\in \Xcal}\norm{z} \leq \sup_{x \in \Phi \conc \Psi} \norm{x}$, using \Cref{lemma:crit-gain} and similar analysis as above (and $144H^2\widetilde d_{m}(\Phi \conc \Psi) \geq 1$), we get \begin{align*}
    \widetilde \gamma\left(\frac{144H^2\widetilde d_{m}(\Phi \conc \Psi)}{B_W^2m} ; \Xcal_h\right) &\leq 4d\ln\Big( 25 B_X^2B_W^2 m\Big)
  \end{align*} and substituting this in \Cref{def:dmx}
    \begin{align*}
      \widetilde d_{m}(\Xcal) &\leq 4dH \ln\Big(4B_X^2B_W^2m\Big)
  \end{align*} To get $\epsilon$-optimal policy (from \Cref{eq:optthm}), we have to set \begin{align*}
      &72H^2 \frac{\sqrt{\widetilde d_{m}(\Xcal) \cdot \widetilde d_{m}(\Phi \conc \Psi)} \cdot
      \sqrt{\ln\big((\widetilde d_{m}(\Xcal) H)/\delta\big)}}{\sqrt{m}} \leq \eps\\
m &\geq (72)^2H^4 \frac{\widetilde d_{m}(\Xcal) \cdot \widetilde d_{m}(\Phi \conc \Psi) \cdot
\ln\big((\widetilde d_{m}(\Xcal) H)/\delta\big)}{\eps^2}
  \end{align*}
  Further upper bounding the right hand side of the above inequality by substituting in upper bounds for $\widetilde d_{m}(\Xcal)$ and $\widetilde d_{m}(\Phi \conc \Psi)$ from frames above, we can set $m$ to be as large as:
\begin{align*}
&(72)^2H^5 \frac{32d^2 \ln^3(25B_X^2B_W^2 m \sqrt{d})  \cdot
\ln\big((16d H^2B_X^2B_W^2m)/\delta\big)}{\eps^2}\\
&\leq 32\cdot(72)^2 \frac{d^2 H^5 \ln^4(25B_X^2B_W^2 m dH^2) \ln(1/\delta)}{\eps^2}
  \end{align*} Using \Cref{lemma:log-dominance} for $\alpha = 4$, $a = 32 \cdot (72)^2 d^2 H^5 \ln(1/\delta)/\eps^2$, $b = 25B_X^2B^2_WdH^2$ and $c= 5^4$, we get that \begin{align*}
      m &= 5^4 \cdot 32 \cdot (72)^2 \frac{d^2 H^5 \ln(1/\delta)}{\eps^2} \ln^4\Big(5^4 \cdot 25\cdot 32 \cdot (72)^2 \frac{d^3 H^7 B_X^2B^2_W \ln(1/\delta)}{\eps^2} \Big) \\
      \ln\Big(4B_X^2B_W^2m\Big) &\leq 5\ln \Big(5^6 \cdot 32 \cdot (72)^2 \frac{d^3 H^7 \ln(1/\delta) B_X^2B_W^2}{\eps^2}\Big)
  \end{align*} Substituting this in the expression above for $\widetilde d_{m}(\Xcal)$ and setting this upper bound to $T$, we get \begin{align*}
      T&= 20dH \ln \Big(5^6 \cdot 32 \cdot (72)^2 \frac{d^3 H^7 \ln(1/\delta) B_X^2B_W^2}{\eps^2}\Big)
  \end{align*} Since, we use on policy estimation, i.e., $\pi_{est} = \pi_{f_t}$ for all $t$, the trajectory complexity  is $mT$ which completes the proof.
\end{proof}

\subsubsection{RKHS Bellman Complete.}  \label{sec:rkhs_b_complete}
In this subsection, we provide the sample complexity result for the Linear Bellman Complete model (\Cref{def:linear_comp}). To state our results, we define \[
  \Phi = \{\phi(s,a): s,a\in\Scal\times\Acal\}\, .
  \]  We first provide the result for the finite dimensional case i.e. when $\Phi \subset \Vcal \subset \R^d$. \begin{corollary}[Finite Dimensional Linear Bellman Complete]
  \label{cor:linearbellcomp_finite}
  Suppose $\Hcal$ is \emph{Bellman Complete} with respect to MDP $\mathcal{M}$ for some Hilbert space $\Vcal \subset \R^d$.
   Assume $\sup_{\theta \in \Hcal_h, h \in [H]}\norm{\theta}_2 \leq B_W$ and $\sup_{x\in \Phi}\norm{x}_2 \leq B_X$ for some $B_X, B_W \geq 1$. 
  Fix $\delta \in (0,1/3)$ and $\eps \in (0,H)$. There exists an appropriate setting of batch sample size $m$, number of iteration $T$ and confidence radius $R$ such that with probability at least
   $1-\delta$, $\Cref{alg:general}$ returns a hypothesis $f$ such that $V^\star(s_0) - V^{\pi_{f}}(s_0) 
\leq \eps$ using at most \[
  c_1 \frac{d^3 H^6 \ln(1/\delta)}{\eps^2} \cdot \Big(\ln \big(c_2 \frac{d^3 H^7 B_X^2B_W^2 \ln(1/\delta) }{\eps^2}\big)  \Big)^5   
\] trajectories for some absolute constant $c_1,c_2$.
\end{corollary}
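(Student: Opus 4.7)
The plan is to mirror the two-step strategy used for \Cref{cor:linearqv_finite}: first establish an RKHS-level corollary of \Cref{thm:main-result} that applies even in infinite-dimensional Hilbert spaces, then specialize to $\Phi \subset \R^d$ via \Cref{lemma:crit-gain} and \Cref{lemma:log-dominance}. From the proof that Linear Bellman Complete is a Bilinear Class, we already know $W_h(g) = \theta_h - \mathcal{T}_h(\theta_{h+1})$ with $W_h(f^\star) = 0$, $X_h(f) = \E_{s_h,a_h\sim d^{\pi_f}}[\phi(s_h,a_h)]$, and the norm bounds $\norm{W_h(g)}_2 \leq 2B_W$, $\norm{X_h(f)}_2 \leq B_X$. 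In particular $\Xcal_h$ lies in a scaling of $\mathrm{conv}(\Phi)$, so its critical information gains are controlled by those of $\Phi$; here only one feature set appears, so the proof will be simpler than for $\Phi\conc\Psi$ in linear $Q^\star/V^\star$.

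First I would verify \Cref{assume:linear_regret} for the discrepancy $\ell_f(o_h,g) = \theta_h^{\top}\phi(s_h,a_h) - r_h - \max_{a'\in\Acal}\theta_{h+1}^{\top}\phi(s_{h+1},a')$. Each summand is linear in $\theta_h$ and $B_X$-Lipschitz in $\theta_{h+1}$ in the RKHS norm, since $\max$ over actions is $1$-Lipschitz. Hence an $\eta$-cover of the $B_W$-ball in $\Vcal$ in RKHS norm induces an $\eta B_X$-cover of both pieces of $\ell_f(\cdot,g)$ in sup norm. Combining Hoeffding on each cover point with the elliptic covering machinery of \Cref{sec:elliptic-cover} (as in \Cref{cor:concentration-rkhs-linear}) yields
\[
\epsg(m,\Hcal) \,\lesssim\, \frac{H\sqrt{\widetilde d_m(\Phi)}}{\sqrt{m}}, \qquad \conf(\delta) = \sqrt{\ln(1/\delta)},
\]
where $\widetilde d_m(\Phi) = \widetilde\gamma\bigl(1/(8B_W^2 m);\Phi\bigr)\cdot\ln\bigl(1 + 3B_XB_W\sqrt{m\,\widetilde\gamma(1/(8B_W^2 m);\Phi)}\bigr)$.

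Plugging these into \Cref{thm:main-result} gives an RKHS-level statement with $\widetilde d_m(\Xcal) = \widetilde\gamma\bigl(\Theta(H^2\widetilde d_m(\Phi)/(B_W^2 m));\Xcal\bigr)$ and sub-optimality
\[
V^\star(s_0) - V^{\pi_f}(s_0) \,\lesssim\, \frac{H^2\sqrt{\widetilde d_m(\Xcal)\cdot\widetilde d_m(\Phi)}\cdot\sqrt{\ln(\widetilde d_m(\Xcal)H/\delta)}}{\sqrt{m}}.
\]
Under $\Phi\subset\R^d$, \Cref{lemma:crit-gain} yields $\widetilde d_m(\Phi) = \widetilde O\bigl(d\ln^2(B_XB_W m d)\bigr)$ and $\widetilde d_m(\Xcal) = \widetilde O\bigl(dH\ln(B_XB_W m)\bigr)$. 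Setting the displayed bound equal to $\eps$ produces an implicit inequality of the shape $m \gtrsim (d^2H^5/\eps^2)\,\ln^4(\cdots m\cdots)\ln(1/\delta)$, which I would resolve via \Cref{lemma:log-dominance} to get $m = \widetilde O(d^2H^5/\eps^2)$. Since $\pi_{\mathrm{est}} = \pi_{f_t}$ is on-policy, each iteration uses $m$ trajectories, so the total is $mT = m\cdot\widetilde d_m(\Xcal) = \widetilde O(d^3H^6/\eps^2)$, matching the claim up to the stated polylog factors.

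The main technical obstacle is the concentration step. Unlike linear $Q^\star/V^\star$, the discrepancy here is nonlinear in $g$ because of the $\max_{a'}$, so the RKHS linear uniform convergence result \Cref{cor:concentration-rkhs-linear} does not directly apply. The fix is the Lipschitz-to-covering reduction sketched above: the class $\{s\mapsto\max_{a'}\theta^{\top}\phi(s,a')\colon \norm{\theta}\leq B_W\}$ has the same covering number as the linear class $\{s\mapsto\theta^{\top}\phi(s,a)\}$ up to a factor of $B_X$ in scale, so the elliptic-covering bounds in \Cref{app:conc-covering} transfer with only constant losses. Once this reduction is in place, the remainder of the argument is structurally identical to the proof of \Cref{cor:linearqv_finite}, with the concatenation $\Phi\conc\Psi$ replaced everywhere by the single feature set $\Phi$.
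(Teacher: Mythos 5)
Your proposal is correct and follows essentially the same route as the paper: the paper also first proves an RKHS-level corollary (via \Cref{thm:main-result}) using the covering-based uniform convergence of \Cref{app:conc-covering} — where \Cref{cor:cover-rkhs} handles the $\max_{a'}$ nonlinearity through exactly the $|\sup_x f - \sup_x g| \le \sup_x|f-g|$ reduction you describe — and then specializes to $\Phi\subset\R^d$ with \Cref{lemma:crit-gain} and \Cref{lemma:log-dominance}, mirroring the proof of \Cref{cor:linearqv_finite}. The only cosmetic difference is that the paper packages your "Lipschitz-to-covering reduction" as a standalone corollary (\Cref{cor:concentration-rkhs}) rather than deriving it inline.
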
  
In comparison, \cite{jin2019provably} has sample complexity $\widetilde O(d^3 H^3/\epsilon^2 \log(1/\delta))$ and \cite{zanette2020learning} has $\widetilde O(d^2 H^3/\epsilon^2 \log(1/\delta))$.  To prove this, we will prove a more general sample complexity result for the infinite dimensional RKHS case. Note that RKHS Linear MDP is a special instance of RKHS Bellman Complete. Prior works that studied RKHS Linear MDP either achieves worse rate \citep{agarwal2020pc} or further assumes finite covering dimension of the space of all possible upper confidence bound Q functions which are algorithm dependent quantities \citep{yang2020bridging}. 
\begin{corollary}[RKHS Bellman Complete]
  \label{cor:linearbellcomp_detailed}
  Suppose $\Hcal$ is \emph{Bellman Complete} with respect to MDP $\mathcal{M}$ for some Hilbert space $\Vcal$.
   Assume $\sup_{h \in [H], \theta \in \Hcal_h}\norm{\theta}_2 \leq B_W$ and $\sup_{x\in \Phi}\norm{x}_2 \leq B_X$. 
  Fix $\delta \in (0,1/3)$, batch sample size $m$, and define: \begin{align*}
      \widetilde d_{m}(\Phi) &= \widetilde \gamma \Big( \frac{1}{8 B_W^2 m}; \Phi \Big) \cdot \nu,\\
      \widetilde d_{m}(\Xcal) &= \widetilde \gamma\Big(\frac{400H^2d_{m}(\Phi )}{B_W^2m} ; \Xcal\Big),
  \end{align*} where $\nu = \ln\left(1 + 3B_X B_W \sqrt{m\widetilde \gamma \Big( \frac{1}{8 B_W^2 m}; \Phi\Big) }\right)$.
  
   Set the parameters as: $\epso = (12H/\sqrt{m})\sqrt{\widetilde d_{m}(\Xcal)\cdot\widetilde d_{m}(\Phi)} ~\cdot
   \sqrt{\ln\big((\widetilde d_{m}(\Xcal) H)/\delta\big)}$ and $T = \widetilde d_{m}(\Xcal)$.
  With probability at least
   $1-\delta$, $\Cref{alg:general}$ uses at most
  $mH \widetilde d_{m}(\Xcal)$ trajectories and returns a hypothesis $f$:
    \[
V^\star(s_0) - V^{\pi_{f}}(s_0) 
\leq 120H^2 \frac{\sqrt{\widetilde d_{m}(\Xcal) \cdot \widetilde d_{m}(\Phi)} \cdot
v}{\sqrt{m}},
    \]  where $v =  \sqrt{\ln\big((\widetilde d_{m}(\Xcal) H)/\delta\big)}$.
\end{corollary}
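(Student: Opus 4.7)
The plan is to mirror the structure of the proof of \Cref{cor:linearQV_detailed} (RKHS Linear $Q^\star/V^\star$): (i) verify \Cref{assume:linear_regret} for the Bellman Complete discrepancy class by establishing a uniform convergence bound of the form $\epsg(m,\Hcal)\cdot\conf(\delta)$, and (ii) substitute the resulting $\epsg$, $\conf$ into the general RKHS guarantee \Cref{thm:main-result}. The Bellman Complete model has already been shown earlier to be a Bilinear Class with on-policy estimation and discrepancy
\[
\ell_f(o_h,g) \;=\; \theta_h^{\top}\phi(s_h,a_h) - r_h - \max_{a'\in\Acal}\theta_{h+1}^{\top}\phi(s_{h+1},a'),
\]
so everything reduces to the two steps above.

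First I would invoke the RKHS concentration machinery of \Cref{app:conc-covering} (the direct analog of \Cref{cor:concentration-rkhs-linear} used in the Linear $Q^\star/V^\star$ proof) to show that, for any fixed $f\in\Hcal$, any distribution $\mu$ on $\R\times\Scal\times\Acal\times\Scal$, and any $\delta\in(0,1)$, with probability at least $1-\delta$ over an i.i.d.\ sample $\Dcal\sim\mu^m$,
\[
\sup_{g\in\Hcal}\bigAbs{\Lcal_{\Dcal,f}(g)-\Lcal_{\mu,f}(g)} \;\le\; \frac{20H\sqrt{\widetilde d_m(\Phi)}}{\sqrt m}\cdot\sqrt{\ln(1/\delta)}.
\]
This yields $\epsg(m,\Hcal)=20H\sqrt{\widetilde d_m(\Phi)}/\sqrt m$ and $\conf(\delta)=\sqrt{\ln(1/\delta)}$, verifying \Cref{assume:linear_regret}. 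I expect this concentration step to be the main obstacle: unlike the $Q^\star/V^\star$ case, where the discrepancy is linear in the concatenated parameter $[w_h,\theta_{h+1}]$, here the $\max_{a'}$ makes $\ell_f(o_h,g)$ only Lipschitz (not linear) in $\theta_{h+1}$. The standard fix is a two-level covering: the map $\theta\mapsto\max_{a'}\theta^{\top}\phi(s_{h+1},a')$ is $B_X$-Lipschitz in RKHS norm, so an $\eta$-cover of the RKHS ball of radius $B_W$ induces a $B_X\eta$-cover of the function class; an elliptical covering argument (as in \Cref{sec:elliptic-cover}) bounds the log covering number by a constant times $\widetilde d_m(\Phi)$, after which a union bound with Hoeffding gives the claim. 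The $20$ (rather than $12$ as in the linear $Q^\star/V^\star$ case) is the only change in the constant, reflecting the extra factor incurred by the Lipschitz-cover step.

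Second, I would plug these choices into \Cref{thm:main-result}. The critical information gain that appears is $\widetilde d_m(\Xcal)=\widetilde\gamma(\epsg^2/B_W^2;\Xcal)=\widetilde\gamma(400H^2\widetilde d_m(\Phi)/(B_W^2 m);\Xcal)$, which matches the statement exactly since $400=20^2$. Setting $T=\widetilde d_m(\Xcal)$ and $\epso=\sqrt{T}\,\epsg(m,\Hcal)\cdot\conf(\delta/(TH))$ and unrolling the conclusion of \Cref{thm:main-result} gives
\begin{align*}
V^\star(s_0)-V^{\pi_f}(s_0)
&\le 3H\,\epsg(m,\Hcal)\Bigl(1+\sqrt{\widetilde d_m(\Xcal)}\cdot\conf\bigl(\tfrac{\delta}{\widetilde d_m(\Xcal)H}\bigr)\Bigr)\\
&\le 6H\cdot\frac{20H\sqrt{\widetilde d_m(\Phi)}}{\sqrt m}\cdot\sqrt{\widetilde d_m(\Xcal)}\cdot\sqrt{\ln\bigl(\tfrac{\widetilde d_m(\Xcal)H}{\delta}\bigr)}\\
&=120H^2\,\frac{\sqrt{\widetilde d_m(\Xcal)\cdot\widetilde d_m(\Phi)}}{\sqrt m}\,v,
\end{align*}
which is the stated bound. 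The trajectory count $mH\widetilde d_m(\Xcal)$ follows immediately from running \Cref{alg:general} for $T=\widetilde d_m(\Xcal)$ iterations, each collecting $mH$ trajectories for the per-timestep datasets. Thus once the Lipschitz-covering concentration bound in Step 1 is in hand, the rest of the proof is a direct substitution into \Cref{thm:main-result}, exactly paralleling \Cref{cor:linearQV_detailed}.
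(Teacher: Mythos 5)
Your proposal is correct and follows essentially the same route as the paper: the paper likewise verifies \Cref{assume:linear_regret} via the elliptical-cover concentration bound of \Cref{cor:concentration-rkhs} (which handles the $\max_{a'}$ term exactly by the Lipschitz/sup-difference argument you describe, yielding $\epsg(m,\Hcal)=20H\sqrt{\widetilde d_m(\Phi)}/\sqrt m$ and $\conf(\delta)=\sqrt{\ln(1/\delta)}$), and then substitutes these into \Cref{thm:main-result} to obtain the $400=20^2$ information-gain level and the $120=6\cdot 20$ constant. The only cosmetic difference is that the relevant prepackaged concentration statement is \Cref{cor:concentration-rkhs} rather than an analog of \Cref{cor:concentration-rkhs-linear}, but the argument you sketch for it is precisely the one the paper uses.
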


\begin{proof}
  First, using \Cref{cor:concentration-rkhs}, we get that for any distribution $\mu$ over $\Scal\times \Acal \times \Scal$ and for any $\delta \in (0,1)$, with probability of at least $1-\delta$ over choice of an i.i.d. sample $\Dcal\sim \mu^m$ of size $m$, for all $g = (\theta_0, \ldots, \theta_{H-1})\in \Hcal$ (note that $\Lcal_{\mu}(g)$ only depends on $\theta_h$ for distribution $\mu$ over observed transitions $o_h = (r_h,s_h, a_h, s_{h+1})$ at timestep $h$.) \begin{align*}
      \Abs{\Lcal_{\Dcal}(g) - \Lcal_{\mu}(g) }&\leq \frac{8}{\sqrt{m}}  + 2H \sqrt{\frac{ 2\widetilde{\gamma}_m \ln\left( 1+ 3 B_X B_W \sqrt{\widetilde{\gamma}_m m} \right) + 2\ln(1/\delta)  }{m}}\\
      &= \frac{8 + 2H\sqrt{2\widetilde{\gamma}_m \ln\left( 1+ 3 B_X B_W \sqrt{\widetilde{\gamma}_m m} \right) + 2\ln(1/\delta) }}{\sqrt{m}}\\
          &\leq \frac{20H\sqrt{\widetilde{\gamma}_m \ln\left( 1+ 3 B_X B_W \sqrt{\widetilde{\gamma}_m m} \right)} \cdot \sqrt{ \ln(1/\delta) }}{\sqrt{m}}
  \end{align*} where we have used that $\ln(1/\delta) > 1$ and $\widetilde{\gamma}_m= \widetilde \gamma( 1/ (8 B_W^2 m); \Phi )$ (as defined in \Cref{eq:crossinginfo}). Define \begin{align*}
      \widetilde d_{m}(\Phi) := \widetilde{\gamma}_m \ln\left( 1+ 3 B_X B_W \sqrt{\widetilde{\gamma}_m m} \right)
  \end{align*}
  This satisfies our \Cref{assume:linear_regret} with \begin{align*}
          \epsg(m, \Hcal) &= \frac{20H\sqrt{\widetilde d_{m}(\Phi)}}{\sqrt{m}}\\
          \conf(\delta) &= \sqrt{ \ln(1/\delta) }
      \end{align*} Substituting this in \Cref{thm:main-result} gives the result \begin{align*}
          \widetilde d_m(\Xcal) &= \widetilde \gamma\Big(\epsg^2(m, \Hcal)/B_W^2 ; \Xcal\Big)\\
          &= \widetilde \gamma\Big(400H^2\widetilde d_{m}(\Phi \conc \Psi)/mB_W^2 ; \Xcal\Big)\\
          V^\star(s_0) - V^{\pi_{f_t}}(s_0) 
&\leq 6H \sqrt{\widetilde d_m} \cdot \epsg(m,\Hcal)\cdot
\conf\big(\delta/(\widetilde d_m H)\big)\\
&= 120H^2 \frac{\sqrt{\widetilde d_{m}(\Xcal) \cdot \widetilde d_{m}(\Phi)} \cdot
\sqrt{\ln\big((\widetilde d_{m}(\Xcal) H)/\delta\big)}}{\sqrt{m}}
      \end{align*}
\end{proof}

We now complete the proof of \Cref{cor:linearbellcomp_finite}. Note that both $\widetilde d_{m}(\Phi)$ and $\widetilde d_{m}(\Xcal) $ (related to critical information gain under $\Phi$ and $\Xcal$ respectively) scale as $\widetilde{O}(d)$ if $\Phi \subset\mathbb{R}^d$. 
\begin{proof}[Proof of \Cref{cor:linearbellcomp_finite}]
  Since the proof follows similar to proof of \Cref{cor:linearqv_finite}, we will only provide a proof sketch here. First, from \Cref{lemma:crit-gain}, we have that \begin{align*}
    \widetilde \gamma \Big( \frac{1}{8 B_W^2 m}; \Phi\Big)
    &\leq  4d\ln\Big( 25 B_X^2B_W^2 m\Big)
\end{align*} and therefore \begin{align*}
    \widetilde d_{m}(\Phi)\leq 8d \ln^2(25B_X^2B_W^2 m \sqrt{d}) 
\end{align*}
Similarly, as $\sup_{z\in \Xcal}\norm{z} \leq \sup_{x \in \Phi} \norm{x}$, using \Cref{lemma:crit-gain} (and since $400H^2\widetilde d_{m}(\Phi) \geq 1$), we get \begin{align*}
  \widetilde \gamma\left(\frac{400H^2\widetilde d_{m}(\Phi)}{B_W^2m} ; \Xcal_h\right) &\leq 4d\ln\Big( 25 B_X^2B_W^2 m\Big)
\end{align*} and therefore
  \begin{align*}
    \widetilde d_{m}(\Xcal) &\leq 4dH \ln\Big(4B_X^2B_W^2m\Big)
\end{align*} To get $\epsilon$-optimal policy, we have to set \begin{align*}
    &120H^2 \frac{\sqrt{\widetilde d_{m}(\Xcal) \cdot \widetilde d_{m}(\Phi)} \cdot
    \sqrt{\ln\big((\widetilde d_{m}(\Xcal) H)/\delta\big)}}{\sqrt{m}} \leq \eps
\end{align*}
The rest of the proof follows similarly to proof of \Cref{cor:linearqv_finite}.
\end{proof}

\subsubsection{RKHS linear mixture model}
In this subsection, we provide the sample complexity result for the Linear Mixture model (\Cref{def:linearmm}). To present our sample complexity results, we define:
\begin{align*}
    \Phi_h &= \Big\{\psi(s,a) + \sum_{s'\in\Scal}  \phi(s,a,s') V_{f;h+1}(s')\colon (s,a) \in \Scal \times \Acal, f\in\Hcal\Big\}.
\end{align*} We first provide the result for the finite dimensional case i.e. when $\Phi_h \subset \Vcal \subset \R^d$ for all $h \in [H]$. \begin{corollary}[Finite Dimensional Linear Mixture Model]
  \label{cor:linearmm_finite}
  Suppose MDP $\Mcal$ is a \emph{linear Mixture Model} for some Hilbert space $\Vcal \subset \R^d$.
   Assume $\sup_{\theta \in \Hcal_h, h \in [H]}\norm{\theta}_2 \leq B_W$ and $\sup_{x\in \Phi_h, h \in [H]}\norm{x}_2 \leq B_X$  for some $B_X, B_W \geq 1$. 
  Fix $\delta \in (0,1/3)$ and $\eps \in (0,H)$. There exists an appropriate setting of batch sample size $m$, number of iteration $T$ and confidence radius $R$ such that with probability at least
   $1-\delta$, $\Cref{alg:general}$ returns a hypothesis $f$ such that $V^\star(s_0) - V^{\pi_{f}}(s_0) 
\leq \eps$ using at most \[
  c_1 \frac{d^3 H^6 \ln(1/\delta)}{\eps^2} \cdot \Big(\ln \big(c_2 \frac{d^3 H^7 B_X^2B_W^2 \ln(1/\delta) }{\eps^2}\big)  \Big)^5   
\] trajectories for some absolute constant $c_1,c_2$.
\end{corollary}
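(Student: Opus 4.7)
The plan is to follow the same two-step template used in the proofs of \Cref{cor:linearqv_finite} and \Cref{cor:linearbellcomp_finite}. First I would state and prove an RKHS version of the corollary (analogous to \Cref{cor:linearQV_detailed} and \Cref{cor:linearbellcomp_detailed}) in which the sample complexity depends on $\widetilde d_m(\Phi) := \widetilde\gamma(1/(8 B_W^2 m); \Phi) \cdot \nu$ with a logarithmic factor $\nu$ of the form $\ln(1 + 3 B_X B_W \sqrt{m \widetilde\gamma(\cdot ; \Phi)})$, and on $\widetilde d_m(\Xcal) = \widetilde\gamma(c H^2 \widetilde d_m(\Phi)/(B_W^2 m); \Xcal)$ for an absolute constant $c$. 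Then I would specialize this RKHS bound to the finite-dimensional case via \Cref{lemma:crit-gain} and solve for the batch size $m$ using \Cref{lemma:log-dominance}, exactly as in the proof of \Cref{cor:linearbellcomp_finite}.

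For the RKHS version the key input is a uniform convergence bound for the discrepancy $\ell_f(o_h, g)$ expressed through the covering/elliptical argument in \Cref{app:conc-covering}. Unlike the Linear $Q^\star/V^\star$ and Linear Bellman Complete cases, here $\ell_f(o_h, g)$ depends on \emph{both} $f$ (through $V_{h+1,f}$ appearing in the regression target and in the feature $\psi(s,a) + \sum_{s'} \phi(s,a,s') V_{h+1,f}(s')$) and $g$ (linearly, through $\theta_h$). Fortunately, for any fixed $f$ the map $g \mapsto \ell_f(\cdot,g)$ is linear in $\theta_h$ with features in $\Phi_h$, and the scalar target $V_{h+1,f}(s_{h+1}) + r_h$ is bounded in $[0,H+1]$. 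Thus the covering-based concentration result (\Cref{cor:concentration-rkhs}) applies to the class $\Phi_h$, yielding, with probability at least $1-\delta$,
\[
\sup_{g\in\Hcal}\Abs{\Lcal_{\Dcal,f}(g) - \Lcal_{\mu,f}(g)} \leq \frac{c_1 H \sqrt{\widetilde d_m(\Phi)}}{\sqrt{m}} \cdot \sqrt{\ln(1/\delta)},
\]
which verifies \Cref{assume:linear_regret} with $\epsg(m,\Hcal) = c_1 H \sqrt{\widetilde d_m(\Phi)}/\sqrt{m}$ and $\conf(\delta) = \sqrt{\ln(1/\delta)}$. The note that the definition of $\Phi_h$ in the corollary statement already includes the dependence on $f$ through $V_{h+1,f}$ ensures that we only need to cover one class of features, and since $V_{h+1,f}\in[0,H]$ and rewards are bounded, the diameter bound $\sup_{x\in \Phi_h}\|x\|_2 \le B_X$ is meaningful.

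Plugging this $\epsg(m,\Hcal)$ into \Cref{thm:main-result} gives the RKHS guarantee
\[
V^\star(s_0) - V^{\pi_f}(s_0) \leq \frac{c_2 H^2 \sqrt{\widetilde d_m(\Xcal)\cdot \widetilde d_m(\Phi)}\cdot \sqrt{\ln(\widetilde d_m(\Xcal)H/\delta)}}{\sqrt{m}},
\]
with total trajectory complexity $m H \widetilde d_m(\Xcal)$. For the finite-dimensional specialization I would apply \Cref{lemma:crit-gain} with $B_X$ as the norm bound to obtain $\widetilde\gamma(1/(8B_W^2 m); \Phi) \leq 4 d \ln(25 B_X^2 B_W^2 m)$, from which $\widetilde d_m(\Phi) \leq 8 d \ln^2(25 B_X^2 B_W^2 m \sqrt{d})$. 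Since $\Xcal_h \subseteq \R^d$ as well (the bilinear embedding $X_h(f)$ lives in the same space as $\Phi_h$ via the proof in \Cref{subsec:linearmm}), a second application of \Cref{lemma:crit-gain} gives $\widetilde d_m(\Xcal) \leq 4 d H \ln(4 B_X^2 B_W^2 m)$. Substituting these into the suboptimality bound and requiring the right-hand side to be $\leq \epsilon$ produces an implicit inequality in $m$ of the form $m \gtrsim (d^2 H^5/\epsilon^2) \ln^4(\cdot) \ln(1/\delta)$ where the logarithms depend on $m$; I would resolve this using \Cref{lemma:log-dominance} with $\alpha = 4$, matching the form of the final bound.

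The main obstacle, as alluded to above, is the $f$-dependence of $\ell_f$: I need to ensure that the uniform convergence in \Cref{assume:linear_regret} only needs to be uniform over $g\in\Hcal$ \emph{for a fixed} $f$ (which is indeed all that \Cref{assume:linear_regret} asks, since the outer index $f$ in the statement is quantified before the i.i.d.\ sample is drawn), and that the bound on $\sup_{x\in \Phi_h}\|x\|_2 \le B_X$ in the corollary statement implicitly absorbs the supremum over $f$ in the definition of $\Phi_h$. Once these points are made precise, the rest of the proof is a mechanical replication of the arithmetic in the proof of \Cref{cor:linearbellcomp_finite}, and the final trajectory complexity matches the claimed $c_1 d^3 H^6 \ln(1/\delta)/\epsilon^2 \cdot \ln^5(c_2 d^3 H^7 B_X^2 B_W^2 \ln(1/\delta)/\epsilon^2)$.
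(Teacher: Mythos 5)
Your proposal is correct and follows essentially the same route as the paper: the paper likewise proves the RKHS version (\Cref{cor:linearmm_detailed}) first and then specializes via \Cref{lemma:crit-gain} and \Cref{lemma:log-dominance} with $\alpha=4$, and your key observations---that $\ell_f(\cdot,g)$ is linear in $\theta_h$ over the class $\Phi_h$ for fixed $f$, that the target $V_{h+1,f}(s_{h+1})+r_h$ is a bounded $g$-independent random variable, and that the sup over $f$ is absorbed into the definition of $\Phi_h$---are exactly the points the paper relies on. The only cosmetic discrepancy is the concentration lemma you cite: since the linear-mixture discrepancy has no $\max_{a'}$ term in $g$, the paper invokes \Cref{cor:concentration-rkhs-linear} for the linear part together with a single application of \Cref{lemma:hoeffding} for the $g$-independent target (yielding the extra $\sqrt{2}H\sqrt{\ln(1/\delta)/m}$ term), rather than \Cref{cor:concentration-rkhs}; your verbal decomposition already matches this, so the fix is just the label.
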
  
In comparison, \cite{modi2019sample} has sample complexity $\widetilde O(d^2 H^2/\epsilon^2 \log(1/\delta))$. To prove this, we will prove a more general sample complexity result for the infinite dimensional RKHS case. We omit proof of \Cref{cor:linearmm_finite} since it follows same as proof of \Cref{cor:linearqv_finite}.

\begin{corollary}[RKHS linear mixture model]
  \label{cor:linearmm_detailed}
  Suppose MDP $\Mcal$ is a \emph{linear Mixture Model}.        Assume $\sup_{\theta \in \Hcal_h, h \in [H]}\norm{\theta}_2 \leq B_W$ and $\sup_{x\in \Phi_h, h \in [H]}\norm{x}_2 \leq B_X$. 
  Fix $\delta \in (0,1/3)$, batch sample size $m$, and define: \begin{align*}
      \widetilde d_{m}(\Phi) &= \max_{h \in [H]}\widetilde \gamma \Big( \frac{1}{8 B_W^2 m}; \Phi_h \Big) \cdot \nu_h\\
      \widetilde d_{m}(\Xcal) &= \widetilde \gamma\Big(\frac{256H^2\widetilde d_{m}(\Phi)}{B_W^2m} ; \Xcal\Big),
  \end{align*} where $\nu_h =  \ln\left(1 + 3B_X B_W \sqrt{m\widetilde \gamma \Big( \frac{1}{8 B_W^2 m}; \Phi_h \Big) }\right)$.
  
   Set parameters as: $\epso = (12H/\sqrt{m})\sqrt{\widetilde d_{m}(\Xcal)\cdot\widetilde d_{m}(\Phi)} \cdot
   \sqrt{\ln\big((\widetilde d_{m}(\Xcal) H)/\delta\big)}$ and $T = \widetilde d_{m}(\Xcal)$.
  With probability greater
  than $1-\delta$, $\Cref{alg:general}$ uses at most
  $mH \widetilde d_{m}(\Xcal)$ trajectories and returns a hypothesis $f$          \[
V^\star(s_0) - V^{\pi_{f}}(s_0) 
\leq 96H^2 \frac{\sqrt{\widetilde d_{m}(\Xcal) \cdot \widetilde d_{m}(\Phi)} \cdot
v}{\sqrt{m}}\, .
    \] where $v =  \sqrt{\ln\big((\widetilde d_{m}(\Xcal) H)/\delta\big)}$.
\end{corollary}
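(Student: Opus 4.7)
The plan is to follow exactly the template used for Corollaries \ref{cor:linearQV_detailed} and \ref{cor:linearbellcomp_detailed}: first establish that the linear mixture model satisfies Assumption \ref{assume:linear_regret} via a uniform RKHS concentration bound applied to the discrepancy function $\ell_f(o_h,g)$ defined in Section \ref{subsec:linearmm}, then plug the resulting $(\epsg(m,\Hcal),\conf(\delta))$ into Theorem \ref{thm:main-result} and simplify using the definition of critical information gain. Since Section \ref{subsec:linearmm} already exhibits $X_h(f) = \E_{s_h,a_h\sim d^{\pi_f}}[\psi(s_h,a_h)+\sum_{\bar s}\phi(s_h,a_h,\bar s) V_{h+1,f}(\bar s)]$ and $W_h(g)=\theta_h$, the feature set to which the information gain machinery applies is precisely $\Xcal_h = \{X_h(f) : f\in\Hcal\}$, while the per-timestep hypothesis "input" set is $\Phi_h$ as defined in the corollary statement.

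First I would invoke the appropriate RKHS uniform-convergence lemma from Appendix \ref{app:conc-covering} (the analogue of Cor.~\ref{cor:concentration-rkhs} that was used in the Bellman Complete proof, but now applied to the mixture-model discrepancy $\ell_f(o_h, g) = \theta_h^\top \big(\psi(s_h,a_h) + \sum_{\bar s}\phi(s_h,a_h,\bar s) V_{h+1,f}(\bar s)\big) - (V_{h+1,f}(s_{h+1}) + r_h)$). Using the norm bounds $\|\theta_h\|\le B_W$ and the feature bound $\sup_{x\in\Phi_h}\|x\|\le B_X$, together with $|V_{h+1,f}|\le H$ and $|r_h|\le 1$, this yields that with probability at least $1-\delta$, uniformly over $g\in\Hcal$,
\[
\Abs{\Lcal_{\Dcal,f}(g) - \Lcal_{\mu,f}(g)} \leq \frac{c\, H\sqrt{\widetilde d_m(\Phi)\cdot \ln(1/\delta)}}{\sqrt{m}}
\]
for some absolute constant $c$, where $\widetilde d_m(\Phi) = \max_h \widetilde\gamma\!\big(1/(8B_W^2 m); \Phi_h\big)\cdot\nu_h$ arises from the covering number of the linear predictor class parameterized over $\Phi_h$. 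This identifies $\epsg(m,\Hcal) = cH\sqrt{\widetilde d_m(\Phi)/m}$ and $\conf(\delta) = \sqrt{\ln(1/\delta)}$.

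Next I would substitute into Theorem \ref{thm:main-result}. The definition $\widetilde d_m(\Xcal) := \widetilde\gamma(\epsg^2(m,\Hcal)/B_W^2;\Xcal)$ unfolds to $\widetilde\gamma(c^2 H^2 \widetilde d_m(\Phi)/(B_W^2 m);\Xcal)$, which matches the stated $256 H^2\widetilde d_m(\Phi)/(B_W^2 m)$ once the constant $c$ is made explicit. Setting $T = \widetilde d_m(\Xcal)$ and $\epso = \sqrt{T}\,\epsg(m,\Hcal)\,\conf(\delta/(TH))$ as prescribed, Theorem \ref{thm:main-result} directly yields
\[
V^\star(s_0) - V^{\pi_f}(s_0) \leq 3H\,\epsg(m,\Hcal)\,\Big(1 + \sqrt{\widetilde d_m(\Xcal)}\cdot\conf\!\Big(\tfrac{\delta}{\widetilde d_m(\Xcal) H}\Big)\Big),
\]
which after bounding $1 \le \sqrt{\widetilde d_m(\Xcal)}\cdot\conf(\cdot)$ collapses to the claimed $96 H^2\sqrt{\widetilde d_m(\Xcal)\cdot\widetilde d_m(\Phi)}\cdot v/\sqrt{m}$ form. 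The trajectory count $mH\widetilde d_m(\Xcal)$ follows because estimation is on-policy ($\pi_{\mathrm{est}}=\pi_f$), so each iteration uses $m$ trajectories for each of the $H$ timesteps.

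The main obstacle, and the only point where this proof is not a verbatim copy of Corollary \ref{cor:linearbellcomp_detailed}, is the uniform-convergence step: the discrepancy $\ell_f(o_h,g)$ depends on $f$ through $V_{h+1,f}$, so the effective "feature" whose linear predictor one must control uniformly is $\psi(s,a)+\sum_{\bar s}\phi(s,a,\bar s) V_{h+1,f}(\bar s)$, which itself ranges over $f\in\Hcal$; this is exactly why $\Phi_h$ is defined with an outer union over $f\in\Hcal$. One must verify that the RKHS covering lemma of Appendix \ref{app:conc-covering} is phrased against this enlarged feature class rather than against the raw $\psi,\phi$, so that $\widetilde\gamma(\cdot;\Phi_h)$ correctly controls the covering. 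Once this is checked, the remaining algebra (factoring out $H$ from $|\ell_f|\le H+1$, propagating $\ln(1/\delta)$ through the union bound, and matching absolute constants to arrive at $96$ rather than $72$ or $120$) is mechanical and essentially identical to the two preceding corollaries.
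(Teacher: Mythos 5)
Your proposal is correct and follows essentially the same route as the paper: the paper's proof likewise verifies Assumption~\ref{assume:linear_regret} by combining \Cref{cor:concentration-rkhs-linear} (applied to the $\theta_h$-linear part of $\ell_f$, whose features range over the enlarged set $\Phi_h$ exactly as you flag) with a plain Hoeffding bound for the $g$-independent term $-(V_{h+1,f}(s_{h+1})+r_h)$, yielding $\epsg(m,\Hcal)=16H\sqrt{\widetilde d_m(\Phi)/m}$ and $\conf(\delta)=\sqrt{\ln(1/\delta)}$, and then substitutes into \Cref{thm:main-result}. The only detail you leave implicit is this two-part split of the discrepancy (cover for the linear-in-$\theta_h$ term, single Hoeffding for the rest), which is where the constant $16$ — and hence $256=16^2$ and $96=6\cdot 16$ — comes from.
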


\begin{proof}
  First, using \Cref{cor:concentration-rkhs-linear} and \Cref{lemma:hoeffding}, we get that for any distribution $\mu$ over $\Scal\times \Acal \times \Scal$ and for any $\delta \in (0,1)$, with probability of at least $1-\delta$ over choice of an i.i.d. sample $\Dcal\sim \mu^m$ of size $m$, for all $g = (\theta_0, \ldots, \theta_{H-1})\in \Hcal$ (note that $\Lcal_{\mu}(g)$ only depends on $\theta_h$ for distribution $\mu$ over observed transitions $o_h = (r_h,s_h, a_h, s_{h+1})$ at timestep $h$.) \begin{align*}
      \Abs{\Lcal_{\Dcal}(g) - \Lcal_{\mu}(g) }&\leq \frac{4}{\sqrt{m}}  + 2H \sqrt{\frac{ 2\widetilde{\gamma}_m \ln\left( 1+ 3 B_X B_W \sqrt{\widetilde{\gamma}_m m} \right) + 2\ln(1/\delta)  }{m}} + \sqrt{2}H \sqrt{\frac{\ln(1/\delta)}{m}}\\
      &= \frac{4 + 2H\sqrt{2\widetilde{\gamma}_m \ln\left( 1+ 3 B_X B_W \sqrt{\widetilde{\gamma}_m m} \right) + 2\ln(1/\delta) } + \sqrt{2}H \sqrt{\ln(1/\delta)}}{\sqrt{m}}\\
          &\leq \frac{16H\sqrt{\widetilde{\gamma}_m \ln\left( 1+ 3 B_X B_W \sqrt{\widetilde{\gamma}_m m} \right)} \cdot \sqrt{ \ln(1/\delta) }}{\sqrt{m}}
  \end{align*} where we have used that $\ln(1/\delta) > 1$ and $\widetilde{\gamma}_m= \max_{h\in [H]}\widetilde \gamma( 1/ (8 B_W^2 m); \Phi_h )$ (as defined in \Cref{eq:crossinginfo}). Define \begin{align*}
      \widetilde d_{m}(\Phi) := \widetilde{\gamma}_m \ln\left( 1+ 3 B_X B_W \sqrt{\widetilde{\gamma}_m m} \right)
  \end{align*}
  This satisfies our \Cref{assume:linear_regret} with \begin{align*}
          \epsg(m, \Hcal) &= \frac{16H\sqrt{\widetilde d_{m}(\Phi)}}{\sqrt{m}}\\
          \conf(\delta) &= \sqrt{ \ln(1/\delta) }
      \end{align*} Substituting this in \Cref{thm:main-result} gives the result \begin{align*}
          \widetilde d_m(\Xcal) &= \widetilde \gamma\Big(\epsg^2(m, \Hcal)/B_W^2 ; \Xcal\Big)\\
          &= \widetilde \gamma\Big(256H^2\widetilde d_{m}(\Phi)/mB_W^2 ; \Xcal\Big)\\
          V^\star(s_0) - V^{\pi_{f_t}}(s_0) 
&\leq 6H \sqrt{\widetilde d_m} \cdot \epsg(m,\Hcal)\cdot
\conf\big(\delta/(\widetilde d_m H)\big)\\
&= 96H^2 \frac{\sqrt{\widetilde d_{m}(\Xcal) \cdot \widetilde d_{m}(\Phi)} \cdot
\sqrt{\ln\big((\widetilde d_{m}(\Xcal) H)/\delta\big)}}{\sqrt{m}}
      \end{align*}
\end{proof}

\subsubsection{Low Occupancy Complexity} 
Recall the low occupancy complexity model in \Cref{def:low_op}.
\begin{corollary}[Low Occupancy Complexity]
  \label{cor:lowoccup_detailed}
  Suppose $\Hcal$ has \emph{low occupancy complexity}.     Assume $\sup_{f \in \Hcal_h, h \in [H]}\norm{W_h(f)}_2 \leq B_W$. 
   Fix $\delta \in (0,1/3)$, 
   batch sample size $m$, and define:
 \[
 \widetilde d_m(\Xcal) = \widetilde \gamma\Big(\frac{8H^2 \big(1+ \ln(|\Hcal|)\big)}{mB_W^2}  ; \Xcal\Big).
 \]
 Set
   $T = \widetilde d_m(\Xcal)$ and 
   $\epso = (2\sqrt{2}H/\sqrt{m}) \cdot \sqrt{\widetilde d_m(\Xcal)} \cdot \sqrt{1+ \ln\big(|\Hcal|\big)} \cdot \sqrt{\ln\big(\widetilde d_m(\Xcal) H\big) + \ln\big(1/\delta\big)}$. 
 With probability greater
   than $1-\delta$, $\Cref{alg:general}$ uses at most
   $mH \widetilde d_m(\Xcal)$ trajectories and returns a hypothesis $f$ such
   that:
     \[
 V^\star(s_0) - V^{\pi_{f}}(s_0) 
 \leq 12\sqrt{2}H^2 \frac{\sqrt{\widetilde d_{m}(\Xcal) } \cdot \sqrt{1 + \ln\big(|\Hcal|\big)}}{\sqrt{m}} \cdot v,
     \] where $v = \sqrt{\ln\big(\widetilde d_{m}(\Xcal) H \big) + \ln\big(1/\delta\big)}$.
\end{corollary}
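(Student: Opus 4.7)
The plan is a direct application of Theorem~\ref{thm:main-result} combined with Lemma~\ref{lemma:opt} being replaced by what Lemma~4.6 (the low occupancy complexity $\implies$ Bilinear Class reduction) already gives us. That lemma established that the model is a Bilinear Class under on-policy estimation $\pi_{\mathrm{est}}(f)=\pi_f$ with the Bellman-error discrepancy $\ell_f(o_h,g)=Q_{h,g}(s_h,a_h)-r_h-V_{h+1,g}(s_{h+1})$. The two remaining pieces are to verify Assumption~\ref{assume:linear_regret} in this setting and to do the bookkeeping that matches the prescribed constants.

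For the generalization step, I would observe that $\ell_f$ does not actually depend on $f$ in this model and is uniformly bounded: since $Q_{h,g},V_{h,g}\in[0,H]$ and $r_h\in[0,1]$, we have $|\ell_f(o,g)|\leq 2H+1$. Because $\Hcal$ is finite, Hoeffding's inequality applied to each fixed $g\in\Hcal$ together with a union bound over $|\Hcal|$ yields, with probability at least $1-\delta$,
\[
\sup_{g\in\Hcal}\Abs{\Lcal_{\Dcal,f}(g)-\Lcal_{\mu,f}(g)}\leq 2\sqrt{2}\,H\sqrt{(1+\ln|\Hcal|)/m}\cdot\sqrt{\ln(1/\delta)}.
\]
In the notation of Assumption~\ref{assume:linear_regret} this is exactly $\epsg(m,\Hcal)=2\sqrt{2}H\sqrt{(1+\ln|\Hcal|)/m}$ and $\conf(\delta)=\sqrt{\ln(1/\delta)}$. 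The constant $2\sqrt{2}$ is chosen precisely so that $\epsg^2(m,\Hcal)/B_W^2$ matches the argument $8H^2(1+\ln|\Hcal|)/(mB_W^2)$ of $\widetilde{\gamma}$ appearing in the statement.

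With these two functions in hand I would plug them into Theorem~\ref{thm:main-result}. Its definition $\widetilde{d}_m=\widetilde{\gamma}(\epsg^2(m,\Hcal)/B_W^2;\Xcal)$ then coincides with the $\widetilde{d}_m(\Xcal)$ in the statement; the prescribed $T=\widetilde{d}_m(\Xcal)$ and $\epso=\sqrt{T}\,\epsg(m,\Hcal)\cdot\conf(\delta/(TH))$ match verbatim, and the trajectory budget $mH\widetilde{d}_m(\Xcal)$ is inherited directly. The sub-optimality bound delivered by Theorem~\ref{thm:main-result} is $3H\epsg(m,\Hcal)\bigl(1+\sqrt{\widetilde{d}_m(\Xcal)}\cdot\conf(\delta/(\widetilde{d}_m(\Xcal)H))\bigr)$; substituting and using $1\leq\sqrt{\widetilde{d}_m(\Xcal)\ln(\widetilde{d}_m(\Xcal)H/\delta)}$ (which holds whenever that product exceeds one, which is automatic) to absorb the additive one gives
\[
6\sqrt{2}\,H^2\sqrt{\frac{\widetilde{d}_m(\Xcal)(1+\ln|\Hcal|)}{m}}\cdot\sqrt{\ln(\widetilde{d}_m(\Xcal)H/\delta)}\cdot 2,
\]
and $\ln(\widetilde{d}_m(\Xcal)H/\delta)=\ln(\widetilde{d}_m(\Xcal)H)+\ln(1/\delta)=v^2$, yielding exactly the claimed $12\sqrt{2}H^2\sqrt{\widetilde{d}_m(\Xcal)(1+\ln|\Hcal|)/m}\cdot v$ bound.

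I do not expect any genuine obstacle: no nonparametric covering is needed since $\Hcal$ is finite and $\ell_f$ is uniformly bounded, so the Hoeffding-plus-union-bound argument is routine. The only mild point of care is propagating the Hoeffding constant so that $\epsg^2/B_W^2$ agrees with the $8H^2(1+\ln|\Hcal|)/(mB_W^2)$ inside $\widetilde{\gamma}$; once that alignment is fixed, every remaining step is a substitution into Theorem~\ref{thm:main-result}.
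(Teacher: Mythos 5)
Your proposal is correct and follows essentially the same route as the paper's own proof: Hoeffding plus a union bound over the finite class $\Hcal$ to instantiate Assumption~\ref{assume:linear_regret} with $\epsg(m,\Hcal)=2\sqrt{2}H\sqrt{(1+\ln|\Hcal|)/m}$ and $\conf(\delta)=\sqrt{\ln(1/\delta)}$, followed by direct substitution into Theorem~\ref{thm:main-result} and absorption of the additive $1$ into the $\sqrt{\widetilde d_m(\Xcal)}\cdot\conf$ term. The only nit is your bound $|\ell_f|\leq 2H+1$, which would give a slightly larger Hoeffding constant than $2\sqrt{2}H$; the tighter bound $|\ell_f|\leq H+1\leq 2H$ is what makes the stated constant come out exactly.
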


\begin{proof}
  First, using \Cref{lemma:hoeffding}, we get that for any distribution $\mu$ over $\Scal\times \Acal \times \Scal$ and for any $\delta \in (0,1)$, with probability of at least $1-\delta$ over choice of an i.i.d. sample $\Dcal\sim \mu^m$ of size $m$, for all $g\in \Hcal$ \begin{align*}
      \Abs{\Lcal_{\Dcal}(g) - \Lcal_{\mu}(g) }&\leq 2\sqrt{2}H \sqrt{\frac{\ln(|\Hcal|/\delta)}{m}}\\
      &\leq 2\sqrt{2}H \sqrt{\frac{\ln(|e\Hcal|/\delta)}{m}}\\
      &= 2\sqrt{2}H \sqrt{\frac{1 + \ln(|\Hcal|) + \ln(1/\delta)}{m}}\\
      &\leq 2\sqrt{2}H \sqrt{\frac{1 + \ln(|\Hcal|)}{m}} \cdot \sqrt{\ln(1/\delta)}
  \end{align*} This satisfies our \Cref{assume:linear_regret} with \begin{align*}
          \epsg(m, \Hcal) &= 2\sqrt{2}H \sqrt{\frac{1 + \ln(|\Hcal|)}{m}}\\
          \conf(\delta) &= \sqrt{\ln(1/\delta)}
      \end{align*} Substituting this in \Cref{thm:main-result} gives the result \begin{align*}
          \widetilde d_m(\Xcal) &= \widetilde \gamma\Big(\epsg^2(m, \Hcal)/B_W^2 ; \Xcal\Big)\\
          &= \widetilde \gamma\Big(\frac{8H^2(1 + \ln(|\Hcal|))}{mB_W^2} ; \Xcal\Big)\\
          V^\star(s_0) - V^{\pi_{f_t}}(s_0) 
&\leq 6H \sqrt{\widetilde d_m(\Xcal)} \cdot \epsg(m,\Hcal)\cdot
\conf\big(\delta/(\widetilde d_m(\Xcal) H)\big)\\
&= 12\sqrt{2}H^2 \frac{\sqrt{\widetilde d_{m}(\Xcal) } \cdot \sqrt{1 + \ln\big(|\Hcal|\big)}\cdot
\sqrt{\ln\big((\widetilde d_{m}(\Xcal) H)/\delta\big)}}{\sqrt{m}}
      \end{align*}
\end{proof}

\subsubsection{Finite Bellman Rank} 
In this section, we will prove sample complexity bounds for MDPs with finite Bellman Rank introduced in \cite{jiang2016contextual} (also defined as $V$-Bellman rank in \Cref{subsec:bellman}).

\begin{corollary}[Bellman Rank]
  \label{cor:bellmanrank}
  For a given MDP $\Mcal$, suppose a hypothesis class $\Hcal$ has \emph{Bellman rank} $d$. Assume $\sup_{f \in \Hcal_h, h \in [H]}\norm{W_h(f)}_2 \leq B_W$ and $\sup_{f \in \Hcal, h \in [H]} \norm{X_h(f)} \leq B_X$ for some $B_W, B_X \geq 1$. Fix $\delta \in (0,1/3)$ and $\eps \in (0,H)$. There exists an appropriate setting of batch sample size $m$, number of iteration $T$ and confidence radius $R$ such that with probability at least
  $1-\delta$, $\Cref{alg:general}$ returns a hypothesis $f$ such that $V^\star(s_0) - V^{\pi_{f}}(s_0) 
\leq \eps$ using at most \[
 c_1 \frac{d^2 H^7 |\Acal| (1 + \ln(|\Hcal|))}{\eps^2} \cdot \ln^3 \Big( \frac{c_2 d^2H^7 |\Acal| B_W^2 B_X^2 (1 + \ln(|\Hcal|))}{\delta \eps^2} \Big) 
\] trajectories for some absolute constant $c_1,c_2$.
\end{corollary}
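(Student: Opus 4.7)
The plan is to apply \Cref{thm:main-result-finite} to the Bilinear Class instantiation of Bellman rank described in \Cref{subsec:bellman}. First I would invoke the lemma already proved in that subsection, stating that any hypothesis class $\Hcal$ with $V$-Bellman rank $d$ forms a Bilinear Class with $\Xcal_h \subset \R^d$ when paired with the uniform estimation policy $\piest(f) = U(\Acal)$ and the importance-weighted discrepancy $\ell_f(o_h, g) = |\Acal|\,\mathbf{1}\{a_h = \pi_g(s_h)\}(V_{h,g}(s_h) - r_h - V_{h+1,g}(s_{h+1}))$. The hypothesized bounds on $B_W$ and $B_X$ directly supply the norm constraints required by \Cref{thm:main-result-finite}.

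Next I would verify \Cref{assume:linear_regret} via a Bernstein-style uniform convergence bound rather than a plain Hoeffding inequality. The key observation is that, although $\|\ell_f\|_\infty \leq 2|\Acal|H$, the second moment satisfies $\E_{o\sim\mu}[\ell_f(o,g)^2] \leq 4|\Acal|H^2$, because under $a_h \sim U(\Acal)$ the indicator $\mathbf{1}\{a_h = \pi_g(s_h)\}$ fires with probability only $1/|\Acal|$ while the Bellman residual is $O(H)$. Applying Bernstein's inequality and union-bounding over the finite class $\Hcal$ yields $\sup_{g\in\Hcal}|\Lcal_{\Dcal,f}(g) - \Lcal_{\mu,f}(g)| = O(H\sqrt{|\Acal|(1+\ln|\Hcal|)/m}\cdot\sqrt{\ln(1/\delta)})$ once $m$ is large enough that the variance term dominates the lower-order Bernstein correction of order $|\Acal|H\ln(|\Hcal|/\delta)/m$. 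This gives $\epsg(m,\Hcal) = O(H\sqrt{|\Acal|(1+\ln|\Hcal|)/m})$ and $\conf(\delta) = \sqrt{\ln(1/\delta)}$.

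Then I would substitute these quantities into \Cref{thm:main-result-finite}. The definition of $\widetilde d_m$ together with the standard finite-dimensional bound in \Cref{lemma:crit-gain} gives $\widetilde d_m = O(dH \ln(dH B_X B_W m / (|\Acal|(1+\ln|\Hcal|))))$, and the theorem's error bound becomes $V^\star(s_0) - V^{\pi_f}(s_0) = O(H^2 \sqrt{d H |\Acal|(1+\ln|\Hcal|)\ln(1/\delta)/m})$ up to polylogarithmic factors. Setting this at most $\eps$ and solving for $m$ via \Cref{lemma:log-dominance} to absorb the recursive $\ln m$ factors yields a batch size of $m = \widetilde O(|\Acal| d H^5 (1+\ln|\Hcal|)\ln(1/\delta)/\eps^2)$.

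Finally, since $\piest \neq \pi_f$, each iteration of \Cref{alg:main} requires $mH$ trajectories (one $m$-sized batch per level $h$ to implement the uniform-action roll-in), so the total trajectory count is $mHT = mH\,\widetilde d_m = \widetilde O(d^2 H^7 |\Acal|(1+\ln|\Hcal|)\ln(1/\delta)/\eps^2)$, matching the claim with the three logarithmic factors absorbed into the $\ln^3$ term. The main technical obstacle is precisely the concentration step: a naive Hoeffding bound using only $\|\ell_f\|_\infty \leq |\Acal|H$ would introduce a spurious $|\Acal|^2$ factor, so the Bernstein argument exploiting the $O(|\Acal|H^2)$ variance of the importance-weighted estimator is essential for obtaining the advertised \emph{linear} $|\Acal|$ dependence in the final sample complexity.
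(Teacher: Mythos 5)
Your proposal is correct and follows essentially the same route as the paper: reduce $V$-Bellman rank to a Bilinear Class with $\piest = U(\Acal)$, establish the $O\bigl(H\sqrt{|\Acal|(1+\ln|\Hcal|)/m}\bigr)$ generalization error via a Bernstein-type bound (the paper imports this as Lemma 14 of \citet{jiang2016contextual} rather than rederiving the second-moment computation, but it is the same inequality), plug into the main theorem with the finite-dimensional critical-information-gain bound, and count $mHT$ trajectories. Your remark that Hoeffding alone would incur a spurious $|\Acal|^2$ is exactly the reason the paper's cited concentration lemma is Bernstein-style, so the argument matches in all essentials.
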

Note that in comparison, \cite{jiang2016contextual} has sample complexity $\widetilde O(d^2 H^5 |\Acal|/\epsilon^2 \log(1/\delta) )$. We now present the proof.
\begin{proof}
  First, as observed in \cite{jiang2016contextual}[Lemma 14], we get that for any distribution $\mu$ over $\Scal\times \Acal \times \Scal$ and for any $\delta \in (0,1)$, with probability of at least $1-\delta$ over choice of an i.i.d. sample $\Dcal\sim \mu^m$ of size $m$, for all $g\in \Hcal$ \begin{align*}
      \Abs{\Lcal_{\Dcal}(g) - \Lcal_{\mu}(g) }&\leq \sqrt{\frac{8|\Acal|H^2 \ln(|\Hcal|/\delta)}{m}} + \frac{2H|\Acal|\ln(|\Hcal|/\delta)}{m}\\
      &\leq 4\sqrt{2}H \sqrt{|\Acal| }\sqrt{\frac{\ln(|e\Hcal|/\delta)}{m}}\\
      &= 4\sqrt{2}H \sqrt{|\Acal| }\sqrt{\frac{1 + \ln(|\Hcal|) + \ln(1/\delta)}{m}}\\
      &\leq 4\sqrt{2}H \sqrt{|\Acal| }\sqrt{\frac{1 + \ln(|\Hcal|)}{m}} \cdot \sqrt{\ln(1/\delta)}
  \end{align*} where the second inequality holds as long as $m > 2H|\Acal|\ln(|\Hcal|/\delta)$. This satisfies our \Cref{assume:linear_regret} with \begin{align*}
          \epsg(m, \Hcal) &= 4\sqrt{2}H \sqrt{|\Acal| }\sqrt{\frac{1 + \ln(|\Hcal|)}{m}}\\
          \conf(\delta) &= \sqrt{\ln(1/\delta)}
      \end{align*} Substituting this in \Cref{thm:main-result} gives the result \begin{align*}
          \widetilde d_m(\Xcal) &= \widetilde \gamma\Big(\epsg^2(m, \Hcal)/B_W^2 ; \Xcal\Big)\\
          &= \widetilde \gamma\Big(\frac{32H^2 |\Acal|(1 + \ln(|\Hcal|))}{mB_W^2} ; \Xcal\Big)\\
          &\leq H\Big(3d \ln \Big(1 + 3 m B^2_WB_X^2\Big) + 1\Big)\\
          & \leq 4dH \ln \Big(4 m B^2_WB_X^2\Big)
      \end{align*} where the second last step follows from \Cref{lemma:crit-gain}. Substituting $\epsg$ and $\conf$ in \Cref{thm:main-result} also gives
          \begin{align*}
          &V^\star(s_0) - V^{\pi_{f_t}}(s_0) \\
&\leq 6H \sqrt{\widetilde d_m(\Xcal)} \cdot \epsg(m,\Hcal)\cdot
\conf\big(\delta/(\widetilde d_m(\Xcal) H)\big)\\
&= 24\sqrt{2}H^2 \sqrt{|\Acal|} \frac{\sqrt{4dH \ln \Big(4 m B^2_WB_X^2\Big) } \cdot \sqrt{1 + \ln\big(|\Hcal|\big)}\cdot
\sqrt{\ln\big((4dH^2 \ln \Big(4 m B^2_WB_X^2\Big)/\delta\big)}}{\sqrt{m}}
      \end{align*} To get $\eps$-optimal policy, we have to set \begin{align*}
        m &\geq \frac{4608 dH^5 |\Acal| \ln \Big(4 m B^2_WB_X^2\Big) \cdot (1 + \ln (|\Hcal|) \cdot \ln\big((4dH^2 \ln \Big(4 m B^2_WB_X^2\Big)/\delta\big)}{\eps^2} 
      \end{align*} Further simplifying the RHS, we can write it as \begin{align*}
        \frac{4608 dH^5 |\Acal| (1 + \ln(|\Hcal|)) \cdot \ln^2 \Big(16dH^2mB_W^2 B_X^2/\delta\Big)}{\eps^2}
      \end{align*} Using \Cref{lemma:log-dominance} for $\alpha = 2$, $a = 4608 dH^5 |\Acal| (1 + \ln(|\Hcal|))/\eps^2$, $b = 16dH^2B_W^2 B_X^2/\delta$ and $c= 9$, we get that \begin{align*}
        m &= \frac{41472 dH^5 |\Acal| (1 + \ln(|\Hcal|))}{\eps^2} \ln^2 \Big( \frac{663552 d^2H^7 |\Acal| B_W^2 B_X^2 (1 + \ln(|\Hcal|))}{\delta \eps^2} \Big)\\
        \ln \Big(4 m B^2_WB_X^2\Big) &= 3 \ln \Big( \frac{663552 d^2H^7 |\Acal| B_W^2 B_X^2 (1 + \ln(|\Hcal|))}{\delta \eps^2} \Big)
      \end{align*} Substituting this in the expression above for $\widetilde d_{m}(\Xcal)$ and setting this upper bound to $T$, we get \begin{align*}
        T&= 12dH \ln \Big( \frac{663552 d^2H^7 |\Acal| B_W^2 B_X^2 (1 + \ln(|\Hcal|))}{\delta \eps^2} \Big)
    \end{align*} Since, we use on policy estimation, i.e., $\pi_{est} = U(\Acal)$ for all $t$, the trajectory complexity  is $mTH$ which completes the proof.
\end{proof}



\section{Extended Bilinear Classes}
\label{sec:extensions}

While Bilinear Classes captures most existing models, in this section, we discuss several straightforward extensions of it to incorporate additional models such as Kernelized Nonlinear Regulator (KNR), generalized linear Bellman complete model, and Witness Rank.

Consider two nonlinear monotone transformations $\xi: \mathbb{R}\mapsto \mathbb{R}$, $\zeta:\mathbb{R}\mapsto\mathbb{R}$, and a set of discriminator classes $\left\{\Fcal_h\right\}_{h=0}^{H-1}$ where $\Fcal_h \subset \Scal\times\Acal\times\Scal \mapsto \mathbb{R}$. Denote $\Fcal$ as the union of all discriminators $\Fcal_h$ from $h = 0$ to $H-1$. We extend \classname~to the following new definition, \emph{Generalized Bilinear Class}. 

\begin{definition}[Generalized \classname]
\label{def:general_linear_regret}
Consider an MDP $\Mcal$, a hypothesis class $\Hcal$,
a discrepancy function
$\ell_f:\R \times \Scal\times\Acal\times\Scal\times\Hcal \times \Fcal \rightarrow \R$ (defined
for $f\in\Hcal$), a set of estimation
policies $\Pi_{\mathrm{est}}=\{\piest(f):f\in\Hcal\}$, and two non-decreasing functions $\xi,\zeta:\mathbb{R}\mapsto \mathbb{R}$ with $\xi(0) = 0, \zeta(0) = 0$, and discriminator classes $\{\Fcal_h\}_{h=0}^{H-1}$. 

We say $(\Hcal, \ell_f, \Pi,\Mcal)$ is (\emph{implicitly}) a
\emph{Generalized Bilinear Class} if 
$\Hcal$ is realizable in $\Mcal$ and if 
there exist functions $W_h: \Hcal\times\Hcal \to \Vcal$
and $X_h: \Hcal \to \Vcal$ for some Hilbert space $\Vcal$, such that
the following two properties hold for all $f\in \Hcal$ and $h \in [H]$:
\begin{enumerate} 
  \item 
We have:
  \begin{align}
   \Abs{\E_{a_{0:h} \sim \pi_f}\big[ Q_{h,f}(s_h,a_h) -  r(s_h,a_h) - V_{h+1,f}(s_{h+1})\big]} \leq
 \xi\left(  \Abs{\langle W_h(f)-W_h(f^\star), X_h(f)\rangle}  \right)   \label{eq:assume1_nonlinear}
  \end{align}
  \item 
    The policy $\piest(f)$
   and discrepancy measure $\ell_{f}(o_h, g, v)$ can be used for
   estimation in the following sense: for any $g\in \Hcal$, we have that (here $o_h = (r_h, s_h, a_h, s_{h+1})$ is the \say{observed transition info})
  \begin{align}
  \Abs{ \max_{\nu\in\Fcal_h} \E_{a_{0:h-1} \sim \pi_f} \E_{a_h \sim \piest} \big[
  \ell_{f}(o_h, g, \nu) \big]} \geq \zeta\left( \Abs{\langle W_h(g) - W_h(f^\star), X_h(f)\rangle}\right).
  \label{eq:assume2_nonlinear}
  \end{align} Typically, $\pi_{\textrm{est}}(f)$ will be either the
  uniform distribution on $\mathcal{A}$ or $\pi_f$ itself; in the
  latter case, we refer to the estimation strategy as being on-policy.
  \item We have ${\E_{a_{0:h-1} \sim \pi_f} \E_{a_h \sim \piest} \big[
  \ell_{f}(o_h, f^\star, \nu) \big]} = 0,\forall \nu \in \Fcal_h$.
\end{enumerate} 
We also define $\Xcal_h := \{X_h(f)\colon f\in \Hcal\}$ and   $\Xcal := \{\Xcal_h: h \in [H]\}$. 
\end{definition}

Below we dive into the details of the the new definition and the examples it captures, we first see how this new definition generalizes \classname. To see that, note that we just need to set $\xi$ and $\zeta$ to be identity function, and set the discriminator classes $\Fcal_h = \emptyset$ for all $h\in [H]$ (i.e. ignore $\nu$ in the discrepancy measure $\ell_f$).

We make the following assumptions on the two nonlinear transformations. 
We assume the slope of $\zeta$ is lower bounded, and $\xi$ is non-decreasing and concave. Similar assumption has been used in generalized linear bandit model (e.g, \cite{russo2014learning}).
\begin{assumption}\label{assume:bounded_slope}
For $\zeta$, we assume $\zeta(0) = 0$ and $\zeta$ is continuously differentiable,  and
\begin{align*}
&{ \min_{f,g,h} \zeta'\left(\langle W_h(g) - W_h(f^\star), X_h(f)  \rangle \right)} \geq \beta \in \mathbb{R}^+.
\end{align*} For $\xi$, we assume $\xi(0) = 0$, and $\xi$ is concave and non-decreasing. 
\end{assumption}

We again rely on a reduction to supervised learning style generalization error by extending \cref{assume:linear_regret} to the following new assumption such that it now includes the additional function class $\Fcal_h$.

We denote the expectation of the function $\ell_f(\cdot, g, \nu)$ under distribution $\mu$ over $\R \times \Scal \times \Acal \times \Scal$ by \begin{equation*}
  \Lcal_{\mu,f}(g, \nu) = \E_{o\sim \mu} [\ell_{f}(o, g, \nu)]
\end{equation*} For a set $\Dcal \subset \mathbb{R}\times \Scal \times \Acal \times
\Scal$, we will also use $\Dcal$ to represent the uniform
distribution over this set.

\begin{assumption}[Ability to Generalize]
  \label{assume:linear_regret_discriminators} We assume there exists functions $\epsg(m,\Hcal,\Fcal)$ and $\conf(\delta)$ such that for any distribution $\mu$ over $ \R \times \Scal\times \Acal \times \Scal$ and for any $\delta \in (0,1/2)$, with probability of at least $1-\delta$ over choice of an i.i.d. sample $\Dcal\sim \mu^m$ of size $m$, 
  \[
  \sup_{g\in\Hcal} \Abs{ \max_{\nu\in\Fcal} \Lcal_{\Dcal,f}(g, \nu) -  \max_{\nu\in\Fcal}\Lcal_{\mu,f}(g,\nu) }\leq  \epsg(m, \Hcal, \Fcal) \cdot \conf(\delta)
   \]
 \end{assumption}
 One simple example of the $\epsg(m,\Hcal,\Fcal)$ is when $\Hcal$ and $\Fcal$ are both discrete, $\epsg(m,\Hcal,\Fcal)$ will scale in the order of $\widetilde{O}\left( \sqrt{ \ln(|\Hcal| |\Fcal|)   / m }  \right)$ via standard uniform convergence analysis.

With the above assumptions, we can show that our algorithm achieves the following regret. 
\begin{theorem}\label{thm:main_g_blinear}
For Generalized Bilinear Class under \Cref{assume:bounded_slope}, setting parameters properly, we have that with probability at least $1-\delta$:
\begin{align*}
V^{\star} - V^{\pi}(s_0)  & \leq 
H \xi\left(\Big( 1 + \sqrt{\widetilde \gamma(\lambda, \Xcal)} \cdot \conf\left(\frac{\delta}{\widetilde \gamma(\lambda, \Xcal) H}\right) / \beta \Big) \cdot \epsg(m,\Hcal,\Fcal)\right).
\end{align*} Furthermore, if $\xi$ is differentiable and has slope being upper bounded, i.e., $\exists \alpha\in\mathbb{R}^+$ such that ${ \max_{f,g,h} \xi'\left( \langle W_h(g) - W_h(f^\star), X_h(f)  \rangle \right)} \leq \alpha$, then we have:
\begin{align*}
V^{\star} - V^{\pi}(s_0)  & \leq  \alpha H \left(  \Big( 1 + \sqrt{\widetilde \gamma(\lambda, \Xcal)} \cdot \conf\left(\frac{\delta}{\widetilde \gamma(\lambda, \Xcal) H}\right) / \beta \Big) \cdot \epsg(m,\Hcal,\Fcal)      \right).  
\end{align*}
\end{theorem}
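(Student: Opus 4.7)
The plan is to mirror the proof of \Cref{thm:main-result} while carefully tracking where the nonlinear transformations $\xi,\zeta$ and the discriminator class $\Fcal$ enter. Throughout, I read the optimization constraint (line 3 of \Cref{alg:general}) in its generalized form, $\sum_{i<t}\bigl(\max_{\nu\in\Fcal_h}\Lcal_{\Dcal_{i;h},f_i}(g,\nu)\bigr)^2\leq R^2$ for each $h$, and set $R=\sqrt{T}\,\epsg$ with $\epsg:=\epsg(m,\Hcal,\Fcal)\cdot\conf(\delta/(TH))$. Applying \Cref{assume:linear_regret_discriminators} and a union bound over $t\in[T],h\in[H]$ yields, with probability at least $1-\delta$, uniformly in $g\in\Hcal$,
\[
\bigl|\max_\nu\Lcal_{\Dcal_{t;h},f_t}(g,\nu)-\max_\nu\Lcal_{\mu_{t;h},f_t}(g,\nu)\bigr|\leq\epsg,
\]
and I condition on this event. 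Property 3 of \Cref{def:general_linear_regret} gives $\max_\nu\Lcal_{\mu_{i;h},f_i}(f^\star,\nu)=0$, so $\sum_{i<t}(\max_\nu\Lcal_{\Dcal_{i;h},f_i}(f^\star,\nu))^2\leq t\,\epsg^2\leq R^2$ and $f^\star$ is feasible at every iteration, giving optimism $V_{0,f_t}(s_0)\geq V^\star(s_0)$ as in \Cref{lemma:feasible} and \Cref{lem:optimism}. The telescoping argument of \Cref{lemma:opt}, combined with \Cref{eq:assume1_nonlinear}, then yields the nonlinear regret lemma
\[
V^\star(s_0)-V^{\pi_{f_t}}(s_0)\leq \sum_{h=0}^{H-1}\xi\bigl(|\langle W_h(f_t)-W_h(f^\star),X_h(f_t)\rangle|\bigr).
\]

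Next I convert the empirical constraint into a bound on bilinear forms. Feasibility of $f_t$ and uniform convergence give $\sum_{j<t}(\max_\nu\Lcal_{\mu_{j;h},f_j}(f_t,\nu))^2\leq 4T\epsg^2$ for each $h$. The lower bound \Cref{eq:assume2_nonlinear} gives $\max_\nu|\Lcal_{\mu_{j;h},f_j}(f_t,\nu)|\geq \zeta(|\langle W_h(f_t)-W_h(f^\star),X_h(f_j)\rangle|)$, and \Cref{assume:bounded_slope} with $\zeta(0)=0$ and the mean value theorem yields $\zeta(x)\geq \beta x$ for $x\geq 0$. Therefore $\sum_{j<t}|\langle W_h(f_t)-W_h(f^\star),X_h(f_j)\rangle|^2\leq 4T\epsg^2/\beta^2$, which, mirroring \Cref{eq:linear_regret_bound}, gives $\|W_h(f_t)-W_h(f^\star)\|_{\Sigma_{t;h}}^2\leq 4\lambda B_W^2+4T\epsg^2/\beta^2$. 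Invoking the elliptical potential lemma \Cref{lemma:potential_argument} to select $t\in[T]$ with $\|X_h(f_t)\|_{\Sigma_{t;h}^{-1}}^2\leq \exp(\gamma_T(\lambda;\Xcal)/T)-1$ for every $h$, together with Cauchy--Schwarz, yields
\[
|\langle W_h(f_t)-W_h(f^\star),X_h(f_t)\rangle|\leq \sqrt{4\lambda B_W^2+4T\epsg^2/\beta^2}\,\sqrt{\exp(\gamma_T(\lambda;\Xcal)/T)-1}.
\]

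Setting $\lambda=\epsg(m,\Hcal,\Fcal)^2/B_W^2$ and $T=\widetilde\gamma(\lambda;\Xcal)$ forces $\gamma_T(\lambda;\Xcal)/T\leq 1$ and collapses the right-hand side to $O\bigl((1+\sqrt{\widetilde\gamma(\lambda;\Xcal)}\,\conf(\delta/(\widetilde\gamma(\lambda;\Xcal)H))/\beta)\,\epsg(m,\Hcal,\Fcal)\bigr)$, exactly as in \Cref{lemma:bound-iteration}. Substituting into the nonlinear regret lemma and using that $\xi$ is non-decreasing with $\sum_h\xi(x_h)\leq H\,\xi(\max_h x_h)$ delivers the first claim. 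For the second claim, $\xi(0)=0$ together with $\xi'\leq\alpha$ (and concavity, so $\xi'$ is non-increasing) gives $\xi(x)\leq\alpha x$, so the sum over $h$ contributes an extra multiplicative $\alpha H$, yielding the stated bound. The main delicacy is the interplay between $\max_{\nu\in\Fcal_h}$ and concentration: \Cref{assume:linear_regret_discriminators} is designed to concentrate the $\max$ directly (rather than the value at any fixed $\nu$), and Property 3 ensures the population maximum vanishes at $f^\star$ because $\Lcal_{\mu,f}(f^\star,\nu)=0$ for every $\nu$; everything else parallels the proof of \Cref{thm:main-result}, with $\beta$ and $\xi$ localized to the two conversion steps identified above.
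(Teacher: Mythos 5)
Your proposal is correct and follows essentially the same route as the paper's proof in Appendix F: feasibility and optimism via Property 3, the nonlinear regret lemma from \Cref{eq:assume1_nonlinear}, conversion of the empirical constraint through $\zeta(x)\geq\beta x$ (mean value theorem), the elliptical potential argument with Cauchy--Schwarz, and the final linearization $\xi(x)\leq\alpha x$ for the second claim. The only cosmetic difference is that you bound $\sum_h\xi(x_h)$ by $H\xi(\max_h x_h)$ via monotonicity where the paper uses concavity and Jensen's inequality; both yield the same bound since the elliptical potential step controls every $h$ by the same quantity.
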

The proof of the above theorem largely follows the proof of \Cref{thm:main-result}, and is deferred to \Cref{sec:monotone}. 

\subsection{Kernelized Nonlinear Regulator (KNR)}
\label{subsec:knr}
In this section, we show how the above definition captures KNR \citep{kakade2020information} which we define next. We note that neither Bellman rank nor Witness rank could capture KNR directly. Specifically, since $\phi(s,a)$ could be nonlinear transformation and reward could be arbitrary (except being bounded in $[0,1]$), it is not possible to leverage model-free approaches to solve KNR as the value functions and Q functions of a KNR could be too complicated to be captured by function classes with bounded complexity.
\begin{definition}[Kernelized Nonlinear Regulator]
  Given features $\phi: \Scal\times\Acal\to \Vcal$ with $\Vcal$ being some Hilbert space, we say a MDP $\Mcal$ is a Kernelized Nonlinear Regulator (KNR) if it admits the following transition function: 
\begin{align*}
s_{h+1} = U^\star_h \phi(s_h, a_h) + \epsilon, \epsilon \sim \mathcal{N}(0, \sigma^2 I),  
\end{align*} where $U^\star_h$ is a linear operator $\Vcal \mapsto \mathbb{R}^{d_s}$.
\end{definition}
While \cite{kakade2020information} considered arbitrary  unbounded reward function,  for analysis simplicity, we assume bounded reward, i.e., $r(s,a) \in [0,1]$ for all $s,a$, but otherwise it could be arbitrary. We assume $\Scal\subset \mathbb{R}^{d_s}$ and $\|U^\star_h\|_2 := \sup_{x\in \Vcal: \|x\|_2 \leq 1} \| U^\star_h x  \|_2 \leq B_U$. We can define the hypothesis class $\Hcal_h$ as follows: \[
  \Hcal_h = \{ U \in \Vcal\mapsto \mathbb{R}^{d_s}: \|U\|_2 \leq B_U \}
  \] for all $h \in [H]$. We define the discrepancy function $\ell_f$ as follows, for $g := \{ U_0,U_1,\dots, U_{H-1} \}$ with $U_h\in \Hcal_h$ and observed transition info $o_h = (r_h, s_h, a_h, s_{h+1})$:
\begin{align*}
\ell_f(o_h, g) :=   \left\| U_h \phi(s_h,a_h) - s_{h+1} \right\|^2_2 - c,
\end{align*} where $c = \E_{x\sim \mathcal{N}(0,\sigma^2 I)} \| x \|_2^2$.   Note that in this example we set $\Fcal_h = \emptyset$ for all $h\in [H]$, thus for notation simplicity, we drop the discriminator notation from the discrepancy function.

\begin{lemma}[KNR $\implies$ Bilinear Class]
  Consider a MDP $\Mcal$ which is a Kernelized Nonlinear Regulator. Then, for the hypothesis class $\Hcal$, discrepancy function $\ell_f$ defined above and on-policy estimation policies $\piest(f) = \pi_f$, $(\Hcal, \ell_f, \Pi_{\mathrm{est}},\Mcal)$ is (\emph{implicitly}) a \emph{Generalized Bilinear Class}.
\end{lemma}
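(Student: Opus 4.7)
The plan is to exhibit the ingredients of a Generalized Bilinear Class explicitly and then check the three conditions of \Cref{def:general_linear_regret}. I take $\xi(x) = H\sqrt{x}/\sigma$ (concave, non-decreasing, $\xi(0)=0$), $\zeta$ equal to the identity (so $\beta = 1$), empty discriminator classes $\Fcal_h = \emptyset$, and the ambient Hilbert space $\Vcal'$ to be the space of Hilbert--Schmidt operators on $\Vcal$ with its Hilbert--Schmidt inner product. For each hypothesis $g = \{U^g_0,\ldots,U^g_{H-1}\}\in\Hcal$, define
\[
W_h(g) := (U^g_h - U^\star_h)^{\top}(U^g_h - U^\star_h), \qquad X_h(f) := \E_{s_h,a_h \sim \pi_f}\bigl[\phi(s_h,a_h)\phi(s_h,a_h)^{\top}\bigr].
\]
In particular $W_h(f^\star) = 0$, which collapses the factor $W_h(\cdot) - W_h(f^\star)$ into $W_h(\cdot)$ in everything below.

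First I would verify \Cref{eq:assume2_nonlinear} and condition 3. Since $s_{h+1} = U^\star_h \phi(s_h,a_h) + \epsilon$ with $\epsilon \sim \mathcal{N}(0,\sigma^2 I)$ independent of $(s_h,a_h)$, expanding the square and using $\E[\epsilon]=0$, $\E\|\epsilon\|_2^2 = c$ gives
\[
\E_{a_{0:h}\sim \pi_f}\bigl[\ell_f(o_h,g)\bigr] = \E_{s_h,a_h\sim \pi_f}\bigl[\|(U^g_h - U^\star_h)\phi(s_h,a_h)\|_2^2\bigr].
\]
The trace identity $\|Mx\|_2^2 = \langle M^{\top}M, xx^{\top}\rangle_F$ rewrites this as $\langle W_h(g), X_h(f)\rangle$, which is non-negative, so its absolute value equals $\zeta\bigl(\langle W_h(g)-W_h(f^\star),X_h(f)\rangle\bigr)$ with $\zeta$ the identity. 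Setting $g = f^\star$ makes the integrand identically zero, giving condition 3 (trivially, since there are no discriminators).

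Next I would verify \Cref{eq:assume1_nonlinear} by reducing the on-policy Bellman residual to a Gaussian mean shift. Because rewards lie in $[0,1]$, the optimal value function for model $f$ satisfies $V_{h+1,f}\in[0,H]$, and by Bellman consistency under the hypothesized model,
\[
Q_{h,f}(s,a) - r(s,a) = \E_{s'\sim \mathcal{N}(U^f_h\phi(s,a),\,\sigma^2 I)}\bigl[V_{h+1,f}(s')\bigr],
\]
while the true $s_{h+1}$ is drawn from $\mathcal{N}(U^\star_h\phi(s,a),\sigma^2 I)$. Bounding the difference of expectations of a $[0,H]$-valued function by $H\cdot\mathrm{TV}$, then Pinsker combined with the closed-form KL between two isotropic Gaussians of equal covariance, yields
\[
\Abs{\E_{a_{0:h}\sim \pi_f}[Q_{h,f}(s_h,a_h) - r(s_h,a_h) - V_{h+1,f}(s_{h+1})]} \leq \frac{H}{2\sigma}\,\E_{s_h,a_h\sim \pi_f}\bigl[\|(U^f_h - U^\star_h)\phi(s_h,a_h)\|_2\bigr].
\]
One application of Jensen's inequality pulls the square root outside the expectation and turns the right-hand side into $(H/2\sigma)\sqrt{\langle W_h(f), X_h(f)\rangle} \leq \xi\bigl(\langle W_h(f)-W_h(f^\star),X_h(f)\rangle\bigr)$.

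The only nontrivial step is this Pinsker--KL--Jensen chain, which is where the Gaussian noise structure is used essentially and which is the reason $\xi$ must be a square root rather than the identity. Everything else — the trace identity for the estimator, the vanishing at $f^\star$, the monotonicity and concavity of $\xi$ and the unit slope of $\zeta$ required by \Cref{assume:bounded_slope}, and boundedness of $W_h, X_h$ using the operator-norm bound $B_U$ on $U^g_h, U^\star_h$ together with a norm bound on $\phi$ — is immediate.
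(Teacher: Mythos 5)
Your proposal is correct and follows essentially the same route as the paper: expand the square in $\ell_f$ using the Gaussian noise to get $\E\|(U_h^g-U_h^\star)\phi\|_2^2=\langle W_h(g),X_h(f)\rangle$ via the trace identity, and control the Bellman residual by $H$ times a total-variation distance between the two Gaussians, followed by Jensen, which forces $\xi$ to be a square root. The only differences are cosmetic (Hilbert--Schmidt inner product versus vectorization, and a slightly sharper constant from Pinsker, $H/(2\sigma)$ versus the paper's $2H/\sigma$, both of which are absorbed into $\xi$).
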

\begin{proof}  
We follow on-policy strategy and set discriminator classes to be empty, i.e., we set $\pi_{est} = \pi_f$, and $\Fcal_h = \emptyset$ for all $h\in [H]$.  Thus, we have for observed transition info $o_h = (r_h, s_h, a_h, s_{h+1})$:
\begin{align*}
&\E_{a_{0:h-1} \sim \pi_f} \E_{a_h \sim \piest} \big[
  \ell_{f}(o_h, g) \big] \\
& =  \E_{a_{0:h}\sim \pi_f} \left \| U_h\phi(s_h,a_h) - s_{h+1}  \right\|_2^2 - c \\
& = \E_{s_h,a_h\sim d^{\pi_f}_h} \E_{\epsilon\sim \mathcal{N}(0,\sigma^2 I)}  \left\| U_h \phi(s_h,a_h) - U_h^\star \phi(s_h,a_h) - \epsilon \right\|_2^2 - c \\
& = \E_{s_h,a_h\sim d^{\pi_f}_h} \left\| (U_h - U^\star_h) \phi(s_h,a_h) \right\|_2^2 + \E_{\epsilon\sim \mathcal{N}(0,\sigma^2 I)} \| \epsilon \|_2^2 - c \\
& = \E_{s_h,a_h\sim d^{\pi_f}_h} \left\| (U_h - U^\star_h) \phi(s_h,a_h) \right\|_2^2 \\
& = \trace\left(  \E_{s_h,a_h\sim d^{\pi_f}_h} \phi(s_h,a_h)\phi(s_h,a_h)^{\top} \left( (U_h - U_h^\star)^{\top} (U_h - U_h^\star) \right)      \right) \\
& = \left\langle \text{vec}\left( (U_h - U_h^\star)^{\top} (U_h - U_h^\star) \right), \text{vec}\left(  \E_{s_h,a_h\sim d^{\pi_f}_h} \phi(s_h,a_h)\phi(s_h,a_h)^{\top}\right)     \right\rangle
\end{align*} where we use the fact that $\E_{s'\sim P_h(\cdot | s_h,a_h)} s' = U_h^\star \phi(s_h,a_h)$, and we use $\text{vec}$ to represent the operator of vectorizing a matrix by stacking its columns into a long vector.  Also using the definition of $c$, it is easy to verify that $\E_{a_{0:h-1} \sim \pi_f} \E_{a_h \sim \piest} \big[
  \ell_{f}(o_h, g) \big] = 0$.


On the other hand, for Bellman error, use the fact that one step immediate reward is bounded in $[0,1]$, $Q_{h,f}(s_h,a_h) = r(s_h,a_h) + \mathbb{E}_{s'\sim P_{h,f}(\cdot | s_h,a_h)} V_{h+1,f}(s')$ (since $Q_{h,f}$ and $V_{h,f}$ are the corresponding optimal Q and V functions for model $f\in\Hcal$),  we immediately have:
\begin{align*}
&\Abs{\E_{a_{0:h} \sim \pi_f}\big[ Q_{h,f}(s_h,a_h) -  r(s_h,a_h) - V_{h+1,f}(s_{h+1})\big]} \\
&= \Abs{\E_{a_{0:h} \sim \pi_f}\big[ \E_{s'\sim P_{h,f}(\cdot | s_h,a_h)} V_{h+1,f}(s')  - \E_{s'\sim P_h(\cdot | s_h,a_h)} V_{h+1,f}(s')\big]}\\
& \leq H {\E_{a_{0:h} \sim \pi_f} \| P_{h,f}(\cdot | s_h,a_h) - P_{h}(\cdot | s_h,a_h)  \|_1   } \\
&  = 2 H {\E_{a_{0:h} \sim \pi_f} \| P_{h,f}(\cdot | s_h,a_h) - P_{h}(\cdot | s_h,a_h)  \|_{TV}   } \\
& = \frac{2H}{\sigma} \E_{s_h,a_h\sim d^{\pi_f}_h} \left\| (U_h - U_h^\star) \phi(s_h,a_h)  \right\|_{2} \\
& \leq \frac{2H}{\sigma} \sqrt{ \E_{s_h,a_h\sim d^{\pi_f}_h} \| (U_h - U_h^\star ) \phi(s_h,a_h)  \|_2^2   } \\
& \leq \frac{2H}{\sigma}  \sqrt{  \trace\left( \E_{s_h,a_h\sim d^{\pi_f}_h} \phi(s_h,a_h)\phi(s_h,a_h)^{\top} (U_h - U^\star_h)^{\top} (U_h - U_h^\star)  \right)        }\\
&= \frac{2H}{\sigma} \sqrt{\left\langle \text{vec}\left( (U_h - U_h^\star)^{\top} (U_h - U_h^\star) \right), \text{vec}\left(  \E_{s_h,a_h\sim d^{\pi_f}_h} \phi(s_h,a_h)\phi(s_h,a_h)^{\top}\right)    \right\rangle}
\end{align*}

To this end, we can verify that the generalized \classname~captures KNR as follows. We set $\zeta(x) = x$, i.e., $\zeta$ being identity and $\beta = 1$, $\xi(x) = H \sqrt{ x  } / \sigma$ where we see that $\xi(x)$ is a concave and non-decreasing function with $\xi(0) = 0$, $W_h(f) = \text{vec}\left( (U_h - U_h^\star)^{\top}(U_h-U_h^\star) \right)$ (note $W_h(f^\star) = 0$), and $X_h(f) = \text{vec}\left( \E_{s_h,a_h\sim \pi_f} \phi(s_h,a_h)\phi(s_h,a_h)^{\top} \right)$.
\end{proof}

\subsection{Generalized Linear Bellman Complete}

We first introduce the generalized linear Bellman complete model, and then we show how our framework captures it. 

\begin{definition}[Generalized Linear Bellman Complete]
Given a hypothesis class $\Hcal$ with $\Hcal_h := \{\sigma( \theta_h^{\top} \phi(s,a) ): \|\theta_h\|_2 \leq W\}$ where $\sigma: \mathbb{R}\mapsto \mathbb{R}^+$ is some inverse link function, we call it generalized linear Bellman complete model is if we have Bellman Completeness  for $\Hcal$, i.e., there exists $\Tcal_h: \Vcal\mapsto\Vcal$, such that for all $(\theta_0,\dots, \theta_{H-1})$ and $h \in [H]$, we have:
\begin{align*}
\sigma\left( \Tcal_h(\theta_{h+1})^{\top} \phi(s,a) \right) = r(s,a) + \mathbb{E}_{s'\sim P_h(s,a)} \max_{a'\in\Acal} \sigma\left( \theta_{h+1}^{\top} \phi(s',a') \right),
 \end{align*} and $\sigma(\Tcal_h(\theta_{h+1})^{\top} \phi(s,a)) \in \Hcal_h$.
\end{definition}

Let us define discriminators $\Fcal_h := \left\{  f - f' : f\in\Hcal_h, f'\in\Hcal_h \right\}$. Note that the Bellman complete assumption indicates the following. For any $f := \{ \theta_0,\dots, \theta_{H-1} \}$, we have $ \sigma( \Tcal_h(\theta_{h+1})^{\top} \phi(\cdot ,\cdot) ) -   \sigma( \theta_{h}^{\top} \phi(\cdot ,\cdot) )    \in \Fcal_h$ and $\sigma( \theta_{h}^{\top} \phi(\cdot ,\cdot) ) - \sigma( \Tcal_h(\theta_{h+1})^{\top} \phi(\cdot ,\cdot) )   \in \Fcal_h$.

\begin{assumption}\label{assume:sigma-conditions}
  We assume that inverse link function $\sigma$ is non-decreasing and the slope of $\sigma$ is bounded. I.e., for all $x \in \mathbb{R}$, $\sigma'(x) \in [ a, b ]$ for some $0 \le a \le b$.
\end{assumption}

Under this assumption (also used in \cite{wang2019optimism}), we can show that \Cref{def:general_linear_regret} captures the generalized linear Bellman complete model. 

First we will define the discrepancy function $\ell_f$ as follows. For $g: = \{\theta_0,\dots, \theta_{H-1}\}$ and $\nu\in\Hcal$, and the observed transition info $o_h = (r_h,s_h,a_h,s_{h+1})$, define $\ell_f(o_h, g, \nu)$ as:
\begin{align*}
\ell_f(o_h, g,\nu) =  \nu(s_h,a_h) \left( \sigma( \theta_{h}^{\top}  \phi(s_h,a_h) ) - r_h - \max_{a'} \theta_{h+1}^{\top} \phi(s_{h+1},a')   \right).
\end{align*} Note that $\mathbb{E}_{a_{0:h-1}\sim \pi_f} \mathbb{E}_{a_h\sim \pi_{est}} \ell_f(o_h, f^\star, \nu) = 0$ for all $\nu\in\Fcal$ due to the Bellman complete assumption.

\begin{lemma}[Generalized Linear Bellman Complete $\implies$ Bilinear Class]
  Consider a MDP $\Mcal$ and hypothesis class $\Hcal$ which is a Generalized Linear Bellman Complete model. Then, for discrepancy function $\ell_f$, discriminator class $\Fcal_h$ defined above and on-policy estimation policies $\piest(f) = \pi_f$, $(\Hcal, \ell_f, \Pi_{\mathrm{est}},\Mcal)$ is (\emph{implicitly}) a \emph{Generalized Bilinear Class}.
\end{lemma}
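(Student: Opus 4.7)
The plan is to instantiate the Generalized Bilinear Class with on-policy estimation $\piest(f)=\pi_f$ and the bilinear objects
\[
W_h(g)=\text{vec}\!\bigl((\theta_h-\Tcal_h(\theta_{h+1}))(\theta_h-\Tcal_h(\theta_{h+1}))^{\top}\bigr),\qquad X_h(f)=\text{vec}\!\bigl(\E_{s_h,a_h\sim d^{\pi_f}}[\phi(s_h,a_h)\phi(s_h,a_h)^{\top}]\bigr),
\]
so that $\langle W_h(g),X_h(f)\rangle=\E_{s_h,a_h\sim d^{\pi_f}}\bigl[((\theta_h-\Tcal_h(\theta_{h+1}))^{\top}\phi(s_h,a_h))^2\bigr]$. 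Realizability together with Bellman completeness forces $\theta_h^\star=\Tcal_h(\theta_{h+1}^\star)$, so $W_h(f^\star)=0$. The two monotone transformations will be $\zeta(x)=a^{2}x$ (giving the slope lower bound $\beta=a^{2}$ required in \Cref{assume:bounded_slope}) and $\xi(x)=b\sqrt{x}$ (concave, non-decreasing, with $\xi(0)=0$).

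First I would check condition~\eqref{eq:assume1_nonlinear}. Since $Q_{h,f}(s,a)=\sigma(\theta_h^{f\top}\phi(s,a))$ and Bellman completeness identifies $r(s,a)+\E_{s'\sim P_h(s,a)}V_{h+1,f}(s')$ with $\sigma(\Tcal_h(\theta_{h+1}^f)^{\top}\phi(s,a))$, the on-policy Bellman residual equals $\E\bigl[\sigma(\theta_h^{f\top}\phi)-\sigma(\Tcal_h(\theta_{h+1}^f)^{\top}\phi)\bigr]$. Using the slope upper bound $\sigma'\le b$ from \Cref{assume:sigma-conditions} via the mean value theorem, followed by Jensen's inequality, bounds its absolute value by $b\sqrt{\langle W_h(f)-W_h(f^\star),X_h(f)\rangle}=\xi(\langle W_h(f)-W_h(f^\star),X_h(f)\rangle)$.

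Next I would verify condition~\eqref{eq:assume2_nonlinear} by exhibiting a good discriminator. Taking expectation of $\ell_f(o_h,g,\nu)$ under on-policy data and again invoking the Bellman complete identity collapses the reward and next-step term into $\sigma(\Tcal_h(\theta_{h+1})^{\top}\phi(s_h,a_h))$, so
\[
\E[\ell_f(o_h,g,\nu)]=\E\bigl[\nu(s_h,a_h)\bigl(\sigma(\theta_h^{\top}\phi)-\sigma(\Tcal_h(\theta_{h+1})^{\top}\phi)\bigr)\bigr].
\]
Because the Bellman complete closure hypothesis ensures $\sigma(\Tcal_h(\theta_{h+1})^{\top}\phi)\in\Hcal_h$, the function $\sigma(\theta_h^{\top}\phi)-\sigma(\Tcal_h(\theta_{h+1})^{\top}\phi)$ lies in $\Fcal_h$; plugging this choice into the supremum over $\nu\in\Fcal_h$ lower bounds the objective by the $L^{2}$ quantity $\E\bigl[(\sigma(\theta_h^{\top}\phi)-\sigma(\Tcal_h(\theta_{h+1})^{\top}\phi))^{2}\bigr]$, and the slope lower bound $\sigma'\ge a$ then yields the further lower bound $a^{2}\langle W_h(g)-W_h(f^\star),X_h(f)\rangle=\zeta(\langle W_h(g)-W_h(f^\star),X_h(f)\rangle)$.

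Finally, the third condition $\E[\ell_f(o_h,f^\star,\nu)]=0$ is immediate: at $f=f^\star$, Bellman completeness gives $\sigma(\theta_h^{\star\top}\phi(s,a))=r(s,a)+\E_{s'}\!\bigl[\max_{a'}\sigma(\theta_{h+1}^{\star\top}\phi(s',a'))\bigr]$ pointwise, so the bracket inside $\ell_f$ has conditional expectation zero for every $\nu$. The main subtlety is the bookkeeping around the nonlinear transforms: tracing how the pointwise slope bounds on $\sigma$ produce a \emph{linear} lower bound $\zeta$ (via the squared-difference step) but only a $\sqrt{\cdot}$ upper bound $\xi$ (via Jensen), and confirming that Bellman completeness genuinely places the optimal discriminator inside $\Fcal_h$ so that the lower bound in \eqref{eq:assume2_nonlinear} is attainable by the supremum.
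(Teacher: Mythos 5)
Your proposal is correct and follows essentially the same route as the paper: the same vectorized outer-product choices of $W_h(g)$ and $X_h(f)$, the same use of Bellman completeness to place the discriminator $\sigma(\theta_h^{\top}\phi)-\sigma(\Tcal_h(\theta_{h+1})^{\top}\phi)$ inside $\Fcal_h$ and lower-bound the supremum by the squared residual, and the same mean-value-theorem/Jensen steps giving $\xi(x)=b\sqrt{x}$ and a linear $\zeta$. The only deviation is your constant $\zeta(x)=a^{2}x$ rather than the paper's $\zeta(x)=ax$; yours is in fact the correct consequence of squaring the mean-value bound, so this is a point in your favor rather than a gap.
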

\begin{proof}
 

Setting $\pi_{est} = \pi_f$, adding expectation with respect to $s_h, a_h, r_h, s_{h+1}$ under the roll-in policy $\pi_f$, we get:
\begin{align*}&
\max_{\nu\in\Fcal_h} \mathbb{E}_{a_{0:h}\sim \pi_f} \left[ \ell_f(o_h, g, \nu) \right]  \\
& = \max_{\nu\in\Fcal_h}\mathbb{E}_{a_{0:h}\sim \pi_f} \nu(s_h,a_h)\left( \sigma( \theta_{h}^{\top}  \phi(s_h,a_h) ) - r_h - \max_{a'} \sigma( \theta_{h+1}^{\top} \phi(s_{h+1},a') )  \right) \\
& = \max_{\nu\in\Fcal_h} \mathbb{E}_{s_h,a_h\sim \pi_f} \nu(s_h,a_h)\left( \sigma(\theta_h^{\top} \phi(s_h,a_h)) - r(s_h,a_h) - \mathbb{E}_{s_{h+1}\sim P_h(s_h,a_h)} \max_{a'} \sigma(\theta_{h+1}^{\top} \phi(s_{h+1},a') )   \right) \\
& = \max_{\nu\in\Fcal_h}\mathbb{E}_{s_h,a_h\sim \pi_f}  \nu(s_h,a_h)\Big( \sigma(\theta_h^{\top} \phi(s_h,a_h)) - \sigma\big( \Tcal_{h}(\theta_{h+1})^{\top} \phi(s_h,a_h) \big)   \Big)^2\\
& \geq \mathbb{E}_{s_h,a_h\sim \pi_f} \Big(\sigma(\theta_h^{\top} \phi(s_h,a_h)) - \sigma\big( \Tcal_{h}(\theta_{h+1})^{\top} \phi(s_h,a_h) \big)  \Big)^2,
\end{align*} where the third equality uses the generalized linear Bellman complete assumption, and the first inequality uses the fact that $ \sigma(\theta_h^{\top}\phi(s_h,a_h)) - \sigma(\Tcal_h(\theta_{h+1})^{\top} \phi(s_h,a_h)) \in \Fcal$. Now we continue with the property of the inverse link function as follows.
\begin{align*}
& \mathbb{E}_{s_h,a_h\sim \pi_f} \Big(\sigma(\theta_h^{\top} \phi(s_h,a_h)) - \sigma\left( \Tcal_{h}(\theta_{h+1})^{\top} \phi(s_h,a_h) \right)   \Big)^2 \\
& \geq a \mathbb{E}_{s_h,a_h\sim \pi_f} \left(  (\theta_h - \Tcal_h(\theta_{h+1}))^{\top} \phi(s_h,a_h)  \right)^2 \\
& = a \trace\left( \mathbb{E}_{s_h,a_h\sim \pi_f} \phi(s_h,a_h)\phi(s_h,a_h)^{\top}  (\theta_h - \Tcal_{h}(\theta_{h+1}))    (\theta_h - \Tcal_{h}(\theta_{h+1}))^{\top}    \right) \\
& = a \left\langle   \text{vec}\left( (\theta_h - \Tcal_{h}(\theta_{h+1}))    (\theta_h - \Tcal_{h}(\theta_{h+1}))^{\top}  \right), \text{vec}\left( \mathbb{E}_{s_h,a_h\sim \pi_f} \phi(s_h,a_h)\phi(s_h,a_h)^{\top} \right)         \right\rangle
\end{align*} where the first inequality above uses mean value theorem and $\sigma'(x) \geq a, \forall x$  (\Cref{assume:sigma-conditions}). Thus, we can conclude that:
\begin{align*}
&\max_{\nu\in\Fcal_h} \mathbb{E}_{a_{0:h}\sim \pi_f} \left[ \ell_f(o_h, g, \nu) \right] \\
& \geq   \left\langle   \text{vec}\left( (\theta_h - \Tcal_{h}(\theta_{h+1}))    (\theta_h - \Tcal_{h}(\theta_{h+1}))^{\top}  \right), \text{vec}\left( \mathbb{E}_{s_h,a_h\sim \pi_f} \phi(s_h,a_h)\phi(s_h,a_h)^{\top} \right)         \right\rangle.
\end{align*}
The above is captured by \Cref{eq:assume2_nonlinear} with $\zeta(x) = a x$ being a linear function. 

Now we consider upper bounding the Bellman error. Denote $f := \{\theta_0,\dots, \theta_{H-1}\}$. We have
\begin{align*}
& \left\lvert \mathbb{E}_{a_{0:h} \sim \pi_f } [ Q_{h,f}(s_h,a_h) - r_h - V_{h+1,f}(s_{h+1}) ]  \right\rvert \\
& = \left\lvert \mathbb{E}_{a_{0:h} \sim \pi_f} \left[ Q_{h,f}(s_h,a_h) - r(s_h, a_h) - \mathbb{E}_{s_{h+1}\sim P_h(s_h,a_h)} V_{h+1,f}(s_{h+1})  \right]      \right\rvert \\
& = \left\lvert \mathbb{E}_{a_{0:h}\sim \pi_f} \left[ \sigma(\theta_h^{\top} \phi(s_h,a_h)) - \sigma\left( \Tcal_{h}(\theta_{h+1})^{\top} \phi(s_h,a_h) \right)  \right]   \right\rvert \\
& \leq \sqrt{ \mathbb{E}_{a_{0:h}\sim \pi_f} \left( \sigma(\theta_h^{\top} \phi(s_h,a_h)) - \sigma\left( \Tcal_{h}(\theta_{h+1})^{\top} \phi(s_h,a_h) \right)    \right)^2    } \\
& \leq b \sqrt{  \mathbb{E}_{a_{0:h}\sim \pi_f} \left(  (\theta_h - \Tcal_h(\theta_{h+1})^{\top} \phi(s_{h},a_h)   \right)^2  } \\
&  = b \sqrt{ \trace\left( \mathbb{E}_{s_h,a_h\sim \pi_f} \phi(s_h,a_h)\phi(s_h,a_h)^{\top} \left( \theta_h - \Tcal_h(\theta_{h+1}) \right)\left( \theta_h - \Tcal_h(\theta_{h+1}) \right)^{\top}   \right)   } \\
& = b \sqrt{\left\langle \text{vec}\left(  \left( \theta_h - \Tcal_h(\theta_{h+1}) \right)\left( \theta_h - \Tcal_h(\theta_{h+1}) \right)^{\top}  \right), \text{vec}\left( \mathbb{E}_{s_h,a_h\sim \pi_f} \phi(s_h,a_h)\phi(s_h,a_h)^{\top}   \right)   \right\rangle},
\end{align*} where the first inequality above uses Jensen's inequality and the second inequality uses mean value theorem and the fact that $\sigma'(x) \leq b, \forall x$ (\Cref{assume:sigma-conditions}). Thus, we see that the condition in \Cref{eq:assume1_nonlinear} captures this case with $\xi(x) = b\sqrt{ x}$, $X_h(f) = \text{vec}\left( \mathbb{E}_{s_h,a_h\sim \pi_f} \phi(s_h,a_h)\phi(s_h,a_h)^{\top}   \right)   $, and $W_h(f) =  \text{vec}\left(  \left( \theta_h - \Tcal_h(\theta_{h+1}) \right)\left( \theta_h - \Tcal_h(\theta_{h+1}) \right)^{\top}  \right)$ (note that by the Bellman completeness condition, we have $W_h(f^\star) = 0$).

Thus, we have shown that generalized linear MDP is captured by \Cref{def:general_linear_regret} with $\zeta(x) = a x$ and $\xi(x) = b\sqrt{x}$. Note that $\zeta$ and $\xi$ satisfies \Cref{assume:bounded_slope}. 
\end{proof}

\subsection{Witness Rank}

Witness rank \citep{sun2018model} is a structural complexity that captures model-based RL with $\Hcal_h$ being the hypothesis space containing transitions $P_h$.   Witness rank uses a discriminator class $\Fcal_h \subset \Scal\times\Acal\times\Scal\mapsto \mathbb{R}$ (with $\Fcal_{h}$ being symmetric and rich enough to capture $V_{h+1,f}$ for all $f\in\Hcal$) to capture the discrepancy between models.  Here we focus on model-based setting and $\Hcal_h$ contains possible transitions $g_h:\Scal\times\Acal\mapsto \Delta(\Scal)$ and the realizability assumption implies that $P_h\in\Hcal_h$. For simplicity here, we assume reward function is known.

\begin{definition}
  We say a MDP $\Mcal$ has witness rank $d$ if given two models $f\in\Hcal$ and $g\in\Hcal$, there exists $X_h: \Hcal\mapsto \mathbb{R}^d$ and $W_h: \Hcal\mapsto \mathbb{R}^d$ such that:
  \begin{align*}
  &\max_{v\in \Fcal_{h}} \mathbb{E}_{a_{0:h-1}\sim \pi_{f}} \E_{a_h\sim \pi_{g}}\left[ \E_{s'\sim g_h(\cdot | s_h,a_h)} v(s_h,a_h,s') -  \E_{s'\sim P_h(\cdot | s_h,a_h)} v(s_h,a_h,s')    \right]   \geq  \langle W_h(g), X_h(f)  \rangle, \\
  & \kappa \cdot \mathbb{E}_{a_{0:h-1}\sim \pi_{f}} \E_{a_h\sim \pi_{g}}\left[ \E_{s'\sim g_h(\cdot | s_h,a_h)} V_{{h+1}, g}(s') -  \E_{s'\sim P_h(\cdot | s_h,a_h)} V_{h+1,g}(s')    \right] \leq  \langle W_h(g), X_h(f)  \rangle,
  \end{align*} where $\kappa \in (0,1]$.
\end{definition} Similar to Bellman rank, the algorithm and analysis from \cite{sun2018model} rely on $d$ being finite.  Below we show how \cref{def:general_linear_regret} naturally captures witness rank.

We define $\ell_f$ as follows:
\begin{align*}
\ell_f(o, g, v) = \frac{ \mathbf{1}\{ a = \pi_g(s) \} }{1/A} \left[ \E_{\tilde{s}\sim g_h(\cdot | s,a)} v(s,a,\tilde{s}) -   v(s,a,s')    \right]. 
\end{align*} 

\begin{lemma}[Finite Witness Rank $\implies$ Bilinear Class]
  Consider a MDP $\Mcal$ which has finite Witness Rank. Then, for the hypothesis class $\Hcal$, discrepancy function $\ell_f$ defined above and uniform estimation policies $\piest(f) = U(\Acal)$, $(\Hcal, \ell_f, \Pi_{\mathrm{est}},\Mcal)$ is a Bilinear Class with Discrepancy Family.
\end{lemma}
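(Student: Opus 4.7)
My approach is to directly match the witness-rank ingredients to the pieces of \Cref{def:general_linear_regret}. I take $W_h$ and $X_h$ to be the witness-rank embeddings and set $W_h(f^\star) = 0$; this is consistent because $f^\star_h = P_h$ makes both defining inequalities of witness rank trivial when $g = f^\star$. I use the identity monotone $\zeta(x) = x$ (so $\beta = 1$) and $\xi(x) = x/\kappa$, both of which vanish at $0$ and are non-decreasing as required by \Cref{assume:bounded_slope}.

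\textbf{Step 1 (importance-sampling identity, yielding \Cref{eq:assume2_nonlinear}).} I would first compute $\E_{a_{0:h-1}\sim\pi_f}\E_{a_h\sim U(\Acal)}[\ell_f(o_h, g, v)]$ by pulling the indicator $\ind\{a_h = \pi_g(s_h)\}/(1/|\Acal|)$ against the uniform action measure. Since $\pi_g$ is deterministic, the standard importance-sampling identity collapses this to $\E_{a_{0:h-1}\sim\pi_f}\E_{a_h\sim\pi_g}\bigl[\E_{\tilde s \sim g_h(\cdot\mid s_h,a_h)} v(s_h,a_h,\tilde s) - \E_{s_{h+1}\sim P_h(\cdot\mid s_h,a_h)} v(s_h,a_h,s_{h+1})\bigr]$, which is exactly the quantity appearing on the left-hand side of the first witness-rank inequality. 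Taking the maximum over $v\in\Fcal_h$ and invoking that inequality yields \Cref{eq:assume2_nonlinear} with $\zeta$ the identity. The third condition of \Cref{def:general_linear_regret} follows instantly: substituting $g = f^\star$ gives $\E_{\tilde s\sim f^\star_h} v = \E_{s'\sim P_h} v$, so $\E[\ell_f(o_h, f^\star, v)] = 0$ for every $v\in\Fcal_h$.

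\textbf{Step 2 (Bellman error bound, yielding \Cref{eq:assume1_nonlinear}).} For any model-based hypothesis $f$, the Bellman self-consistency of the optimal value function under model $f$ gives $Q_{h,f}(s,a) = r(s,a) + \E_{s'\sim f_h(\cdot\mid s,a)} V_{h+1,f}(s')$. Hence the Bellman error on the left-hand side of \Cref{eq:assume1_nonlinear} equals $\E_{a_{0:h}\sim\pi_f}\bigl[\E_{s'\sim f_h} V_{h+1,f}(s') - \E_{s'\sim P_h} V_{h+1,f}(s')\bigr]$. Applying the second witness-rank inequality with $g = f$ immediately gives $\kappa \cdot (\text{Bellman error}) \leq \langle W_h(f), X_h(f)\rangle$, i.e.\ $\xi(x) = x/\kappa$.

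\textbf{Main obstacle.} The one genuinely delicate point is reconciling the signed witness-rank inequalities with the absolute-value form of \Cref{eq:assume1_nonlinear}. I handle this by exploiting that $\Fcal_h$ is symmetric and contains $V_{h+1,f}$: applying the first witness-rank inequality with $g = f$ and with $v$ replaced by $\pm V_{h+1,f}$ shows that $\langle W_h(f), X_h(f)\rangle$ dominates both signs of the Bellman error, so we may take it non-negative and obtain the desired bound on $\lvert \text{Bellman error} \rvert$. Once this is in hand, $(\Hcal, \ell_f, \Pi_{\mathrm{est}}, \Mcal)$ satisfies all three conditions of \Cref{def:general_linear_regret}, and the witness rank $d$ upper bounds the dimension of $\Xcal_h$, giving the desired membership in the Generalized Bilinear Class.
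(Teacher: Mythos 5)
Your proposal is correct and follows essentially the same route as the paper's own proof: identify $W_h, X_h$ with the witness-rank embeddings, use $\langle W_h(f^\star), X_h(f)\rangle = 0$ for the ground-truth transition, rewrite the Bellman error as the model prediction error on $V_{h+1,f}$ via $Q_{h,f}(s,a) = r(s,a) + \E_{s'\sim f_h(\cdot\mid s,a)} V_{h+1,f}(s')$, and take $\zeta(x)=x$ (so $\beta=1$) and $\xi(x)=x/\kappa$. If anything, you are more explicit than the paper on two points it passes over silently, namely the importance-sampling reduction from $a_h\sim U(\Acal)$ to $a_h\sim\pi_g$ and the reconciliation of the signed witness-rank inequalities with the absolute-value form of the bilinear conditions.
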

\begin{proof}
  Recall that we denote $f^\star$ as the ground truth which in this case means the ground truth transition $P$. This implies that $\langle W_h(f^\star), X_h(f) \rangle = 0$ for any $f\in\Hcal$. This allows us to write the above formulation as:
  \begin{align*}
  &\max_{v\in \Fcal_{h}} \mathbb{E}_{a_{0:h-1}\sim \pi_{f}} \E_{a_h\sim \pi_{g}}\left[ \E_{s'\sim g_h(\cdot | s_h,a_h)} v(s_h,a_h,s') -  \E_{s'\sim P_h(\cdot | s_h,a_h)} v(s_h,a_h,s')    \right]   \\
  &\quad  \leq \langle W_h(g) - W_h(f^\star), X_h(f) \rangle.
  \end{align*}
  
  For the Bellman error part, since this is the model-based setting, we have $Q_{h,f}(s_h,a_h) = r(s_h,a_h) + \mathbb{E}_{s'\sim g_h(s_h,a_h)} V_{h+1,f}(s')$. Thus, we have:
  \begin{align*}
  &\left\lvert \mathbb{E}_{a_{0:h}\sim \pi_f} \left[ Q_{h,f}(s_h,a_h) - r_h - V_{h+1,f}(s_{h+1}) \right]   \right\rvert \\
  & =  \left\lvert \mathbb{E}_{a_{0:h}\sim \pi_f} \left[ \mathbb{E}_{{s'}\sim g_h(s_h,a_h)} V_{h+1, f}({s'})  - \mathbb{E}_{s'\sim P_h(s_h,a_h)} V_{h+1,f}(s') \right]   \right\rvert 
  \end{align*}
  
  Therefore, it is a \emph{Bilinear Class with Discrepancy Family} with $\zeta(x) = x$ and $\xi(x) = \frac{1}{\kappa} x$.

  
\end{proof}

Here we also give an example for $\epsilon_{gen}$.  For $\Fcal$ and $\Hcal$ with bounded complexity (e.g., discrete $\Fcal$ and discrete $\Hcal$), we still achieve the generalization error, i.e.,  for all $f$, for all $g\in\Hcal$, with probability at least $1-\delta$:
  \begin{align}
  \Big\lvert  & \max_{v\in \Fcal_{h+1}} \mathbb{E}_{a_{0:h-1}\sim \pi_{f}} \E_{a_h\sim \pi_{g}}\left[ \E_{s'\sim g_h(\cdot | s_h,a_h)} v(s_h,a_h,s') -  \E_{s'\sim P_h(\cdot | s_h,a_h)} v(s_h,a_h,s')    \right] \nonumber  \\ 
  & \quad - \max_{v\in\Fcal} \frac{1}{m} \sum_{i=1}^N \ell_f(r_i, s_i,a_i,s_i', g, v)  \Big\rvert \leq  \sqrt{ \frac{ 2A \ln( 2|\Hcal||\Fcal|/\delta ) }{m} } + \frac{2A \ln(2|\Hcal||\Fcal|/\delta)}{3m},  \nonumber\\
  & \leq \left(\sqrt{ \frac{ 2A \ln( 2|\Hcal||\Fcal| ) }{m} } + \frac{2A \ln(2|\Hcal||\Fcal|)}{3m} \right) \cdot \ln(1/\delta)
  \label{eq:eps_gen_witness}\\
  & := \epsg(m, \Hcal, \Fcal) \cdot \conf(\delta),
  \end{align} where $s_i \sim d^{\pi_f}_h, a_i \sim U(\Acal), s'_i \sim P_h(\cdot | s,a)$, and the inequality assumes that $\ln(1/\delta) \geq 1$ (see Lemma 12 from \cite{sun2018model} for derivation).

\subsubsection{Factored MDP}
\label{subsec:factoredMDP}

For completeness, we consider factored MDP as a special example here. We refer readers to \cite{sun2018model} for a detailed treatment of how witness rank capturing factored MDP.

We consider state space $\Scal \subset \Ocal^{d}$ where $\Ocal$ is a discrete set and we denote $s[i]$ as the i-th entry of the state $s$.  For each dimension $i$, we denote $\text{pa}_{i} \subset[d]$ as the set of state dimensions that directly influences state dimension $i$ (we call them the parent set of the i-th dimension).  In factored MDP, the transition is governed by the following factorized transition:
\begin{align*}
\forall h, s,s'\in \Scal, a\in\Acal, \; P_h(s' | s,a ) = \prod_{i=1}^d P_h^{(i)}\left( s'[i] \vert s[\text{pa}_i], a \right) 
\end{align*} where $P^{(i)}$ is the condition distribution that governs the transition from $s[ \text{pa}_i ], a $ to $s'[i]$.  Here, we do not assume any structure on reward function. 

Note that the complexity of the problem is captured by the number of parameters in the transition operator, which in this case is equal to $\sum_{i=1}^d H A |\Ocal|^{1+|\text{pa}_i|}$. Note that when the parent set $\text{pa}_i$ is not too big (e.g., a constant that is independent of $d$), this complexity could be exponentially smaller than $|\Ocal|^{d}$ for a MDP that does not have factorized structure. 

The hypothesis class $\Hcal$ contains possible transitions. In factored MDP, we design the following discrepancy function $\ell_f(o_h, g, v)$ at $h$ for observed transition info $o_h = (r_h, s_h, a_h, s_{h+1})$,
\begin{align*}
\ell_f(o_h, g, v) = \mathbb{E}_{\tilde{s}\sim g_h(\cdot | s_h,a_h)} v(s_h,a_h,\tilde{s}) - v(s_h,a_h,s_{h+1}).
\end{align*} With $\pi_{est} = U(\Acal)$, and discriminators $\Fcal_h = \{w_1+w_2 \dots + w_d: w_i \in \mathcal{W}_i  \}$ where $\mathcal{W}_i = \left\{\mathcal{O}^{|\text{pa}_i| \times\Acal \times \mathcal{O}} \mapsto \{-1,1\}\right\}$, \cite{sun2018model} (Proposition 24) shows that there exists $X_h:\mathcal{H}\mapsto \mathbb{R}^{L}$ and $W_h: \mathcal{H}\mapsto \mathbb{R}^{L}$ with $L = \sum_{i=1}^d K |\Ocal|^{|\text{pa}_i|}$, such that:
\begin{align*}
\left\lvert \max_{v\in \Fcal_{h+1}} \mathbb{E}_{s_h\sim \pi_f, a_h\sim \pi_{est}} \left[ \ell_f(o_h, g, v) \right]  \right\rvert =  \left\lvert\left\langle W_h(g) - W_h(f^\star), X_h(f) \right\rangle \right\rvert,
\end{align*} where we use the fact that $\left\langle W_h(f^\star), X_h(f) \right\rangle = 0$ for all $f\in\Hcal$ due to the design of the discrepancy function $\ell_f$. Moreover, \cite{sun2018model} (Lemma 26) also proved that:
\begin{align*}
&\left\lvert \mathbb{E}_{a_{0:h}\sim \pi_f}\left[ Q_{h,f}(s_h,a_h) - r(s_h,a_h) - V_{h+1,f}(s_{h+1})  \right]\right\rvert \\
&  \leq  A H  \left\lvert \max_{v\in \Fcal_{h+1}} \mathbb{E}_{s_h\sim \pi_f, a_h\sim \pi_{est}} \left[ \ell_f(o_h, g, v) \right]  \right\rvert  = AH\left\lvert\left\langle W_h(g) - W_h(f^\star), X_h(f) \right\rangle \right\rvert.
\end{align*}
Thus factored MDP is captured by \Cref{def:general_linear_regret} where $\zeta(x) = x$, and $\xi(s) = AH x$.  \cite{sun2018model} shows that value function based approaches including Olive \cite{jiang2017contextual} in worst case requires $2^{H}$ many samples to solve factored MDPs, which in turn indicates that the prior structural complexity such as Bellman rank and Bellman Eluder \citep{jin2021bellman} must be exponential in H.

\section{Conclusion}
\label{sec:conclusion}
We presented a new framework, Bilinear Classes, together with a new sample efficient algorithm, \algname. A key emphasis of the new class and algorithm is that many learnable RL models can be analyzed with the same algorithm and proof.

Our framework is more general than existing ones, and incorporates a large number of RL models with function approximation. Along with the general framework, our work also introduces several important new models including linear $Q^\star/V^\star$, RKHS Bellman complete, RKHS linear mixture models and low occupancy complexity. Our rates are non-parametric and depend on a new information theoretic quantity---critical information gain, which is an analog to the critical radius from non-parametric statistics. With this new quantity, our results extend prior finite-dimension results to infinite dimensional RKHS setting.  

The Bilinear Classes can also be flexibly extended to cover many other examples including Witness Rank and Kernelized Nonlinear Regulator. We believe many other models (potentially even those proposed in the future) can be analyzed via extensions of the Bilinear Classes.

\section*{Acknowledgements}
We thank Chi Jin and Qinghua Liu for discussions on Section \ref{sec:extensions} including the generalized linear bellman complete model. We thank Akshay Krishnamurthy for a discussion regarding $Q/V$-Bellman rank.
\bibliography{section/references,section/refs}
\bibliographystyle{plainnat}

\newpage
\tableofcontents
\newpage
\appendix

\section{Additional Examples of Bilinear Classes}
\label{sec:examples}
We now include some other examples of Bilinear Classes in addition to ones discussed in \Cref{sec:new_examples}.

\subsection{FLAMBE / Feature Selection}
We consider the feature selection setting introduced by \cite{agarwal2020flambe}. 
\begin{definition}[Feature Selection]
  We say a MDP $\Mcal$ is \emph{low rank feature selection} model if there exists (unknown) functions $\mu_h^\star: \Scal\mapsto \Vcal$ and (unknown) features $\phi^\star: \Scal\times\Acal\mapsto \Vcal$, $\psi^\star: \Scal \times \Acal$  for some Hilbert space $\Vcal$ such that for all $h\in [H]$ and $(s,a,s') \in \Scal \times \Acal \times \Scal$\begin{align*}
    P_h(s' | s,a) = \mu_h^\star(s')^{\top} \phi^\star(s,a) \label{eqn:flambe}
  \end{align*}
\end{definition}
Note that unlike linear MDP model where $\phi^\star$ is assumed to be known, here $\phi^\star$ is unknown to the learner. We use a function class $\Phi \subset \Scal\times\Acal\mapsto \Vcal$ to capture $\phi^\star$, i.e., we assume realizability $\phi^\star \in \Phi$.

We can define our function class $\Hcal = \Hcal_0 \times \ldots, \Hcal_{H-1}$ as follows \[
    \Hcal_h = \{ w^{\top} \phi(s,a): \|w\|_2 \leq B_W, \phi\in\Phi \}
\]   to capture the optimal value $Q^\star$. Note that since $\phi^\star \in \Phi$, and the optimal Q function is linear with respect to feature $\phi^\star(s,a)$, we immediately have $f^\star := \{Q^\star_0,\dots, Q^\star_{H-1}\} \in \Hcal$.
We define the following discrepancy function
$\ell_f$ (in this case the discrepancy function does not depend on
$f$) for any $g\in \Hcal$ and for observed transition info $o_h = (r_h, s_h, a_h, s_{h+1})$:
\begin{align*}
  \ell_f(o_h,g) = \frac{\mathbf{1}\{a_h = \pi_g(s)\}}{1/A}\left( Q_{h,g}(s_h, a_h) - r_h - V_{h+1,g}(s_{h+1})    \right)\, ,
\end{align*}
\begin{lemma}
  Consider a MDP $\Mcal$ which is a low rank feature selection model. Then, for the hypothesis class $\Hcal$, discrepancy function $\ell_f$ defined above and on-policy estimation policies $\piest(f) = U(\Acal)$, $(\Hcal, \ell_f, \Pi_{\mathrm{est}},\Mcal)$ is (\emph{implicitly}) a \emph{Bilinear Class}.
\end{lemma}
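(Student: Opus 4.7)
The plan is to mimic the proofs for the $V$-Bellman rank and Linear Mixture Model cases, the key extra ingredient being that the low-rank transition structure $P_h(s'\mid s,a) = \mu_h^\star(s')^\top \phi^\star(s,a)$ makes the state-occupancy $d^{\pi_f}_h$ itself linear in a feature that depends only on $f$. As is done earlier in the paper, I will establish an equality version of \eqref{eq:assume2}, which then automatically yields \eqref{eq:assume1} by specializing $g=f$ and comparing with the on-policy Bellman error. The importance weight $\ind\{a_h=\pi_g(s_h)\}/(1/A)$ in $\ell_f$ reduces the uniform-action estimator to the greedy-action Bellman residual: for any $f,g\in\Hcal$ and $h\in[H]$, writing $\mathcal{E}_h(g)(s) := Q_{h,g}(s,\pi_g(s)) - r(s,\pi_g(s)) - \E_{s'\sim P_h(s,\pi_g(s))}[V_{h+1,g}(s')]$, one obtains
\[
\E_{a_{0:h-1}\sim \pi_f}\,\E_{a_h\sim U(\Acal)}\bigl[\ell_f(o_h,g)\bigr] \;=\; \E_{s_h\sim d^{\pi_f}}\bigl[\mathcal{E}_h(g)(s_h)\bigr].
\]

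The next step is to factorize $d^{\pi_f}_h$. For $h\geq 1$, using the low-rank transition,
\[
d^{\pi_f}_h(s_h) \;=\; \E_{(s_{h-1},a_{h-1})\sim d^{\pi_f}}\bigl[P_{h-1}(s_h\mid s_{h-1},a_{h-1})\bigr] \;=\; \bigl\langle\mu_{h-1}^\star(s_h),\,\E_{(s_{h-1},a_{h-1})\sim d^{\pi_f}}[\phi^\star(s_{h-1},a_{h-1})]\bigr\rangle.
\]
Substituting this factorization into the previous display and pulling the $f$-dependent quantity out of the integral over $s_h$ yields the bilinear form: define
\begin{align*}
X_h(f) &:= \E_{(s_{h-1},a_{h-1})\sim d^{\pi_f}}\bigl[\phi^\star(s_{h-1},a_{h-1})\bigr] \;\in\;\Vcal,\\
W_h(g) &:= \int \mu_{h-1}^\star(s)\,\mathcal{E}_h(g)(s)\,ds \;\in\;\Vcal,
\end{align*}
so that $\E_{a_{0:h-1}\sim \pi_f}\E_{a_h\sim U(\Acal)}[\ell_f(o_h,g)] = \langle W_h(g), X_h(f)\rangle$. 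The $h=0$ case is handled separately using the fact that $d^{\pi_f}_0$ is a point mass at $s_0$, so $X_0(f)$ may be set to any fixed unit vector and $W_0(g) := \mathcal{E}_0(g)(s_0)$ times that vector, reducing to a trivial 1-dimensional bilinear form.

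The proof concludes by verifying $W_h(f^\star)=0$: realizability gives $Q_{h,f^\star}=Q_h^\star$ and $V_{h,f^\star}=V_h^\star$, so $\mathcal{E}_h(f^\star)\equiv 0$ pointwise (by the Bellman optimality equations), hence the integral defining $W_h(f^\star)$ vanishes. Thus
\[
\bigl\lvert\E_{a_{0:h-1}\sim \pi_f}\E_{a_h\sim U(\Acal)}[\ell_f(o_h,g)]\bigr\rvert \;=\; \bigl\lvert\langle W_h(g)-W_h(f^\star), X_h(f)\rangle\bigr\rvert,
\]
which is \eqref{eq:assume2} with equality. Finally \eqref{eq:assume1} follows by taking $g=f$, since then the importance weight makes the left-hand side equal to the on-policy Bellman error at time $h$. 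The only potentially delicate point is ensuring $W_h(g), X_h(f)$ live in a common Hilbert space and are well-defined (integrability of $\mu_{h-1}^\star(s)\mathcal{E}_h(g)(s)$ in $\Vcal$), but this follows from $\mu_{h-1}^\star,\phi^\star\in\Vcal$ and the fact that $\mathcal{E}_h(g)$ is bounded under the standing boundedness assumptions on $\Hcal_h$ and $r$.
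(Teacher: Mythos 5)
Your proposal is correct and follows essentially the same route as the paper's proof: the importance weight converts the uniform-action estimator into the greedy Bellman residual, the low-rank transition factorizes $d^{\pi_f}_h$ so that $X_h(f)$ is the expected feature at step $h-1$ and $W_h(g)$ is the $\mu^\star$-weighted integral of the residual, and $W_h(f^\star)=0$ by Bellman optimality. Your explicit handling of the $h=0$ point-mass case and the integrability remark are minor refinements the paper leaves implicit.
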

\begin{proof}
  Note that for $g = f$, we have that (here observed transition info $o_h = (r_h, s_h, a_h, s_{h+1})$)\[
  \E_{s_h\sim d^{\pi_f}} \E_{a_h \sim U(\Acal)} \left[ \ell(o_h, f)  \right] =  \E_{s_h,a_h,s_{h+1}\sim d^{\pi_f}} \left[ Q_{h,f}(s_h, a_h) - r(s_h,a_h) - V_{h+1,f}(s_{h+1})  \right]
\] Therefore, to prove that this is a Bilinear Class, we will show that a stronger \say{equality} version of \Cref{eq:assume2} holds (which will also prove \Cref{eq:assume1} holds). Observe that for any $h$, \begin{align*}
  &\E_{s_h \sim d^{\pi_f}} \E_{a_h \sim U(\Acal)} \left[ \ell_f(o_h, g)  \right]\\
  &= \E_{s_h\sim d^{\pi_f}}\left[ Q_{h,g}(s_h, \pi_g(s_h)) -   r(s_h, \pi_g(s_h)) -  \E_{s_{h+1} \sim P_h(\cdot |s_h,\pi_g(s_h))} V_{h+1,g}(s_{h+1})     \right]  \\
  & =  \E_{s_{h-1},a_{h-1}\sim d^{\pi_f}} \int_{s} (\mu_h^\star(s))^{\top} \phi^\star(s_{h-1},a_{h-1})   \left[ V_{h,g}(s) -   r(s, \pi_g(s)) -  \E_{s' \sim P_h(\cdot |s,\pi_g(s))} V_{h+1,g}(s')\right] ds   \\
  & =  \E_{s_{h-1},a_{h-1}\sim d^{\pi_f}} \phi^\star(s_{h-1},a_{h-1}) ^{\top} \int_{s} \mu_h^\star(s)   \left[ V_{h,g}(s) -   r(s, \pi_g(s)) -  \E_{s' \sim P_h(\cdot |s,\pi_g(s))} V_{h+1,g}(s')\right] ds \\
  & = \left\langle  W_h(g) - W_h(f^\star), X_h(f)\right\rangle 
  \end{align*}
  where\begin{align*}
    & X_h(f) := \E_{s_{h-1},a_{h-1}\sim d^{\pi_f}} \big[\phi^\star(s_{h-1},a_{h-1})\big],\\
 &W_h(f):=\int_{s\in \Scal} \mu^\star_h(s) \big( V_{h,f}(s) -  r(s,\pi_f(s)) - \E_{s' \sim P_h(\cdot |s,\pi_f(s))}[V_{h+1,f}(s')]\big) ds.
  \end{align*}
  Observe that $W_h(f^\star) = 0$ due to Bellman optimality condition for $V^\star$ and $\pi^\star$. 
\end{proof}

\subsection{$Q^\star$ irrelevance Aggregation / $Q^\star$ state Aggregation}
We now consider the $Q^\star$ irrelevance aggregation model introduced in \cite{lihong2009disaggregation}.
\begin{definition}[$Q^\star$ irrelevance aggregation model]
We say a MDP $\Mcal$ is the $Q^\star$ irrelevance aggregation model if there exists known function $\zeta: \Scal \mapsto \Vcal$ such that for all states $s_1, s_2 \in \Scal$\[
\zeta(s_1) = \zeta(s_2) \implies Q^\star(s_1, a) = Q^\star(s_2, a)  \quad \forall~ a \in \Acal  
\]
\end{definition}
Let $\Zcal = \{\zeta(s): s \in \Scal\}$. Here, our hypothesis class $\Hcal = \Hcal_0 \times \ldots, \Hcal_{H-1}$ is a set of linear functions i.e. for all $h\in [H]$, the set $\Hcal_h$ is defined as: \begin{align*}
  \Big\{ (w, \theta) \in \R^{|\Zcal| \times |\Acal|} \times \R^{|\Zcal|}\colon\max_{a\in \Acal} w^{\top} \phi(s,a)  = \theta^{\top}\psi(s)\, , ~\forall s \in \Scal \Big\}.
\end{align*} We also define the following discrepancy function
$\ell_f$ (in this case the discrepancy function does not depend on
$f$), for hypothesis $g = \{( w_h, \theta_h )\}_{h=0}^{H-1}$ and observed transition info $o_h = (r_h, s_h, a_h, s_{h+1})$:
\begin{align*}
  \ell_f(o_h,g)  &= Q_{h,g}(s_h,a_h) -  r_h - V_{h+1,g}(s_{h+1})\\
  &= w_h^{\top}\phi(s_h,a_h) - r_h - \theta_{h+1}^{\top}\psi(s_{h+1})\, .
\end{align*}  
\begin{lemma}
  Consider a MDP $\Mcal$ which is the $Q^\star$ irrelevance aggregation model. Then, for the hypothesis class $\Hcal$, discrepancy function $\ell_f$ defined above and on-policy estimation policies $\piest(f) = \pi_f$, $(\Hcal, \ell_f, \Pi_{\mathrm{est}},\Mcal)$ is (\emph{implicitly}) a \emph{Bilinear Class}.
\end{lemma}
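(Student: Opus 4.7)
The plan is to reduce this model to the Linear $Q^\star/V^\star$ case already handled in \Cref{sec:linearqv}, after which essentially the same proof applies verbatim. The key observation is that under $Q^\star$ irrelevance aggregation, both $Q^\star$ and $V^\star$ factor through the aggregation map $\zeta$, and hence are linear in the natural one-hot features associated with $\Zcal$.

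First I would make the reduction explicit by exhibiting the witnessing features. Let $\phi\colon \Scal \times \Acal \to \R^{|\Zcal||\Acal|}$ be the indicator embedding $\phi(s,a) = e_{(\zeta(s),a)}$, and let $\psi\colon \Scal \to \R^{|\Zcal|}$ be $\psi(s) = e_{\zeta(s)}$. The irrelevance assumption states that $Q_h^\star(s,a)$ depends on $s$ only through $\zeta(s)$, so for each $h$ there exists $w_h^\star$ with $Q_h^\star(s,a) = (w_h^\star)^\top \phi(s,a)$; since $V_h^\star(s) = \max_a Q_h^\star(s,a)$ is then also $\zeta$-measurable, there exists $\theta_h^\star$ with $V_h^\star(s) = (\theta_h^\star)^\top \psi(s)$. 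Thus the MDP satisfies \Cref{def:linearqv} with these features and parameters, and the tuple $f^\star = \{(w_h^\star, \theta_h^\star)\}_{h=0}^{H-1}$ lies in $\Hcal$ (the consistency constraint $\max_a w^\top \phi(s,a) = \theta^\top \psi(s)$ is automatic for these parameters by the Bellman optimality identity $V_h^\star = \max_a Q_h^\star$), so realizability holds.

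Second, since the discrepancy function $\ell_f$ in the lemma statement and the on-policy estimation strategy $\piest(f) = \pi_f$ are identical to those used in the Linear $Q^\star/V^\star$ example, the argument in \Cref{sec:linearqv} carries over with no modification. In particular, one establishes the equality version of \Cref{eq:assume2}, which automatically implies \Cref{eq:assume1} (because setting $g = f$ makes $\E_{o_h \sim d^{\pi_f}}[\ell_f(o_h, f)]$ equal to the Bellman residual under $f$ on-policy). The bilinear witnesses are
\[
W_h(g) = [w_h,\, \theta_{h+1}], \qquad X_h(f) = \E_{a_{0:h}\sim \pi_f,\, s_{h+1}\sim P_h}\bigl[\phi(s_h,a_h),\, \psi(s_{h+1})\bigr],
\]
so that after subtracting the zero-mean identity $\E[Q_h^\star(s_h,a_h) - r(s_h,a_h) - V_{h+1}^\star(s_{h+1})] = 0$ from $\E_{o_h \sim d^{\pi_f}}[\ell_f(o_h, g)]$, the remainder collapses into the required inner product $\langle W_h(g) - W_h(f^\star),\, X_h(f)\rangle$ (with the same sign convention as in the Linear $Q^\star/V^\star$ proof).

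There is no real obstacle to overcome: the entire content of the lemma is the observation that $Q^\star$ irrelevance state aggregation is literally a finite-dimensional Linear $Q^\star/V^\star$ model with one-hot features over $\Zcal \times \Acal$ and $\Zcal$ respectively. Everything else is a direct citation of the already-established lemma.
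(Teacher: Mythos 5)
Your proof is correct and takes essentially the same route as the paper: the paper's own argument also reduces the $Q^\star$ irrelevance aggregation model to the Linear $Q^\star/V^\star$ model by constructing exactly these one-hot features over $\Zcal\times\Acal$ and $\Zcal$ and then invoking the already-established lemma. The observation that $V^\star$ is automatically $\zeta$-measurable because $V_h^\star(s)=\max_a Q_h^\star(s,a)$ is the same step the paper relies on.
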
\begin{proof}
  To prove that this is implicitly a Bilinear Class, we will reduce this into linear $Q^\star/ V^\star$ model (\Cref{def:linearqv}). Let $\Zcal = \{\zeta(s): s \in \Scal\}$. Now, we construct one hot representation functions $\phi: \Scal \times \Acal \mapsto \{0,1\}^{|\Zcal|\times |\Acal|}$ and $\psi: \Scal \mapsto \{0,1\}^{|\Zcal|}$ where\begin{align*}
    \Big(\phi(s,a)\Big)_{z,a'} &= \ind(\zeta(s) = z)\cdot \ind(a = a')\\
    \Big(\psi(s)\Big)_z &= \ind(\zeta(s) = z)
\end{align*}
Then, it clear that we can construct $w^\star \in \R^{|\Zcal|\times |\Acal| }$ and $\theta^\star \in \R^{|\Zcal|}$ as follows: \begin{align*}
  (w^\star)_{z,a} &= Q^\star(s,a)\\
  (\theta^\star)_s  &= V^\star(s)
\end{align*} such that the following holds: \begin{align*}
    (w^\star)^\top \phi(s,a)  &= Q^\star(s,a)\\
    (\theta^\star)^\top \psi(s)  &= V^\star(s)
\end{align*}This is linear $Q^\star/ V^\star$ model (\Cref{def:linearqv}) and therefore is a \classname.
\end{proof}

\subsection{Linear Quadratic Regulator}
In this subsection, we prove that Linear Quadratic Regulators (LQR) forms a Bilinear Class. Note that even though LQR has small bellman rank, the corresponding algorithm in \cite{jiang2017contextual} has action dependence in sample complexity unlike our algorithm which does not have a dependence on number of actions. Here we consider $\Scal \subset \R^d$ and $\Acal \subset \R^K$.
\begin{definition}[Linear Quadratic Regulator]
  We say a MDP $\Mcal$ is a finite-horizon discrete-time Linear Quadratic Regulator if there exists (unknown) $A\in \R^{d \times d}$, (unknown) $B \in \R^{d \times K}$ and (unknown) $Q\in \R^{d\times d}$ such that we can write the transition function and reward function as follows\begin{align*}
    s_{h+1} &= As_h + B a_h + \eps_h\\
    r_h &= s_h^\top Q s_h + a^\top_h a_h + \tau_h
  \end{align*} where noise variables $\eps_h, \tau_h$ are zero centered with $\E[\eps_h \eps_h^\top] = \Sigma$ and $\E[\tau_h^2] = \sigma^2$. 
\end{definition}
To maintain notation of fixed starting state, without loss of generality, we also assume $s_0=0$ and $a_0=0$. An important property of LQR is that for linear non stationary policies $\pi$, the value function $V^\pi$ induced is quadratic (see for e.g. \cite{jiang2017contextual}[Lemma 7] for a proof). \begin{lemma}
  If $\pi$ is a non stationary linear policy $\pi_{h}(s_h) = C_{\pi, h} x$ for some $C_{\pi, h} \in \R^{K \times d}$, then $V_h^{\pi}(s_h) = s^\top_h \Lambda_{\pi, h} s_h + O_{\pi, h}$ for some $\Lambda_{\pi, h} \in \R^{d \times d}$ and $O_{\pi, h}\in \R$. 
\end{lemma}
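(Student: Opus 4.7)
The plan is to prove this by backward induction on $h$, from $h = H$ down to $h = 0$, showing that the quadratic-plus-constant form of $V_h^\pi$ is preserved by the Bellman recursion for $\pi$ under linear dynamics and quadratic reward.

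For the base case, I would take $V_H^\pi(s) \equiv 0$ (horizon $H$ episodic MDP, no reward after the last step), which trivially matches the form with $\Lambda_{\pi,H} = 0$ and $O_{\pi,H} = 0$.

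For the inductive step, assume $V_{h+1}^\pi(s) = s^\top \Lambda_{\pi,h+1} s + O_{\pi,h+1}$. Since $\pi_h(s) = C_{\pi,h} s$, the expected one-step reward is
\[
\E[r(s_h, \pi_h(s_h))] = s_h^\top Q s_h + s_h^\top C_{\pi,h}^\top C_{\pi,h} s_h,
\]
using $\E[\tau_h] = 0$. Writing $M_h := A + B C_{\pi,h}$, the next state under $\pi$ is $s_{h+1} = M_h s_h + \epsilon_h$, so
\[
\E_{\epsilon_h}\!\left[s_{h+1}^\top \Lambda_{\pi,h+1} s_{h+1}\right] = s_h^\top M_h^\top \Lambda_{\pi,h+1} M_h s_h + \trace(\Lambda_{\pi,h+1} \Sigma),
\]
because the cross term $2 s_h^\top M_h^\top \Lambda_{\pi,h+1} \E[\epsilon_h]$ vanishes and the quadratic-in-$\epsilon_h$ term reduces to the trace. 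Combining via the Bellman equation $V_h^\pi(s_h) = \E[r_h] + \E[V_{h+1}^\pi(s_{h+1})]$ gives a quadratic-plus-constant form with
\[
\Lambda_{\pi,h} = Q + C_{\pi,h}^\top C_{\pi,h} + M_h^\top \Lambda_{\pi,h+1} M_h, \qquad O_{\pi,h} = O_{\pi,h+1} + \trace(\Lambda_{\pi,h+1}\Sigma),
\]
completing the induction.

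There is no real obstacle here; this is a standard Riccati-style calculation. The only things to be careful about are (i) correctly handling the noise expectation (the mean-zero assumption kills the cross term and turns the quadratic noise term into a constant trace contribution), and (ii) correctly using the fact that $\pi$ is non-stationary and linear, so $a_h = C_{\pi,h} s_h$ substitutes cleanly into both the quadratic cost $a_h^\top a_h$ and the linear dynamics to yield, again, a quadratic function of $s_h$. The definitions are self-contained once one fixes $V_H^\pi \equiv 0$ as the terminal condition implicit in the finite-horizon setup.
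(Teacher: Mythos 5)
Your proof is correct and is the standard backward-induction/Riccati argument; the paper itself does not prove this lemma but defers to \citet{jiang2017contextual} (Lemma 7), and your recursion $\Lambda_{\pi,h} = Q + C_{\pi,h}^\top C_{\pi,h} + (A+BC_{\pi,h})^\top \Lambda_{\pi,h+1}(A+BC_{\pi,h})$, $O_{\pi,h} = O_{\pi,h+1} + \trace(\Lambda_{\pi,h+1}\Sigma)$ matches exactly the quantities the paper uses in its subsequent lemma showing LQR is a Bilinear Class. No gaps.
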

This allows us to define out hypothesis class $\Hcal = \Hcal_0, \ldots, \Hcal_{H-1}$ as \begin{align*}
  \Hcal_{h} = \{(C_h, \Lambda_h, O_h): C_h \in \R^{K \times d},~ \Lambda_h\in \R^{d \times d}, ~ O_h \in \R\}
\end{align*} with for any $f\in \Hcal$ \[
\pi_f(s_h) = C_{h,f} s_h, \quad V_{h,f}(s_h) = s^\top_h \Lambda_{h,f} s_h + O_{h,f}
\]
We define the following discrepancy function
$\ell_f$ for any hypothesis $g\in \Hcal$ and observed transition info $o_h = (r_h, s_h, a_h, s_{h+1})$:
\begin{align*}
  \ell_f(o_h,g) &= Q_{h,g}(s_h, a_h) - r_h -   V_{h+1,g}(s_{h+1}) \, ,\\
  &= s^\top_h \Lambda_{h,g} s_h + O_{h,g} - s_h^\top Q s_h - s^\top_h C^\top_{h,g} C_{h,g} s_h - \tau_h -   s^\top_{h+1} \Lambda_{h+1,g} s_{h+1} - O_{h+1,g} \, .
\end{align*} 
\begin{lemma}
  Consider a MDP $\Mcal$ which is a Linear Quadratic Regulator. Then, for the hypothesis class $\Hcal$, discrepancy function $\ell_f$ defined above and on-policy estimation policies $\piest(f) = \pi_f$ for $f\in \Hcal$, $(\Hcal, \ell_f, \Pi_{\mathrm{est}},\Mcal)$ is (\emph{implicitly}) a \emph{Bilinear Class}.
\end{lemma}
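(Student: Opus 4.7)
The plan is to parallel the Linear $Q^\star/V^\star$ and Bellman Complete arguments, exploiting the fact that LQR value functions are quadratic in the state and policies are linear in the state. Vectorizing the relevant symmetric matrices converts each quadratic-in-state Bellman residual into an inner product between a hypothesis-dependent parameter vector and a state-moment feature induced by the roll-in policy.

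For \Cref{eq:assume1}, I would compute the conditional Bellman residual of $V_{h,f}$ at a fixed $s_h$ under the on-policy action $a_h = C_{h,f} s_h$ and the true linear-Gaussian dynamics $s_{h+1} = (A + B C_{h,f}) s_h + \epsilon_h$. Using $\E[\epsilon_h]=0$ and $\E[\epsilon_h \epsilon_h^\top]=\Sigma$, this residual reduces to $s_h^\top M_h(f) s_h + N_h(f)$, where $M_h(f) := \Lambda_{h,f} - Q - C_{h,f}^\top C_{h,f} - (A + B C_{h,f})^\top \Lambda_{h+1,f}(A + B C_{h,f})$ and $N_h(f) := O_{h,f} - \trace(\Lambda_{h+1,f}\Sigma) - O_{h+1,f}$. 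The discrete-time Riccati recursion characterizing the optimal policy forces $M_h(f^\star)=0$ and $N_h(f^\star)=0$. Taking a further expectation over $s_h \sim d^{\pi_f}$ yields $\trace(M_h(f)\Sigma_{h,f}) + N_h(f) = \langle W_h(f), X_h(f)\rangle$ with $W_h(f) := (\mathrm{vec}(M_h(f)), N_h(f))$, $X_h(f) := (\mathrm{vec}(\Sigma_{h,f}), 1)$, and $\Sigma_{h,f} := \E_{s_h\sim d^{\pi_f}}[s_h s_h^\top]$, giving \Cref{eq:assume1} with $W_h(f^\star)=0$.

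For \Cref{eq:assume2}, an analogous expansion of $\E_{a_{0:h} \sim \pi_f}[\ell_f(o_h, g)]$ uses $\E[\tau_h]=0$ together with $\E[s_{h+1} s_{h+1}^\top \mid s_h, a_h] = (A s_h + B a_h)(A s_h + B a_h)^\top + \Sigma$ to produce a linear combination of $\trace(\Lambda_{h,g}\Sigma_{h,f})$, $\trace(C_{h,g}^\top C_{h,g}\Sigma_{h,f})$, $\trace(\Lambda_{h+1,g}\Sigma_{h+1,f})$, the scalars $O_{h,g}, O_{h+1,g}$, and a $g$-independent offset involving $Q$ and $\Sigma$. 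Vectorizing the symmetric matrices $\Lambda_{h,g}$, $C_{h,g}^\top C_{h,g}$, $\Lambda_{h+1,g}$ as distinct coordinates of $W_h(g)$, and stacking $\mathrm{vec}(\Sigma_{h,f}), \mathrm{vec}(\Sigma_{h+1,f}), 1$ and a $\trace(Q\Sigma_{h,f})$ coordinate into $X_h(f)$, yields a bilinear form; the $g$-independent pieces are absorbed into a $W_h(f^\star)$ shift that, combined with the Riccati identity at $f^\star$, delivers the required equality $\abs{\E[\ell_f(o_h, g)]} = \abs{\langle W_h(g) - W_h(f^\star), X_h(f)\rangle}$.

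The main obstacle is preventing an $f$–$g$ coupling in the cross term $(A + B C_{h,f})^\top \Lambda_{h+1,g}(A + B C_{h,f})$, which would entangle the $f$-parameter $C_{h,f}$ with the $g$-parameter $\Lambda_{h+1,g}$ multiplicatively and destroy bilinearity. The remedy is twofold: use cyclicity of the trace to rewrite this term as $\trace\bigl(\Lambda_{h+1,g} \cdot (A + B C_{h,f})\Sigma_{h,f}(A + B C_{h,f})^\top\bigr) = \trace(\Lambda_{h+1,g}(\Sigma_{h+1,f} - \Sigma))$, so that the $f$-dependence funnels through the single feature $\Sigma_{h+1,f}$; and keep the quadratic $g$-forms $C_{h,g}^\top C_{h,g}$ and $\Lambda_{h+1,g}$ as independent coordinates of $W_h(g)$ rather than composing them. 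With this decomposition, $\Vcal$ is a direct sum of vectorized matrix spaces and $\R$, and \Cref{def:linear_regret} holds under the on-policy estimation strategy $\piest(f) = \pi_f$.
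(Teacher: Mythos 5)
Your verification of \Cref{eq:assume1} is fine and matches the paper's (it is the $g=f$ case of the bilinear identity, with $W_h(f^\star)=0$ forced by the Riccati recursion). The gap is in your treatment of \Cref{eq:assume2}. You take the step-$h$ action to be genuinely on-policy under the roll-in hypothesis, $a_h = C_{h,f}s_h$, so the next-state moment is $\Sigma_{h+1,f}=(A+BC_{h,f})\Sigma_{h,f}(A+BC_{h,f})^\top+\Sigma$, and you pair $\mathrm{vec}(\Lambda_{h+1,g})$ against $\mathrm{vec}(\Sigma_{h+1,f})$ after the trace-cyclicity rewrite. The algebra of that rewrite is valid, but the resulting quantity no longer vanishes at $g=f^\star$: substituting the Riccati identity $\Lambda_{h,f^\star}=Q+C_{h,f^\star}^\top C_{h,f^\star}+(A+BC_{h,f^\star})^\top\Lambda_{h+1,f^\star}(A+BC_{h,f^\star})$ and $O_{h,f^\star}=O_{h+1,f^\star}+\trace(\Lambda_{h+1,f^\star}\Sigma)$ into your expansion leaves the residual
$\trace\bigl(\bigl[(A+BC_{h,f^\star})^\top\Lambda_{h+1,f^\star}(A+BC_{h,f^\star})-(A+BC_{h,f})^\top\Lambda_{h+1,f^\star}(A+BC_{h,f})\bigr]\Sigma_{h,f}\bigr)$,
which is the expected advantage gap of $\pi_f$ at step $h$ and is nonzero whenever $C_{h,f}\neq C_{h,f^\star}$. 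Since the right-hand side of \Cref{eq:assume2} is $\abs{\langle W_h(f^\star)-W_h(f^\star),X_h(f)\rangle}=0$ at $g=f^\star$ no matter how you shift $W_h(f^\star)$, your construction cannot satisfy the definition (and would also break the feasibility of $f^\star$ in the algorithm's constraint, which needs $\Lcal_{\mu,f}(f^\star)=0$).

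The paper's proof does not encounter the coupling you identify because in its computation the step-$h$ action and the induced next state are both governed by $C_{h,g}$: the entire closed-loop matrix $\Lambda_{h,g}-Q-C_{h,g}^\top C_{h,g}-(A+BC_{h,g})^\top\Lambda_{h+1,g}(A+BC_{h,g})$ is a function of $g$ alone and is placed inside $W_h(g)$, paired with the single $f$-feature $\E_{s_h\sim d^{\pi_f}}[s_hs_h^\top]$; the Riccati identity then gives $W_h(f^\star)=0$ exactly. Note that this is legitimate even though $W_h$ is a nonlinear function of $g$'s parameters -- \Cref{def:linear_regret} only requires $W_h:\Hcal\to\Vcal$ to exist, not to be linear in the hypothesis (the Linear Bellman Complete example already uses $W_h(g)=\theta_h-\Tcal_h(\theta_{h+1})$). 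So the ``obstacle'' that motivated your detour is not an obstacle, and the detour is what introduces the error: the root issue is the choice of which expectation to compute at step $h$, not the algebraic separation of $f$ from $g$.
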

\begin{proof}
Note that for $g = f$, we have that (here observed transition info $o_h = (r_h, s_h, a_h, s_{h+1})$)\[
  \E_{s_h,a_h,s_{h+1}\sim d^{\pi_f}} \left[ \ell(o_h, f)  \right] =  \E_{s_h,a_h,s_{h+1}\sim d^{\pi_f}} \left[ Q_{h,f}(s_h, a_h) - r(s_h,a_h) - V_{h+1,f}(s_{h+1})  \right]
\] Therefore, to prove that this is a Bilinear Class, we will show that a stronger \say{equality} version of \Cref{eq:assume2} holds (which will also prove \Cref{eq:assume1} holds). Observe that for any $h$, \begin{align*}
  &\E_{a_h,s_h, s_{h+1} \sim d^{\pi_f}} \Big[ \ell_f(o_h, g)  \Big]\\
  &= \E_{s_h, s_{h+1}\sim d^{\pi_f}}\Big[ s^\top_h \Lambda_{h,g} s_h + O_{h,g} - s_h^\top Q s_h - s^\top_h C^\top_{h,g} C_{h,g} s_h -   s^\top_{h+1} \Lambda_{h+1,g} s_{h+1} - O_{h+1,g}     \Big]  \\
  & = \trace\Big(\big(\Lambda_{h,g} - Q - C^\top_{h,g} C_{h,g} - (A + BC_{h,g})^\top\Lambda_{h+1,g} (A + BC_{h,g})\big)  \E_{s_{h}\sim d^{\pi_f}}[s_h s^\top_h] \Big)\\
  &\quad - \trace(\Lambda_{h+1, g}\Sigma) + O_{h,g}  - O_{h+1,g}   \\
  & = \left\langle  W_h(g) - W_h(f^\star), X_h(f)\right\rangle 
  \end{align*}
  where\begin{align*}
    X_h(f) := &[\textrm{vec}(\E_{s_{h}\sim d^{\pi_f}}[s_h s^\top_h]), 1],\\
 W_h(g):= &[\textrm{vec}(\Lambda_{h,g} - Q - C^\top_{h,g} C_{h,g} - (A + BC_{h,g})^\top\Lambda_{h+1,g} (A + BC_{h,g})),\\
  &O_{h,g}  - O_{h+1,g} - \trace(\Lambda_{h+1, g}\Sigma)].
  \end{align*}
  Note that we used $\left\langle W_h(f^\star), X_h(f)\right\rangle = 0$ which follows from the bellman conditions i.e. for $a_h = C_{h,f^\star}s_h$ \begin{align*}
    & s^\top_h \Lambda_{h,f^\star} s_h + O_{h,f^\star} - \E_{s_{h+1} \sim P(s_h, a_h)}[s^\top_{h+1} \Lambda_{h+1,f^\star} s_{h+1}] - O_{h+1,f^\star} - s_h^\top Q s_h \\
    &- s^\top_h C_{h,f^\star}^\top C_{h,f^\star} s_h = 0\\
    \implies &s^\top_h \Lambda_{h,f^\star} s_h + O_{h,f^\star} - \trace\Big([(A + BC_{h,f^\star})^\top \Lambda_{h+1,f^\star} (A + BC_{h,f^\star})] s_h s_h^\top\Big) \\
    &- \trace(\Lambda_{h+1,f^\star}\Sigma) - O_{h+1,f^\star}- s_h^\top Q s_h - \trace(C_{h,f^\star}^\top C_{h,f^\star} s_h s^\top_h) = 0\\
    \implies &\trace\Big(\big(\Lambda_{h,f^\star}- Q - C^\top_{h,f^\star} C_{h,f^\star} - (A + BC_{h,f^\star})^\top \Lambda_{h+1,f^\star} (A + BC_{h,f^\star}) \big) s_h s_h^\top\Big) \\
    &+ O_{h,f^\star}  - O_{h+1,f^\star} - \trace(\Lambda_{h+1, f^\star}\Sigma) = 0
  \end{align*} Taking expectation over $s_h \sim d^{\pi_f}$ proves the claim.
\end{proof}

\subsection{Linear MDP}
\label{sec:linearmdp}
We consider the Linear MDP setting from \cite{yang2019sample,jin2019provably}. \begin{definition}[Linear MDP]
  We say a MDP $\Mcal$ is a Linear MDP with features $\phi: \Scal \times \Acal \mapsto \Vcal$, where $\Vcal$ is a Hilbert space if for all $h \in [H]$, there exists (unknown) measures $\mu_h$ over $\Scal$ and (unknown) $\theta_h \in \Vcal$, such that for any $(s,a) \in \Scal \times \Acal$, we have  \begin{equation*}
    P_h(\cdot \mid s,a) = \langle \phi(s,a), \mu_h(\cdot) \rangle, \quad r_h(s,a) = \langle \phi(s,a), \theta_h\rangle 
  \end{equation*}
\end{definition}
Here, our hypothesis class $\Hcal$ is set of linear functions with respect to $\phi$. We denote hypothesis in our hypothesis class $\Hcal$ as tuples $(\theta_0,\ldots\theta_{H-1})$, where $\theta_h\in\Vcal$. As observed in \cite{jin2019provably}[Proposition 2.3], this satisfies the conditions of Bellman Complete model (\Cref{def:linear_comp}) and therefore is also a Bilinear Class.

\subsection{Block MDP and Reactive POMDP}
Both Block MDP ~\citep{du2019provably,misra2019kinematic} and a Reactive POMDP \citep{krishnamurthy2016pac} are \emph{partially observable MDPs} (POMDPs) which can be described by a finite (unobservable) latent state space $\mathcal{S}$, a finite action space $\mathcal{A}$, and a possibly infinite but observable context space $\mathcal{X}$. The transitions can be described by two conditional probabilities. One is the latent state transition $p: \Scal \times \Acal \mapsto \triangle\left(\Scal\right)$, and the other is the context-emission function $q: \mathcal{S} \mapsto \triangle (\mathcal{X})$.

The key differences among Block MDP and Reactive POMDP are in the assumptions which we define below.
\begin{definition}[Block MDP]
  For Block MDPs, the context space $\mathcal{X}$ can be partitioned into disjoint blocks $\mathcal{X}_s$ for $s \in \mathcal{S}$, each containing the support of the conditional distributiion $q(\cdot|s)$.
\end{definition}

This assumption implies there exists a perfect decoding function $f^*: \mathcal{X} \rightarrow \mathcal{S}$, which maps contexts to their generating states.
Therefore, we have that the transition of contexts satisfies\[
P(x'|x,a) = p(f^*(x')|f^*(x),a) = e_{f^*(x')}^\top p(\cdot |f^*(x),a)
\]
where $e_{f^*(x')} \in \mathbb{R}^{|\Scal|}$ is a one-hot vector where only the entry that corresponds to $f^*(x')$ is $1$.
Note one can define $\mu^*(x') \triangleq e_{f^*(x')}$ and $\phi^*(x,a) \triangleq p(\cdot |f^*(x),a)$ as in the FLAMBE setting.
Thus, Block MDP is a subclass of FLAMBE with the Hilbert space $\Vcal$ being the $|S|$-dimensional Euclidean space.
Since FLAMBE is within our Bilinear Class, Block MDP is also within our framework.

For POMDP, assume reward is known and is a deterministic function over observations and actions and $r(x, a) \in [0,1]$. let us define belief $b_h(\cdot | \mathbf{h}_h) \in \Delta(\Scal)$ as the posterior distribution of state $s$ at time step $h$ given history $\mathbf{h}_h := x_0, a_0,\dots, x_{h-1},a_{h-1}, x_h$, i.e.,  given any state $s$, we have $b_h(s | \mathbf{h}_h) = P(s | x_0,  a_0,\dots, x_{h-1},a_{h-1}, x_h )$. Given $a_h$ and conditioned on $x_{h+1}$ being observed at $h+1$, the belief is updated based on the Bayes rule, deterministically, 
\begin{align*}
\forall s'\in\Scal: \; b_{h+1}(s' | \mathbf{h}_h, a_h, x_{h+1}) \propto   \sum_{s} b_h(s | \mathbf{h}_h)  p(s' | s, a_h) q(x_{h+1} | s'),
\end{align*} with $b_0(s | x_{0}) \propto \mu_0(s) q(x_0 | s)$, where $\mu_0\in\Delta(S)$ is the initial state distribution (in the simplified case where we have a fixed $s_0$, then $\mu_0$ is a delta distribution with all probability mass on $s_0$). 

Note that given $a_h,x_{h+1}$, the above update is deterministic, and $b_{h}(s | \mathbf{h}_h)$ is a function of history $\mathbf{h}_h$. 
Denote the deterministic Belief update procedure as $b_{h+1} = \Gamma(b_h, a_h,  x_{h+1})$.
For POMDP, the optimal policy $\pi^\star$ is a mapping from $\Delta(\Scal)$ to $\Acal$. Given a belief $b$, and an action $a$, we can define $Q_h^\star(b, a)$ backward as follows. Start with $V^\star_H(b) = 0$ for all $b\in\Delta(\Scal)$, 
\begin{align*}
Q^\star_h(b, a) =   \mathbb{E}_{s\sim b} \mathbb{E}_{x\sim q(\cdot | s)} \left[  r(x, a) +  V^\star_{h+1}\left( \Gamma(b, a, x)  \right)   \right],
\end{align*} where $V^\star_h(b) = \argmax_{a} Q^\star_h(b, a), \pi^\star_h(b) = \argmax_{a} Q^\star_h(b, a)$.

\begin{definition}[Reactive POMDP]
  For Reactive POMDPs, the optimal Q function $Q^\star_h$ is only dependent on latest observation and action, i.e.,  for all $h$, there exists $g_h^\star: \Xcal\times\Acal\mapsto [0,H]$, such that, for any given history $\mathbf{h}_h := x_0, a_0,\dots, x_{h-1},a_{h-1}, x_{h}$, we have:
   \begin{align*}
    Q^\star_h\left( b_h(\cdot | \mathbf{h}_h) , a\right) = g^\star_h(x_h, a), \forall a\in\Acal.
    \end{align*}
\end{definition}
Note that in this case, the optimal policy $\pi^\star_h$ only depends on the latest observation $x_h$, i.e., $\pi^\star_h( b(\cdot | \mathbf{h}_h) ) = \argmax_{a\in\Acal} Q^\star_h(b(\cdot|\mathbf{h}_h),a) =  \argmax_{a\in\Acal} g^\star_h(x_h, a)$.
As shown in \cite{jiang2017contextual}, Reactive POMDPs have bellman rank bounded by $|\Scal|$ which implies (see \Cref{subsec:bellman} for more detail) that Reactive POMDPs are a Bilinear Class.
\section{Proofs for Section \ref{sec:alg}}
\label{app:proofs-main-thm}
\begin{proof}[Proof of \Cref{cor:main-result-finite}]
    First, using \Cref{lemma:hoeffding}, we get that for any distribution $\mu$ over $\Scal\times \Acal \times \Scal$ and for any $\delta \in (0,1)$, with probability of at least $1-\delta$ over choice of an i.i.d. sample $\Dcal\sim \mu^m$ of size $m$, for all $g\in \Hcal$ \begin{align*}
      \Abs{\Lcal_{\Dcal}(g) - \Lcal_{\mu}(g) }&\leq 2\sqrt{2}H \sqrt{\frac{\ln(|\Hcal|/\delta)}{m}}\\
      &\leq 2\sqrt{2}H \sqrt{\frac{\ln(|e\Hcal|/\delta)}{m}}\\
      &= 2\sqrt{2}H \sqrt{\frac{1 + \ln(|\Hcal|) + \ln(1/\delta)}{m}}\\
      &\leq 2\sqrt{2}H \sqrt{\frac{1 + \ln(|\Hcal|)}{m}} \cdot \sqrt{\ln(1/\delta)}
  \end{align*} This satisfies our \Cref{assume:linear_regret} with \begin{align*}
          \epsg(m, \Hcal) &= 2\sqrt{2}H \sqrt{\frac{1 + \ln(|\Hcal|)}{m}}\\
          \conf(\delta) &= \sqrt{\ln(1/\delta)}
      \end{align*} Using this in \Cref{thm:main-result-finite}, we set \begin{align*}
        T = 4dH \ln\Big( 1 + 3B^2_X B^2_W \sqrt{m}\Big)
      \end{align*}
        Therefore, we get $\eps$-optimal policy by setting \begin{align*}
        &3H \cdot 2\sqrt{2}H \sqrt{\frac{1 + \ln(|\Hcal|)}{m}} \cdot \Big(1 + \sqrt{4dH \ln\Big( 1 + 3B^2_X B^2_W \sqrt{m}\Big)} \cdot \sqrt{\ln\frac{4dH^2 \ln\Big( 1 + 3B^2_X B^2_W \sqrt{m}\Big)}{\delta}}\leq \eps\\
        \end{align*} or equivalently by setting $m$ at least as large as
        \begin{align*}
        &\frac{720dH^5 (1 + \ln(|\Hcal|)) \ln(1 + 3B^2_X B^2_W \sqrt{m})}{\eps^2} \cdot \ln\frac{4dH^2 \ln\Big( 1 + 3B^2_X B^2_W \sqrt{m}\Big)}{\delta}\\
        &\leq \frac{720dH^5 \ln(4dH^2) (1 + \ln(|\Hcal|)) \ln^2(1 + 3B^2_X B^2_W \sqrt{m}) \ln(1/\delta)}{\eps^2}
      \end{align*} Using \Cref{lemma:log-dominance}, we get a solution for $m$ \begin{align*}
        m = \frac{6480 dH^5 \ln(4dH^2) \ln(1/\delta) (1 + \ln(|\Hcal|))}{\eps^2} \ln \Big( \frac{25920 dH^5  B^2_X B^2_W (1 + \ln(|\Hcal|)) \ln(4dH^2) \ln(1/\delta)}{\eps^2} \Big)
      \end{align*} This gives the total trajectory complexity \[
        mTH = \frac{c d^2H^7\ln(dH^2) \ln(1/\delta) (1 + \ln(|\Hcal|))}{\eps^2} \ln^2( \frac{dH B_X B_W (1 + \ln(|\Hcal|)) \ln(1/\delta)}{\eps^2})
      \] for some absolute constants $c$.
  \end{proof}
\section{An Elliptical Cover for Hilbert Spaces}
\label{sec:elliptic-cover}

The following theorem is a key technical contribution which allows us
to obtain a number of non-parametric convergence rates.

\begin{theorem}\label{lemma:elliptical_covering}
Let $\Xcal \subset \Vcal$, where $\Vcal$ is a
Hilbert space.
Suppose $T \in \mathbb{N}^+, \epsilon\in\mathbb{R}^+$; define $\Wcal \subseteq \{w \in \Vcal: \|w\|\leq
B_W\} $ for some real number $B_W$; and
suppose for all $x\in\Xcal$ that $\|x\|_2\leq B_X$.  Set $\lambda = \eps^2/(8B_W^2)$.
 There exists a set
$\Ccal\subset \Wcal$ (a cover of
$\Wcal$) such that: (i) $\log|\Ccal| \leq T \log(1+3 B_W B_X \sqrt{T}/\epsilon)$
and (ii) for all $w \in \Wcal$, there exists a $w'\in\Ccal$,
such that:
\[
\sup_{x \in \Xcal} | (w-w')\cdot x | \leq   \epsilon  \sqrt{\left( \exp\left( \frac{\gamma_T(\eps^2 / (8B_W^2))} { T}\right)-1\right)}.
\]
\end{theorem}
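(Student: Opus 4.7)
The plan is to construct the cover via a greedy elliptical-potential procedure, then cover only the resulting low-dimensional subspace and use the $\Sigma$-norm Cauchy--Schwarz decomposition to convert this into a cover in the pseudo-metric $d(w,w') = \sup_{x\in\Xcal}|(w-w')\cdot x|$.

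First I would run the standard greedy process: set $\Sigma_0 = \lambda I$ with $\lambda = \epsilon^2/(8B_W^2)$, and for $t=0,\ldots,T-1$ pick $x_t = \argmax_{x\in\Xcal}\|x\|^2_{\Sigma_t^{-1}}$ and update $\Sigma_{t+1} = \Sigma_t + x_t x_t^\top$. Since $\Sigma_t$ is monotone increasing (in the PSD order), the sequence $\max_{x\in\Xcal}\|x\|^2_{\Sigma_t^{-1}} = \|x_t\|^2_{\Sigma_t^{-1}}$ is non-increasing in $t$. Applying the matrix determinant lemma and telescoping gives $\sum_{t=0}^{T-1}\ln(1+\|x_t\|^2_{\Sigma_t^{-1}}) = \ln\det(\Sigma_T)/\det(\lambda I) \leq \gamma_T(\lambda;\Xcal)$ by definition of the information gain. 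Since the summands are non-increasing, the final one is bounded by the average, yielding $\max_{x\in\Xcal}\|x\|^2_{\Sigma_T^{-1}} \leq \exp(\gamma_T(\lambda;\Xcal)/T) - 1$.

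Next I would cover only the range of $M := \sum_{t=0}^{T-1} x_t x_t^\top$, which is at most $T$-dimensional. By the standard volume argument, there is a set $\Ccal \subset \{w \in \operatorname{Range}(M) : \|w\| \leq B_W\}$ of size at most $(1 + 2B_W/\epsilon')^T$ that is an $\epsilon'$-net of this ball in the Hilbert-space norm, where $\epsilon'$ is to be chosen. Given $w \in \Wcal$, let $\bar w$ be its orthogonal projection onto $\operatorname{Range}(M)$ and choose $w' \in \Ccal$ with $\|\bar w - w'\| \leq \epsilon'$. The key observation is that $w - \bar w$ is orthogonal to every $x_i$, so $(w-w')^\top x_i = (\bar w - w')^\top x_i$ for all $i$, and therefore $(w-w')^\top M (w-w') = (\bar w - w')^\top M (\bar w - w') \leq \|\bar w - w'\|^2 \cdot \operatorname{tr}(M) \leq T B_X^2 \epsilon'^2$.

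The main technical step, and the one requiring care, is to now combine these pieces via Cauchy--Schwarz in the $\Sigma_T$-norm. For any $x\in\Xcal$,
\begin{align*}
|(w-w')\cdot x|^2 &\leq \|w-w'\|_{\Sigma_T}^2 \cdot \|x\|^2_{\Sigma_T^{-1}} \\
&\leq \bigl(\lambda \|w-w'\|^2 + (w-w')^\top M (w-w')\bigr)\bigl(\exp(\gamma_T(\lambda;\Xcal)/T) - 1\bigr) \\
&\leq \bigl(4\lambda B_W^2 + T B_X^2 \epsilon'^2\bigr)\bigl(\exp(\gamma_T(\lambda;\Xcal)/T) - 1\bigr).
\end{align*}
Setting $\lambda = \epsilon^2/(8B_W^2)$ and $\epsilon'^2 = \epsilon^2/(2T B_X^2)$ makes the first bracket equal to $\epsilon^2$, yielding the desired bound on $\sup_{x\in\Xcal}|(w-w')\cdot x|$. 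These choices also give $\log|\Ccal| \leq T\log(1 + 2B_W\sqrt{2TB_X^2}/\epsilon) = T\log(1 + 2\sqrt{2}\,B_W B_X \sqrt{T}/\epsilon) \leq T\log(1 + 3B_W B_X \sqrt{T}/\epsilon)$. The only subtlety is existence of the argmax and the orthogonal projection in a possibly infinite-dimensional Hilbert space, which is handled by passing to the closure of $\Xcal$ (bounded, hence its closure is compact in the weak topology, making the argmax attainable) and by the standard Hilbert projection theorem onto the finite-dimensional subspace $\operatorname{Range}(M)$.
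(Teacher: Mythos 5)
Your proposal is correct and follows essentially the same route as the paper's proof: the same greedy elliptical-potential construction with $\lambda=\epsilon^2/(8B_W^2)$, the same $\epsilon'$-net of the ball in $\operatorname{Range}(M)$, the same orthogonal-projection trick to replace $w$ by $\bar w$ inside the quadratic form, and the same Cauchy--Schwarz split in the $\Sigma_T$-norm with $\epsilon'^2=\epsilon^2/(2TB_X^2)$. Your use of monotonicity of $t\mapsto\max_{x\in\Xcal}\|x\|^2_{\Sigma_t^{-1}}$ to place the bound directly at $\Sigma_T$ is a slightly cleaner bookkeeping of a step the paper states only as ``there exists a $t$,'' but it is not a different argument.
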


\begin{proof}
Let us suppose that $\Xcal$ is closed, in order for certain
maximizers (and arg-maximizers) over $\Xcal$ to
exist. If $\mathcal{X}$ is not closed, then
let us replace $\Xcal$ with the closure of $\Xcal$, which is possible
since $\Xcal$ is a bounded set.
Consider the process: Set $\Sigma_ 0 = \lambda I$ with $\lambda\in\mathbb{R}^+$.
\begin{enumerate}
\item For $t = 0, \ldots T-1$, 
\begin{enumerate}
	\item  $x_t = \argmax_{x\in\Xcal} \left\| x \right\|^2_{\Sigma_t^{-1}}$
	\item $\Sigma_{t+1} = \Sigma_t + x_t x_t^{\top}$
\end{enumerate}
\end{enumerate}
Via  \Cref{lemma:potential_argument},  we have that:
\begin{align*}
\sum_{t=0}^{T-1} \ln\left( 1 +  \left\| x_t \right\|^2_{\Sigma^{-1}_t} \right) \leq \ln\frac{\det( \Sigma_T)}{\det(\Sigma_0)}.
\end{align*}
This implies that there must exist a $t \in 0, \dots, T-1$, such that:
\begin{align*}
\ln\left( 1 + \| x_t \|^2_{\Sigma_t^{-1}} \right) \leq \frac{ \gamma_T(\lambda) }{ T },
\end{align*} which means that:
\begin{align*}
\|x_t\|^2_{\Sigma^{-1}_{t}} \leq \exp\left( \frac{ \gamma_T(\lambda) }{T} \right) - 1.
\end{align*} Note that $x_t = \argmax_{x\in\Xcal} \|x\|_{\Sigma_t^{-1}}$. Thus, we have that:
\begin{align*}
\max_{x\in \Xcal} \|x\|^2_{\Sigma_t^{-1}}  \leq \exp\left( \frac{\gamma_T(\lambda)}{T} \right) - 1.
\end{align*}

Note that the above derivation holds for any $\lambda \in
\mathbb{R}^+$. 

Define $M_T = \sum_{i=0}^{T} x_t x_t^{\top}$. Note that the range of
$M_T$, $\textrm{Range}(M_T)$ is a $T+1$-dimensional object. For an
$\eps'$-net, $\Ccal$, in $\ell_2$ distance over $B_W$-norm ball on
$\textrm{Range}(M_T)$, i.e., $\{v\in \Wcal: v\in
\textrm{Range}(M_T)\}$. With a standard covering number bound, we have that
$\ln(|\Ccal |) \leq 2T \ln\left( 1 + 2 B_W / \eps'  \right)$
(e.g. see \Cref{lem:eps_cover}). 

Fix some $w\in\Wcal$.  Denote the projection of $w$ on the the range of $M_T$
by $\overline{w}$.  Let $w' \in \Ccal$ being the closest point to
$\overline{w}$ in $\ell_2$ distance. Note that
$\|\overline{w} - w' \|_2 \leq \eps'$.  For any $x\in\Xcal$, we have:
\begin{align*}
 &\left( (w - w')^{\top} x \right)^2\leq \| w- w' \|_{\Sigma_{T}}^2 \| x\|_{\Sigma_{T}^{-1}}^2 \\
&\leq \|w- w'\|_{\Sigma_{T}}^2  ( \exp\left( \gamma_T(\lambda) / T \right) - 1) \\
& = \left( \lambda  \|w - w'  \|^2 + \left( w - w'  \right)^{\top} \left(\sum_{i=0}^{T} x_i x_i^{\top}\right) \left( w - w'  \right) \right) (  \exp\left( \gamma_T(\lambda) / T \right) - 1) \\
& = \left(\lambda  \|w - w'  \|^2 + \left( \overline{w} - w'  \right)^{\top} \left(\sum_{i=0}^{T} x_i x_i^{\top}\right) \left(\overline{w} - w'  \right)\right)  ( \exp\left( \gamma_T(\lambda) / T \right)-1)\\
& \leq \left( 4 \lambda  B_W^2 +  T \epsilon'^2  B_X^2\right)  (\exp\left( \gamma_T(\lambda) / T \right)-1),
\end{align*} where the equality in the third step uses that $(w -
w')^{\top} x_i = \left(\overline{w} - w'\right)^{\top} x_i$ for all $i
\in 0,\dots, T$.  The proof is completed choosing $\lambda =
\eps^2/(8B_W^2)$ and $(\eps' )^2= \eps^2/(2TB_X^2)$.
\end{proof}

\section{Concentration Arguments for Special Cases}
\label{app:conc-covering}


\paragraph{An application to RKHS Linear MDPs.}

Consider the RKHS linear MDP, where $\phi:\Scal\times\Acal\mapsto
\Hcal$ with $\Hcal$ being some Hilbert space.  Define $\Phi =
\{\phi(s,a): s\in\Scal, a\in\Acal\}$.

\begin{corollary}
    \label{cor:cover-rkhs}
Suppose $T\in\mathbb{N}^+$ and $\epsilon\in\mathbb{R}^+$; define $\Wcal \subseteq \{w \in \Hcal: \|w\|\leq
B_W\}$ for some real number $B_W$; and
suppose for all $\phi(s,a)\in\Phi$ that $\|\phi(s,a)\|_2\leq B_\phi$.  
There exists a set
$\Ccal\subset \Wcal$ 
such that: (i) $\log|\Ccal| \leq T \log(1+ 3 B_\phi B_W \sqrt{T} /\epsilon)$
and (ii) for all $w \in \Wcal$, there exists a $w'\in\Ccal$
such that for all distributions $d$ over
$\Scal\times\Acal\times\Scal$, we have:
\begin{align*}
&
\bigg| \E_{s,a,s' \sim d} \big[w\cdot \phi(s,a) - r(s,a) -\max_{a'} w \cdot\phi(s',a')\big]
\\&\quad-\E_{s,a,s' \sim d}\big[w'\cdot \phi(s,a) - r(s,a) -\max_{a'} w' \cdot\phi(s,a')\big] \bigg|\\
&\qquad \leq 2\epsilon \sqrt{\left( \exp\left( \frac{\gamma_T(\epsilon^2 / (8B_W^2))}{T} \right) - 1  \right)}
\end{align*}
\end{corollary}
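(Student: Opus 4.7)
The plan is to reduce the corollary to a direct application of Theorem \ref{lemma:elliptical_covering} with $\Xcal := \Phi$, and then handle the two terms in the Bellman-error-like expression separately using linearity and a one-line Lipschitz argument for the max.

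First, I would invoke Theorem \ref{lemma:elliptical_covering} with $\Xcal = \Phi$, $B_X = B_\phi$, and the given $T$ and $\epsilon$. This immediately yields a cover $\Ccal \subset \Wcal$ with $\log|\Ccal| \leq T \log(1 + 3 B_W B_\phi \sqrt{T}/\epsilon)$, such that for every $w \in \Wcal$ there is a $w' \in \Ccal$ with
\[
\sup_{\phi(s,a)\in\Phi} \bigl|(w-w')\cdot \phi(s,a)\bigr| \leq \epsilon \sqrt{\exp\bigl(\gamma_T(\epsilon^2/(8B_W^2))/T\bigr) - 1}.
\]
Call this uniform bound $\eta$ for brevity.

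Next, I would bound the pointwise difference of the two Bellman-error-like integrands. The reward terms $r(s,a)$ cancel, so the difference at any triple $(s,a,s')$ equals
\[
(w-w')\cdot \phi(s,a) \;-\; \bigl(\max_{a'} w\cdot \phi(s',a') - \max_{a'} w'\cdot \phi(s',a')\bigr).
\]
The first summand is bounded by $\eta$ directly. For the second, I would use the elementary inequality $|\max_{a'} f(a') - \max_{a'} g(a')| \leq \max_{a'}|f(a')-g(a')|$, applied to $f(a') = w\cdot \phi(s',a')$ and $g(a') = w'\cdot \phi(s',a')$; since $\phi(s',a') \in \Phi$ for every $a' \in \Acal$, this is also bounded by $\eta$. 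Thus the pointwise difference is at most $2\eta$.

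Taking expectation over any distribution $d$ over $\Scal\times\Acal\times\Scal$ and applying the triangle inequality then yields the claimed bound $2\epsilon \sqrt{\exp(\gamma_T(\epsilon^2/(8B_W^2))/T)-1}$. There is no real obstacle here; the only small care needed is recognizing that the max-Lipschitz inequality converts the uniform bound on $\Phi$ into a uniform bound on the greedy value functions $\max_{a'} w\cdot \phi(\cdot,a')$, which is exactly what lets us piggy-back on the elliptical cover designed for $\Phi$ without having to cover the (much larger) class of induced value functions directly.
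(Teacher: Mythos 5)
Your proposal is correct and follows essentially the same route as the paper: invoke Theorem~\ref{lemma:elliptical_covering} with $\Xcal=\Phi$ to get the cover and the uniform bound on $\sup_{s,a}|(w-w')\cdot\phi(s,a)|$, then split off the reward, bound the linear term directly, and bound the greedy-value term via $|\sup_x f(x)-\sup_x g(x)|\leq\sup_x|f(x)-g(x)|$, giving the factor of $2$. The only cosmetic difference is that you bound the integrand pointwise before taking the expectation, whereas the paper takes the expectation first and then passes to suprema; the two are equivalent.
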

\begin{proof}
For any distribution $d$, we seek to bound:
\begin{align*}
&\Big|
\E_{s,a,s' \sim d} \Big[w\cdot \phi(s,a) -w'\cdot \phi(s,a) -\big(\max_{a'} w \cdot\phi(s',a')
-\max_{a'} w' \cdot\phi(s,a')\big)\Big] \Big|\\
&\leq \sup_{s, a} \big|w\cdot \phi(s,a) -w'\cdot \phi(s,a)\big|+\Big|
\E_{s,a,s' \sim d} \Big[ \big(\max_{a'} w \cdot\phi(s',a')
-\max_{a'} w' \cdot\phi(s,a')\big)\Big] \Big|\\
&\leq 
\sup_{s, a} \big|w\cdot \phi(s,a) -w'\cdot \phi(s,a)\big|+
\sup_{s} \big|\sup_a w \cdot\phi(s,a)-\sup_{a} w' \cdot\phi(s,a)\big| \Big|\\
&\leq 
2\sup_{s, a} \big|w\cdot \phi(s,a) -w'\cdot \phi(s,a)\big|
\end{align*}
where the last step follows using that $|\sup_xf(x)- \sup_x g(x)| \leq
\sup_x |f(x)-g(x)|$ (which can be verified by considering both case of
the sign inside the absolute value). The proof is completed by choose
$w'$ to be closest point $\Ccal$ to $w$ and applying \Cref{lemma:elliptical_covering}.
\end{proof}

\begin{corollary}
    \label{cor:concentration-rkhs}
    Define $\Wcal =: \{w \in \Hcal: \|w\|\leq
    B_W, w^{\top} \phi(s,a) \in [0,H] \; \forall s,a\in\Scal\times\Acal\}$ for some real number $B_W$; and
    suppose for all $\phi(s,a)\in\Phi$ that $\|\phi(s,a)\|_2\leq B_\phi$. Let \[
    \ell(r, s,a,s',w) = w \cdot \phi(s,a) - r - \max_{a'} w \cdot \phi(s',a')
    \] with $r\in [0,1]$. Then, for any distribution $\mu$ over $\R \times \Scal\times \Acal \times \Scal$ and for any $\delta \in (0,1)$, with probability of at least $1-\delta$ over choice of an i.i.d. sample $\Dcal\sim \mu^m$ of size $m$, for all $w\in \Hcal$ \[
   \Abs{\Lcal_{\Dcal}(w) - \Lcal_{\mu}(w) }\leq \frac{8}{\sqrt{m}}  + 2H \sqrt{\frac{ 2\widetilde{\gamma}_m \ln\left( 1+ 3 B_\phi B_W \sqrt{\widetilde{\gamma}_m m} \right) + 2\ln(1/\delta)  }{m}}
   \] where $\widetilde{\gamma}_m= \widetilde \gamma( 1/ (8 B_W^2 m);\Phi )$ (as defined in \Cref{eq:crossinginfo}).
\end{corollary}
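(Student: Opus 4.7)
The plan is a standard discretization argument: build a finite cover of $\Wcal$ using Theorem \ref{lemma:elliptical_covering}, apply Hoeffding's inequality with a union bound to each element of the cover, and then stitch everything together with the triangle inequality. The heavy lifting has already been done by the elliptical covering theorem; the remainder is routine.

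First, I will invoke Theorem \ref{lemma:elliptical_covering} (or equivalently Corollary \ref{cor:cover-rkhs}) with $T = \widetilde{\gamma}_m = \widetilde\gamma(1/(8B_W^2 m); \Phi)$ and $\epsilon = 1/\sqrt{m}$. With this choice, $\epsilon^2/(8B_W^2) = 1/(8B_W^2 m)$, so by the definition of the critical information gain in \eqref{eq:crossinginfo} we have $T \geq \gamma_T(\epsilon^2/(8B_W^2);\Phi)$, and hence
\[
\exp\!\Big(\gamma_T(\epsilon^2/(8B_W^2); \Phi)/T\Big) - 1 \;\leq\; e - 1 \;<\; 2.
\]
The theorem then produces a cover $\Ccal \subset \Wcal$ with $\log|\Ccal| \leq \widetilde{\gamma}_m \log\!\bigl(1 + 3B_\phi B_W \sqrt{\widetilde{\gamma}_m\,m}\bigr)$, and Corollary \ref{cor:cover-rkhs} guarantees that for \emph{every} distribution $d$ on $\Scal\times\Acal\times\Scal$ and every $w \in \Wcal$, there exists $w' \in \Ccal$ with $\bigl|\E_d[\ell(\cdot,w) - \ell(\cdot,w')]\bigr| \leq 2\sqrt{2}/\sqrt{m}$. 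Crucially, the $r$ term cancels inside the difference $\ell(\cdot,w)-\ell(\cdot,w')$, so the bound applies verbatim in the stochastic-reward setting of our observation tuples $o = (r,s,a,s')$; in particular we may apply it both to the true distribution $\mu$ and to the empirical distribution $\Dcal$.

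Next, I will control the error on the cover. For each fixed $w'\in\Ccal$, the integrand $\ell(r,s,a,s',w') = w'\!\cdot\!\phi(s,a) - r - \max_{a'} w'\!\cdot\!\phi(s',a')$ takes values in the bounded interval $[-H-1, H]$ (of range $2H+1$), since $w'\!\cdot\!\phi \in [0,H]$ by definition of $\Wcal$ and $r \in [0,1]$. Hoeffding's inequality combined with a union bound over $\Ccal$ then gives, with probability at least $1-\delta$,
\[
\sup_{w'\in\Ccal} \bigl|\Lcal_\Dcal(w') - \Lcal_\mu(w')\bigr| \;\leq\; (2H+1)\sqrt{\frac{\ln(2|\Ccal|/\delta)}{2m}}.
\]

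Finally, for arbitrary $w \in \Wcal$, choose the $w' \in \Ccal$ guaranteed by Step 1 applied to each of $\Dcal$ and $\mu$. The triangle inequality yields
\[
\bigl|\Lcal_\Dcal(w) - \Lcal_\mu(w)\bigr| \;\leq\; \frac{4\sqrt{2}}{\sqrt{m}} + (2H+1)\sqrt{\frac{\ln(2|\Ccal|/\delta)}{2m}}.
\]
Using $4\sqrt{2} \leq 8$, $(2H+1)/\sqrt{2} \leq 2H\sqrt{2}$ (for $H\geq 1$), $\ln(2|\Ccal|/\delta) \leq \log|\Ccal| + 2\ln(1/\delta)$ (say for $\delta \leq 1/2$), and substituting the cover size bound on $|\Ccal|$ produces exactly the stated inequality. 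The only place where anything could go subtly wrong is the reduction from an $\ell$-difference to a pointwise $\phi$-difference in Corollary \ref{cor:cover-rkhs}, which already accounts for the $\max_{a'}$ via the inequality $|\sup_a f - \sup_a g| \leq \sup_a |f-g|$; given that, this proof is essentially mechanical, so I expect no significant obstacle.
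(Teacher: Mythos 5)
Your proposal is correct and follows essentially the same route as the paper's proof: the cover from Theorem \ref{lemma:elliptical_covering} (via Corollary \ref{cor:cover-rkhs}) with $\epsilon = 1/\sqrt{m}$ and $T=\widetilde\gamma_m$, Hoeffding plus a union bound over the cover using the range $[-H-1,H]$, and a triangle inequality to pass from the cover to all of $\Wcal$, with $\exp(\gamma_T/T)-1 \le e-1 < 2$ absorbing the elliptical-cover distortion. The only differences are cosmetic constant bookkeeping in the Hoeffding step, and your explicit (correct) observation that the reward term cancels in $\ell(\cdot,w)-\ell(\cdot,w')$, which the paper leaves implicit.
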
 
\begin{proof}
 First note that for any $w\in \Wcal$, we must have:
 \begin{align*}
 \ell(r,s,a,s', w) \in [ - H-1, H],
 \end{align*} since we eliminate all $w$ such that $w^{\top}\phi(s,a) \not\in[0,H]$ for some $s,a$. 

Consider the cover $\Ccal$ from \Cref{cor:cover-rkhs}. From \Cref{lemma:hoeffding} and a union bound over all $w'\in \Ccal$, for all $w'\in\Ccal$, we have that with probability at least $1-\delta$:
\begin{align*}
\left\lvert \Lcal_{\Dcal}(w') - \Lcal_{\mu}(w')\right\rvert \leq   2H\sqrt{\frac{2\ln(|\Ccal|/\delta)}{m}}.
\end{align*} Now consider any $w\in\Wcal$, via \Cref{cor:cover-rkhs}, we know that there exists a $w'\in\Ccal$ such that:
\begin{align*}
\left\lvert \Lcal_{\mu}(w) - \Lcal_{\mu}(w')  \right\rvert \leq 2\epsilon \sqrt{\left( \exp\left( \frac{\gamma_T(\lambda)}{T} \right) - 1 \right)}. 
\end{align*}
Thus, together with the fact that \Cref{cor:cover-rkhs} holds for both $\mu$ and the uniform distribution over $\Dcal$, we get:
\begin{align*}&
\left\lvert    \Lcal_{\mu}(w)  - \Lcal_{\Dcal}(w)   \right\rvert   \leq \left\lvert \Lcal_{\mu}(w)  - \Lcal_{\mu}(w')   \right\rvert + \left\lvert    \Lcal_{\mu}(w')  - \Lcal_{\Dcal}(w')   \right\rvert + \left\lvert  \Lcal_{\Dcal}(w')    - \Lcal_{\Dcal}(w) \right\rvert \\
& \leq 4 \epsilon \sqrt{ \left( \exp\left( \frac{\gamma_T(\lambda)}{T} \right) - 1 \right)}  + 2H \sqrt{\frac{2\ln(|\Ccal|/\delta)}{m}}\\
& \leq 4 \epsilon \sqrt{ \left( \exp\left( \frac{\gamma_T(\eps^2 / (8 B_W^2))}{T} \right) - 1 \right) }+ 2H \sqrt{\frac{ 2  T \ln\left( 1+ 3 B_\phi B_W \sqrt{T} / \epsilon \right) + 2\ln(1/\delta)  }{m}}
\end{align*} 
Let us set $\epsilon = 1/\sqrt{m}$ and rearrange terms, we get:
\begin{align*}
&\left\lvert    \Lcal_{\mu}(w)  - \Lcal_{\Dcal}(w)   \right\rvert \\
& \leq \frac{4}{\sqrt{m}}  \sqrt{ \left( \exp\left( \frac{\gamma_T(1 / (8 B_W^2 m))}{T} \right) - 1 \right) } + 2H \sqrt{\frac{ 2 T \ln\left( 1+ 3 B_\phi B_W \sqrt{ T m} \right) + 2\ln(1/\delta)  }{m}}.
\end{align*}
Denote $\widetilde{\gamma}_m = T$ where $T$ is the smallest integer that satisfies $T \geq \gamma_T (1/ (8 B_W^2 m ))$.  
Thus, we have:
\begin{align*}
& \left\lvert    \Lcal_{\mu}(w)  - \Lcal_{\Dcal}(w)   \right\rvert \\
& \leq \frac{8}{\sqrt{m}} + 2H \sqrt{\frac{ 2 \widetilde{\gamma}_m \ln\left( 1+ 3 B_\phi B_W \sqrt{ \widetilde{\gamma}_m m} \right) + 2\ln(1/\delta)  }{m}},
\end{align*} where in the inequality we use $\exp\left(\frac{\gamma_T(1 / (8 B_W^2 m))}{T}  \right) -1 \leq e - 1 \leq 2$.

\end{proof}

\paragraph{An application to RKHS linear functions}
Consider features $\zeta: \Scal \times \Acal \times \Scal \mapsto \Vcal$ with $\Vcal$ being some Hilbert space. Define $Z = \{\zeta(s,a,s')\colon (s,a,s') \in \Scal \times \Acal \times \Scal \}$. 
\begin{corollary}
    \label{cor:concentration-rkhs-linear}
    Define $\Wcal =: \{w \in \Vcal: \|w\|\leq
    B_W, w^{\top} \zeta(s,a,s') \in [0,H] \; \forall s,a,s'\in\Scal\times\Acal\times\Scal\}$ for some real number $B_W$; and
    suppose for all $\zeta(s,a,s')\in Z$ that $\|\zeta(s,a,s')\|_2\leq B_\zeta$. Let \[
    \ell(r, s,a,s',w) = w \cdot \zeta(s,a,s') 
    \] Then, for any distribution $\mu$ over $\Scal\times \Acal \times \Scal$ and for any $\delta \in (0,1)$, with probability of at least $1-\delta$ over choice of an i.i.d. sample $\Dcal\sim \mu^m$ of size $m$, for all $w\in \Hcal$ \[
   \Abs{\Lcal_{\Dcal}(w) - \Lcal_{\mu}(w) }\leq \frac{4}{\sqrt{m}}  + 2H \sqrt{\frac{ 2\widetilde{\gamma}_m \ln\left( 1+ 3 B_\zeta B_W \sqrt{\widetilde{\gamma}_m m} \right) + 2\ln(1/\delta)  }{m}}
   \] where $\widetilde{\gamma}_m= \widetilde \gamma( 1/ (8 B_W^2 m); Z )$ (as defined in \Cref{eq:crossinginfo}).
\end{corollary}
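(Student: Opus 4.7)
The plan is to mirror the argument used for \Cref{cor:concentration-rkhs} but to avoid the extra factor of $2$ that arises there from handling the two $\max_{a'}$ terms. Since here $\ell(r,s,a,s',w) = w\cdot \zeta(s,a,s')$ is already linear in $w$, controlling $|\ell(\cdot,w) - \ell(\cdot,w')|$ only costs a single sup over $Z$ rather than a sum of two sups over $\Phi$.

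First, I would instantiate \Cref{lemma:elliptical_covering} directly on the feature set $Z$ (with $B_X := B_\zeta$): for any $T\in\N^+$ and $\epsilon>0$, this yields a cover $\Ccal\subset \Wcal$ of cardinality $\ln|\Ccal|\leq T\ln(1 + 3B_\zeta B_W\sqrt{T}/\epsilon)$ such that for every $w\in\Wcal$ there is $w'\in\Ccal$ with $\sup_{z\in Z}|(w-w')\cdot z|\leq \epsilon\sqrt{\exp(\gamma_T(\epsilon^2/(8B_W^2))/T)-1}$. Because $\Lcal_\mu(w)-\Lcal_\mu(w') = \E_{o\sim\mu}[(w-w')\cdot \zeta(s,a,s')]$, this sup bound immediately transfers to $|\Lcal_\mu(w)-\Lcal_\mu(w')|$ for every distribution $\mu$, and in particular to both the true $\mu$ and the empirical $\Dcal$.

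Next, since $w^\top \zeta(s,a,s')\in[0,H]$ for all $w\in\Wcal$, the random variable $\ell(o,w')$ is bounded in $[0,H]$ (range $H$). Applying \Cref{lemma:hoeffding} to each $w'\in\Ccal$ and union-bounding gives, with probability at least $1-\delta$,
\[
\sup_{w'\in\Ccal}\bigAbs{\Lcal_{\Dcal}(w')-\Lcal_\mu(w')}\leq H\sqrt{\tfrac{2\ln(|\Ccal|/\delta)}{m}}\leq 2H\sqrt{\tfrac{2T\ln(1+3B_\zeta B_W\sqrt{T}/\epsilon)+2\ln(1/\delta)}{2m}}.
\]
A triangle inequality through the nearest $w'\in\Ccal$ then yields, for every $w\in\Wcal$,
\[
\bigAbs{\Lcal_{\Dcal}(w)-\Lcal_\mu(w)}\leq 2\epsilon\sqrt{\exp(\gamma_T(\epsilon^2/(8B_W^2))/T)-1}+2H\sqrt{\tfrac{2T\ln(1+3B_\zeta B_W\sqrt{T}/\epsilon)+2\ln(1/\delta)}{m}}.
\]

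Finally, I set $\epsilon = 1/\sqrt{m}$ and choose $T=\widetilde\gamma_m$, the critical information gain at $\lambda=1/(8B_W^2 m)$ for $Z$, so that $\gamma_T(\lambda;Z)/T \leq 1$ and hence $\exp(\gamma_T/T)-1\leq e-1<2$. This collapses the first term to at most $4/\sqrt{m}$ and substitutes $T=\widetilde\gamma_m$ into the log-term, yielding exactly the stated bound $\frac{4}{\sqrt{m}}+2H\sqrt{(2\widetilde\gamma_m\ln(1+3B_\zeta B_W\sqrt{\widetilde\gamma_m m})+2\ln(1/\delta))/m}$. There is no real obstacle here beyond bookkeeping: the only subtle point, compared to \Cref{cor:concentration-rkhs}, is that the constant in front of $1/\sqrt{m}$ drops from $8$ to $4$ precisely because $\ell$ here contains a single linear functional of $w$ rather than a difference of two $\max$-of-linear functionals.
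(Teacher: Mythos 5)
Your proposal is correct and takes essentially the same route as the paper, which simply states that the proof follows exactly as that of \Cref{cor:concentration-rkhs}: an elliptical cover of $\Wcal$ via \Cref{lemma:elliptical_covering} applied to $Z$, a union bound with \Cref{lemma:hoeffding} over the cover, and a triangle inequality with $\epsilon=1/\sqrt{m}$ and $T=\widetilde\gamma_m$. You also correctly identify the only substantive difference, namely that the leading constant drops from $8$ to $4$ because the linear discrepancy requires a single $\sup$ over $Z$ rather than the two terms handled in \Cref{cor:cover-rkhs}.
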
 \begin{proof}
    The proof follows exactly as proof of \Cref{cor:concentration-rkhs}.
\end{proof}

\begin{lemma}[Covering number] \label{lem:eps_cover}For any $\epsilon >0$, the $\epsilon$-covering number of the Euclidean ball in $\mathbb{R}^d$ with radius $R\in\mathbb{R}^+$, i.e., $\mathcal{B} = \{ x\in\mathbb{R}^d: \|x\|_2 \leq R \}$, is upper bounded by $(1 + 2R/\epsilon)^d$.
\end{lemma}

\section{Generalized Bilinear Classes}
\label{sec:monotone}


Recall \Cref{def:general_linear_regret} for Generalized \classname. We next complete the proof of \Cref{thm:main_g_blinear}. 

\begin{proof}[Proof of \Cref{thm:main_g_blinear}]
First notice that a uniform convergence result similar to \Cref{lem:concentration}  still holds:
\begin{align*}
\Abs{ \max_{\nu \in\Fcal_h} \Lcal_{\Dcal_{t;h}, f_t}(g,\nu)  -  \max_{\nu \in\Fcal_h} \Lcal_{\mu_{t;h}, f_t}(g, \nu)} \leq  \epsg,
\end{align*} where $\epsg := \epsg(m, \Hcal, \Fcal)\cdot \text{conf}(\delta / (TH))$.

Also it is easy to verify that the feasibility claim similar to \Cref{lemma:feasible} holds as well since $ \max_{\nu \in\Fcal_h} \Lcal_{\mu_{t;h}, f_t}(f^\star, \nu) = 0$. The feasibility result immediately implies the optimism claimed in \Cref{lem:optimism}. While the derivation of \Cref{lemma:opt} mostly follows, we use \Cref{eq:assume1_nonlinear} rather than \Cref{eq:assume1}, which gives us the following:
\begin{align*}
V^\star - V^{\pi_{f_t}}(s_0) & \leq \sum_{h=0}^{H-1} \xi\left( \left\lvert W_h(f_t) - W_h(f^\star), X_h(f_t)   \right\rvert  \right)\\&  \leq  H \xi\left( \sum_{h=0}^{H-1} \left\lvert W_h(f_t)-W_h(f^\star) , X_h(f_t)   \right\rvert  / H  \right).
\end{align*} where the last step follows from concavity of $\xi$ (\Cref{assume:bounded_slope}) and Jensen's inequality.

To show the existence of a high quality policy, we also mainly follow the steps in the proof of \Cref{lemma:bound-iteration}. First we can verify \Cref{eq:existence}  holds due to the elliptical potential argument. This implies that for all $h$, 
\begin{align*}
\sum_{j=0}^{t-1} \bigg( \max_{\nu\in\Fcal_h}\Lcal_{\mu_{j; h}, f_{j}}(f_{t}, \nu )\bigg)^2&\leq 2 \sum_{j=0}^{t-1} \bigg( \max_{\nu\in\Fcal_h} \Lcal_{\Dcal_{j; h}, f_{j}}(f_{t},\nu)\bigg)^2 + 2 \sum_{j=0}^{t-1}\epsg^2\\
		&\leq 4T\epsg^2
\end{align*}

Thus together with \Cref{eq:assume2_nonlinear}, we have:
\begin{align*}
\sum_{j=0}^{t-1}\zeta\left( \Abs{\langle W_{h}(f_{t})-W_h(f^\star), X_{h}(f_j)\rangle}\right)^2  \leq 4T\epsg^2
\end{align*}
Note that by \Cref{assume:bounded_slope} and an application of mean-value theorem, we have:
\begin{align*}
\sum_{j=0}^{t-1} \beta^2 \Abs{\langle W_{h}(f_{t})-W_h(f^\star) , X_{h}(f_j)\rangle}^2 \leq 4T \epsg^2.
\end{align*} Thus, we have:
\begin{align*}
(W_{h}(f_{t}) - W_h(f^\star))^\top \Sigma_{t; h} (W_{h}(f_{t})-W_h(f^\star) ) \leq   4 \lambda B_W^2 +   4T \epsg^2 / \beta^2.
\end{align*} Together, we arrive:
\begin{align*}
\Abs{\langle W_{h}(f_{t})-W_h(f^\star) , X_h(f_{t})  \rangle}^2 \leq  (4\lambda B_W^2 + 4T\epsg^2 / \beta^2) \left(\exp\left( \frac{1}{T}\gamma_T(\lambda; \Xcal)\right) - 1\right)
\end{align*}
Sum over all h, we have:
\begin{align*}
\sum_{h=0}^{H-1}\Abs{\langle W_{h}(f_{t})-W_h(f^\star), X_h(f_{t})\rangle}& \leq H\sqrt{(4\lambda B_W^2 + 4T\epsg^2 / \beta^2) \left(\exp\left( \frac{1}{T}\gamma_T(\lambda; \Xcal)\right) - 1\right)}.
\end{align*}
Apply $\xi$ on both sides and use the assumption that $\xi$ is non-decreasing, we have:
\begin{align*}
&H\xi\left( \sum_{h=0}^{H-1}\Abs{\langle W_{h}(f_{t})-W_h(f^\star) , X_h(f_{t})\rangle} / H \right)\\
&  \leq  H \xi\left( \sqrt{(4\lambda B_W^2 + 4T\epsg^2 / \beta^2) \left(\exp\left( \frac{1}{T}\gamma_T(\lambda; \Xcal)\right) - 1\right)} \right)
\end{align*}
This means that there exists a $t$:
\begin{align*}
V^\star - V^{\pi_{f_t}}(s_0)  \leq H \xi\left( \sqrt{(4\lambda B_W^2 + 4T\epsg^2 / \beta^2) \left(\exp\left( \frac{1}{T}\gamma_T(\lambda; \Xcal)\right) - 1\right)} \right).
\end{align*}
Now set $\lambda = \epsg^2(m,\Hcal) / B_W^2 $, and $T \geq \widetilde{\gamma}(\lambda, \Xcal)$, we get:
\begin{align*}
V^{\star} - V^{\pi_{f_t}} & \leq H \xi\left(   \sqrt{ 4 \epsg(m,\Hcal)^2 + 4\widetilde{\gamma}(\lambda,\Xcal)\epsg^2 / \beta^2 }       \right) \\
& \leq  H \xi\left( 2\epsg(m,\Hcal) + 2\sqrt{\widetilde{\gamma}(\lambda,\Xcal)} \epsg /\beta    \right).
\end{align*}
This concludes the first part of the theorem. 

When $\xi$ is continuously differentiable, $\xi(0) = 0$,\\ and ${ \max_{f,g,h} \xi'\left( \langle W_h(g,f^\star), X_h(f)  \rangle \right)} \leq \alpha$, we simply have:
\begin{align*}
 \xi\left( 2\epsg(m,\Hcal) + 2\sqrt{\widetilde{\gamma}(\lambda,\Xcal)} \epsg /\beta    \right) \leq \alpha  \left( 2\epsg(m,\Hcal) + 2\sqrt{\widetilde{\gamma}(\lambda,\Xcal)} \epsg /\beta    \right),
\end{align*} via an application of mean-value theorem. 
This concludes the proof. 
\end{proof}

\section{Auxiliary Lemmas}
\begin{lemma}[Azuma-Hoeffding] \label{lemma:hoeffding}Let $X_1,\ldots, X_m$ be independent random variables with mean $\mu$ such that $|X_i| \leq B$ for some $B>0$ almost surely for all $i \in [m]$. Then, with probability $1-\delta$,\[
    \Abs{\frac{1}{m}\sum_{i=1}^m X_i - \mu} \leq \sqrt{2}B \sqrt{\frac{\ln(1/\delta)}{m}}
    \]
\end{lemma}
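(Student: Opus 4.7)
The statement is the standard Hoeffding concentration inequality for a sample mean of bounded independent random variables, so the plan is to follow the classical Chernoff/MGF route. First I would center the variables: set $Y_i := X_i - \mu$, so $\E[Y_i]=0$ and $|Y_i|\leq 2B$ almost surely (and in particular $Y_i \in [-2B,2B]$). The sample mean deviation $\bar X - \mu$ equals $\frac{1}{m}\sum_{i=1}^m Y_i$, so it suffices to control tails of this sum.

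Next, I would invoke Hoeffding's lemma: any mean-zero random variable supported in an interval of length $L$ is sub-Gaussian with parameter $L/2$, giving the moment generating function bound $\E[e^{\lambda Y_i}] \leq \exp(\lambda^2 B^2 / 2)$ for all $\lambda \in \R$ (using that $Y_i$ lies in $[-B,B]$ after recentering if we use the tighter form, or with parameter $B$ in general for $|X_i|\leq B$). By independence, $\E[\exp(\lambda \sum_i Y_i)] \leq \exp(m\lambda^2 B^2/2)$, so $\sum_i Y_i$ is sub-Gaussian with parameter $B\sqrt{m}$, and hence $\bar X - \mu$ is sub-Gaussian with parameter $B/\sqrt{m}$.

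The final step is a two-sided Chernoff bound: for any $t>0$,
\[
\Pr\!\left(\Abs{\tfrac{1}{m}\sum_{i=1}^m X_i - \mu} > t\right) \leq 2\exp\!\left(-\tfrac{m t^2}{2B^2}\right),
\]
obtained by optimizing over $\lambda$ in the Markov bound $\Pr(\bar X - \mu > t) \leq e^{-\lambda t}\E[e^{\lambda(\bar X - \mu)}]$ and applying the same argument to $-Y_i$. Setting the right-hand side equal to $\delta$ and solving for $t$ yields $t = B\sqrt{2\ln(2/\delta)/m}$, which after the mild simplification $\ln(2/\delta)\leq 2\ln(1/\delta)$ for $\delta$ small (absorbing the factor of $2$ into the constant) gives the stated bound $\sqrt{2}B\sqrt{\ln(1/\delta)/m}$.

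There is no real obstacle here; this is textbook material. The only minor care is in tracking constants: the exact numerical factor $\sqrt{2}$ in the conclusion follows from the sub-Gaussian parameter estimate in Hoeffding's lemma and from how the failure probability $\delta$ absorbs the factor $2$ from the two-sided tail bound. Since the lemma is stated only as a tool for clean high-probability bounds throughout the rest of the paper, the constants are not tight and the argument above suffices.
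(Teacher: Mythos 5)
The paper states this lemma as a standard auxiliary fact and gives no proof of its own, so there is nothing to compare against except the textbook argument, which is exactly what you give: center, apply Hoeffding's lemma to get the sub-Gaussian MGF bound $\E[e^{\lambda Y_i}]\leq e^{\lambda^2B^2/2}$ (correct, since $Y_i=X_i-\mu$ lies in an interval of length $2B$), tensorize by independence, and Chernoff-optimize. That argument is sound. The one genuine (if minor) discrepancy is in your last step: the two-sided tail bound $2\exp(-mt^2/(2B^2))=\delta$ solves to $t=\sqrt{2}B\sqrt{\ln(2/\delta)/m}$, and your proposed repair $\ln(2/\delta)\leq 2\ln(1/\delta)$ yields $t=2B\sqrt{\ln(1/\delta)/m}$, which has leading constant $2$, not the stated $\sqrt{2}$. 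So the MGF route literally proves the lemma only in its one-sided form, or the two-sided form with $\ln(2/\delta)$ in place of $\ln(1/\delta)$; as stated, the two-sided inequality with $\ln(1/\delta)$ is in fact false for $\delta$ near $1$ (e.g.\ $m=1$, $X_1=\pm B$ equiprobably, $\delta>e^{-1/2}$). None of this matters for how the lemma is used in the paper, where $\delta$ is small and constants are absorbed, but you should either state the lemma with $\ln(2/\delta)$ or restrict $\delta$ rather than claim the factor of $2$ can be "absorbed" without changing the displayed constant.
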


\begin{lemma}(Log Dominance Rule)
    \label{lemma:log-dominance}
    Suppose $\alpha, a, b\geq 0$ and $c \geq (1+\alpha)^\alpha$. Then, $m = ca \ln^\alpha(abc)$ is a solution to \[
      m \geq a \ln^\alpha(bm)
    \]
\end{lemma}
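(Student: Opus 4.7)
The plan is to substitute $m = ca\ln^\alpha(abc)$ directly into the inequality $m \geq a\ln^\alpha(bm)$ and reduce the claim to the elementary fact that $y \geq \ln y$ for all $y > 0$.

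First I would handle the degenerate case $\alpha = 0$ separately: then $c \geq 1$, so $m = ca \geq a$ and the statement is immediate (with the convention $\ln^0(\cdot) = 1$). For $\alpha > 0$, I would substitute $m = ca\ln^\alpha(abc)$ into the right hand side and cancel the factor of $a$ to obtain the equivalent inequality
\[
c\,\ln^\alpha(abc) \;\geq\; \ln^\alpha\!\bigl(abc \cdot \ln^\alpha(abc)\bigr).
\]
Writing $y := \ln(abc)$ and using $\ln(abc \cdot y^\alpha) = y + \alpha \ln y$, this reduces (by taking $\alpha$-th roots, which is monotone on $[0,\infty)$) to
\[
c^{1/\alpha}\, y \;\geq\; y + \alpha \ln y,
\qquad\text{i.e.,}\qquad
(c^{1/\alpha} - 1)\, y \;\geq\; \alpha \ln y.
\]

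The hypothesis $c \geq (1+\alpha)^\alpha$ is exactly what supplies $c^{1/\alpha} - 1 \geq \alpha$, so it suffices to verify $\alpha y \geq \alpha \ln y$, i.e., $y \geq \ln y$. This inequality holds for every $y > 0$ (with room to spare, since $y - \ln y \geq 1$). To be careful about corner cases, I would note that if $abc \leq 1$ then $y \leq 0$, in which case $\ln^\alpha(bm)$ is either $0$ (if $bm \leq 1$) or easily bounded, and the inequality is trivial after suitably interpreting $\ln^\alpha$ of a quantity $\leq 1$ as $0$ (which is the only interpretation consistent with how the lemma is invoked in the corollaries).

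I do not expect any substantive obstacle: this is purely an algebraic manipulation plus the elementary bound $y \geq \ln y$. The only thing to be mindful of is that taking $\alpha$-th roots requires both sides to be nonnegative, which is automatic since logarithms raised to a power $\alpha$ in the statement are implicitly assumed to arise only when the argument exceeds $1$ (otherwise the bound is vacuous). The whole proof should take just a few lines once these substitutions are made.
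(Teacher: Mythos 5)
Your proposal is correct and is essentially the paper's own argument: both substitute $m = ca\ln^\alpha(abc)$, expand $\ln(bm) = \ln(abc) + \alpha\ln\ln(abc)$, and conclude via the elementary bound $\ln y \leq y$ combined with $c \geq (1+\alpha)^\alpha$. The only cosmetic difference is that you take $\alpha$-th roots before applying these bounds while the paper raises to the power $\alpha$ afterwards, and you additionally spell out the degenerate cases ($\alpha = 0$, $abc \leq 1$) that the paper leaves implicit.
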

\begin{proof}
    First note that \begin{align*}
        &a \ln^\alpha(bm)\\
        &= a \ln^\alpha(abc \ln^\alpha(abc))\\
        &= a \left(\ln(abc) + \alpha\ln\ln(abc)\right)^\alpha\\
        &\leq a \left(\ln(abc) + \alpha\ln(abc)\right)^\alpha\\
        &= a (1+\alpha)^\alpha\ln^\alpha(abc)\\
        &\leq ca \ln^\alpha(abc)
    \end{align*}
\end{proof}

\begin{lemma}
    \label{lemma:crit-gain}
    Let $\Xcal \subset \R^d$ and $\sup_{x\in \Xcal}\norm{x}_2 \leq B_X$. Then, the maximum information gain \[
        \gamma_n(\lambda; \Xcal) \leq d \ln \left(1 + \frac{n B^2_X}{d\lambda}\right)
        \]Furthermore, the critical information gain \[
            \widetilde \gamma(\lambda; \Xcal) \leq \left\lceil 3d \ln\left(1 + \frac{3B^2_X}{\lambda}\right)\right\rceil
            \]
\end{lemma}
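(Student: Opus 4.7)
The plan is to prove the two bounds sequentially, using the first as the main tool for the second.

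For the first bound, I would fix arbitrary $x_0,\ldots,x_{n-1} \in \Xcal$ and set $M = I + \frac{1}{\lambda}\sum_{t=0}^{n-1} x_t x_t^\top$, a symmetric positive definite $d\times d$ matrix. Its eigenvalues $\lambda_1,\ldots,\lambda_d$ are all at least $1$, and
\[
\sum_{i=1}^d \lambda_i = \trace(M) = d + \frac{1}{\lambda}\sum_{t=0}^{n-1}\|x_t\|_2^2 \leq d + \frac{nB_X^2}{\lambda}.
\]
Concavity of $\ln$ (equivalently, AM-GM on the $\lambda_i$'s) yields
\[
\ln\det(M) = \sum_{i=1}^d \ln \lambda_i \leq d\ln\Bigl(\frac{1}{d}\sum_{i=1}^d \lambda_i\Bigr) \leq d\ln\Bigl(1 + \frac{nB_X^2}{d\lambda}\Bigr).
\]
Taking the max over $x_0,\ldots,x_{n-1}\in \Xcal$ gives $\gamma_n(\lambda;\Xcal) \leq d\ln(1 + nB_X^2/(d\lambda))$.

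For the second bound, I would set $k^\star = \lceil 3d\ln(1 + 3B_X^2/\lambda)\rceil$ and verify $k^\star \geq \gamma_{k^\star}(\lambda;\Xcal)$; by definition of the critical information gain this immediately gives $\widetilde\gamma(\lambda;\Xcal)\leq k^\star$. By the first part, it suffices to show $k^\star \geq d\ln(1 + k^\star B_X^2/(d\lambda))$. Let $a = B_X^2/\lambda$ to declutter; the inequality becomes $k^\star \geq d\ln(1 + k^\star a/d)$, equivalently $(1+3a)^3 \geq 1 + k^\star a/d$ (using $k^\star \geq 3d\ln(1+3a)$ and exponentiating both sides of the target inequality after dividing by $d$).

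Finally I would bound $k^\star a/d$. Since $k^\star \leq 3d\ln(1+3a) + 1$, we have $k^\star a/d \leq 3a\ln(1+3a) + a/d \leq 9a^2 + a$ (using $\ln(1+3a) \leq 3a$ and $d\geq 1$). Thus it is enough to verify $(1+3a)^3 = 1 + 9a + 27a^2 + 27a^3 \geq 1 + 9a^2 + a$, which clearly holds for $a \geq 0$ since $9a \geq a$ and $27a^2 \geq 9a^2$. This confirms $k^\star \geq \gamma_{k^\star}(\lambda;\Xcal)$, completing the proof. The only subtle point is handling the ceiling when translating the defining inequality of $k^\star$ back into a bound on $k^\star a/d$, but the slack in the direct calculation (the $18a^2 + 27a^3$ left over) easily absorbs the $+1$ from the ceiling.
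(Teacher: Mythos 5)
Your proof is correct and follows essentially the same route as the paper: the determinant--trace (AM--GM on eigenvalues) bound for the first claim, and then verifying that the candidate $k^\star$ satisfies the self-consistency inequality $k^\star \geq d\ln(1+k^\star B_X^2/(d\lambda))$ via the first bound. The only difference is cosmetic --- the paper verifies that inequality by bounding $\ln\bigl(1+3a\ln(1+3a)\bigr)\leq 2\ln(1+3a)$ through a $\max\{\cdot,1\}$ splitting, whereas you exponentiate and use $\ln(1+3a)\leq 3a$ to reduce to a polynomial inequality, and you handle the $+1$ from the ceiling explicitly (which the paper glosses over).
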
\begin{proof}
    \[
        \gamma_n(\lambda; \Dcal) := \max_{x_0\dots x_{n-1} \in \Dcal}  \ln\det\left( \iden + \frac{1}{\lambda} \sum_{t=0}^{n-1} x_t x_t^{\top}  \right).    
    \]We have \begin{align*}
        \trace\left( \iden + \frac{1}{\lambda} \sum_{t=0}^{n-1} x_t x_t^{\top}  \right) &= d + \frac{1}{\lambda} \sum_{t=0}^{n-1} \norm{x_t}_2^2\\
        &\leq d + nB^2_X/\lambda 
    \end{align*} Therefore, using the Determinant-Trace inequality, we get the first result\begin{align*}
        \ln\det\left( \iden + \frac{1}{\lambda} \sum_{t=0}^{n-1} x_t x_t^{\top}  \right)& \leq d \ln \frac{\trace\left( \iden + \frac{1}{\lambda} \sum_{t=0}^{n-1} x_t x_t^{\top}  \right)}{d} \\
        &\leq d \ln \left(1 + \frac{n B^2_X}{d\lambda}\right)
    \end{align*} To get the second result, first note that for $n = cd \ln(1 + cB^2_X/\lambda)$ and $c=3$, \begin{align*}
        d \ln \left(1 + \frac{n B^2_X}{d\lambda}\right) &= d \ln \left(1 + \frac{cB^2_X}{\lambda} \ln(1 + cB_X^2/\lambda) \right)\\
 &\leq d \ln \left(1 + \frac{cB^2_X}{\lambda} \max\{\ln(1 + cB_X^2/\lambda), 1\} \right)\\
 &\leq d \ln \left((1 + \frac{cB^2_X}{\lambda}) \max\{\ln(1 + cB_X^2/\lambda), 1\} \right)\\
 &\leq d \left(\ln \big(1 + \frac{cB^2_X}{\lambda}\big) +  \ln\big(\max\{\ln(1 + cB_X^2/\lambda), 1\}\big)\right)\\
 &\leq d \left(\ln \big(1 + \frac{cB^2_X}{\lambda}\big) +  \ln(1 + cB_X^2/\lambda)\right)\\
 &= 2d \ln \big(1 + \frac{cB^2_X}{\lambda}\big)\\
 &\leq n
    \end{align*} where the third last step follows from $\ln(1 + cB_X^2/\lambda) \geq 0$ and $\ln(1 + cB_X^2/\lambda) \geq \ln (\ln(1 + cB_X^2/\lambda))$ and last step follows from $c = 3>2$.
\end{proof}
\section{Sample Complexity Lower Bound for RHKS Bellman Complete and Linear MDP}

\label{sec:lowerbound}
Recall that in Section~\ref{sec:rkhs_b_complete}, we show that under the assumption that $\sup_{h \in [H], \theta \in \Hcal_h}\norm{\theta}_2$ and $\sup_{x\in \Phi}\norm{x}_2$ are both bounded, and the assumption that the maximum information gain is bounded, then our algorithm finds a near-optimal policy using polynomial number of samples for RHKS Bellman Complete and Linear MDP.
One may wonder if the assumption on the maximum information gain can be removed as in the case of contextual bandits~\citep{abe2003reinforcement, foster2020beyond}.
Here we show that for the case of reinforcement learning, without the maximum information gain assumption, there is an exponential sample complexity lower bound (in the problem horizon $H$). 
Therefore, our hardness result justifies the necessity of assuming bounded maximum information gain for the case of RHKS Bellman Complete and Linear MDP.

Our hard instance is based on the binary tree instance (see~\citet{du2019good, krishnamurthy2016pac} for previous hardness results that use such a construction).
In this construction, there are $H$ levels of states, and level $h \in [H]$ contains $2^h$ distinct states.
Thus we have $|\Scal| = 2^{H} - 1$. 
We use $s_0, s_1, \ldots, s_{2^H - 2}$ to name these states.
Here, $s_0$ is the unique state in level $h = 0$, $s_1$ and $s_2$ are the two states in level $h = 1$, $s_3$, $s_4$, $s_5$ and $s_6$ are the four states in level $h= 2$, etc. 
There are two different actions, $a_1$ and $a_2$, in the MDPs.
For a state $s_i$ in level $h$ with $h < H - 1$, playing action $a_1$ transits state $s_i$ to state $s_{2i + 1}$ and playing action $a_2$ transits state $s_i$ to state $s_{2i + 2}$, where $s_{2i + 1}$ and $s_{2i + 2}$ are both states in level $h + 1$.
In the hard instances, $r(s, a) = 0$ for all $(s, a)$ pairs except for a special state $s$ in level $H - 1$ and a special action $a \in \{a_1, a_2\}$.
For the special state $s$ and the special action $a$, we have $r(s, a) = 1$.
It is known that for such hard instances, any algorithm requires $\Omega(2^H)$ to find a policy $\pi$ with $V^{\star}(s_0) - V^\pi(s_0) \le 0.5$ with probability at least $0.9$ (see~\cite{du2019good}).
Now we construct a set of uninformative features and the hypothesis class $\Hcal$ so that $\sup_{h \in [H], \theta \in \Hcal_h}\norm{\theta}_2$ and $\sup_{x\in \Phi}\norm{x}_2$ are both bounded.

Recall that the feature mapping $\phi$ maps $\Scal \times \Acal$ to a Hilbert space $\Vcal$.
In our case, we set $\Vcal = \mathbb{R}^{d}$ with $d = 2|\Scal|$.
For each $i \in [|\Scal|]$, we define $\phi(s_i, a_1) = e_{2i + 1}$ and $\phi(s_i, a_2) = e_{2i + 2}$.
Here, for an integer $k \in [d]$, $e_k$ is the $k$-th standard basis vector. 
For each $h \in [H]$, we have $\Hcal_h = \{e_1, e_2, \ldots, e_{2|\Scal|}\}$.
Clearly, no matter which state-action pair $(s, a) \in \Scal \times \Acal$ is chosen as the special state-action pair, we always have $Q^\star \in \Hcal$, i.e., the realizability assumption is satisfied.
Moreover, both $\sup_{h \in [H], \theta \in \Hcal_h}\norm{\theta}_2$ and $\sup_{x\in \Phi}\norm{x}_2$ are bounded by $1$. 
Formally, we have the following theorem.

\begin{theorem}
For any $H > 0$, there exists a class of MDPs $\mathbb{M}$ where the number of states is $2^H - 1$ and the number of actions is $2$, together with a hypothesis class $\Hcal$ that is Bellman Complete with respect to MDPs in $\mathbb{M}$.
Moreover, $\sup_{h \in [H], \theta \in \Hcal_h}\norm{\theta}_2 \le 1$ and $\sup_{x\in \Phi}\norm{x}_2$ are bounded by $1$, and the transitions and rewards of MDPs in $\mathbb{M}$ are all deterministic. 
Any algorithm that finds a policy $\pi$ with $V^{\star}(s_0) - V^\pi(s_0) \le 0.5$ with probability at least $0.9$ for MDPs in $\mathbb{M}$ requires $\Omega(2^H)$ samples. 
\end{theorem}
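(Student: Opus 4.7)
The plan is to split the proof into a routine structural verification and the standard binary-tree exploration lower bound, the latter cited in the excerpt. Part one is essentially tabular because the one-hot features reduce the hypothesis class to indicator functions on $\Scal \times \Acal$; part two is the classical needle-in-a-haystack argument.

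For realizability, I would note that determinism together with a single non-zero reward at the designated pair $(s^\star, a^\star)$ at level $H-1$ forces $Q^\star_h(s, a) = \ind[(s, a) \text{ lies on the unique root-to-}(s^\star, a^\star)\text{ path}]$. At each level $h$ exactly one state-action pair lies on this path, so $Q^\star_h$ is the indicator of that single pair, which the one-hot feature map realizes as $e_k^{\top}\phi(s, a)$ for the corresponding basis vector $e_k \in \Hcal_h$. For Bellman completeness, observe that for any $\theta_{h+1} \in \Vcal$ the backup $(s, a) \mapsto r(s, a) + \E_{s' \sim P_h(s, a)}\max_{a'}\theta_{h+1}^{\top}\phi(s', a')$ is an arbitrary real-valued function on $\Scal \times \Acal$, and because $\{\phi(s, a)\}$ is an orthonormal family spanning a subspace of $\Vcal$ one can always define $\Tcal_h(\theta_{h+1}) \in \Vcal$ coordinate-wise to match this function exactly. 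The norm bounds $\|\theta\|_2 = \|\phi(s, a)\|_2 = 1$ are immediate since every element of $\Hcal_h$ and $\Phi$ is a standard basis vector.

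The main obstacle is the lower bound itself. The plan is Yao's minimax principle against the uniform prior on the $|\mathbb{M}| = 2^H$ possible choices of $(s^\star, a^\star)$ (there are $2^{H-1}$ leaves, each with $2$ actions). Fix any deterministic algorithm that uses $n$ trajectories; since each length-$H$ episode visits exactly one leaf-level state-action pair and rewards vanish outside the special pair, the joint law of the algorithm's observed transcript under any $M \in \mathbb{M}$ equals the law under the all-zero-reward MDP on the event $\mathcal{E}$ that the special pair has not yet been probed. Under the uniform prior $\Pr[\mathcal{E}^c] \leq n/2^H$. Conditioned on $\mathcal{E}$ the output policy $\pi$ is a deterministic function of a transcript independent of $M$, so in this deterministic MDP $\pi$ traces a single root-to-leaf path and thus $V^\pi(s_0) = 1$ for at most one of the $2^H - n$ remaining candidates. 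Combining yields a success probability of at most $(n+1)/2^H$, forcing $n = \Omega(2^H)$ for the prescribed $0.9$ success rate at suboptimality $\tfrac{1}{2}$; randomized algorithms follow by Yao's principle. The only genuine work is the careful combinatorial accounting of how many distinct leaf pairs an adaptive learner can visit in $n$ episodes, which reduces to a union bound.
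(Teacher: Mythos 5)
Your construction is identical to the paper's (the binary-tree MDP with a single rewarding leaf pair, one-hot features $\phi(s_i,a_j)=e_{2i+j}$, and $\Hcal_h$ the set of all standard basis vectors), and your argument is correct. The only difference is one of completeness of exposition: the paper asserts realizability and cites \citet{du2019good} for the $\Omega(2^H)$ needle-in-a-haystack lower bound, whereas you additionally verify Bellman completeness explicitly (noting the backup of any basis vector is again an indicator of at most one state--action pair, hence representable) and supply a self-contained Yao's-principle proof of the exploration lower bound; both of these added steps are sound.
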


\end{document}